\documentclass[twoside]{article}
\usepackage[margin=.9in]{geometry}
\usepackage{microtype}
\usepackage{graphicx}
\usepackage{subfigure}
\usepackage{booktabs} 
\usepackage{natbib}
\usepackage{algorithm}
\usepackage[noend]{algorithmic}
\usepackage{amsmath,amsthm,amsfonts,amssymb}
\usepackage{amsmath}
\usepackage{bbm}
\usepackage{color}
\usepackage{mathrsfs}
\usepackage{enumitem}
\usepackage{bm}
\usepackage{multirow}
\usepackage{booktabs}
\usepackage{makecell}
\usepackage{graphicx}
\usepackage{subfigure}
\usepackage{caption}
\usepackage{thmtools}
\usepackage{times}

\newcommand{\tld}{\tilde{d}}

\newcommand{\TV}{\text{TV}}
\newcommand{\sumton}{\frac{1}{n}\sum_{i=1}^{n}}
\newcommand{\sumtor}{\sum_{i=1}^{r}}
\newcommand{\barLambda}{\bar{\mLambda}}
\newcommand{\mR}{\mathbb{R}}
\newcommand{\n}{\mathbf{n}}

\newcommand{\pv}{P_{\V}}
\newcommand{\pu}{P_{\U}}
\newcommand{\pt}{P_{\cT}}

\newcommand{\barGamma}{\bar{\mGamma}}
\newcommand{\bb}{\mathbf{b}}
\newcommand{\bepsilon}{\bm{\epsilon}}
\newcommand{\bkappa}{\bar{\kappa}}

\newcommand{\tkappa}{\tilde{\kappa}}
\newcommand{\tnu}{\tilde{\nu}}

\newcommand{\bE}{\mathbf{E}}
\newcommand{\C}{\mathbf{C}}
\newcommand{\hB}{\hat{\B}}

\newcommand{\halpha}{\hat{\balpha}}

\newcommand{\ran}[1]{\text{range}{#1}}
\newcommand{\Cmax}{C_{\max}}
\newcommand{\Cmin}{C_{\min}}
\newcommand{\cond}{C_{\text{cond}}}

\newcommand{\KL}{\text{KL}}
\newcommand{\btheta}{\bm{\theta}}
\newcommand{\xstar}{\mathbf{x}_{\star}}
\newcommand{\balpha}{\bm{\alpha}}
\newcommand{\bbeta}{\bm{\beta}}

\newcommand{\Gr}{\text{Gr}_{r,d}(\mR)}

\newcommand{\sG}{\text{sG}}
\newcommand{\sE}{\text{sE}}

\usepackage[colorlinks,citecolor=blue,urlcolor=black,linkcolor=blue,pdfborder={0 0 0}]{hyperref}

\usepackage[capitalize]{cleveref}  
\crefname{nlem}{Lemma}{Lemmas}
\crefname{nprop}{Proposition}{Propositions}
\crefname{ncor}{Corollary}{Corollaries}
\crefname{nthm}{Theorem}{Theorems}
\crefname{exa}{Example}{Examples}
\crefname{assumption}{Assumption}{Assumptions}
\crefname{equation}{}{}

\theoremstyle{plain}
\newtheorem{theorem}{Theorem}

\newtheorem{assumption}{Assumption}
\newtheorem{lemma}{Lemma}
\newtheorem{proposition}[theorem]{Proposition}

\theoremstyle{definition}

\newtheorem{definition}{Definition}



\newcommand{\vect}[1]{\ensuremath{\mathbf{#1}}}
\newcommand{\diag}{\text{diag}}
\newcommand{\mat}[1]{\ensuremath{\mathbf{#1}}}
\newcommand{\dd}{\mathrm{d}}
\newcommand{\grad}{\nabla}
\newcommand{\hess}{\nabla^2}
\newcommand{\argmin}{\mathop{\rm argmin}}

\newcommand{\Ind}[1]{\mathbbm{1}{#1}}

\newcommand{\tr}{\mathrm{tr}}

\newcommand{\trans}{^{\top}}
\newcommand{\poly}{\mathrm{poly}}
\newcommand{\polylog}{\mathrm{polylog}}

\newcommand{\abs}[1]{|{#1}|}
\newcommand{\norm}[1]{\|{#1} \|}
\newcommand{\fnorm}[1]{\|{#1} \|_{\text{F}}}

\newcommand{\mE}{\mathbb{E}}
\renewcommand{\Pr}{\mathbb{P}}

\newcommand{\la}{\langle}
\newcommand{\ra}{\rangle}


\newcommand{\tlO}{\tilde{O}}

\newcommand{\tlTheta}{\tilde{\Theta}}

\renewcommand{\H}{\mathcal{H}}
\newcommand{\G}{\mathcal{G}}

\newcommand{\sigstarl}{\sigma^\star_1}
\newcommand{\sigstarr}{\sigma^\star_r}


\newcommand{\A}{\mat{A}}
\newcommand{\B}{\mat{B}}
\newcommand{\Bone}{\hat{\mat{B}}}
\newcommand{\Btwo}{\hat{\mat{B}}_{\perp}}

\newcommand{\mS}{\mat{S}}

\newcommand{\I}{\mat{I}}
\newcommand{\D}{\mat{D}}
\newcommand{\U}{\mat{U}}
\newcommand{\V}{\mat{V}}
\newcommand{\M}{\mat{M}}
\newcommand{\N}{\mat{N}}
\newcommand{\W}{\mat{W}}
\newcommand{\X}{\mat{X}}
\newcommand{\Y}{\mat{Y}}
\newcommand{\R}{\mat{R}}
\newcommand{\Z}{\mat{Z}}

\newcommand{\E}{\mat{E}}
\newcommand{\F}{\mat{F}}
\newcommand{\mSigma}{\mat{\Sigma}}
\newcommand{\mLambda}{\mat{\Lambda}}
\newcommand{\mGamma}{\mat{\Gamma}}
\newcommand{\e}{\vect{e}}
\renewcommand{\u}{\vect{u}}
\renewcommand{\v}{\vect{v}}

\newcommand{\x}{\vect{x}}
\newcommand{\y}{\vect{y}}
\newcommand{\z}{\vect{z}}


\newcommand{\cN}{\mathcal{N}}

\newcommand{\cT}{\mathcal{T}}
\newcommand{\cW}{\mathcal{W}}

\newcommand{\cn}{\kappa}
\newcommand{\nn}{\nonumber}




\title{\textbf{Provable Meta-Learning of Linear Representations}}
\author{
Nilesh Tripuraneni \\
University of California, Berkeley\\
\texttt{nilesh\_tripuraneni@berkeley.edu}\\
\and
Chi Jin\\
Princeton University\\
\texttt{chij@princeton.edu}\\
\and
Michael I. Jordan\\
University of California, Berkeley\\
 \texttt{jordan@cs.berkeley.edu}
}
\date{}
\begin{document}

\maketitle

\begin{abstract}

Meta-learning, or learning-to-learn, seeks to design algorithms that can utilize previous experience to rapidly learn new skills or adapt to new environments. Representation learning---a key tool for performing meta-learning---learns a data representation that can transfer knowledge across multiple tasks, which is essential in regimes where data is scarce. Despite a recent surge of interest in the practice of meta-learning, the theoretical underpinnings of meta-learning algorithms are lacking, especially in the context of learning transferable representations. In this paper, we focus on the problem of multi-task linear regression---in which multiple linear regression models share a common, low-dimensional linear representation. Here, we provide provably fast, sample-efficient algorithms to address the dual challenges of (1) learning a common set of features from multiple, related tasks, and (2) transferring this knowledge to new, unseen tasks. Both are central to the general problem of meta-learning. Finally, we complement these results by providing information-theoretic lower bounds on the sample complexity of learning these linear features.
\end{abstract}


\section{Introduction}
\label{sec:prelim}

The ability of a learner to transfer knowledge between tasks is crucial for robust, sample-efficient inference and prediction. One of the most well-known examples of such \emph{transfer learning} has been in few-shot image classification where the idea is to initialize neural network weights in early layers using ImageNet pre-training/features, and subsequently re-train the final layers on a new task \citep{donahue2014decaf, vinyals2016matching}. However, the need for methods that can learn data representations that generalize to multiple, unseen tasks has also become vital in other applications, ranging from deep reinforcement learning \citep{baevski2019cloze} to natural language processing \citep{ando2005framework, liu2019multi}.  Accordingly,
researchers have begun to highlight the need to develop (and understand) generic algorithms for transfer (or meta) learning applicable in diverse domains \citep{finn2017model}. Surprisingly, however, despite a long line of work on transfer learning, there is limited theoretical characterization of the underlying problem.  Indeed, there are few efficient algorithms for feature learning that \textit{provably} generalize to new, unseen tasks. Sharp guarantees are even lacking in the \textit{linear} setting. 

In this paper, we study the problem of meta-learning of features in a linear model in which multiple tasks share a common set of low-dimensional features.
Our aim is twofold. First, we ask: given a set of diverse samples from $t$ different tasks how we can efficiently (and optimally) learn a common feature representation? Second, having learned a common feature representation, how can we use this representation to improve sample efficiency in a new ($t+1$)st task where data may be scarce?\footnote{This problem is sometimes referred to as learning-to-learn (LTL).} 

Formally, given an (unobserved) linear feature matrix $\B = ( \bb_1, \ldots, \bb_r )\in \mR^{d\times r}$ with orthonormal columns, our statistical model for data pairs $(\x_i, y_i)$ is:
\begin{equation}\label{eq:model_main}
y_i = \x_i\trans \B\balpha_{t(i)} + \epsilon_i \quad ; \quad \bbeta_{t(i)} = \B \balpha_{t(i)},
\end{equation}
where there are $t$ (unobserved) underlying task parameters $\balpha_j$ for $j \in \{1, \hdots, t \}$. Here $t(i) \in \{1, \hdots, t \}$ is the index of the task associated with the $i$th datapoint, $\x_i \in \mR^d$ is a random covariate, and $\epsilon_i$ is additive noise. We assume the sequence $\{\balpha_{t(i)}\}_{i=1}^\infty$ is independent of all other randomness in the problem. In this framework, the aforementioned questions reduce to recovering $\B$ from data from the first $\{1, \hdots, t\}$ tasks, and using this feature representation to recover a better estimate of a new task parameter, $\bbeta_{t+1} = \B \balpha_{t+1}$, where $\balpha_{t+1}$ is also unobserved.
 
 Our main result targets the problem of learning-to-learn (LTL), and shows how a feature representation $\Bone$ learned from $t$ diverse tasks can improve learning on an unseen ($t+1$)st task which shares the same underlying linear representation. We informally state this result  below.\footnote{\cref{thm:main_informal} follows immediately from combining \cref{thm:B_recovery,,thm:lr_transfer}; see \cref{thm:main_formal} for a formal statement.}
\begin{theorem}[Informal]
	Suppose we are given $n_1$ total samples from $t$ diverse and normalized tasks which are used in \cref{algo:LF_learn} to learn a feature representation $\Bone$, and $n_2$ samples from a new ($t+1$)st task which are used along with $\Bone$ and \cref{algo:LF_newtask} to learn the parameters $\halpha$ of this new ($t+1$)st task. Then, the parameter $\Bone \halpha$ has the following excess prediction error on a new test point $\xstar$ drawn from the training data covariate distribution:
	\begin{align}
		\mE_{\xstar}[\langle \xstar, \Bone \halpha - \B \balpha_{t+1} \rangle^2] \leq \tlO \left(\frac{dr^2}{n_1} + \frac{r}{n_2} \right),	 \label{eq:main_informal}
	\end{align}
	with high probability over the training data.
	\label{thm:main_informal}
\end{theorem}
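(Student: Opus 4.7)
The plan is to decompose the excess prediction error into (i) a \emph{representation error} coming from the mismatch between $\Bone$ and $\B$, and (ii) a \emph{within-subspace estimation error} coming from solving an $r$-dimensional regression problem on the new task. The two invoked theorems --- \cref{thm:B_recovery} for subspace recovery and \cref{thm:lr_transfer} for the transfer step --- are designed to control these two pieces respectively, so the main work is in carrying out a clean decomposition and handling the dependence between the two stages.

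First, I would use the orthogonal decomposition
\begin{equation*}
\Bone \halpha - \B \balpha_{t+1} \;=\; \Bone\bigl(\halpha - \Bone\trans \B \balpha_{t+1}\bigr) \;-\; \bigl(\I - \Bone \Bone\trans\bigr)\B \balpha_{t+1},
\end{equation*}
where the first piece lies in the range of $\Bone$ and the second in its orthogonal complement. Taking the expectation over an (essentially) isotropic $\xstar$, the cross term vanishes and the prediction error splits as $\|\halpha - \Bone\trans \B \balpha_{t+1}\|^2 + \|(\I - \Bone \Bone\trans)\B \balpha_{t+1}\|^2$, up to constants from the population covariance.

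Second, for the representation term I would bound $\|(\I - \Bone\Bone\trans)\B \balpha_{t+1}\|^2 \leq \|(\I - \Bone\Bone\trans)\B\|^2 \cdot \|\balpha_{t+1}\|^2$ and invoke \cref{thm:B_recovery}, which (using task diversity and normalization) yields a principal-angle bound of the form $\|\sin\Theta(\Bone, \B)\|^2 \lesssim dr/n_1$ up to log factors. Combined with $\|\balpha_{t+1}\|^2 = \tlO(r)$ this contributes $\tlO(dr^2/n_1)$. For the estimation term I would rewrite the new-task response as
\begin{equation*}
y_i \;=\; (\Bone\trans \x_i)\trans(\Bone\trans \B \balpha_{t+1}) \;+\; \x_i\trans(\I - \Bone\Bone\trans)\B\balpha_{t+1} \;+\; \epsilon_i,
\end{equation*}
so that the OLS problem on the lifted features $\Bone\trans \x_i$ is \emph{well-specified with inflated sub-Gaussian noise} of variance $\sigma^2 + \|(\I - \Bone\Bone\trans)\B\balpha_{t+1}\|^2$. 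A standard concentration argument on the $r \times r$ Gram matrix and the noise-covariate inner product then delivers $\|\halpha - \Bone\trans \B \balpha_{t+1}\|^2 \lesssim (\sigma^2 + dr^2/n_1) \cdot r/n_2$, which is precisely the conclusion of \cref{thm:lr_transfer}, and the sub-dominant cross term $(dr^2/n_1)(r/n_2)$ is absorbed into $dr^2/n_1 + r/n_2$.

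The main obstacle I anticipate is the coupling between the two stages: $\Bone$ is a function of the first $n_1$ samples, so the concentration step in stage two must be argued \emph{conditionally} on $\Bone$ and then union-bounded with the event from \cref{thm:B_recovery}. A closely related subtlety is that the misspecification residual $\x_i\trans(\I - \Bone\Bone\trans)\B\balpha_{t+1}$ is correlated with the lifted covariates $\Bone\trans \x_i$ in general; one has to exploit the fact that $(\I - \Bone\Bone\trans)\B\balpha_{t+1}$ lies in the orthogonal complement of $\mathrm{range}(\Bone)$ so that, under (approximately) isotropic $\x_i$, this residual is essentially independent of $\Bone\trans \x_i$ and behaves like additional noise. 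Once these two points are handled, summing the two contributions yields the stated $\tlO(dr^2/n_1 + r/n_2)$ bound with high probability.
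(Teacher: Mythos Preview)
Your plan is structurally sound and close to what happens inside the proof of \cref{thm:lr_transfer}, but the paper's proof of this statement is much shorter than what you sketch: since $\mE[\xstar\xstar^\top]=\I_d$, the excess prediction error equals $\norm{\Bone\halpha - \B\balpha_{t+1}}^2$, and \cref{thm:lr_transfer} already bounds this quantity by $\tlO(\delta^2 + r/n_2)$ with $\delta = \sin\theta(\Bone,\B)$. One then plugs in $\delta^2 \leq \tlO(\bkappa\, dr/(\nu n_1))$ from \cref{thm:B_recovery} and uses the well-conditioned normalization $\bkappa \leq O(1)$, $\nu \geq \Omega(1/r)$ to get $\delta^2 \leq \tlO(dr^2/n_1)$. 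A union bound over the two high-probability events finishes it. In other words, the paper treats \cref{thm:lr_transfer} as a black box rather than redoing its bias/variance decomposition; your orthogonal split at the population projector $\Bone\Bone^\top$ and the ``misspecified regression with inflated noise'' picture is essentially an alternative derivation of that same lemma, with the empirical-projector cancellation in \cref{lem:test_var} replaced by your population-level orthogonality argument.

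There is, however, a genuine accounting error in your sketch. You claim $\norm{\sin\Theta(\Bone,\B)}^2 \lesssim dr/n_1$ and then multiply by $\norm{\balpha_{t+1}}^2 = \tlO(r)$ to reach $dr^2/n_1$. Under \cref{assump:task} the task parameters are normalized so that $\norm{\balpha_{t+1}}^2 = \Theta(1)$, not $\Theta(r)$; the extra factor of $r$ in the final bound actually enters through the $1/\nu$ in \cref{thm:B_recovery}, since task normalization only guarantees $\nu \geq \Omega(1/r)$, giving $\sin^2\theta \lesssim dr/(\nu n_1) \lesssim dr^2/n_1$. Your two mis-attributions happen to cancel, so the final rate is right, but the provenance matters if one wants to track the dependence on the conditioning of the task diversity matrix. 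The coupling issue you raise is handled exactly as you propose (condition on $\Bone$, then union bound), and your ``essentially independent'' residual argument is fine here because the design is isotropic, so $\Bone^\top\x$ and $(\I-\Bone\Bone^\top)\x$ are uncorrelated.
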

The naive complexity of linear regression which ignores the information from the previous $t$ tasks has complexity $O(\frac{d}{n_2})$. \cref{thm:main_informal} suggests that ``positive" transfer from the first $\{1, \hdots, t \}$ tasks to the final ($t+1$)st task can dramatically reduce the sample complexity of learning when $r \ll d$ and $\frac{n_1}{n_2} \gg r^2$; that is, when (1) the complexity of the shared representation is much smaller than the dimension of the underlying space and (2) when the ratio of the number of samples used for feature learning to the number of samples present for a new unseen task exceeds the complexity of the shared representation. We believe that the LTL bound in \cref{thm:main_informal} is the first bound, even in the \textit{linear} setting, to sharply exhibit this phenomenon (see \cref{sec:related_work} for a detailed comparison to existing results). Prior work provides rates for which the leading term in \cref{eq:main_informal} decays as $\sim \frac{1}{\sqrt{t}}$, not as $\sim \frac{1}{n_1}$. We identify structural conditions on the design of the covariates and diversity of the tasks that allow our algorithms to take full advantage of \textit{all} samples available when learning the shared features. Our primary contributions in this paper are to:
\vspace{-.1cm}
\begin{itemize}[leftmargin=.25cm]
    \item Establish that all local minimizers of the (regularized) empirical risk induced by \cref{eq:model_main} are close to the true linear representation up to a small, statistical error. This provides strong evidence that first-order algorithms, such as gradient descent \citep{jin2017escape}, can efficiently recover good feature representations (see \cref{sec:lf_gd}). 
	\item Provide a method-of-moments estimator which can efficiently aggregate information across multiple differing tasks to estimate $\B$---even when it may be information-theoretically impossible to learn the parameters of any given task (see \cref{sec:lf_mom}).
	\item Demonstrate the benefits and pitfalls of transferring learned representations to new, unseen tasks by analyzing the bias-variance trade-offs of the linear regression estimator based on a biased, feature estimate (see \cref{sec:new_task}).
	\item Develop an information-theoretic lower bound for the problem of feature learning, demonstrating that the aforementioned estimator is a close-to-optimal estimator of $\B$, up to logarithmic and conditioning/eigenvalue factors in the matrix of task parameters (see \cref{assump:task}). To our knowledge, this is the first information-theoretic lower bound for representation learning in the multi-task setting  (see \cref{sec:lf_lb}).
\end{itemize}

\subsection{Related Work}
\label{sec:related_work}
While there is a vast literature on papers proposing multi-task and transfer learning methods, the number of theoretical investigations is much smaller. An important early contribution is due to \citet{baxter2000model}, who studied a model where tasks with shared representations are sampled from the same underlying environment. \citet{pontil2013excess} and \citet{maurer2016benefit}, using tools from empirical process theory, developed a generic and powerful framework to prove generalization bounds in multi-task and learning-to-learn settings that are related to ours. Indeed, the closest guarantee to that in our \cref{thm:main_informal} that we are aware of is \citet[Theorem 5]{maurer2016benefit}. Instantiated in our setting, \citet[Theorem 5]{maurer2016benefit} provides an LTL guarantee showing that the excess risk of the loss function with learned representation on a new datapoint is bounded by $\tlO(\frac{r \sqrt{d}}{\sqrt{t}} + \sqrt{\frac{r}{n_2}})$, with high probability.  There are several principal differences between our work and results of this kind. First, we address the algorithmic component (or computational aspect) of meta-learning while the previous theoretical literature generally assumes access to a global empirical risk minimizer (ERM). Computing the ERM in these settings requires solving a \emph{nonconvex} optimization problem that is in general NP hard. Second, in contrast to \citet{maurer2016benefit}, we also provide guarantees for feature recovery in terms of the parameter estimation error---measured directly in the distance in the feature space. 

Third, and most importantly, in \citet{maurer2016benefit}, the leading term capturing the complexity of learning the feature representation decays \textit{only in} $t$ \textit{but not in} $n_1$ (which is typically much larger than $t$). Although, as they remark, the $1/\sqrt{t}$ scaling they obtain is in general unimprovable in their setting, our results leverage assumptions on the distributional similarity between the underlying covariates $\x$ and the potential diversity of tasks to improve this scaling to $1/n_1$. That is, our algorithms make benefit of \textit{all} the samples in the feature learning phase. We believe that for many settings (including the linear model that is our focus) such assumptions are natural and that our rates reflect the practical efficacy of meta-learning techniques. Indeed, transfer learning is often successful even when we are presented with only a few training tasks but with each having a significant number of samples per task (e.g., $n_1 \gg t$).\footnote{See \cref{fig:easy_task_n_train_change} for a numerical simulation relevant to this setting.} 

There has also been a line of recent work providing guarantees for gradient-based meta-learning (MAML) \citep{finn2017model}. \citet{finn2019online, khodak2019provable, khodak2019adaptive}, and \citet{denevi2019learning} work in the framework of online convex optimization (OCO)  and use a notion of (a potentially data-dependent) task similarity that assumes closeness of all tasks to a single fixed point in parameter space to provide guarantees. In contrast to this work, we focus on the setting of  learning a \textit{representation} common to all tasks in a generative model. The task model parameters need not be close together in our setting.

In concurrent work, \citet{du2020few} obtain results similar to ours for multi-task linear regression and provide comparable guarantees for a two-layer ReLU network using a notion of training task diversity akin to ours. Their generalization bound for the two-layer ReLU network uses a distributional assumption over meta-test tasks, but they provide bounds for linear regression holding for both random and fixed meta-test tasks\footnote{In a setting matching \cref{thm:main_informal}, they provide a guarantee of $\tlO \left(dr^2/n_1 + tr^2/n_1 + r/n_2 \right)$ for the ERM when $n_1 \gtrsim dr$ under sub-Gaussian covariate/Gaussian additive noise assumptions. \cref{thm:main_informal} holds for the method-of-moments/linear regression estimator when $n_1 \gtrsim dr^2$ using a Gaussian covariate/sub-Gaussian additive noise assumption; the bound is free of the additional $\tlO(tr^2/n_1)$ term which does not vanish as $t \to \infty$ for fixed $t/n_1$.}. 
They provide purely statistical guarantees---assuming access to an ERM oracle for nonconvex optimization problems. Our focus is on providing sharp statistical rates for efficient algorithmic procedures (i.e., the method-of-moments and local minima reachable by gradient descent). Finally, we also show a (minimax)-lower bound for the problem of feature recovery (i.e., recovering $\B$).



\section{Preliminaries}
Throughout, we will use bold lower-case letters (e.g., $\x$) to refer to vectors and bold upper-case letters to refer to matrices (e.g., $\X$). We exclusively use $\B \in \mR^{d \times r}$ to refer to a matrix with orthonormal columns spanning an $r$-dimensional feature space, and $\B_{\perp}$ to refer a matrix with orthonormal columns spanning the orthogonal subspace of this feature space. The norm $\Vert \cdot \Vert$ appearing on a vector or matrix refers to its $\ell_2$ norm or spectral norm respectively. The notation $\fnorm{\cdot}$ refers to a Frobenius norm. $\langle \x, \y \rangle$ is the Euclidean inner product, while $\langle \M, \N \rangle = \tr(\M \N^\top)$ is the inner product between matrices. Similarly, $\sigma_{\max}(\M)$ and $\sigma_{\min}(\M)$ refer to the maximum and minimum singular values of a matrix $\M$.

Generically, we will use ``hatted" vectors and matrices (e.g., $\halpha$ and $\Bone$) to refer to (random) estimators of their underlying population quantities. We will use $\gtrsim$, $\lesssim$, and $\asymp$ to denote greater than, less than, and equal to up to a universal constant and use $\tlO$ to denote an expression that hides polylogarithmic factors in all problem parameters. Our use of $O$, $\Omega$, and $\Theta$ is otherwise standard.

Formally, an orthonormal feature matrix $\B$ is an element of an equivalence class (under right rotation) of a representative lying in $\Gr$---the Grassmann manifold \citep{edelman1998geometry}. The Grassmann manifold, which we denote as $\Gr$, consists of the set of $r$-dimensional subspaces within an underlying $d$-dimensional space. To define distance in $\Gr$ we define the notion of a principal angle between two subspaces $p$ and $q$. If $\E$ is an orthonormal matrix whose columns form an orthonormal basis of $p$ and $\F$ is an orthonormal matrix whose columns form an orthonormal basis of $q$, then a singular value decomposition of $\E^\top \F = \U \D \V^\top$ defines the principal angles as:
\begin{align*}
	\D = \diag(\cos \theta_1, \cos \theta_2, \hdots, \cos \theta_k),
\end{align*}
where $0 \leq \theta_k \leq \hdots \leq \theta_1 \leq \frac{\pi}{2}$. The distance of interest for us will be the subspace angle distance $\sin \theta_1$, and for convenience we will use the shorthand $\sin \theta(\E, \F)$ to refer to it. With some abuse of notation we will use $\B$ to refer to an explicit orthonormal feature matrix and the subspace in $\Gr$ it represents. We now detail several assumptions we use in our analysis.
\begin{assumption}[Sub-Gaussian Design and Noise]
  \label{assump:design}
The i.i.d. design vectors $\x_i$ are zero mean with covariance $\mE[\x \x^\top]=\I_d$ and are $\I_d$-sub-gaussian, in the sense that $\mE[\exp(\v^\top \x_i)] \leq \exp \left( \frac{\Vert \v \Vert^2}{2} \right)$ for all $\v$. Moreover, the additive noise variables $\epsilon_i$ are i.i.d. sub-gaussian with variance parameter $1$ and are independent of $\x_i$.
\end{assumption}
Throughout, we work in the setting of random design linear regression, and in this context \cref{assump:design} is standard. Our results do not critically rely on the identity covariance assumption although its use simplifies several technical arguments. 
In the following we define the population task diversity matrix as $\A = (\balpha_1, \hdots, \balpha_t)^\top \in \mR^{t \times r}$, $\nu= \sigma_r(\frac{\A^\top \A}{t})$, the average condition number as $\bkappa = \frac{\tr(\frac{\A^\top \A}{t})}{r \nu}$, and the worst-case condition number as $\kappa=\sigma_{1}(\frac{\A^\top \A}{t})/\nu$.
\begin{assumption}[Task Diversity and Normalization]
\label{assump:task}
The $t$ underlying task parameters $\balpha_j$ satisfy $\norm{\balpha_j} = \Theta(1)$ for all $j \in \{1, \hdots, t\}$. Moreover, we assume $\nu > 0$.
\end{assumption}

Recovering the feature matrix $\B$ is impossible without structural conditions on $\A$. Consider the extreme case in which $\balpha_{1}, \hdots, \balpha_{t}$ are restricted to span only the first $r-1$ columns of the column space of the feature matrix $\B$. None of the data points $(\x_i, y_i)$ contain any information about the $r$th column-feature which can be any arbitrary vector in the complementary $d-r-1$ subspace. In this case recovering $\B$ accurately is information-theoretically impossible. The parameters $\nu$, $\bkappa$, and $\kappa$ capture how ``spread out" the tasks $\balpha_j$ are in the column space of $\B$. The condition $\norm{\balpha_j}=\Theta(1)$ is also standard in the statistical literature and is equivalent to normalizing the signal-to-noise (snr) ratio to be $\Theta(1)$\footnote{Note that for a well-conditioned population task diversity matrix where $\bar{\kappa} \leq \kappa \leq O(1)$, our snr normalization enforces that $\tr(\A^\top \A/t)=\Theta(1)$ and $\nu \geq \Omega(\frac{1}{r})$.}. In linear models, the snr is defined as the square of the $\ell_2$ norm of the underlying parameter divided by the variance of the additive noise.

Our overall approach to meta-learning of representations consists of two phases that we term ``meta-train'' and ``meta-test''. First, in the meta-train phase (see \cref{sec:mt_train}), we provide algorithms to learn the underlying linear representation from a set of diverse tasks. Second, in the meta-test phase (see \cref{sec:new_task}) we show how to transfer these learned features to a new, unseen task to improve the sample complexity of learning. Detailed proofs of our main results can be found in the Appendix.

\section{Meta-Train: Learning Linear Features}
\label{sec:mt_train}
Here we address both the algorithmic and statistical challenges of provably learning the linear feature representation $\B$. 

\subsection{Local Minimizers of the Empirical Risk}
\label{sec:lf_gd}
The remarkable, practical success of first-order methods for training nonconvex optimization problems (including meta/multi-task learning objectives) motivates us to study the optimization landscape of the empirical risk induced by the model in \eqref{eq:model_main}. We show in this section that \emph{all local minimizers} of a regularized version of empirical risk recover the true linear representation up to a small statistical error.

Jointly learning the population parameters $\B$ and $(\balpha_1, \hdots, \balpha_t)^\top$ defined by \cref{eq:model_main} is reminiscent of a matrix sensing/completion problem.  We leverage this connection for our analysis, building in particular on results from \citet{ge2017no}. Throughout this section we assume that we are in a uniform task sampling model---at each iteration the task $t(i)$ for the $i$th datapoint is uniformly sampled from the $t$ underlying tasks. We first recast our problem in the language of matrices, by defining the matrix we hope to recover as $\M_{\star} = (\balpha_1, \hdots, \balpha_t)^\top \B^\top \in \mR^{t \times d}$. Since $\text{rank}(\M_\star) = r$, we let $ \X^{\star} \D^{\star} (\Y^{\star})^\top = \text{SVD}(\M_\star)$, and denote $\U^{\star}=\X^{\star} (\D^{\star})^{1/2} \in \mR^{t\times r}$, $\V^{\star} = (\D^{\star})^{1/2} \Y^{\star} \in \mR^{d \times r}$. 
In this notation, the responses of the regression model are written as follows:
\begin{equation}
	y_i = \langle \e_{t(i)} \x_i^\top, \M_{\star} \rangle + \epsilon_i.
\end{equation}
To frame recovery as an optimization problem we consider the Burer-Monteiro factorization of the parameter $\M = \U \V^\top$ where $\U \in \mR^{t \times r}$ and $\V \in \mR^{d\times r}$. This motivates the following objective:
\begin{equation}
    \label{eq:objective_main}
    \min_{\U \in \mR^{t \times r}, \V \in \mR^{d \times r}} f(\U, \V)  =  
	\frac{2t}{n} \sum_{i=1}^{n} (y_i- \langle \e_{t(i)} \x_i^\top , \U \V^\top \rangle)^2 +  \frac{1}{2}\fnorm{\U^\top \U-\V^\top \V}^2.
\end{equation}
The second term in \cref{eq:objective_main} functions as a regularization to prevent solutions which send $\fnorm{\U} \to 0$ while $\fnorm{\V} \to \infty$ or vice versa. If the value of this objective \cref{eq:objective_main} is small we might hope that an estimate of $\B$ can be extracted from the column space of the parameter $\V$, since the column space of $\V^{\star}$ spans the same subspace as $\B$. Informally, our main result states that all local minima of the regularized \textit{empirical} risk are in the neighborhood of the optimal $\V^{\star}$, and have subspaces that approximate $\B$ well. Before stating our result we define the constraint set, which contains incoherent matrices with reasonable scales, as follows:
\begin{equation}
\label{eq:constraint_main}
\cW = \{~ (\U, \V) ~|~ \max_{i\in[t]} \norm{\e_i\trans \U}^2 \le \frac{C_0 \bkappa r \sqrt{\kappa \nu} }{\sqrt{t}},
\quad \norm{\U}^2 \le C_0 \sqrt{t \kappa \nu}, \quad \norm{\V}^2 \le C_0 \sqrt{t \kappa \nu} ~\},
\end{equation}
for some large constant $C_0$. Under \cref{assump:task}, this set contains the optimal parameters. Note that $\U^{\star}$ and $\V^{\star}$ satisfy the final two constraints by definition and \cref{lem:diverse_to_incoh} can be used to show that \cref{assump:task} actually implies that $\U^{\star}$ is incoherent, which satisfies the first constraint. Our main result follows. 
\begin{theorem}\label{thm:main_landscape_convert}
Let \cref{assump:design,,assump:task} hold in the uniform task sampling model. If the number of samples $n_1$ satisfies $n_1 \gtrsim \polylog(n_1,d,t) (\kappa r)^4 \max\{t, d\}$, then, with probability at least $1-1/\poly(d)$, we have that given any local minimum $(\U, \V) \in \text{int}(\cW)$ of the objective \cref{eq:objective_main}, the column space of $\V$, spanned by the orthonormal feature matrix $\Bone$, satisfies:
\begin{equation*}
\sin \theta(\Bone, \B) \leq O \left( \frac{1}{\sqrt{\nu}} \sqrt{\frac{\max\{t, d\} r\log n_1}{n_1}} \right).
\end{equation*}
\end{theorem}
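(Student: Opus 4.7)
The plan is to view \eqref{eq:objective_main} as a regularized noisy asymmetric matrix sensing problem for the rank-$r$ ground truth $\M_{\star}=\A\B^\top$, and apply the ``no-spurious-local-minima'' framework of \citet{ge2017no} to argue that every local minimum $(\U,\V)\in\text{int}(\cW)$ of $f$ has $\U\V^\top$ close to $\M_{\star}$ in Frobenius norm; the subspace-angle bound on $\Bone$ then follows from Wedin's $\sin\Theta$ theorem. Concretely, define the linear measurement map $\mathcal{A}:\mR^{t\times d}\to\mR^{n_1}$ by $(\mathcal{A}(\M))_i=\langle \e_{t(i)}\x_i^\top,\M\rangle$, so that $y_i=(\mathcal{A}(\M_{\star}))_i+\epsilon_i$ and the data-fit term of $f$ equals $(2t/n_1)\norm{\mathcal{A}(\U\V^\top)-\y}^2$. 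Under \cref{assump:design} and uniform task sampling one checks that $\mE[(t/n_1)\mathcal{A}^\ast\mathcal{A}]=\mathrm{Id}$, legitimizing the sensing framing.

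The central technical step is a restricted isometry property (RIP) for $\mathcal{A}$. Because $\e_{t(i)}\x_i^\top$ is nonzero only on row $t(i)$, standard RIP over all rank-$2r$ matrices \emph{cannot} hold---a matrix supported on an un-sampled row is invisible to $\mathcal{A}$. I would instead prove a \emph{row-incoherent} RIP: with probability at least $1-1/\poly(d)$, uniformly over every difference $\M=\U_1\V_1^\top-\U_2\V_2^\top$ with factors in $\cW$,
\[
(1-\delta)\fnorm{\M}^2 \;\le\; \tfrac{t}{n_1}\norm{\mathcal{A}(\M)}^2 \;\le\; (1+\delta)\fnorm{\M}^2,
\]
for $\delta=O(1/\kappa)$, provided $n_1 \gtrsim (\kappa r)^2 \max\{t,d\} \polylog(n_1,d,t)$. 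For a fixed $\M$, each summand $\langle\e_{t(i)}\x_i^\top,\M\rangle^2$ is sub-exponential with Orlicz-$1$ norm controlled by $\max_i\norm{\e_i^\top\U}\cdot\fnorm{\M}$, so the row-incoherence baked into $\cW$ (which $\U^{\star}$ inherits from \cref{assump:task} via \cref{lem:diverse_to_incoh}) keeps the Bernstein variance small; a union bound over an $\epsilon$-net of the incoherent rank-$2r$ manifold (cardinality $\exp(O((t+d)r\log(n_1)))$) then upgrades pointwise to uniform concentration.

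In parallel I would bound the effective noise term $\norm{\mathcal{A}^\ast(\bepsilon)}=\norm{\sum_i\epsilon_i\e_{t(i)}\x_i^\top}$ via matrix Bernstein / sub-Gaussian concentration for the random $t\times d$ matrix $\mathcal{A}^\ast(\bepsilon)$, obtaining $\norm{\mathcal{A}^\ast(\bepsilon)}\lesssim\sqrt{n_1\max\{t,d\}\log n_1}$ up to lower-order terms. With both ingredients in hand, I would invoke the asymmetric matrix-factorization landscape theorem of \citet{ge2017no}: at any $(\U,\V)\in\text{int}(\cW)$ satisfying $\nabla f=0$ and $\nabla^2 f\succeq 0$, plugging the direction $(\U-\U^{\star}\R,\,\V-\V^{\star}\R)$ for the optimal rotation $\R$ into the Hessian, and using the balancing regularizer $\tfrac{1}{2}\fnorm{\U^\top\U-\V^\top\V}^2$ (which forces $\U$ and $\V$ to have matched singular structures at any stationary point), yields a bound of the form
\[
\fnorm{\U\V^\top-\M_{\star}}^2 \;\lesssim\; \frac{t\,\max\{t,d\}\,r\log n_1}{n_1}.
\]

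Finally, since $\sigma_r(\M_{\star})=\sigma_r(\A)\ge\sqrt{t\nu}$ by \cref{assump:task} and the columns of $\Bone$ span the top-$r$ right singular subspace of $\U\V^\top$, Wedin's $\sin\Theta$ theorem gives
\[
\sin\theta(\Bone,\B)\;\le\;\frac{\fnorm{\U\V^\top-\M_{\star}}}{\sigma_r(\M_{\star})}\;\lesssim\;\frac{1}{\sqrt{\nu}}\sqrt{\frac{\max\{t,d\}\,r\log n_1}{n_1}},
\]
which matches the advertised rate. The main obstacle is the row-incoherent RIP in the first step: because the sensing matrices mix a random direction $\x_i$ with a single coordinate indicator $\e_{t(i)}$, standard low-rank RIP fails and the proof must carefully exploit both the sub-Gaussianity of $\x$ and the incoherence hard-coded into $\cW$ to obtain uniform concentration on the low-rank manifold. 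A secondary but nontrivial bookkeeping step is adapting the Ge-et-al.\ directional-Hessian calculation to our asymmetric, rectangular factorization, which is precisely what the balancing penalty $\tfrac{1}{2}\fnorm{\U^\top\U-\V^\top\V}^2$ is designed to enable.
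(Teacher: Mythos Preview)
Your plan is the paper's proof: both cast \eqref{eq:objective_main} as noisy asymmetric sensing of $\M_\star=\A\B^\top$, invoke the \citet{ge2017no} directional-Hessian inequality (quoted as \cref{lem:asymmetricmain}), feed it concentration plus a noise bound to obtain $\fnorm{\U\V^\top-\M_\star}\lesssim\sqrt{t\max\{t,d\}r\log n_1/n_1}$ at every interior local minimum (\cref{thm:main_landscape}), and divide by $\sigma_r(\M_\star)\ge\sqrt{t\nu}$. The one place where the paper's execution differs from your sketch is the concentration step: a single multiplicative $(1\pm\delta)$ RIP over all row-incoherent rank-$2r$ matrices is not what is proved (and is hard to get cleanly, since $\fnorm{\M}$ can be small relative to the row-incoherence bound). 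Instead the paper combines a tight tangent-space isometry on $\{\U^\star\X^\top+\Y(\V^\star)^\top\}$ via matrix Bernstein on $\pt\A_i\A_i^\top\pt$ (\cref{lem:rip_conc}) with a coarser bound for general rank-$2r$ matrices carrying only an \emph{additive} slack of order $\sigstarr\fnorm{\Delta_\V}^2$ (\cref{lem:coarse_conc}); this two-tier split is exactly what the Ge-et-al.\ calculation in \cref{lem:step2_mc_asym} requires and is where the $(\kappa r)^4$ sample condition arises, so your claimed $(\kappa r)^2$ for a single RIP is optimistic. For the final conversion the paper uses a direct variational argument (\cref{lem:frob_to_angle}) on the column space of $\V$; your Wedin route gives the same bound once you note that the Frobenius error forces $\rank(\U\V^\top)=r$, so that the column space of $\V$ coincides with the right singular subspace of $\U\V^\top$.
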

We make several comments on this result:
\begin{itemize}[leftmargin=.5cm]
\item The guarantee in \cref{thm:main_landscape_convert} suggests that all local minimizers of the regularized empirical risk \eqref{eq:objective_main} will produce a linear representation at a distance at most $\tlO(\sqrt{\max\{t, d\}r/n_1})$ from the true underlying representation. \cref{thm:feature_lb_yb} guarantees that any estimator (including the empirical risk minimizer) must incur error $\gtrsim \sqrt{d r/n_1}$. Therefore, in the regime $t \leq O(d)$, all local minimizers are statistically close-to-optimal, up to logarithmic factors and conditioning/eigenvalue factors in the task diversity matrix.
	\item Combined with a recent line of results showing that (noisy) gradient descent can efficiently escape strict saddle points to find local minima \citep{jin2017escape}, \cref{thm:main_landscape_convert} provides strong evidence that first-order methods can efficiently meta-learn linear features.\footnote{To formally establish computational efficiency, we need to further verify the smoothness and the strict-saddle properties of the objective function \eqref{eq:objective_main} (see, e.g., \cite{jin2017escape}).}
\end{itemize}
The proof of \cref{thm:main_landscape_convert} is technical so we only sketch the high-level ideas. The overall strategy is to analyze the Hessian of the objective \cref{eq:objective_main} at a stationary point $(\U, \V)$ in $\text{int}(\cW)$ to exhibit a direction $\Delta$ of negative curvature which can serve as a direction of local improvement pointing towards $\M^{\star}$ (and hence show $(\U, \V)$ is not a local minimum). Implementing this idea requires surmounting several technical hurdles including (1) establishing various concentration of measure results (e.g., RIP-like conditions) for the sensing matrices $ \e_{t(i)} \x_i^\top$ unique to our setting and (2) handling the interaction of the optimization analysis with the regularizer and noise terms. Performing this analysis establishes that under the aforementioned conditions all local minima in $\text{int}(\cW)$ satisfy $
	\norm{\U\V^\top - \M^\star}_F\leq O(\sqrt{t\frac{\max\{t, d\} r\log n_1}{n_1}})
$
(see \cref{thm:main_landscape}).
Guaranteeing that this loss is small is not sufficient to ensure recovery of the underlying features. Transferring this guarantee in the Frobenius norm to a result on the subspace angle critically uses the task diversity assumption (see \cref{lem:frob_to_angle}) to give the final result.

\subsection{Method-of-Moments Estimator}
\label{sec:lf_mom}
\begin{algorithm}[!bt]
\caption{MoM Estimator for Learning Linear Features}\label{algo:LF_learn}
\begin{algorithmic}
\renewcommand{\algorithmicrequire}{\textbf{Input: }}
\renewcommand{\algorithmicensure}{\textbf{Output: }}
\REQUIRE $\{ (\x_i, y_i) \}_{i=1}^{n_1}$.
\STATE $\Bone \D_1 \Bone \trans \leftarrow$ top-$r$ SVD of $\frac{1}{n_1} \cdot \sum_{i=1}^{n_1} y_i^2 \x_i\x_i\trans$
\STATE \textbf{return} $\Bone$
\end{algorithmic}
\end{algorithm}
Next, we present a method-of-moments algorithm to recover the feature matrix $\B$ with sharper statistical guarantees. An alternative to optimization-based approaches such as maximum likelihood estimation, the method-of-moments is among the oldest statistical techniques~\citep{pearson1894contributions} and has recently been used to estimate parameters in latent variable models~\citep{anandkumar2012method}.

As we will see, the technique is well-suited to our formulation of multi-task feature learning. We present our estimator in \cref{algo:LF_learn}, which simply computes the top-$r$ eigenvectors of the matrix $(1/n_1)  \sum_{i=1}^{n_1} y_i^2 \x_i\x_i\trans$. Before presenting our result, we define the averaged empirical task matrix as $\barLambda = \sumton \balpha_{t(i)} \balpha_{t(i)}^\top$ where $\tnu = \sigma_{r}(\barLambda)$, and $\tkappa = \tr(\barLambda)/(r \tnu)$ in analogy with \cref{assump:task}.
\begin{theorem}
Suppose the $n_1$ data samples $\{ (\x_i, y_i) \}_{i=1}^{n_1}$ are generated from the model in \eqref{eq:model_main} and that \cref{assump:design,,assump:task} hold, but additionally that $\x_i \sim \cN(0, \I_d)$. Then, if $n_1 \gtrsim \polylog(d, n_1) rd   \tkappa /\tnu$, the output $\Bone$ of \cref{algo:LF_learn} satisfies
\begin{align*}
	\sin \theta(\Bone, \B) \leq \tlO \left(\sqrt{\frac{\tkappa}{\tnu} \frac{d  r}{n_1}} \right),
\end{align*}
with probability at least $1-O(n_1^{-100})$. Moreover, if the number of samples generated from each task are equal (i.e., $\barLambda = \frac{1}{t} \A^\top \A$), then the aforementioned guarantee holds with $\tkappa=\bkappa$ and $\tnu=\nu$.
\label{thm:B_recovery}
\end{theorem}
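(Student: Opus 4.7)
The plan is to view \cref{algo:LF_learn} as a spectral perturbation problem: $\Bone$ is the top-$r$ eigenbasis of the empirical moment $\M_n := \frac{1}{n_1}\sum_{i=1}^{n_1} y_i^2 \x_i \x_i^\top$, whose population version will have $\text{range}(\B)$ as its top-$r$ invariant subspace. The argument then consists of (i) an explicit computation of $\mE[\M_n]$ and its eigengap, (ii) an operator-norm concentration bound on $\|\M_n - \mE[\M_n]\|$, and (iii) a Davis--Kahan (or Wedin) inequality combining the two.

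Step (i) is an exercise in Gaussian fourth moments. Expand $y_i^2 = \langle \x_i,\B\balpha_{t(i)}\rangle^2 + 2\epsilon_i \langle \x_i,\B\balpha_{t(i)}\rangle + \epsilon_i^2$ and use the independence of $\epsilon_i$ from $\x_i$; the cross term vanishes, and the identity $\mE[(\bbeta^\top \x)^2 \x \x^\top] = 2\bbeta\bbeta^\top + \|\bbeta\|^2 \I_d$ for $\x\sim \cN(0,\I_d)$ yields
\begin{equation*}
\mE[\M_n] \;=\; 2\,\B\,\barLambda\,\B^\top + \bigl(\tr(\barLambda) + \mE[\epsilon^2]\bigr)\, \I_d,
\end{equation*}
using $\B^\top\B = \I_r$ and the definition of $\barLambda$. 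The top-$r$ eigenspace of this matrix is exactly $\text{range}(\B)$, and the gap between its $r$-th and $(r+1)$-th eigenvalues is exactly $2\tnu$.

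The main obstacle is step (ii): the summands $Z_i := y_i^2 \x_i \x_i^\top$ are heavy-tailed, since each is the product of a sub-exponential scalar and a rank-one Gaussian outer product, so $\|Z_i\|_{\mathrm{op}} = y_i^2\|\x_i\|^2$ is only sub-exponential around its mean of order $d$. I would proceed by truncation: on the high-probability event $\max_i \|\x_i\|^2 \lesssim d\log n_1$ and $\max_i y_i^2 \lesssim \log n_1$ (using $\|\B\balpha_{t(i)}\|=\Theta(1)$ from \cref{assump:task}), replace $Z_i$ by its truncation at level $L \lesssim d\,\polylog(n_1)$, bound the negligible expectation bias, and apply the matrix Bernstein inequality to the truncated sum. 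The per-sample matrix variance $\|\mE[Z_i^2]\|_{\mathrm{op}}$ reduces to expectations of polynomials of degree $\le 6$ in a standard Gaussian, and by direct Isserlis computation (together with $\|\balpha_{t(i)}\|=\Theta(1)$) is of order $d$. Bernstein then produces
\begin{equation*}
\|\M_n - \mE[\M_n]\| \;\lesssim\; \sqrt{\frac{d\,\log d}{n_1}} \;+\; \frac{d\,\polylog(n_1,d)}{n_1},
\end{equation*}
with high probability, and the first term dominates under the sample-size hypothesis of the theorem.

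For step (iii), the hypothesis $n_1 \gtrsim \polylog(d,n_1)\, rd\,\tkappa/\tnu$ forces the bound above to be at most $\tnu$, so Davis--Kahan applied with the population gap $2\tnu$ yields
\begin{equation*}
\sin\theta(\Bone,\B) \;\lesssim\; \frac{\|\M_n - \mE[\M_n]\|}{\tnu} \;\lesssim\; \sqrt{\frac{d\,\log d}{\tnu^2\, n_1}}.
\end{equation*}
The claimed form $\tlO\bigl(\sqrt{\tkappa\, d r/(\tnu\, n_1)}\bigr)$ then follows from the identity $\tkappa\tnu r = \tr(\barLambda) = \Theta(1)$ (which itself is a direct consequence of $\|\balpha_j\|=\Theta(1)$), since this lets us rewrite $1/\tnu^2 \asymp \tkappa r/\tnu$ up to constants. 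The final addendum on equal per-task sample counts is immediate: under that sampling, $\barLambda = \frac{1}{t}\A^\top\A$ by definition, hence $\tnu = \nu$ and $\tkappa = \bkappa$.
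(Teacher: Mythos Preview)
Your proposal is correct and matches the paper's approach: compute the population moment via Gaussian fourth-moment identities, establish operator-norm concentration by truncation plus matrix Bernstein (the paper splits the deviation into three pieces---pure signal, cross term, and pure noise---and bounds each separately, but your unified treatment of $Z_i = y_i^2\x_i\x_i^\top$ works equally well and yields the same $\tlO(\sqrt{d/n_1})$ bound after using $\tr(\barLambda)=\Theta(1)$), then apply Davis--Kahan with eigengap $2\tnu$. One tiny slip: the variance $\mE[Z_i^2] = \mE[y_i^4\|\x_i\|^2\x_i\x_i^\top]$ involves Gaussian polynomials of degree up to $8$, not $6$, but Isserlis handles this just as well and the order-$d$ conclusion is unchanged.
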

We first make several remarks regarding this result.
\begin{itemize}[leftmargin=.5cm]
	\item \cref{thm:B_recovery} is flexible---the only dependence of the estimator on the distribution of samples across the various tasks is factored into the \textit{empirical} task diversity parameters $\tnu$ and $\tkappa$. Under a uniform observation model the guarantee also immediately translates into an analogous statement which holds with the population task diversity parameters $\nu$ and $\bkappa$.
	\item \cref{thm:B_recovery} provides a non-trivial guarantee even in the setting where we only have $\Theta(1)$ samples from each task, but $t=\tlTheta(dr)$. In this setting, recovering the parameters of any given task is information-theoretically impossible. However, the method-of-moments estimator can efficiently aggregate information \textit{across} the tasks to learn $\B$. 
	\item The estimator does rely on the moment structure implicit in the Gaussian design to extract $\B$. However, \cref{thm:B_recovery} has no explicit dependence on $t$ and is close-to-optimal in the constant-snr regime; see \cref{thm:feature_lb_yb} for our lower bound.
\end{itemize}
We now provide a summary of the proof. Under oracle access to the population mean $\mE[\frac{1}{n} \sum_i y_i^2 \x_i \x_i^\top] = (2\barGamma + (1+\tr(\barGamma)) \I_d)$, where $\barGamma = \sumton \B \balpha_{t(i)} \balpha_{t(i)}^\top \B^\top$ (see \cref{lem:estimator_mean}), we can extract the features $\B$ by directly applying PCA to this matrix, under the condition that $\tkappa > 0$, to extract its column space. In practice, we only have access to the samples $\{(\x_i, y_i) \}_{i=1}^{n}$.  \cref{algo:LF_learn} uses the empirical moments $\frac{1}{n} \sum_i y_i^2 \x_i \x_i^\top$ in lieu of the population mean. Thus, to show the result, we argue that $\sumton y_i^2 \x_i\x_i\trans = \mE[\sumton y_i^2 \x_i\x_i\trans] + \E$ where $\norm{\E}$ is a small, stochastic error (see \cref{thm:estimator_conc}). If this holds, the Davis-Kahan $\sin \theta$ theorem \citep{bhatia2013matrix} shows that PCA applied to the empirical moments provides an accurate estimate of $\B$ under perturbation by a sufficiently small $\E$.

 The key technical step in this argument is to show sharp concentration (in spectral norm) of the matrix-valued noise terms, $\E_1$, $\E_2$, and $\E_3$:
\begin{align*}
	& \sumton y_i^2 \x_i\x_i\trans - (2\barGamma + (1+\tr(\barGamma)) \I_d = \E_1+\E_2+\E_3,
\end{align*}
which separate the empirical moment from its population mean (see \cref{lem:first_noise,,lem:second_noise,,lem:third_noise}). The exact forms are deferred to the \cref{app:mom}, but as an example we have that $\E_2 = \sumton 2 \epsilon_i \x_i^\top \B \balpha_{t(i)} \x_i \x_i^\top$. The noise terms $\E_1, \E_2$, and $\E_3$ are neither bounded (in spectral norm) nor are they sub-gaussian/sub-exponential-like, since they contain fourth-order terms in the $\x_i$ and $\epsilon_i$. The important tool we use to show concentration of measure for these objects is a truncation argument along with the matrix Bernstein inequality (see \cref{lem:trunc_mb}).


\section{Meta-Test: Transferring Features to New Tasks}
\label{sec:new_task}
\begin{algorithm}[!t]
\caption{Linear Regression for Learning a New Task with a Feature Estimate}\label{algo:LF_newtask}
\begin{algorithmic}
\renewcommand{\algorithmicrequire}{\textbf{Input:}}
\renewcommand{\algorithmicensure}{\textbf{Output:}}
\REQUIRE $\Bone, \{ (\x_i, y_i) \}_{i=1}^{n_2}$.
\STATE $\halpha \leftarrow (\sum_{i=1}^{n_2} \Bone \x_i \x_i^\top \Bone^\top)^{\dagger} \Bone^\top \sum_{i=1}^{n_2} \x_i y_i$
\STATE \textbf{return} $\halpha$
\end{algorithmic}
\end{algorithm}
Having estimated a linear feature representation $\Bone$ shared across related tasks, our second goal is to transfer this representation to a new, unseen task---the ($t+1$)st task---to improve learning. In the context of the model in \cref{eq:model_main}, the approach taken in \cref{algo:LF_newtask} uses $\Bone$ as a plug-in surrogate for the unknown $\B$, and attempts to estimate $\balpha_{t+1} \in \mR^{r}$.  Formally we define our estimator $\balpha$ as follows:
\begin{align}
	\halpha = \arg \min_{\balpha} \norm{\y - \X \Bone \balpha}^2,	 \label{eq:lr_red}
\end{align}
where $n_2$ samples $(\X, \y)$ are generated from the model in \cref{eq:model_main} from the ($t+1$)st task. Effectively, the feature representation $\Bone$ performs dimension reduction on the input covariates $\X$, allowing us to learn in a lower-dimensional space. Our focus is on understanding the generalization properties of the estimator in \cref{algo:LF_newtask}, since \eqref{eq:lr_red} is an ordinary least-squares objective which can be analytically solved.

 Assuming we have produced an estimate $\Bone$ of the true feature matrix $\B$, we can present our main result on the sample complexity of meta-learned linear regression. 
\begin{theorem}
	Suppose $n_2$ data points, $\{ (\x_i, y_i) \}_{i=1}^{n_2}$, are generated from the model in \eqref{eq:model_main}, where \cref{assump:design} holds, from a single task satisfying $\norm{\balpha_{t+1}}^2 \leq O(1)$. Then, if $\sin \theta(\Bone, \B) \leq \delta$ and $n_2 \gtrsim  r \log n_2$, the output $\halpha$ from \cref{algo:LF_newtask} satisfies
	\begin{align}
		\norm{\Bone \halpha-\B \balpha_{t+1}}^2 \leq \tlO \left( \delta^2 + \frac{r}{n_2} \right), \label{eq:transfer_guarantee}
	\end{align}
	with probability at least $1-O(n_2^{-100})$.
	\label{thm:lr_transfer}
\end{theorem}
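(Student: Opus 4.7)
}
The plan is to derive the closed-form solution of the least-squares problem \eqref{eq:lr_red}, plug in the generative model $\y = \X\B\balpha_{t+1} + \bepsilon$, and decompose $\Bone\halpha - \B\balpha_{t+1}$ into an approximation-error (``bias'') term that depends on the subspace distance $\delta$ and a stochastic (``variance'') term that depends on $\bepsilon$. Concretely, one writes
\begin{equation*}
\Bone\halpha - \B\balpha_{t+1} = \underbrace{\Bone(\Bone^\top\X^\top\X\Bone)^{\dagger}\Bone^\top\X^\top\X\B\balpha_{t+1} - \B\balpha_{t+1}}_{\text{bias}} + \underbrace{\Bone(\Bone^\top\X^\top\X\Bone)^{\dagger}\Bone^\top\X^\top\bepsilon}_{\text{variance}}.
\end{equation*}
The triangle inequality and $(a+b)^2 \le 2a^2 + 2b^2$ then reduce the problem to bounding the two pieces separately.

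First I would establish the fundamental concentration primitive $\Bone^\top \X^\top \X \Bone \succeq (n_2/2)\I_r$ and $\norm{\Bone^\top\X^\top\X\Bone/n_2 - \I_r} \le 1/2$ with high probability. Since $\Bone$ is a deterministic orthonormal $d\times r$ matrix conditional on the meta-train data, $\Bone^\top \x_i$ are i.i.d.\ isotropic sub-Gaussian vectors in $\mR^r$, so a standard covariance concentration bound (e.g., Vershynin's sub-Gaussian matrix inequality) yields the claim as soon as $n_2 \gtrsim r \log n_2$. This also guarantees the pseudoinverse is a genuine inverse and that $\norm{(\Bone^\top\X^\top\X\Bone)^{-1}} \le 2/n_2$.

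Next I would bound the bias. Splitting $\B\balpha_{t+1} = \Bone\Bone^\top \B\balpha_{t+1} + \w$ with $\w = (\I - \Bone\Bone^\top)\B\balpha_{t+1}$, the $\Bone\Bone^\top \B\balpha_{t+1}$ term is reproduced exactly by the operator $\Bone(\Bone^\top\X^\top\X\Bone)^{-1}\Bone^\top\X^\top\X$, so the bias equals $\Bone(\Bone^\top\X^\top\X\Bone)^{-1}\Bone^\top\X^\top\X\w - \w$. By \cref{assump:task} and the definition of the principal-angle distance, $\norm{\w} \le \sin\theta(\Bone,\B)\cdot\norm{\balpha_{t+1}} \le O(\delta)$. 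The remaining cross term $\Bone^\top\X^\top\X\w$ is mean-zero (because $\Bone \perp \w$ and $\mE[\x\x^\top]=\I_d$) and its $r$ coordinates are sums of $n_2$ mean-zero sub-exponential random variables each with variance $O(\norm{\w}^2)$; a matrix Bernstein/Hanson--Wright argument yields $\norm{\Bone^\top\X^\top\X\w} \lesssim \delta\sqrt{n_2 r \log n_2}$, which after multiplication by $\norm{(\Bone^\top\X^\top\X\Bone)^{-1}} \lesssim 1/n_2$ contributes $\tlO(\delta \sqrt{r/n_2}) = \tlO(\delta)$. Hence the bias has squared norm $\tlO(\delta^2)$.

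Finally I would bound the variance. Using orthonormality of $\Bone$ and the spectral bound on $(\Bone^\top\X^\top\X\Bone)^{-1}$,
\begin{equation*}
\norm{\Bone(\Bone^\top\X^\top\X\Bone)^{-1}\Bone^\top\X^\top\bepsilon}^2 \le \frac{4}{n_2^2}\norm{\Bone^\top\X^\top\bepsilon}^2.
\end{equation*}
Conditioning on $\X$, the vector $\Bone^\top\X^\top\bepsilon \in \mR^r$ is sub-Gaussian with covariance proxy $\Bone^\top\X^\top\X\Bone \preceq (3n_2/2)\I_r$, so a Hanson--Wright type tail bound gives $\norm{\Bone^\top\X^\top\bepsilon}^2 \lesssim n_2 r \log n_2$ with the desired probability. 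Combining, the variance contributes $\tlO(r/n_2)$, yielding the stated bound. The main technical obstacle is the bias cross term $\Bone^\top\X^\top\X\w$: one has to exploit that $\w$ lies exactly in the orthogonal complement of $\Bone$ to get the mean-zero structure, and then invoke a sharp sub-exponential matrix concentration inequality to avoid a spurious $\delta\sqrt{d}$ factor—otherwise the $\delta^2$ rate cannot be achieved.
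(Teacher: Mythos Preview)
Your proposal is correct and follows essentially the same route as the paper: the same bias--variance split of $\Bone\halpha-\B\balpha_{t+1}$, the same insertion of $\I_d=\Bone\Bone^\top+(\I_d-\Bone\Bone^\top)$ to cancel the empirical covariance on the $\Bone$-subspace in the bias, and the same Hanson--Wright / sub-Gaussian covariance concentration for the variance. The only tactical differences are that (i) for the variance the paper applies Hanson--Wright directly to the full quadratic form $\bepsilon^\top\A\bepsilon$ with $\A=\X\Bone(\Bone^\top\X^\top\X\Bone)^{-2}\Bone^\top\X^\top$ and then controls $\tr(\A),\fnorm{\A},\norm{\A}$, whereas you first bound $\norm{(\Bone^\top\X^\top\X\Bone)^{-1}}$ and then $\norm{\Bone^\top\X^\top\bepsilon}$; and (ii) for the bias cross term the paper bounds the \emph{operator} norm $\norm{(\Bone^\top\X^\top\X\Bone)^{-1}\Bone^\top\X^\top\X\Btwo}$ via a two-subspace covariance concentration lemma, while you bound the \emph{vector} $\Bone^\top\X^\top\X\w$ coordinate-wise. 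Both variants yield the same $\tlO(\delta^2+r/n_2)$ and the same high-probability guarantee; your vector-level argument for the cross term is in fact slightly sharper (it gives $\tlO(\delta\sqrt{r/n_2})$ before the dominant $\norm{\w}=O(\delta)$ piece) and sidesteps any dimension-$d$ covering.
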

Note that $\B \balpha_{t+1}$ is simply the underlying parameter in the regression model in \cref{eq:model_main}. We make several remarks about this result:
\begin{itemize}[leftmargin=.5cm]
	\item \cref{thm:lr_transfer} decomposes the error of transfer learning into two components.  The first term, $\tlO(\delta^2)$, arises from the bias of using an imperfect feature estimate $\Bone$ to transfer knowledge across tasks. The second term, $\tlO(\frac{r}{n_2})$, arises from the variance of learning in a space of reduced dimensionality.
	
	\item Standard generalization guarantees for random design linear regression ensure that the parameter recovery error is bounded by $O(\frac{d}{n_2})$ w.h.p.\ under the same assumptions \citep{hsu2012random}. Meta-learning of the linear representation $\Bone$ can provide a significant reduction in the sample complexity of learning when $\delta^2 \ll \frac{d}{n_2}$ and $r \ll d$. 
	\item Conversely, if $\delta^2 \gg \frac{d}{n_2}$ the bounds in \cref{eq:transfer_guarantee} imply that the overhead of learning the feature representation may overwhelm the potential benefits of transfer learning (with respect to baseline of learning the ($t+1$)st task in isolation). This accords with the well-documented empirical phenomena of ``negative" transfer observed in large-scale deep learning problems where meta/transfer-learning techniques actually result in a degradation in performance on new tasks \citep{wang2019characterizing}. For diverse tasks (i.e. $\kappa \leq O(1)$), using \cref{algo:LF_learn} to estimate $\Bone$ suggests that ensuring $\delta^2 \ll \frac{d}{n_2}$, where $\delta^2 = \tlO(\frac{dr}{\nu n_1})$, requires $\frac{n_1}{n_2} \gg r/\nu$. That is, the ratio of the number of samples used for feature learning to the number of samples used for learning the new task should exceed the complexity of the feature representation to achieve ``positive" transfer.
\end{itemize}
In order to obtain the rate in \cref{thm:lr_transfer} we use a bias-variance analysis of the estimator error $\Bone \halpha-\B \balpha_{t+1}$ (and do not appeal to uniform convergence arguments). Using the definition of $\y$ we can write the error as,
\begin{align*}
	& \Bone\halpha-\B \balpha_0 = \Bone (\Bone^\top \X^\top \X \Bone)^{-1} \Bone \X^\top \X \B \balpha_0 - \B \balpha_0 +  \Bone(\Bone^\top \X^\top \X \Bone)^{-1} \Bone^\top \X^\top \bepsilon.
\end{align*}
The first term contributes the bias term to \cref{eq:transfer_guarantee} while the second contributes the variance term. 
Analyzing the fluctuations of the (mean-zero) variance term can be done by controlling the norm of its square, $\bepsilon^\top \A \bepsilon$, where $\A = \X \Bone (\Bone^\top \X^\top \X \Bone)^{-2} \Bone^\top \X^\top$. We can bound this (random) quadratic form by first appealing to the Hanson-Wright inequality to show w.h.p. that $\bepsilon^\top \A \bepsilon \lesssim \tr(\A) + \tlO(\fnorm{\A} + \norm{\A})$. The remaining randomness in $\A$ can be controlled using matrix concentration/perturbation arguments (see \cref{lem:test_bias}).

With access to the true feature matrix $\Bone$ (i.e., setting $\Bone=\B$) the term $\Bone (\B^\top \X^\top \X \B)^{-1} \B \X^\top \X \B \balpha_0 - \B \balpha_0 = 0$, due to the cancellation in the empirical covariance matrices, $(\B^\top \X^\top \X \B)^{-1} \B \X^\top \X \B = \I_r$. This cancellation of the empirical covariance is essential to obtaining a tight analysis of the least-squares estimator. We cannot rely on this effect in full since $\Bone \neq \B$. However, a naive analysis which splits these terms, $(\Bone^\top \X^\top \X \Bone)^{-1}$ and $\Bone \X^\top \X \B $ can lead to a large increase in the variance in the bound. To exploit the fact $\Bone \approx \B$, we project the matrix $\B$ in the leading $\X \B$ term onto the column space of $\Bone$ and its complement---which allows a partial cancellation of the empirical covariances in the subspace spanned by $\Bone$. The remaining variance can be controlled as in the previous term (see \cref{lem:test_var}).
\section{Lower Bounds for Feature Learning}
\label{sec:lf_lb}
To complement the upper bounds provided in the previous section, in this section we derive information-theoretic limits for feature learning in the model \cref{eq:model_main}. To our knowledge, these results provide the first sample-complexity lower bounds for feature learning, with regards to subspace recovery, in the multi-task setting.  
While there is existing literature on (minimax)-optimal estimation of low-rank matrices (see, for example, \citet{rohde2011estimation}), that work focuses on the (high-dimensional) estimation of matrices, whose only constraint is to be low rank. Moreover, error is measured in the additive prediction norm. In our setting, we must handle the additional difficulties arising from the fact that we are interested in (1) learning a column space (i.e., an element in the $\Gr$) and (2) the error between such representatives is measured in the subspace angle distance. We begin by presenting our lower bound for feature recovery.
\begin{theorem}
	Suppose a total of $n$ data points are generated from the model in \eqref{eq:model_main} satisfying \cref{assump:design} with $\x_i \sim \cN(0, \I_d)$, $\epsilon_i \sim \cN(0, 1)$, with an equal number from each task, and that \cref{assump:task} holds with $\balpha_j$ for each task normalized to $\norm{\balpha_j}=\frac{1}{2}$. Then, there are $\balpha_j$ for $r \leq \frac{d}{2}$ and $n \geq \max \left(\frac{1}{8\nu}, r(d-r) \right)$ so that:
	\begin{align*}
		\inf_{\Bone} \sup_{\B \in \Gr} \sin \theta(\Bone, \B) \geq \Omega \left( \max \left(\sqrt{\frac{1}{\nu}} \sqrt{\frac{1}{n}}, \sqrt{\frac{dr}{n}} \right) \right),
	\end{align*} 
 with probability at least $\frac{1}{4}$, where the infimum is taken over the class of estimators that are functions of the $n$ data points.
 \label{thm:feature_lb_yb}
\end{theorem}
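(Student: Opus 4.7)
The plan is to prove the two terms of the lower bound via separate adversarial constructions of $(\balpha_j)$ and take their maximum. The first term $\sqrt{1/(n\nu)}$ comes from Le Cam's two-point method targeted at the minimum-eigenvalue direction of $\barLambda = \tfrac{1}{t}\A^\top \A$; the second term $\sqrt{rd/n}$ comes from Fano's inequality on a single-column-rotation packing of the Grassmannian, with $\barLambda$ chosen isotropic. Under Gaussian design and noise, the KL between the two data distributions collapses to
\begin{equation*}
\KL(\Pr_\B \Vert \Pr_{\B'}) \;=\; \tfrac{n}{2}\,\tr\bigl((\B - \B')^\top (\B - \B')\,\barLambda\bigr),
\end{equation*}
which reduces all information-theoretic steps to spectral calculations involving $\barLambda$.

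For the $\sqrt{1/(n\nu)}$ term, I would fix $\balpha_j$'s satisfying \cref{assump:task}, let $\v \in \mR^r$ be a unit eigenvector of $\barLambda$ with eigenvalue $\nu$, take $\B_0 = [\e_1, \ldots, \e_r]$, and define
\begin{equation*}
\B_1 \;=\; \B_0\bigl(\I_r - (1-\cos\delta)\v\v^\top\bigr) \;+\; \sin\delta \cdot \e_{r+1} \v^\top.
\end{equation*}
A direct expansion confirms $\B_1^\top \B_1 = \I_r$ and that the only nonzero principal angle equals $\delta$, so $\sin\theta(\B_0,\B_1) = \sin\delta$. A short computation gives $(\B_0-\B_1)^\top(\B_0-\B_1) = 2(1-\cos\delta)\v\v^\top$, whence $\KL = n(1-\cos\delta)\nu \leq n\delta^2\nu/2$. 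Choosing $\delta \asymp 1/\sqrt{n\nu}$---permissible because $n \geq 1/(8\nu)$ keeps $\delta$ in the small-angle regime---bounds the KL by a constant, and Le Cam's inequality then gives testing error at least $1/4$.

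For the $\sqrt{rd/n}$ term, I would choose $\balpha_j$'s so that $\barLambda = \nu\I_r$ (attainable by taking the $\balpha_j$'s as scaled rows of a tight frame, forcing $\nu = 1/(4r)$ under the normalization $\Vert \balpha_j\Vert = 1/2$). Take a Varshamov--Gilbert $\pi/3$-packing $\{u_k\}_{k=1}^{K}$ of the unit sphere inside $\mathrm{span}(\e_{r+1}, \ldots, \e_d)$ with $K \geq \exp(c(d-r))$, and for each $k$ define the orthonormal matrix
\begin{equation*}
\B_k \;=\; [\cos\delta\cdot \e_1 + \sin\delta\cdot u_k,\ \e_2,\ \ldots,\ \e_r],
\end{equation*}
which differs from $\B_0$ in its first column alone. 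The matrix $\B_j^\top \B_k$ is diagonal with $r-1$ ones and a single entry $c_{jk}=\cos^2\delta + \sin^2\delta\,\langle u_j,u_k\rangle$, so using $\langle u_j, u_k\rangle \leq 1/2$ one gets $\sin^2\theta(\B_j, \B_k) = 1 - c_{jk}^2 \gtrsim \delta^2$; simultaneously $\Vert\B_j - \B_k\Vert_F^2 = \sin^2\delta\,\Vert u_j - u_k\Vert^2 \lesssim \delta^2$. Since $\sigma_1(\barLambda)=\nu$, the pairwise KL is at most $(n/2)\nu \cdot O(\delta^2) = O(n\nu\delta^2)$, so Fano's inequality with $\log K \asymp (d-r)$ forces $n\nu\delta^2 \lesssim (d-r)$, i.e., $\delta \lesssim \sqrt{(d-r)/(n\nu)} \asymp \sqrt{rd/n}$ by plugging in $\nu = 1/(4r)$ and $r \leq d/2$. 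The hypothesis $n \geq r(d-r)$ keeps $\delta$ bounded, so the construction stays within the linearized regime.

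The principal obstacle is the geometric bookkeeping for the Fano packing: one must verify that rotating only a \emph{single} column of $\B_0$ while varying its direction over $\exp(c(d-r))$ points on the sphere is enough to realize the full Grassmannian packing exponent $(d-r)$, while keeping $\Vert\B_j - \B_k\Vert_F^2 = O(\delta^2)$ rather than $O(r\delta^2)$ as would arise from rotating all $r$ columns simultaneously; this single-column structure is what ultimately allows the KL budget to exhaust the packing entropy rather than blowing up by a factor of $r$. The adversarial choice of isotropic $\barLambda$ then reduces $\sigma_1(\barLambda)$ from its generic upper bound $1/4$ to $\nu = 1/(4r)$, precisely the factor that converts the naive $\sqrt{d/n}$ Fano rate into $\sqrt{rd/n}$. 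Modulo these geometric points, the remaining steps---Le Cam's inequality, Fano's inequality, and combining the two lower bounds via the maximum---follow standard recipes.
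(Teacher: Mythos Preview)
Your Le~Cam argument for the $\sqrt{1/(n\nu)}$ term is essentially the same as the paper's (Lemma~\ref{lem:lecam_two_point}): both rotate $\B$ by a small angle in the direction of the minimum eigenvector of $\barLambda$ and read off the KL as $n\nu\,O(\delta^2)$.

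For the $\sqrt{rd/n}$ term your route is genuinely different. The paper does \emph{not} use standard Fano; it invokes Guntuboyina's $\chi^2$-based generalized Fano bound (Lemma~\ref{thm:yang_barron}), packing the \emph{full} Grassmannian at metric-entropy exponent $r(d-r)$ and covering the induced measures in $\chi^2$-divergence (Lemma~\ref{lem:mut_info_yb}); the factor $r$ enters through the Grassmannian dimension $r(d-r)$, and the argument works for \emph{any} $\A$ with $\norm{\balpha_j}=\tfrac12$. Your argument instead uses ordinary local Fano on a much smaller packing of size $\exp(c(d-r))$ obtained by rotating a \emph{single} column, and manufactures the missing factor $r$ by choosing $\barLambda$ so that the relevant diagonal entry is $1/(4r)$; this converts the KL budget $n\cdot(1/4r)\cdot\delta^2$ into $\delta^2\asymp r(d-r)/n$. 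Both are correct and yield the same rate. Your approach is more elementary (only KL, no $\chi^2$ moment-generating calculations, no metric-entropy machinery for $\Gr$), while the paper's approach is more intrinsic in that it does not rely on the freedom to choose $\barLambda$ adversarially and exploits the full Grassmannian geometry directly. The paper's remark that ``the standard (global) Fano method \ldots is known to provide rate-suboptimal lower bounds'' refers to the Yang--Barron global-entropy variant, not to the local packing Fano you use, so there is no contradiction.

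One small point to tidy: as written you use two different task matrices (generic $\barLambda$ for Le~Cam, isotropic for Fano), whereas the theorem promises a \emph{single} choice of $\{\balpha_j\}$ realizing the maximum. This is easily repaired: take the paper's Le~Cam construction (which has $(\barLambda)_{ii}=1/(4r)$ for $i\le r-2$ regardless of $\nu$) and run your single-column Fano rotation on column~$1$; the KL then picks up only $(\barLambda)_{11}=1/(4r)$ and your Fano calculation goes through unchanged, while Le~Cam on the last coordinate still delivers $\sqrt{1/(n\nu)}$.
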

Again we make several comments on the result.
\begin{itemize}[leftmargin=.25cm]
	\item The result of \cref{thm:feature_lb_yb} shows that the estimator in \cref{algo:LF_learn} provides a close-to-optimal estimator of the feature representation parameterized by $\B$--up to logarithmic and conditioning factors (i.e. $\kappa, \nu$)\footnote{Note in the setting that $\kappa \leq O(1)$, $\nu \sim  \frac{1}{r}$.} in the task diversity matrix--that is independent of the task number $t$. Note that under the normalization for $\balpha_i$, as $\kappa \to \infty$ (i.e. the task matrix $\A$ becomes ill-conditioned) we have that $\nu \to 0$. So the first term in \cref{thm:feature_lb_yb} establishes that task diversity is necessary for recovery of the subspace $\B$.
	\item The dimension of  $\Gr$ (i.e., the number of free parameters needed to specify a feature set) is $r(d-r) \geq \Omega(dr)$ for $d/2 \geq r$; hence the second term in \cref{thm:feature_lb_yb} matches the scaling that we intuit from parameter counting.
	\item  Obtaining tight dependence of our subspace recovery bounds on conditioning factors in the task diversity matrix (i.e. $\kappa, \nu$) is an important and challenging research question. We believe the gap between in conditioning/eigenvalue factors between \cref{thm:B_recovery} and \cref{thm:feature_lb_yb} on the $\sqrt{dr/n}$ term is related to a  problem that persists for classical estimators in linear regression (i.e. for 
	the Lasso estimator in sparse linear regression). Even in this setting, a gap remains with respect to condition number/eigenvalue factors of the data design matrix $\X$, between existing upper and lower bounds (see \citet[Section 7]{chen2016bayes}, \citet[Theorem 1, Theorem 2]{raskutti2011minimax} and \citet{zhang2014lower} for example). In our setting the task diversity matrix $\A$ enters into the problem in a similar fashion to the data design matrix $\X$ in these aforementioned settings.
\end{itemize}
The dependency on the task diversity parameter $\frac{1}{\nu}$ (the first term in \cref{thm:feature_lb_yb}) is achieved by constructing a pair of feature matrices and an ill-conditioned task matrix $\A$ that cannot discern the direction along which they defer. The proof strategy to capture the second term uses a $f$-divergence based minimax technique from \citet{guntuboyina2011lower}, similar in spirit to the global Fano (or Yang-Barron) method.  
\begin{lemma}\citep[Theorem 4.1]{guntuboyina2011lower}
For any increasing function $\ell : [0, \infty) \to [0, \infty)$,
\begin{align}
\inf_{\hat{\theta}} \sup_{\theta \in \Theta} \Pr_{\theta}[\ell(\rho(\hat{\theta}, \theta)) \geq  \ell(\eta/2)] \geq \sup_{\eta > 0, \epsilon > 0} \left \{ 1- \left(\frac{1}{N(\eta)} + \sqrt{\frac{(1+\epsilon^2) M_C(\epsilon, \Theta)}{N(\eta)}} \right) \right \}. \nonumber
\end{align}
\label{thm:yang_barron}
\end{lemma}
In the context of the previous result $N(\eta)$ denotes a lower bound on the $\eta$-packing number of the metric space $(\Theta, \rho)$. Moreover, $M_C(\epsilon, \Theta)$ is a positive real number for which there exists a set $G$ with cardinality $\leq M_C(\epsilon, S)$ and probability measures $Q_{\alpha}$, $\alpha \in G$ such that $\sup_{\theta \in S} \min_{\alpha \in G} \chi^2(\Pr_{\theta}, Q_{\alpha}) \leq \epsilon^2$, where $\chi^2$ denotes the chi-squared divergence. In words, $M_{C}(\epsilon, S)$ is an upper bound on the $\epsilon$-covering on the space $\{ \Pr_{\theta} : \theta \in S \}$ when distances are measured by the square root of the $\chi^2$-divergence.

There are two key ingredients to using \cref{thm:yang_barron} and obtaining a tight lower bound. First, we must exhibit a large family of distinct, well-separated feature matrices $\{ \B_i \}_{i=1}^M$  (i.e., a packing at scale $\eta$). Second, we must argue this set of feature matrices induces a family of distributions over data $\{ (\x_i, y_i) \}_{B_i}$ which are statistically ``close" and fundamentally difficult to distinguish amongst. This is captured by the fact the $\epsilon$-covering number, measured in the space of \textit{distributions} with divergence measure $D_{f}(\cdot, \cdot)$, is small. The standard (global) Fano method, or Yang-Barron method (see \citet[Ch. 15]{wainwright2019high}), which uses the $\KL$ divergence to measure distance in the space of measures, is known to provide rate-suboptimal lower bounds for parametric estimation problems.\footnote{Even for the simple problems of Gaussian mean estimation the classical Yang-Barron method is suboptimal; see \citet{guntuboyina2011lower} for more details.} Our case is no exception. To circumvent this difficulty we use the framework of \citet{guntuboyina2011lower}, instantiated with the $f$-divergence chosen as the $\chi^2$-divergence, to obtain a tight lower bound.
 
Although the geometry of $\Gr$ is complex, we can adapt results from \citet{pajor1998metric} to provide sharp upper/lower bounds on the metric entropy (or global entropy) of the Grassmann manifold (see \cref{prop:pajor}). At scale $\delta$, we find that the global covering/packing numbers of $\Gr$ in the subspace angle distance scale as $\asymp r(d-r) \log(\frac{1}{\delta})$. This result immediately establishes a lower bound on the packing number in $\Gr$ at scale $\eta$.

The second technical step of the argument hinges on the ability to cover the space of distributions parametrized by $\B$ in the space of measures $\{ \Pr_{\B} : \B \in \Gr \}$---with distance measured by an appropriate $f$-divergence. In order to establish a covering in the space of measures parametrized by $\B$, the key step is to bound the distance $\chi^2 (\Pr_{\B^{1}}, \Pr_{\B^{2}})$ for two different measures over data generated from the model \eqref{eq:model_main} with two different feature matrices $\B^1$ and $\B^2$ (see \cref{lem:mut_info_yb}). This control can be achieved in our random design setting by exploiting the Gaussianity of the marginals over data $\X$ and the Gaussianity of the conditionals of $\y | \X, \B$, to ultimately be expressed as a function of the angular distance between $\B^1$ and $\B^2$. From this, \cref{prop:pajor} also furnishes a bound on the covering number of $\{ \Pr_{\B} : \B \in \Gr \}$ at scale $\epsilon$. Combining these two results with \cref{thm:yang_barron} and tuning the scales $\eta$ and $\epsilon$ appropriately gives the final theorem.

\section{Simulations}
We complement our theoretical analysis with a series of numerical experiments highlighting the benefits (and limits) of meta-learning.\footnote{An open-source Python implementation to reproduce our experiments can be found at \url{https://github.com/nileshtrip/MTL}.}
For the purposes of feature learning we compare the performance of the method-of-moments estimator in \cref{algo:LF_learn} vs. directly optimizing the objective in \cref{eq:objective_main}. Additional details on our set-up are provided in \cref{sec:app_expt}.  
We construct problem instances by generating Gaussian covariates and noise as $\x_i \sim \cN(0, \I_d)$, $\epsilon_i \sim \cN(0, 1)$, and the tasks and features used for the first-stage feature estimation as $\balpha_i \sim \frac{1}{\sqrt{r}} \cdot \cN(0, \I_r)$, with $\B$ generated as a (uniform) random $r$-dimensional subspace of $\mR^d$. In all our experiments we generate an equal number of samples $n_t$ for each of the $t$ tasks, so $n_1 = t \cdot n_t$. In the second stage we generate a new, ($t+1$)st task instance using the same feature estimate $\B$ used in the first stage and otherwise generate $n_2$ samples, with the covariates, noise and $\balpha_{t+1}$ constructed as before. Throughout this section we refer to features learned via a first-order gradient method as LF-FO and the corresponding meta-learned regression parameter on a new task by meta-LR-FO. We use LF-MoM and meta-LR-MoM to refer to the same quantities save with the feature estimate learned via the method-of-moments estimator. We also use LR to refer to the baseline linear regression estimator on a new task which only uses data generated from that task.  

We begin by considering a challenging setting for feature learning where $d=100$, $r=5$, but $n_t=5$ for varying numbers of tasks $t$. 
\begin{figure}[!hbt]
\centering
\begin{minipage}[c]{.4\linewidth}
\includegraphics[width=\linewidth]{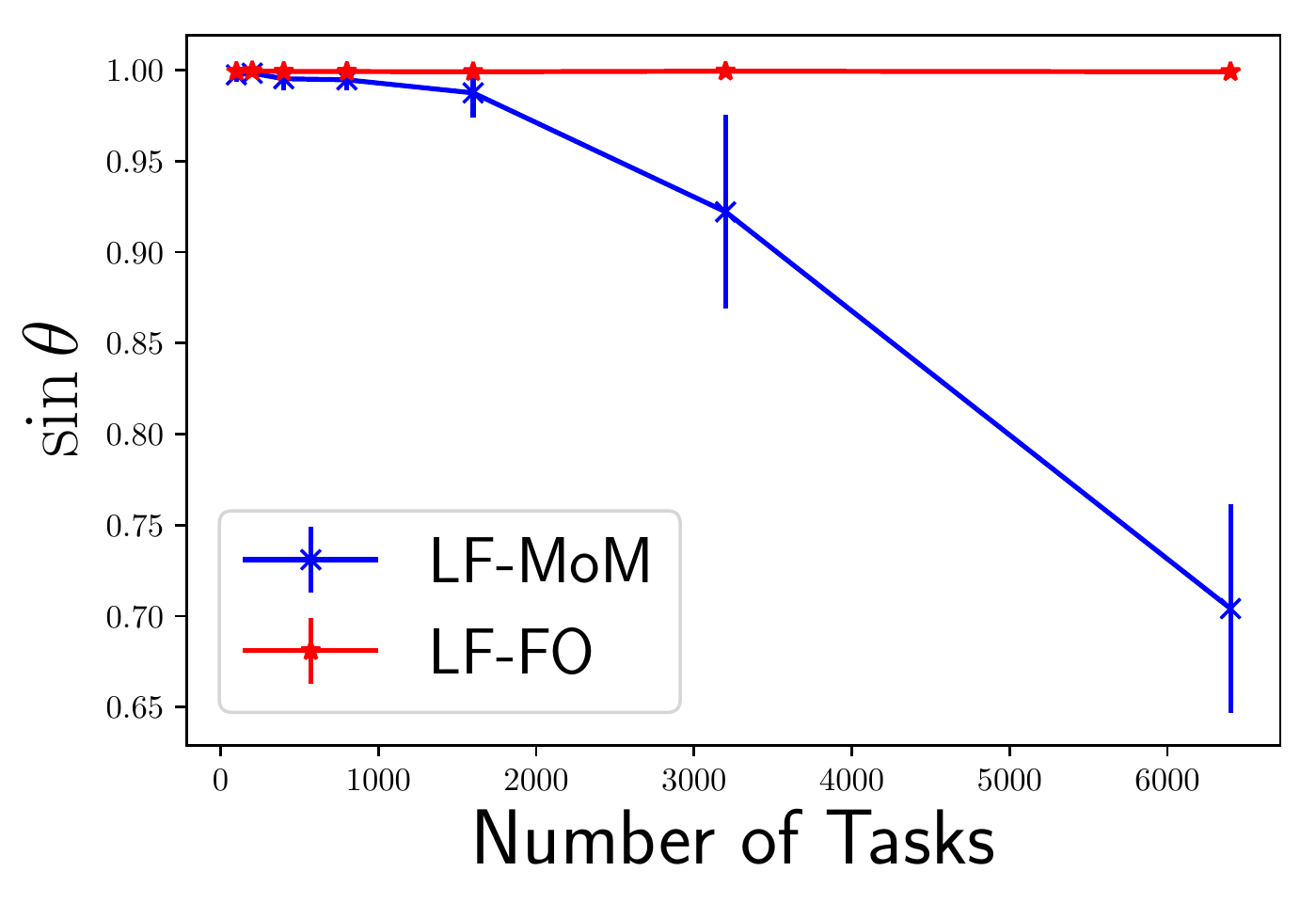}
\end{minipage}
\begin{minipage}[c]{.4\linewidth}
\includegraphics[width=\linewidth]{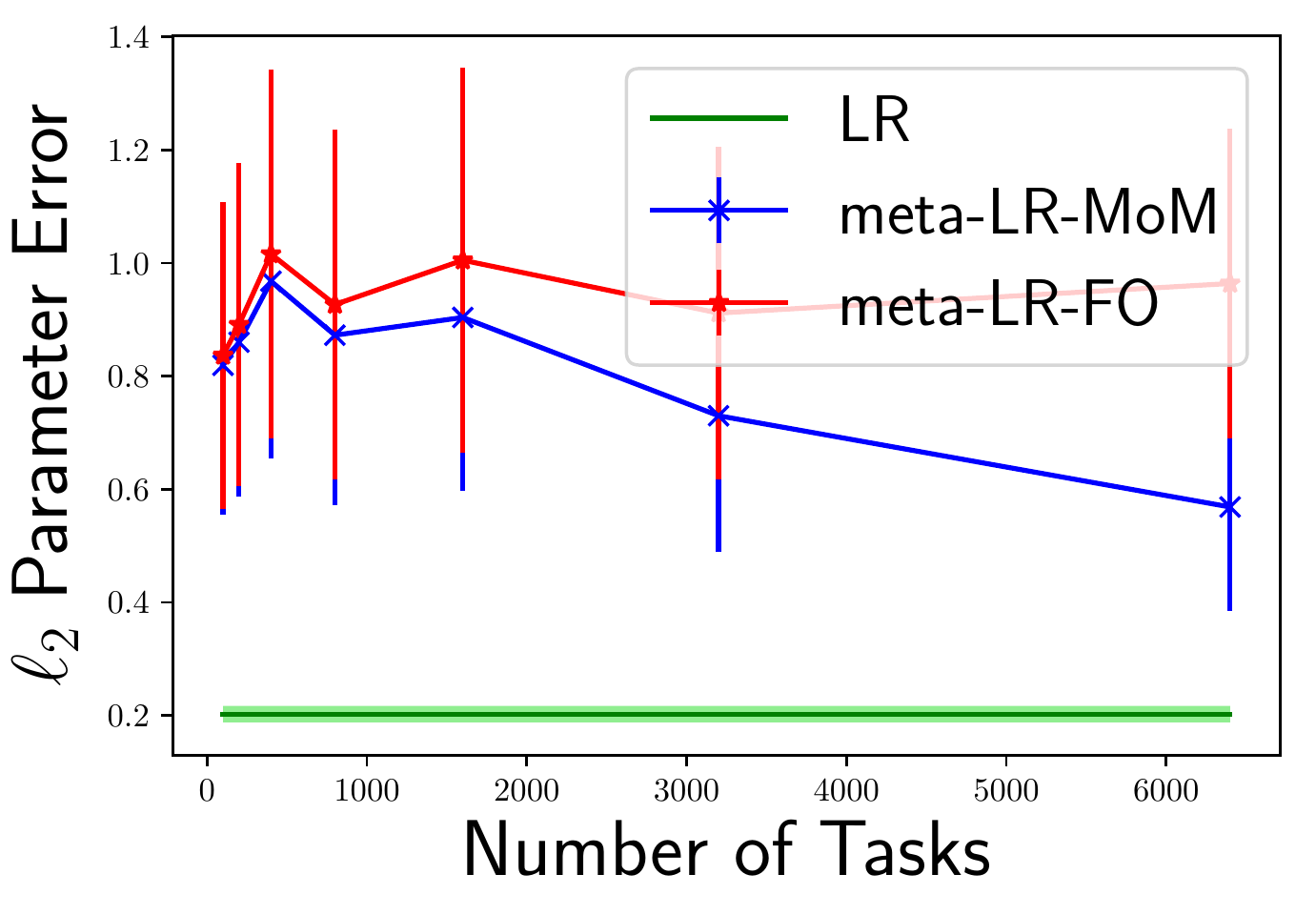}
\end{minipage}
\caption{Left: LF-FO vs. LF-MoM estimator with error measured in the subspace angle distance $\sin \theta(\Bone, \B)$. Right: meta-LR-FO and meta-LR-MoM vs. LR on new task with error measured on new task parameter. Here $d=100$, $r=5$, and $n_t = 5$ while $n_2=2500$ as the number of tasks is varied.
}
\label{fig:hard_task_change}
\end{figure} As \cref{fig:hard_task_change} demonstrates, the method-of-moments estimator is able to aggregate information across the tasks as $t$ increases to slowly improve its feature estimate, even though $n_t \ll  d$. The loss-based approach struggles to improve its estimate of the feature matrix $\B$ in this regime. This accords with the extra $t$ dependence in \cref{thm:main_landscape_convert} relative to \cref{thm:B_recovery}. In this setting, we also generated a ($t+1$)st test task with $d \ll n_2 = 2500$, to test the effect of meta-learning the linear representation on generalization in a new, unseen task against a baseline which simply performs a regression on this new task in isolation. \cref{fig:hard_task_change} also shows that meta-learned regressions perform significantly worse than simply ignoring first $t$ tasks. \cref{thm:lr_transfer} indicates the bias from the inability to learn an accurate feature estimate of $\B$ overwhelms the benefits of transfer learning. 
In this regime $n_2 \gg d$ so the new task can be efficiently learned in isolation.  
We believe this simulation represents a simple instance of the empirically observed phenomena of ``negative" transfer \citep{wang2019characterizing}.

We now turn to the more interesting use cases where meta-learning is a powerful tool. We consider a setting where $d=100$, $r=5$, and $n_t=25$ for varying numbers of tasks $t$. However, now we consider a new, unseen task where data is scarce: $n_2=25 < d$.
\begin{figure}[!hbt]
\centering
\begin{minipage}[c]{.4\linewidth}
\includegraphics[width=\linewidth]{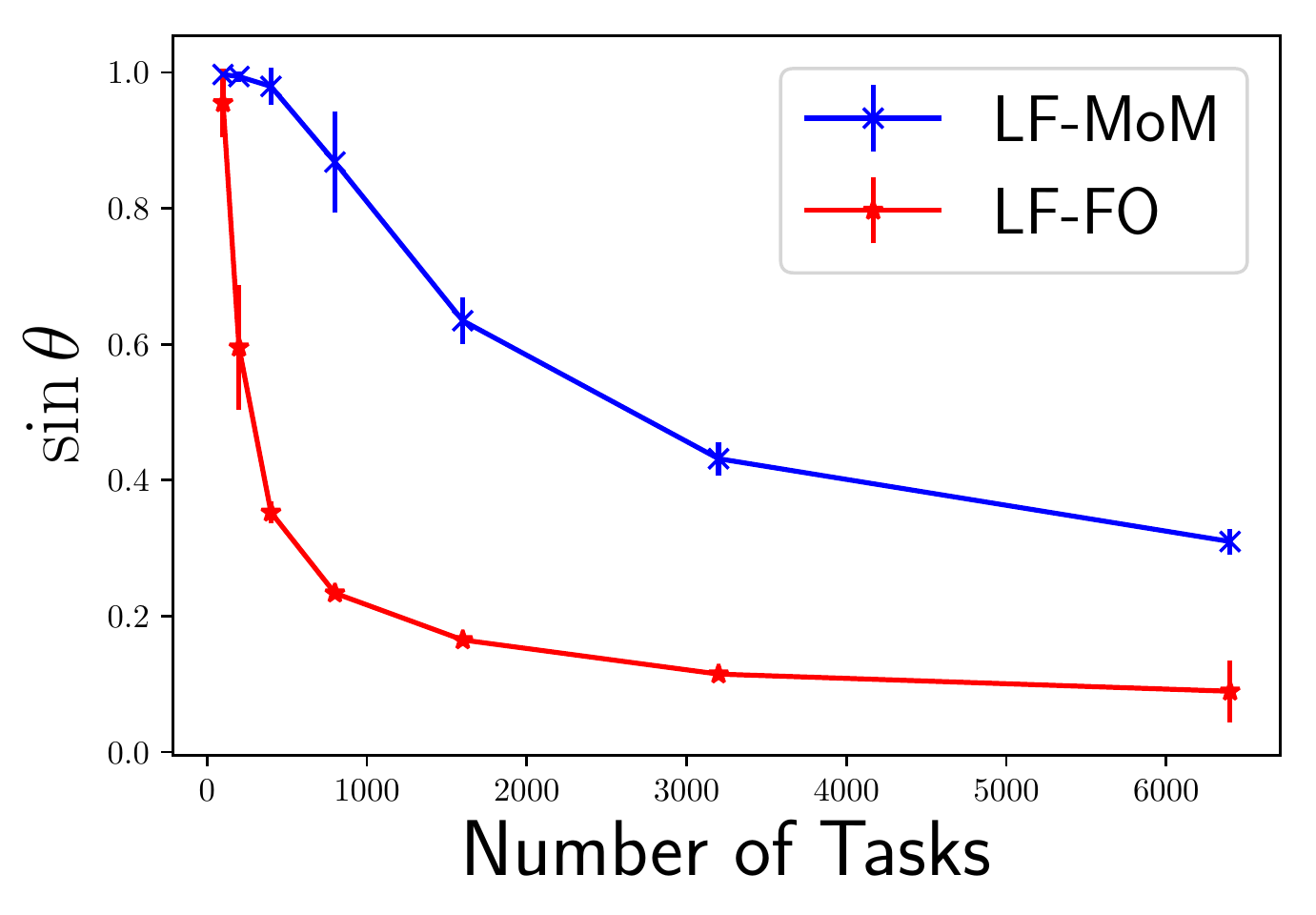}
\end{minipage}
\begin{minipage}[c]{.4\linewidth}
\includegraphics[width=\linewidth]{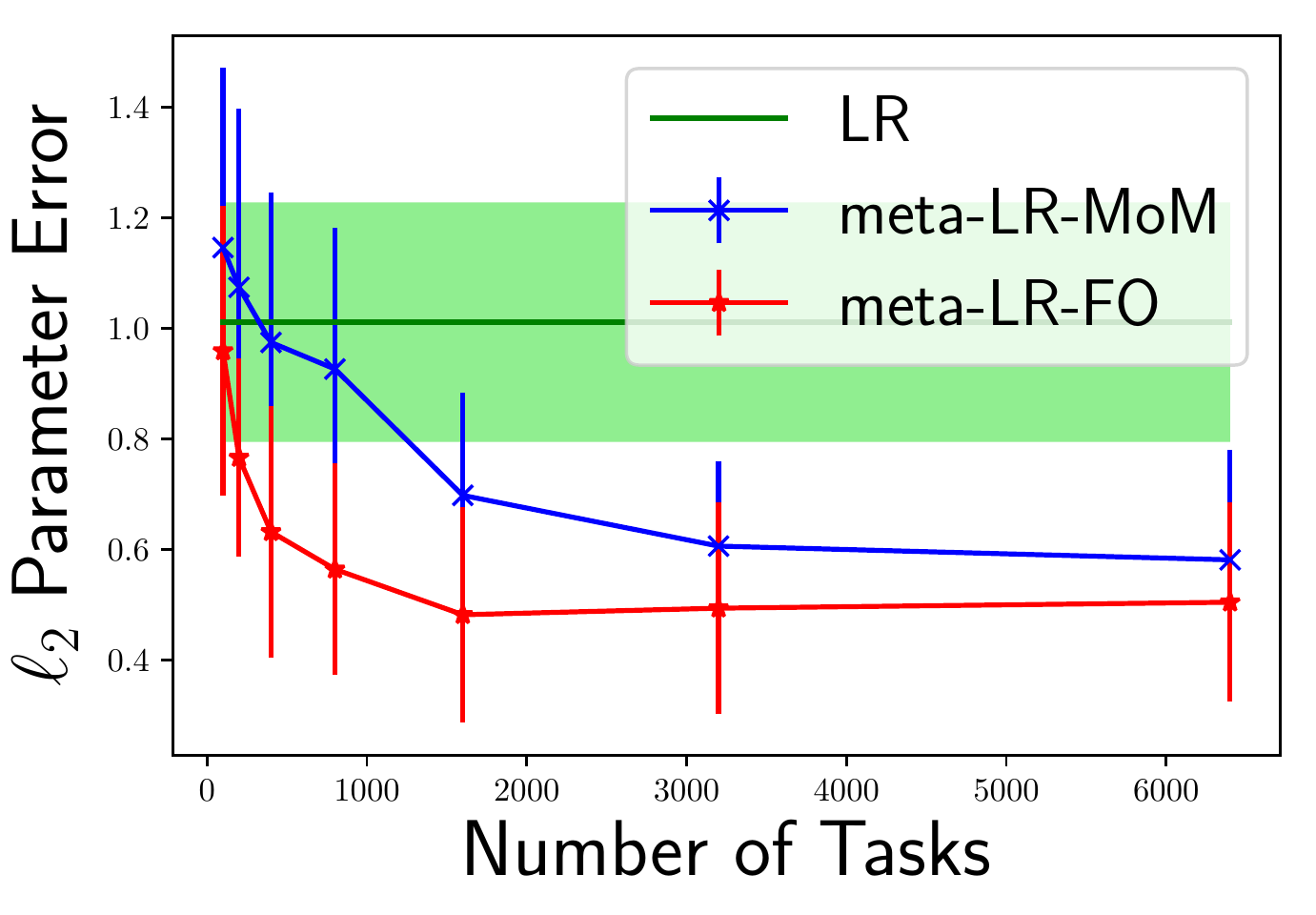}
\end{minipage}
\caption{Left: LF-FO vs. LF-MoM estimator with error measured in the subspace angle distance $\sin \theta(\Bone, \B)$. Right: meta-LR-FO and meta-LR-MoM vs. LR on new task with error measured on new task parameter. Here $d=100$, $r=5$, $n_t = 25$ while $n_2=25$ while the number of tasks is varied.
}
\label{fig:easy_task_t_change}
\end{figure}
As \cref{fig:easy_task_t_change} shows, in this regime both the method-of-moments estimator and the loss-based approach can learn a non-trivial estimate of the feature representation. The benefits of transferring this representation are also evident in the improved generalization performance seen by the meta-regression procedures on the new task. Interestingly, the loss-based approach learns an accurate feature representation $\Bone$ with significantly fewer samples then the method-of-moments estimator, in contrast to the previous experiment.

Similarly, if we consider an instance where $d=100$, $r=5$, $t=20$, and $n_2=50$ with varying numbers of training points $n_t$ per task, we see in \cref{fig:easy_task_n_train_change} that meta-learning of representations provides significant value in a new task. Note that these numerical experiments show that as the number of tasks is fixed, but $n_t$ increases, the generalization ability of the meta-learned regressions significantly improves as reflected in the bound \cref{eq:main_informal}. As in the previous experiment, the loss-based approach is more sample-efficient then the method-of-moments estimator. 
\begin{figure}[!hbt]
\centering
\begin{minipage}[c]{.4\linewidth}
\includegraphics[width=\linewidth]{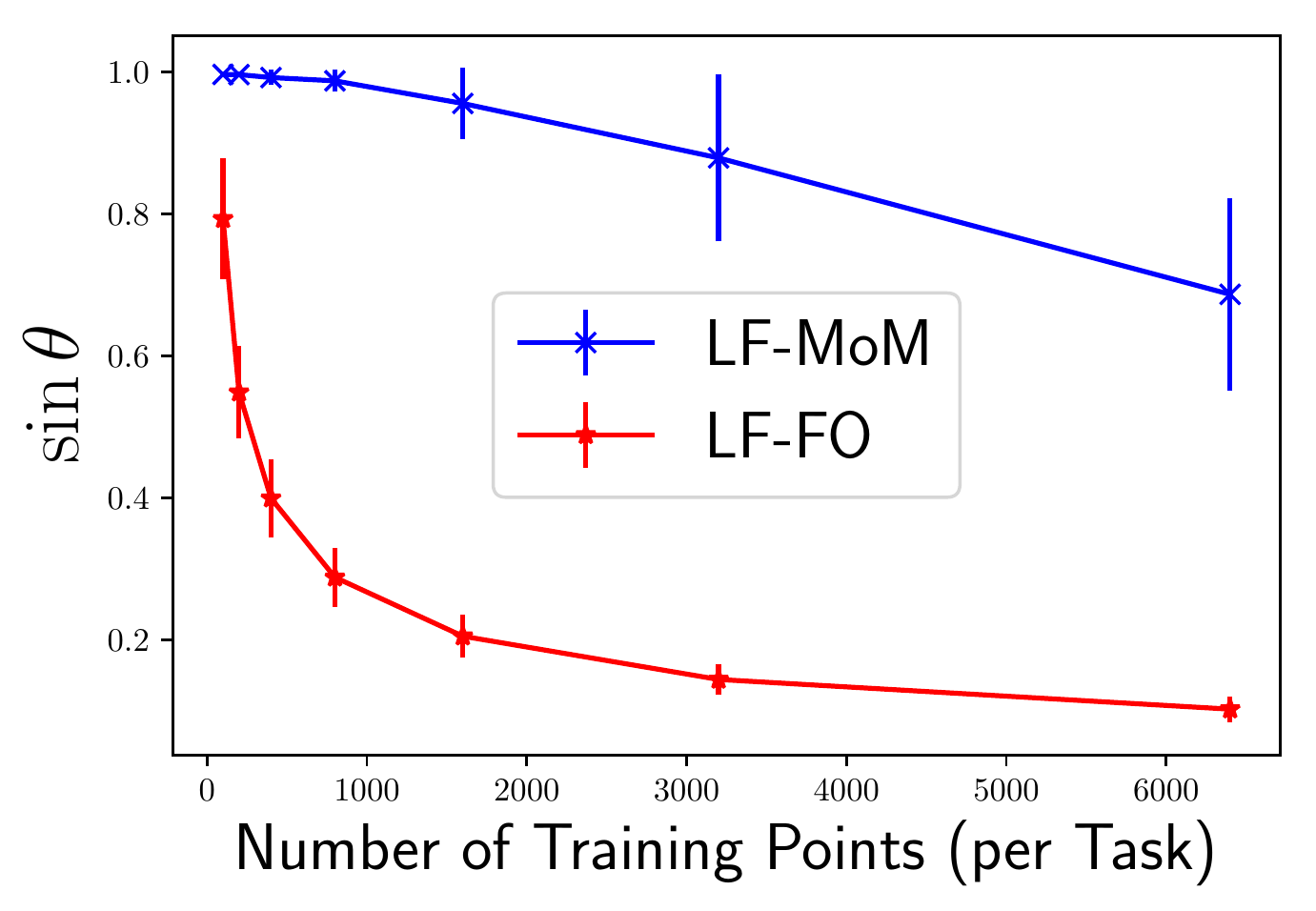}
\end{minipage}
\begin{minipage}[c]{.4\linewidth}
\includegraphics[width=\linewidth]{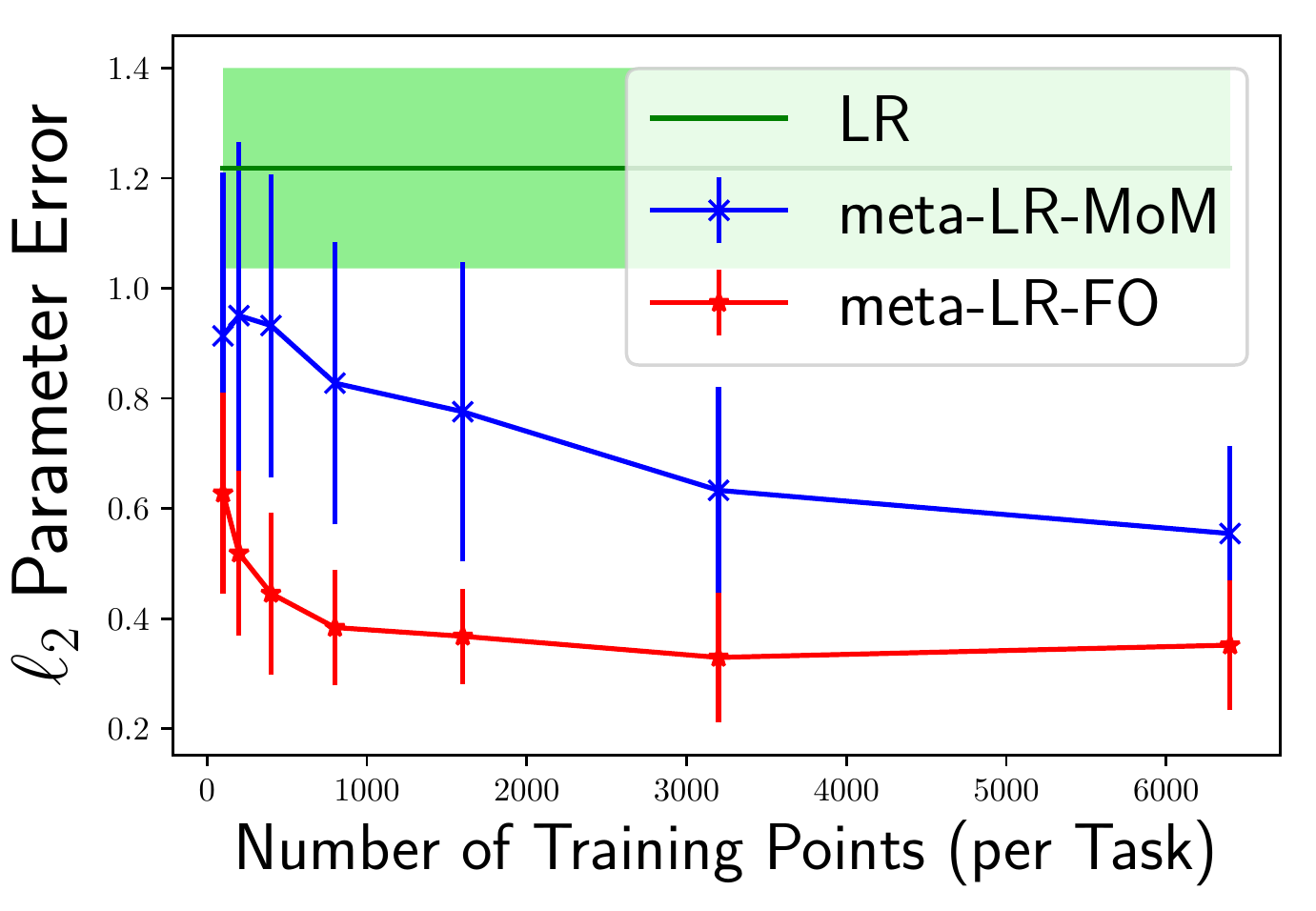}
\end{minipage}
\caption{Left: LF-FO vs. LF-MoM estimator with error measured in the subspace angle distance $\sin \theta(\Bone, \B)$. Right: meta-LR-FO and meta-LR-MoM vs. LR on new task with error measured on new task parameter. Here $d=100$, $r=5$, $t=20$, and $n_2=50$ while the number of training points per task ($n_t$) is varied.
}
\label{fig:easy_task_n_train_change}
\end{figure}

\section{Conclusions}
In this paper we show how a shared linear representation may be efficiently learned and transferred between multiple linear regression tasks. We provide both upper and lower bounds on the sample complexity of learning this representation and for the problem of learning-to-learn. We believe our bounds capture important qualitative phenomena observed in real meta-learning applications absent from previous theoretical treatments.

\section{Acknowledgements}
The authors thank Niladri Chatterji for helpful discussions.  This work was supported by the Army Research Office (ARO) under contract W911NF-17-1-0304 as
part of the collaboration between US DOD, UK MOD and UK Engineering and Physical
Research Council (EPSRC) under the Multidisciplinary University Research Initiative (MURI).

\bibliography{ref}

\begin{thebibliography}{37}
\providecommand{\natexlab}[1]{#1}
\providecommand{\url}[1]{\texttt{#1}}
\expandafter\ifx\csname urlstyle\endcsname\relax
  \providecommand{\doi}[1]{doi: #1}\else
  \providecommand{\doi}{doi: \begingroup \urlstyle{rm}\Url}\fi

\bibitem[Anandkumar et~al.(2012)Anandkumar, Hsu, and
  Kakade]{anandkumar2012method}
Animashree Anandkumar, Daniel Hsu, and Sham~M Kakade.
\newblock A method of moments for mixture models and hidden {M}arkov models.
\newblock In \emph{Conference on Learning Theory}, pages 33--1, 2012.

\bibitem[Ando and Zhang(2005)]{ando2005framework}
Rie~Kubota Ando and Tong Zhang.
\newblock A framework for learning predictive structures from multiple tasks
  and unlabeled data.
\newblock \emph{Journal of Machine Learning Research}, 6\penalty0
  (Nov):\penalty0 1817--1853, 2005.

\bibitem[Baevski et~al.(2019)Baevski, Edunov, Liu, Zettlemoyer, and
  Auli]{baevski2019cloze}
Alexei Baevski, Sergey Edunov, Yinhan Liu, Luke Zettlemoyer, and Michael Auli.
\newblock Cloze-driven pretraining of self-attention networks.
\newblock \emph{arXiv preprint arXiv:1903.07785}, 2019.

\bibitem[Baxter(2000)]{baxter2000model}
Jonathan Baxter.
\newblock A model of inductive bias learning.
\newblock \emph{Journal of artificial intelligence research}, 12:\penalty0
  149--198, 2000.

\bibitem[Bhatia(2013)]{bhatia2013matrix}
Rajendra Bhatia.
\newblock \emph{Matrix Analysis}, volume 169.
\newblock Springer Science \& Business Media, 2013.

\bibitem[Candes and Plan(2010)]{candes2010tight}
EJ~Candes and Y~Plan.
\newblock Tight oracle bounds for low-rank matrix recovery from a minimal
  number of noisy random measurements.
\newblock \emph{arXiv preprint arXiv:1001.0339}, 2010.

\bibitem[Chen et~al.(2016)Chen, Guntuboyina, and Zhang]{chen2016bayes}
Xi~Chen, Adityanand Guntuboyina, and Yuchen Zhang.
\newblock On bayes risk lower bounds.
\newblock \emph{The Journal of Machine Learning Research}, 17\penalty0
  (1):\penalty0 7687--7744, 2016.

\bibitem[Denevi et~al.(2019)Denevi, Ciliberto, Grazzi, and
  Pontil]{denevi2019learning}
Giulia Denevi, Carlo Ciliberto, Riccardo Grazzi, and Massimiliano Pontil.
\newblock Learning-to-learn stochastic gradient descent with biased
  regularization.
\newblock \emph{arXiv preprint arXiv:1903.10399}, 2019.

\bibitem[Donahue et~al.(2014)Donahue, Jia, Vinyals, Hoffman, Zhang, Tzeng, and
  Darrell]{donahue2014decaf}
Jeff Donahue, Yangqing Jia, Oriol Vinyals, Judy Hoffman, Ning Zhang, Eric
  Tzeng, and Trevor Darrell.
\newblock Decaf: A deep convolutional activation feature for generic visual
  recognition.
\newblock In \emph{International conference on machine learning}, pages
  647--655, 2014.

\bibitem[Du et~al.(2020)Du, Hu, Kakade, Lee, and Lei]{du2020few}
Simon~S Du, Wei Hu, Sham~M Kakade, Jason~D Lee, and Qi~Lei.
\newblock Few-shot learning via learning the representation, provably.
\newblock \emph{arXiv preprint arXiv:2002.09434}, 2020.

\bibitem[Edelman et~al.(1998)Edelman, Arias, and Smith]{edelman1998geometry}
Alan Edelman, Tom{\'a}s~A Arias, and Steven~T Smith.
\newblock The geometry of algorithms with orthogonality constraints.
\newblock \emph{SIAM journal on Matrix Analysis and Applications}, 20\penalty0
  (2):\penalty0 303--353, 1998.

\bibitem[Finn et~al.(2017)Finn, Abbeel, and Levine]{finn2017model}
Chelsea Finn, Pieter Abbeel, and Sergey Levine.
\newblock Model-agnostic meta-learning for fast adaptation of deep networks.
\newblock In \emph{Proceedings of the 34th International Conference on Machine
  Learning-Volume 70}, pages 1126--1135. JMLR. org, 2017.

\bibitem[Finn et~al.(2019)Finn, Rajeswaran, Kakade, and Levine]{finn2019online}
Chelsea Finn, Aravind Rajeswaran, Sham Kakade, and Sergey Levine.
\newblock Online meta-learning.
\newblock \emph{arXiv preprint arXiv:1902.08438}, 2019.

\bibitem[Ge et~al.(2017)Ge, Jin, and Zheng]{ge2017no}
Rong Ge, Chi Jin, and Yi~Zheng.
\newblock No spurious local minima in nonconvex low rank problems: A unified
  geometric analysis.
\newblock \emph{arXiv preprint arXiv:1704.00708}, 2017.

\bibitem[Guntuboyina(2011)]{guntuboyina2011lower}
Adityanand Guntuboyina.
\newblock Lower bounds for the minimax risk using $ f $-divergences, and
  applications.
\newblock \emph{IEEE Transactions on Information Theory}, 57\penalty0
  (4):\penalty0 2386--2399, 2011.

\bibitem[Hsu et~al.(2012)Hsu, Kakade, and Zhang]{hsu2012random}
Daniel Hsu, Sham~M Kakade, and Tong Zhang.
\newblock Random design analysis of ridge regression.
\newblock In \emph{Conference on learning theory}, pages 9--1, 2012.

\bibitem[Jin et~al.(2017)Jin, Ge, Netrapalli, Kakade, and
  Jordan]{jin2017escape}
Chi Jin, Rong Ge, Praneeth Netrapalli, Sham~M Kakade, and Michael~I Jordan.
\newblock How to escape saddle points efficiently.
\newblock In \emph{Proceedings of the 34th International Conference on Machine
  Learning}, pages 1724--1732. JMLR. org, 2017.

\bibitem[Khodak et~al.(2019{\natexlab{a}})Khodak, Balcan, and
  Talwalkar]{khodak2019provable}
Mikhail Khodak, Maria-Florina Balcan, and Ameet Talwalkar.
\newblock Provable guarantees for gradient-based meta-learning.
\newblock \emph{arXiv preprint arXiv:1902.10644}, 2019{\natexlab{a}}.

\bibitem[Khodak et~al.(2019{\natexlab{b}})Khodak, Balcan, and
  Talwalkar]{khodak2019adaptive}
Mikhail Khodak, Maria-Florina~F Balcan, and Ameet~S Talwalkar.
\newblock Adaptive gradient-based meta-learning methods.
\newblock In \emph{Advances in Neural Information Processing Systems}, pages
  5915--5926, 2019{\natexlab{b}}.

\bibitem[Liu and Nocedal(1989)]{liu1989limited}
Dong~C Liu and Jorge Nocedal.
\newblock On the limited memory {BFGS} method for large scale optimization.
\newblock \emph{Mathematical programming}, 45\penalty0 (1-3):\penalty0
  503--528, 1989.

\bibitem[Liu et~al.(2019)Liu, He, Chen, and Gao]{liu2019multi}
Xiaodong Liu, Pengcheng He, Weizhu Chen, and Jianfeng Gao.
\newblock Multi-task deep neural networks for natural language understanding.
\newblock \emph{arXiv preprint arXiv:1901.11504}, 2019.

\bibitem[Maclaurin et~al.(2015)Maclaurin, Duvenaud, and
  Adams]{maclaurin2015autograd}
Dougal Maclaurin, David Duvenaud, and Ryan~P Adams.
\newblock Autograd: Effortless gradients in numpy.
\newblock In \emph{ICML 2015 AutoML Workshop}, volume 238, 2015.

\bibitem[Maurer et~al.(2016)Maurer, Pontil, and
  Romera-Paredes]{maurer2016benefit}
Andreas Maurer, Massimiliano Pontil, and Bernardino Romera-Paredes.
\newblock The benefit of multitask representation learning.
\newblock \emph{The Journal of Machine Learning Research}, 17\penalty0
  (1):\penalty0 2853--2884, 2016.

\bibitem[Moritz et~al.(2018)Moritz, Nishihara, Wang, Tumanov, Liaw, Liang,
  Elibol, Yang, Paul, Jordan, et~al.]{moritz2018ray}
Philipp Moritz, Robert Nishihara, Stephanie Wang, Alexey Tumanov, Richard Liaw,
  Eric Liang, Melih Elibol, Zongheng Yang, William Paul, Michael~I Jordan,
  et~al.
\newblock Ray: A distributed framework for emerging $\{$AI$\}$ applications.
\newblock In \emph{13th $\{$USENIX$\}$ Symposium on Operating Systems Design
  and Implementation ($\{$OSDI$\}$ 18)}, pages 561--577, 2018.

\bibitem[Pajor(1998)]{pajor1998metric}
Alain Pajor.
\newblock Metric entropy of the {G}rassmann manifold.
\newblock \emph{Convex Geometric Analysis}, 34:\penalty0 181--188, 1998.

\bibitem[Pearson(1894)]{pearson1894contributions}
Karl Pearson.
\newblock Contributions to the mathematical theory of evolution.
\newblock \emph{Philosophical Transactions of the Royal Society of London. A},
  185:\penalty0 71--110, 1894.

\bibitem[Pontil and Maurer(2013)]{pontil2013excess}
Massimiliano Pontil and Andreas Maurer.
\newblock Excess risk bounds for multitask learning with trace norm
  regularization.
\newblock In \emph{Conference on Learning Theory}, pages 55--76, 2013.

\bibitem[Raskutti et~al.(2011)Raskutti, Wainwright, and
  Yu]{raskutti2011minimax}
Garvesh Raskutti, Martin~J Wainwright, and Bin Yu.
\newblock Minimax rates of estimation for high-dimensional linear regression
  over $\ell_{q} $-balls.
\newblock \emph{IEEE transactions on information theory}, 57\penalty0
  (10):\penalty0 6976--6994, 2011.

\bibitem[Recht(2011)]{recht2011simpler}
Benjamin Recht.
\newblock A simpler approach to matrix completion.
\newblock \emph{Journal of Machine Learning Research}, 12\penalty0
  (Dec):\penalty0 3413--3430, 2011.

\bibitem[Rohde et~al.(2011)Rohde, Tsybakov, et~al.]{rohde2011estimation}
Angelika Rohde, Alexandre~B Tsybakov, et~al.
\newblock Estimation of high-dimensional low-rank matrices.
\newblock \emph{The Annals of Statistics}, 39\penalty0 (2):\penalty0 887--930,
  2011.

\bibitem[Tropp(2012)]{tropp2012user}
Joel~A Tropp.
\newblock User-friendly tail bounds for sums of random matrices.
\newblock \emph{Foundations of computational mathematics}, 12\penalty0
  (4):\penalty0 389--434, 2012.

\bibitem[Tsybakov(2008)]{tsybakov2008introduction}
Alexandre~B Tsybakov.
\newblock \emph{Introduction to Nonparametric Estimation}.
\newblock Springer Science \& Business Media, 2008.

\bibitem[Vershynin(2018)]{vershynin2018high}
Roman Vershynin.
\newblock \emph{High-Dimensional Probability: An Introduction with Applications
  in Data Science}, volume~47.
\newblock Cambridge University Press, 2018.

\bibitem[Vinyals et~al.(2016)Vinyals, Blundell, Lillicrap, Wierstra,
  et~al.]{vinyals2016matching}
Oriol Vinyals, Charles Blundell, Timothy Lillicrap, Daan Wierstra, et~al.
\newblock Matching networks for one shot learning.
\newblock In \emph{Advances in neural information processing systems}, pages
  3630--3638, 2016.

\bibitem[Wainwright(2019)]{wainwright2019high}
Martin~J Wainwright.
\newblock \emph{High-Dimensional Statistics: A Non-Asymptotic Viewpoint},
  volume~48.
\newblock Cambridge University Press, 2019.

\bibitem[Wang et~al.(2019)Wang, Dai, P{\'o}czos, and
  Carbonell]{wang2019characterizing}
Zirui Wang, Zihang Dai, Barnab{\'a}s P{\'o}czos, and Jaime Carbonell.
\newblock Characterizing and avoiding negative transfer.
\newblock In \emph{Proceedings of the IEEE Conference on Computer Vision and
  Pattern Recognition}, pages 11293--11302, 2019.

\bibitem[Zhang et~al.(2014)Zhang, Wainwright, and Jordan]{zhang2014lower}
Yuchen Zhang, Martin~J Wainwright, and Michael~I Jordan.
\newblock Lower bounds on the performance of polynomial-time algorithms for
  sparse linear regression.
\newblock In \emph{Conference on Learning Theory}, pages 921--948, 2014.

\end{thebibliography}
\bibliographystyle{plainnat}

\onecolumn

\appendix
\begin{center}{\LARGE \bf Appendices}\end{center}\vskip12pt
\textbf{Notation and Set-up}
We first establish several useful pieces of notation used throughout the Appendices. We will say that a mean-zero random variable $x$ is sub-gaussian, $x \sim \sG(\kappa)$, if $\mE[\exp(\lambda x))] \leq \exp(\frac{\kappa^2 \lambda^2}{2})$ for all $\lambda$. We will say that a mean-zero random variable $x$ is sub-exponential, $x \sim \sE(\nu, \alpha)$, if $\mE[\exp(\lambda x)] \leq \exp(\frac{\nu^2 \lambda^2}{2})$ for all $\abs{\lambda} \leq \frac{1}{\alpha}$. We will say that a mean-zero random vector is sub-gaussian, $\x \sim \sG(\kappa)$, if $\forall \v \in \mathbb{R}^p$, $\mE[\exp(\v^\top \x)] \leq \exp(\frac{\kappa^2 \Vert \v \Vert_2^2}{2})$. A standard Chernoff argument shows that if $x \sim \sE(\nu, \alpha)$ then $\Pr[\abs{x} \geq t] \leq 2 \exp(-\frac{1}{2} \min(\frac{t^2}{\nu^2}, \frac{t}{\alpha}))$. Throughout we will use $c,C$ to refer to universal constants that may change from line to line. 

\section{Proofs for \cref{sec:prelim}}
\label{app:prelim}
Here we provide a formal statement of \cref{thm:main_informal}. 

\begin{theorem}[Formal statement of \cref{thm:main_informal}]
	Suppose we are first given $n_1$ total samples from \cref{eq:model_main} which satisfy \cref{assump:design} and $\x_i \sim \cN(0, \I_d)$, with an equal number of samples from each task, which collectively satisfy \cref{assump:task}. Then, we are presented $n_2$ samples also from \cref{eq:model_main}, satisfying \cref{assump:design}, but from a $t+1$st task which satisfies $\norm{\balpha_{t+1}}^2 \leq O(1)$. If the $n_1$ samples are used in \cref{algo:LF_learn} to learn a feature representation $\Bone$, which is used in \cref{algo:LF_newtask} along with the $n_2$ samples to learn $\halpha$, and $n_1 \gtrsim \polylog(d, n_1) \frac{\bkappa dr}{\nu}$, $n_2 \gtrsim r \log n_2$, the excess prediction error on a new datapoint drawn from the covariate distribution, is,
	\begin{align*}
		\mE_{\xstar}[\langle \xstar, \Bone \halpha - \B \balpha_{t+1} \rangle^2] \leq \tlO \left(\frac{\bkappa dr}{\nu n_1}+\frac{r}{n_2} \right),
		 \label{eq:main}
	\end{align*}
	with probability at least $1-O(n_1^{-100}+n_2^{-100})$.
	\label{thm:main_formal}
\end{theorem}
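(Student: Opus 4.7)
The plan is to derive Theorem \ref{thm:main_formal} as a direct composition of the feature-recovery guarantee (Theorem \ref{thm:B_recovery}) and the transfer guarantee (Theorem \ref{thm:lr_transfer}), together with a short calculation converting parameter error to excess prediction error under the identity-covariate assumption.

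First, I would apply Theorem \ref{thm:B_recovery} to the meta-train phase. The hypotheses of Theorem \ref{thm:main_formal} assume Gaussian covariates, sub-Gaussian noise, an equal number of samples per task, and $n_1 \gtrsim \polylog(d,n_1)\,\bkappa dr/\nu$. Because each task contributes the same number of samples, the empirical task matrix $\barLambda$ coincides with $\tfrac{1}{t}\A^\top\A$, so the empirical task-diversity parameters $\tkappa$ and $\tnu$ collapse to the population quantities $\bkappa$ and $\nu$. Theorem \ref{thm:B_recovery} then yields, with probability at least $1-O(n_1^{-100})$,
\begin{equation*}
\sin\theta(\Bone,\B) \;\leq\; \tlO\!\left(\sqrt{\tfrac{\bkappa}{\nu}\cdot \tfrac{dr}{n_1}}\right) \;=:\; \delta.
\end{equation*}

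Second, conditional on the event that this bound holds, I would invoke Theorem \ref{thm:lr_transfer} with the estimate $\Bone$ and the $n_2$ new-task samples. The new task satisfies $\norm{\balpha_{t+1}}^2 \leq O(1)$ and the design assumption \cref{assump:design}, and by hypothesis $n_2 \gtrsim r\log n_2$. Theorem \ref{thm:lr_transfer} therefore gives, with probability at least $1-O(n_2^{-100})$,
\begin{equation*}
\norm{\Bone\halpha-\B\balpha_{t+1}}^2 \;\leq\; \tlO\!\left(\delta^2 + \tfrac{r}{n_2}\right) \;=\; \tlO\!\left(\tfrac{\bkappa dr}{\nu n_1} + \tfrac{r}{n_2}\right).
\end{equation*}

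Third, I would translate this parameter-space bound into the stated excess prediction error. Since $\xstar$ has covariance $\I_d$ independent of the training data, conditioning on the training randomness gives
\begin{equation*}
\mE_{\xstar}\bigl[\langle \xstar,\; \Bone\halpha - \B\balpha_{t+1}\rangle^2\bigr] \;=\; (\Bone\halpha - \B\balpha_{t+1})^\top \mE[\xstar\xstar^\top](\Bone\halpha - \B\balpha_{t+1}) \;=\; \norm{\Bone\halpha-\B\balpha_{t+1}}^2,
\end{equation*}
so the previous display immediately yields the claimed bound. Finally, a union bound over the failure events of the two invocations controls the overall failure probability by $O(n_1^{-100}+n_2^{-100})$, completing the argument.

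There is no real obstacle here beyond bookkeeping: both constituent theorems were designed to be composable and the identity-covariance assumption renders the parameter-to-prediction conversion trivial. The only subtlety is verifying that the sample-size hypothesis on $n_1$ in Theorem \ref{thm:main_formal} is at least as strong as the one required by Theorem \ref{thm:B_recovery} (which it is, since $\bkappa\geq 1$ absorbs the factor $\tkappa/\tnu$ once $\tnu=\nu$), and that independence of the meta-train and meta-test samples lets the two high-probability events be combined cleanly by a union bound.
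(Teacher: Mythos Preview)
Your proposal is correct and follows essentially the same approach as the paper: the paper's proof simply notes that $\mE_{\xstar}[\langle \xstar, \Bone \halpha - \B \balpha_{t+1} \rangle^2] = \norm{\Bone \halpha - \B \balpha_{t+1}}^2$, then combines \cref{thm:B_recovery} and \cref{thm:lr_transfer} with a union bound. Your write-up is just a more detailed unpacking of the same composition, including the (correct) observation that equal per-task sampling makes $\tkappa=\bkappa$ and $\tnu=\nu$ so the $n_1$ hypothesis of \cref{thm:B_recovery} is exactly met.
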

\begin{proof}
	Note that $\mE_{\xstar}[\langle \xstar, \Bone \halpha - \B \balpha_{t+1} \rangle^2] = \norm{\Bone \halpha - \B \balpha_{t+1}}^2$. Combining \cref{thm:B_recovery}, \cref{thm:lr_transfer}  and applying a union bound then gives the result.
\end{proof}

Note that in order to achieve the formulation in \cref{thm:main_informal}, we make the simplifying assumption that the training tasks are well-conditioned in the sense that $\bar{\kappa} \leq \kappa \leq O(1)$ and $\nu \geq \Omega(\frac{1}{r}$)---which is consistent with the normalization in \cref{assump:task}. Such a setting is for example achieved (w.h.p.) if each $\balpha_t \sim \cN(0, \frac{1}{\sqrt{r}} \mathbf{\Sigma})$ where $\sigma_{1}(\mathbf{\Sigma})/\sigma_{r}(\mathbf{\Sigma}) \leq O(1)$.

\section{Proofs for \cref{sec:lf_mom}}
\label{app:mom}

Analyzing the performance of the method-of-moments estimator requires two steps. First, we show that the estimator $(1/n)\cdot \sum_{i=1}^n y_i^2 \x_i\x_i\trans$ converges to its mean in spectral norm, up to error fluctuations $\tlO(\sqrt{\frac{dr}{n}})$. Showing this requires adapting tools from the theory of matrix concentration.
Second, a standard application of the Davis-Kahan $\sin \theta$ theorem shows that top-r PCA applied to this noisy matrix, $(1/n)\cdot \sum_{i=1}^n y_i^2 \x_i\x_i\trans$, can extract a subspace $\hB$ close to the true column space of $\B$ up to a small error. Throughout this section we let $\barGamma = \sumton \barGamma_i$ with $\barGamma_i = \B \balpha_{t(i)} \balpha_{t(i)}^\top \B^\top$. We also let $\barLambda = \sumton \balpha_{t(i)} \balpha_{t(i)}^\top$ be the empirically observed task matrix. Note that under \cref{assump:task}, we have that $\barGamma$ and $\barLambda$ behave identically since $\B$ has orthonormal columns we have that $\tr(\barGamma) = \tr(\barLambda)$ and $\sigma_{r}(\barGamma) = \sigma_{r}(\barLambda)$. Furthermore throughout this section we use $\tkappa$ and $\tnu$ to refer to the average condition number and $r$-th singular value of the empirically observed task matrix $\barLambda$ -- since all our results hold in generality for this matrix. Note that under the uniform task observation model the task parameters of $\barLambda$ and the population task matrix $\frac{\A^\top \A}{t}$ are equal.

We first present our main theorem which shows our method-of-moments estimator can recover the true subspace $\B$ up to small error. 

\begin{proof}[Proof of \cref{thm:B_recovery}]
The proof follows by combining the Davis-Kahan sin $\theta$ theorem with our main concentration result for the matrix $\sumton y_i^2 \x_i^\top \x_i-\mE[\sumton y_i^2 \x_i^\top \x_i]$. First note that $\mE[\sumton y_i^2 \x_i^\top \x_i] = (2\barGamma + (1+\tr(\barGamma)) \I_d$ by \cref{lem:estimator_mean} and define $\sumton y_i^2 \x_i^\top \x_i-\mE[\sumton y_i^2 \x_i^\top \x_i] = \E$. Note that under the conditions of the result for $n \geq c d$ we have that $\norm{\E} \leq \tlO(\sqrt{\frac{d \tkappa r \tnu}{n}})$ by \cref{thm:estimator_conc} for large-enough $c$ due to the SNR normalization; so again by taking sufficiently large $c$ such that $n \geq c \cdot \polylog(d, n) \tkappa r d/\tnu$ we can ensure that $\norm{\E} \leq \delta \leq 2 \tnu$ for as small $\delta$ as we choose with the requisite probability.
	 Since $\norm{\bE} \leq \delta$ we have that $\sigma_{r+1}(y_i^2 \x_i\x_i\trans) - \sigma_{r+1}((2\barGamma + (1+\tr(\barGamma)) \I_d) \leq \delta$ and since $\barGamma$ is rank $r$,  $\sigma_{r}((2\barGamma + (1+\tr(\barGamma)) \I_d)-\sigma_{r+1}((2\barGamma + (1+\tr(\barGamma)) \I_d) = 2\sigma_{r}(\barGamma)$. Hence, applying the Davis-Kahan $\sin \theta$ theorem shows that, 
	\begin{align*}
		\norm{\Btwo^\top \B} \leq \frac{\norm{\Btwo^\top \E \B}}{2 \sigma_{r}(\barLambda) - \delta} \leq \frac{\norm{\E}}{2 \sigma_{r}(\barLambda) - \delta} \leq \frac{\norm{\E}}{\tnu} \leq \tlO \left(\sqrt{\frac{1}{\tnu} \frac{d \tkappa r}{n}} \right),
	\end{align*}
	where the final inequalities follows by taking $c$ large enough to ensure $\delta \leq \tnu$ and \cref{thm:estimator_conc}.

\end{proof}

We now present our main result which proves the concentration of the estimator,
\begin{theorem}
	Suppose the $n$ data samples $(\x_i, y_i)$ are generated from the model in \eqref{eq:model} and that \cref{assump:design,,assump:task} hold with $\x_i \sim \cN(0, 1)$ i.i.d. Then if $n \gtrsim c$ for sufficiently large $c$, 
	\begin{align*}
		& \norm{\sumton y_i^2 \x_i \x_i^\top - (2\barGamma + (1+\tr(\barGamma)) \I_d))} \leq \\
		& \log^3 n \cdot \log^3 d \cdot O \left(\sqrt{\frac{d \tkappa r \tnu}{n}} + \frac{d}{n} \right),
	\end{align*}
	with probability at least $1-O(n^{-100})$.
	\label{thm:estimator_conc}
\end{theorem}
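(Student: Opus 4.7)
The plan is to decompose the deviation into three noise matrices $\E_1,\E_2,\E_3$, bound each via a truncation-plus-matrix-Bernstein argument, and combine via the triangle inequality. First, I expand $y_i^2 = (\x_i^\top \B \balpha_{t(i)})^2 + 2\epsilon_i \x_i^\top \B \balpha_{t(i)} + \epsilon_i^2$ and subtract the population mean computed in \cref{lem:estimator_mean} termwise to write
\[
\sumton y_i^2 \x_i \x_i^\top - (2\barGamma + (1+\tr(\barGamma))\I_d) = \E_1 + \E_2 + \E_3,
\]
where $\E_1 = \sumton \bigl[(\x_i^\top \B \balpha_{t(i)})^2 \x_i\x_i^\top - (2\barGamma_i + \tr(\barGamma_i)\I_d)\bigr]$, $\E_2 = \sumton 2\epsilon_i \x_i^\top \B \balpha_{t(i)} \x_i\x_i^\top$ (mean-zero via independence of $\epsilon_i$), and $\E_3 = \sumton (\epsilon_i^2 - 1)\x_i\x_i^\top$. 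It suffices to bound each $\|\E_k\|$ separately.

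Second, for each $k$ I would reduce to a sum of bounded matrices via truncation so that matrix Bernstein applies. Define the high-probability event on which, uniformly in $i$, $\|\x_i\|^2 \lesssim d \log n$, $|\epsilon_i| \lesssim \log n$, and $|\x_i^\top \B \balpha_{t(i)}| \lesssim \sqrt{\tkappa r \tnu}\, \log n$; Gaussian/sub-Gaussian tails together with a union bound over $n$ samples yield failure probability $O(n^{-100})$. Splitting each summand into its restriction to this event plus a residual, the residual contribution is negligible by Cauchy--Schwarz against the small failure probability, while the truncated summands admit a uniform spectral bound $R_k = \tlO(d/n)$, which drives the additive $d/n$ term in the final rate.

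Third, apply the truncated matrix Bernstein inequality of \cref{lem:trunc_mb} to each $\E_k$. The core quantitative step is the matrix variance proxy $v_k = \bigl\|\sum_i \mE[X_{k,i}^\top X_{k,i}]\bigr\| \vee \bigl\|\sum_i \mE[X_{k,i} X_{k,i}^\top]\bigr\|$, computed via the Isserlis/Wick formula for Gaussian $\x_i$ and sub-Gaussianity of $\epsilon_i$. For $\E_3$, conditioning and factorizing gives $v_3 \lesssim d/n$. For $\E_2$, conditioning on $\x_i$ and using $\Var(\epsilon_i)=1$ gives $v_2 \lesssim d\,\tr(\barLambda)/n$. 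For $\E_1$, careful enumeration of the sixth-order Gaussian contractions collapses $\mE[X_{1,i}^\top X_{1,i}]$ into combinations of $\tr(\barGamma)\I_d$, $\barGamma$, and $\barGamma^2$, of which only the first has spectral norm of order $d\,\tr(\barLambda)$; this yields $v_1 \lesssim d\,\tr(\barLambda)/n = d\,r\,\tkappa\,\tnu/n$. Matrix Bernstein then gives $\|\E_k\| \lesssim \sqrt{v_k \log d} + R_k \log d$, and summing the three terms produces the stated $\tlO(\sqrt{d\tkappa r\tnu/n} + d/n)$ bound once the truncation thresholds are chosen to absorb into the $\polylog$ factors.

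The main obstacle is the variance calculation for $\E_1$: the summand is a fourth-order polynomial in a Gaussian vector, so $\mE[X_{1,i}^\top X_{1,i}]$ is an octic Gaussian moment whose naive bound gives $v_1 \lesssim d^2/n$ and a useless $\sqrt{d^2/n}$ rate. Extracting the correct dependence requires careful bookkeeping of Wick contractions, using that the ``signal'' directions $\B\balpha_{t(i)}$ live in an $r$-dimensional subspace while the remaining Gaussian contractions contribute the ambient $\I_d$---so that exactly one factor of $d$ (from $\tr(\I_d)$) multiplies the signal-side factor $\tr(\barLambda)$. A secondary subtlety is tuning the truncation thresholds so that the truncation bias is dominated by the target rate, which is where the $\log^3 n \cdot \log^3 d$ factors in the theorem statement arise.
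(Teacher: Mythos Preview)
Your proposal is correct and follows essentially the same route as the paper: the identical three-term decomposition, truncated matrix Bernstein (\cref{lem:trunc_mb}) applied to $\E_1$ and $\E_2$, and explicit Gaussian moment computations for the matrix variance (the paper obtains the $d\,\tr(\barGamma)$ scaling by rotational invariance and coordinate-wise expectations rather than Wick contractions, but the substance is the same). The only cosmetic difference is that for $\E_3$ the paper conditions on $\epsilon_i$ and invokes sub-Gaussian covariance concentration (\cref{lem:scaled_sgvec}) instead of matrix Bernstein; either argument yields the $\tlO(\sqrt{d/n}+d/n)$ contribution.
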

\begin{proof}
	Note that the mean of $\sumton y_i^2 \x_i \x_i^\top$ is $2\barGamma + (1+\tr(\barGamma)) \I_d)$ by \cref{lem:estimator_mean}. Then using the fact that $y_i = \x_i^\top \B \balpha_{t(i)}+\epsilon_i$, we can write down the error decomposition for the estimator into signal and noise terms,
\begin{align*}
	 & \sumton y_i^2 \x_i\x_i\trans - (2\barGamma + (1+\tr(\barGamma)) \I_d = \sumton (\x_i \B \balpha_{t(i)})^2 \x_i \x_i^\top - (2\barGamma + \tr(\barGamma)) \I_d + \\
	 & \sumton 2 \epsilon_i \x_i \B \balpha_{t(i)} \x_i \x_i^\top + \sumton \epsilon_i^2 \x_i\x_i\trans - \I_d.
\end{align*}
We proceed to control the fluctuations of each term in spectral norm individually using tools from matrix concentration.
Applying \cref{lem:first_noise}, \cref{lem:second_noise}, \cref{lem:third_noise}, the triangle inequality and a union bound shows the desired quantity is upper bounded as,
\begin{align*}
& \log^3n  \cdot \log^3d \cdot O \left(\sqrt{\frac{d \max(1, \tr(\barGamma), \tr(\barGamma) \max_i \norm{\bbeta_i}^2)}{n}} + \frac{d \max(1, \max_i \norm{\bbeta_i}, \max_i \norm{\bbeta_i}^2)}{n} \right).
& 
\end{align*}
Finally, using \cref{assump:task} and the fact that $\tr{\barGamma} = \tr{\barLambda}$ and the fact $\norm{\bbeta_i}=\norm{\balpha_i}$, simplifies the result to the theorem statement. Note that since $\norm{\balpha_i} = \Theta(1)$ for all $i$ we have that, $\tr(\barGamma) = \Theta(1)$ so the SNR normalization guarantees the leading noise term satisfies $1 \leq O( \tr(\barGamma))$.
\end{proof}

We begin by computing the mean of the estimator.
\begin{lemma}
	\label{lem:estimator_mean}
	Suppose the $n$ data samples $(\x_i, y_i)$ are generated from the model in \eqref{eq:model_main} and that \cref{assump:design,,assump:task} hold. Then,
	\begin{align*}
		\mE[\sumton y_i^2 \x_i \x_i^\top] = 2\barGamma + (1+\tr(\barGamma)) \I_d
	\end{align*}
	where $\barGamma= \sumton \barGamma_i$ with $\mGamma_i = \B \balpha_{t(i)} \balpha_{t(i)}^\top \B^\top$.
\end{lemma}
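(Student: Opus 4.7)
The plan is a direct moment computation, expanding $y_i^2$ using the linear model and then averaging term-by-term, exploiting (i) independence of $\epsilon_i$ from $(\x_i, \balpha_{t(i)})$, (ii) independence of $\balpha_{t(i)}$ from $\x_i$, and (iii) the Gaussian fourth-moment identity for quadratic forms (which is available in this section since \cref{sec:lf_mom} is set in the Gaussian covariate regime).

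First, I would expand
\begin{equation*}
y_i^2 \;=\; (\x_i^\top \B \balpha_{t(i)})^2 \;+\; 2\epsilon_i\, \x_i^\top \B \balpha_{t(i)} \;+\; \epsilon_i^2,
\end{equation*}
and split $\mE[y_i^2 \x_i \x_i^\top]$ into three corresponding pieces. The cross term vanishes because $\epsilon_i$ is mean-zero and independent of $(\x_i,\balpha_{t(i)})$, so $\mE[2\epsilon_i\, \x_i^\top \B \balpha_{t(i)}\, \x_i \x_i^\top] = 2\,\mE[\epsilon_i]\,\mE[\x_i^\top \B \balpha_{t(i)}\, \x_i \x_i^\top] = 0$. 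The pure-noise term gives $\mE[\epsilon_i^2]\,\mE[\x_i \x_i^\top] = \I_d$ by the unit-variance normalization in \cref{assump:design} and again by independence.

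The only non-trivial piece is $\mE[(\x_i^\top \B \balpha_{t(i)})^2 \, \x_i \x_i^\top]$. Conditioning on $\balpha_{t(i)}$, writing $\bbeta = \B\balpha_{t(i)}$ and using the Gaussian identity
\begin{equation*}
\mE_{\x \sim \cN(0,\I_d)}\!\bigl[(\x^\top \bbeta)^2\, \x \x^\top\bigr] \;=\; \norm{\bbeta}^2 \I_d \;+\; 2\bbeta \bbeta^\top,
\end{equation*}
which follows from Isserlis' theorem applied coordinate-wise, I would then take the outer expectation over $\balpha_{t(i)}$. Since $\B$ has orthonormal columns, $\norm{\B\balpha_{t(i)}}^2 = \tr(\B\balpha_{t(i)} \balpha_{t(i)}^\top \B^\top) = \tr(\barGamma_i)$, yielding $\mE[(\x_i^\top \B \balpha_{t(i)})^2 \, \x_i \x_i^\top] = 2\,\mE[\barGamma_i] + \mE[\tr(\barGamma_i)]\I_d$ after taking the conditional expectation.

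Finally, averaging over $i=1,\dots,n$, by linearity of trace and expectation the sum telescopes to $\frac{1}{n}\sum_i \mE[y_i^2 \x_i \x_i^\top] = 2\barGamma + (1 + \tr(\barGamma))\I_d$, which is the claimed identity. The only real subtlety is justifying the Gaussian fourth-moment identity; this is where I would invoke the Gaussian design assumption imported into this section (and also used in \cref{thm:B_recovery,thm:estimator_conc}), rather than the weaker sub-Gaussian condition of \cref{assump:design}. Everything else is routine linearity and independence manipulation.
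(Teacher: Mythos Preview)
Your proposal is correct and follows essentially the same approach as the paper: expand $y_i^2$, drop the cross term by independence, get $\I_d$ from the noise term, and reduce everything to the Gaussian fourth-moment identity $\mE[(\x^\top\bbeta)^2\,\x\x^\top]=\norm{\bbeta}^2\I_d+2\bbeta\bbeta^\top$. The only cosmetic difference is that the paper derives this identity by diagonalizing $\barGamma$ and using rotational invariance to reduce to $\bbeta=\e_1$, whereas you invoke Isserlis' theorem directly; both are standard and yield the same result.
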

\begin{proof}
	Since $\epsilon_i$ is mean-zero, using the definition of $y_i$ we immediately obtain,
	\begin{align*}
		\mE[\sumton y_i^2 \x_i \x_i^\top] = \I_d + \mE[\sumton \x_i^\top \mGamma_i \x_i \x_i \x_i^\top] = \I_d + \mE[\x^\top \barGamma \x \x \x^\top],
	\end{align*}
	for $\x \sim \cN(0, \I_d)$. Using the eigendecomposition of $\barGamma$ we have that $\mE[\x^\top \barLambda \x \x \x^\top] = \sumtor \sigma_i \mE[(\x^\top \v_i)^2 \x\x^\top]$. Due to the isotropy of the Gaussian distribution, it suffices to compute $\mE[(\x^\top \e_1)^2 \x\x^\top]$ and rotate the result back to $\v_i$. In particular we have that, 
	\begin{align*}
		& (\mE[(\x^\top \e_1)^2 \x\x^\top])_{ij} = \begin{cases}
			0 \quad i \neq j \\
			1 \quad i=j \neq 1 \\
			3  \quad i=j=1 \\
		\end{cases} \implies 
		\mE[(\x^\top \e_1)^2 \x\x^\top] = 2 \e_1 \e_1 + \I_d \implies \\
		& \implies \mE[(\x^\top \v_i)^2 \x\x^\top] = 2 \v_i \v_i + \I_d \implies \E[\x^\top \barGamma \x \x \x^\top] = 2 \barGamma + \tr(\barGamma) \I_d,
	\end{align*}
	from which the conclusion follows.
\end{proof}

We start by controlling the fluctuations of the final noise term (which has identity mean). 
\begin{lemma}
	\label{lem:first_noise}
Suppose the $n$ data samples $(\x_i, y_i)$ are generated from the model in \eqref{eq:model_main} and that \cref{assump:design,,assump:task} hold. Then for $n \geq c$ for sufficiently large $c$,
	\begin{align*}
		\norm{\sumton \epsilon_i^2 \x_i \x_i^\top-\I_d} \leq O \left(\log^2 n \left(\sqrt{\frac{d}{n}} + \frac{d}{n} \right) \right).
	\end{align*}
	with probability at least $1-O(n^{-100})$.
\end{lemma}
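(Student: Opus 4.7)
The quantity $\frac{1}{n}\sum_i \epsilon_i^2 \x_i \x_i^\top$ has mean $\I_d$ (since $\mE[\epsilon_i^2]=1$ and $\mE[\x_i \x_i^\top] = \I_d$ by independence), so the goal is to control fluctuations around the mean in spectral norm. The summands $Z_i = \epsilon_i^2 \x_i \x_i^\top$ are neither bounded nor sub-exponential---they are fourth-order polynomials in sub-gaussian variables---so vanilla matrix Bernstein does not apply directly. The plan is a standard truncation-plus-matrix-Bernstein argument, which matches the tool referenced earlier in the paper (\cref{lem:trunc_mb}).

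First, I would introduce truncation thresholds $\tau_\epsilon = C\log n$ and $\tau_x = C\sqrt{d\log n}$ for sufficiently large $C$, and define the truncated summands $\tilde Z_i = \epsilon_i^2 \x_i \x_i^\top \cdot \mathbf{1}\{|\epsilon_i| \le \tau_\epsilon,\ \norm{\x_i} \le \tau_x\}$. Using the sub-gaussian tail bound on $\epsilon_i$ and the standard $\I_d$-sub-gaussian norm concentration $\Pr[\norm{\x_i} \ge \sqrt{d} + t] \le e^{-ct^2}$, one gets $\Pr[Z_i \ne \tilde Z_i] \le n^{-300}$ for $C$ large enough, so a union bound over $i \in [n]$ gives $\sum_i Z_i = \sum_i \tilde Z_i$ with probability $1 - O(n^{-100})$. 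The deterministic bound on the truncated summands is $\norm{\tilde Z_i} \le \tau_\epsilon^2 \tau_x^2 = O(d \log^3 n)$.

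Next, I would control the mean shift from truncation: $\norm{\mE[Z_i - \tilde Z_i]} \le \mE[\epsilon_i^2 \norm{\x_i}^2 \cdot \mathbf{1}\{|\epsilon_i|>\tau_\epsilon \text{ or } \norm{\x_i}>\tau_x\}]$, which by Cauchy-Schwarz and the $n^{-300}$ tail bound is polynomially small in $n$ (certainly $\le n^{-100}$), and so contributes negligibly to the final spectral norm bound. Then I would apply matrix Bernstein (\cref{lem:trunc_mb}) to $S_n = \frac{1}{n}\sum_i (\tilde Z_i - \mE[\tilde Z_i])$. The variance proxy requires bounding $\norm{\mE[Z_i^2]}$, and since $Z_i^2 = \epsilon_i^4 \norm{\x_i}^2 \x_i \x_i^\top$, independence of $\epsilon_i$ and $\x_i$ together with $\mE[\epsilon_i^4] = O(1)$ (sub-gaussianity) and $\norm{\mE[\norm{\x_i}^2 \x_i \x_i^\top]} = O(d)$ (Gaussian-style fourth-moment computation as in \cref{lem:estimator_mean}) yield variance proxy $\sigma^2 = O(d)$. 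Matrix Bernstein then gives, for any $t>0$,
\[
\Pr\!\left[\norm{S_n} \ge t\right] \;\le\; 2d \exp\!\left(- c \min\!\left\{\tfrac{n t^2}{\sigma^2},\, \tfrac{n t}{\tau_\epsilon^2 \tau_x^2}\right\}\right),
\]
and choosing $t \asymp \sqrt{d \log n / n} + d\log^3 n / n$ drives the right-hand side below $n^{-100}$.

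Combining the truncation failure event, the mean shift, and the matrix Bernstein deviation via a union bound yields the claimed bound $O(\log^2 n(\sqrt{d/n} + d/n))$ (with an extra logarithmic factor coming from the truncation thresholds, which is absorbed into the stated $\log^2 n$ since the paper's statements elsewhere carry $\polylog$ factors). The main obstacle is bookkeeping: getting a clean, sharp value for the variance proxy $\norm{\mE[\tilde Z_i^2]}$ (making sure the $d$-scaling, not $d^2$, is correct) and ensuring the mean-shift correction from truncation is genuinely lower order. Everything else is a routine application of matrix Bernstein after truncation.
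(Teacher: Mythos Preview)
Your approach is correct and will give the stated bound up to polylogarithmic factors, but it differs from the paper's proof for this particular lemma. The paper does \emph{not} use truncated matrix Bernstein here (even though it does use exactly your strategy for the other two noise terms, \cref{lem:second_noise} and \cref{lem:third_noise}). Instead, the paper decomposes
\[
\norm{\tfrac{1}{n}\sum_i \epsilon_i^2 \x_i\x_i^\top - \I_d}
\;\le\;
\norm{\tfrac{1}{n}\sum_i \epsilon_i^2 \x_i\x_i^\top - \bigl(\tfrac{1}{n}\sum_i \epsilon_i^2\bigr)\I_d}
\;+\;
\Bigl|\tfrac{1}{n}\sum_i \epsilon_i^2 - 1\Bigr|,
\]
handles the second (scalar) piece by a one-dimensional sub-exponential tail bound, and handles the first piece by \emph{conditioning} on $\{\epsilon_i\}$ and invoking the scaled-covariance concentration result \cref{lem:scaled_sgvec} with $a_i=\epsilon_i$ and $K=\max_i|\epsilon_i|=O(\sqrt{\log n})$. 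This route avoids the matrix-variance computation entirely and yields the $\log^2 n$ prefactor directly as $K^2\cdot O(\sqrt{d/n}+d/n)$.

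What each buys: the paper's conditioning argument is slightly more elementary (no need to bound $\norm{\mE[\epsilon_i^4\norm{\x_i}^2\x_i\x_i^\top]}$ or track the truncation mean-shift) and gives cleaner log factors. Your approach is more uniform with how the paper handles the harder terms $\E_2,\E_3$ and would work just as well; the only caveat is that with your stated thresholds $\tau_\epsilon=C\log n$, $\tau_x=C\sqrt{d\log n}$ you end up with $R=O(d\log^3 n)$ and hence an extra log or two in the Bernstein term beyond the stated $\log^2 n$. Tightening to $\tau_\epsilon=C\sqrt{\log n}$ recovers essentially the same polylog. Also, for the variance proxy you should cite \cref{lem:l4l2norm} rather than \cref{lem:estimator_mean}, since the latter is Gaussian-specific whereas the lemma as stated only assumes sub-gaussian design.
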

\begin{proof}
	We first decompose the expression as,
	\begin{align*}
		\norm{\sumton \epsilon_i^2 \x_i \x_i^\top-\I_d} \leq \norm{\sumton \epsilon_i^2 \x_i \x_i^\top - \sumton \epsilon_i^2 \I_d} + \norm{\sumton \epsilon_i^2 \I_d - \I_d}
	\end{align*}
	
	We begin by controlling the second term. By a sub-exponential tail bound we have that $\Pr[\abs{\sumton \epsilon_i^2-1}\geq t] \leq 2 \exp(-C n\min(t^2/8^2, t/8))$, since $\epsilon_i^2$ is $\sE(8,8)$ by \cref{lem:prod_sg}. Letting $t=c \sqrt{\log(\frac{1}{\delta})/n}$ for sufficiently large $c$, and assuming $n \gtrsim \log(\frac{1}{\delta})$, implies $\abs{\sumton \epsilon_i^2-1} \leq  O(\sqrt{\frac{\log(1/\delta)}{n}})$ with probability at least $1-2\delta$. Hence   $\norm{\sumton \epsilon_i^2 \I_d - \I_d} \leq O(\sqrt{\frac{\log(1/\delta)}{n}})$ on this event. 
	
	Now we apply \cref{lem:scaled_sgvec} with $a_i = \epsilon_i$ to control the first term.
	Using the properties of sub-Gaussian maxima we can conclude that $\Pr[\max_i \abs{\epsilon}_i \geq t] \leq 2n \exp(-t^2/2)$; taking $t=4 \sqrt{\log n}+c\sqrt{\log(1/\delta)}$ for sufficiently large $c$ implies that $\max_{i} \abs{\epsilon_i} \leq O(\sqrt{\log n})+O(\sqrt{\log(1/\delta)})$ with probability at least $1-\delta$. In the setting of \cref{lem:scaled_sgvec}, conditionally on $\epsilon_i$, $K = \max_{i} |\epsilon_i|$ and $\mSigma=\I_d$ so taking $t = c\sqrt{\log(1/\delta)}$ for sufficiently large $c$ implies that $\norm{\sumton \epsilon_i^2 \x_i \x_i^\top - \sumton \epsilon_i^2 \I_d} \leq K \cdot O(\sqrt{d/n}+\sqrt{\log(1/\delta)/n} + \frac{d}{n} + \frac{\log(1/\delta)}{n})$ with probability at least $1-2 \delta$ conditionally on $\epsilon_i$. Conditioning on the event that $\max_{i} \abs{\epsilon_i} \leq O(\sqrt{\log n})+O(\sqrt{\log(1/\delta)})$ to conclude the argument finally shows that, 
	\begin{align*}
		& \norm{\sumton \epsilon_i^2 \x_i \x_i^\top-\I_d} \leq O(\log n+\log(1/\delta)) \cdot O(\sqrt{\frac{d}{n}}+\sqrt{\frac{\log(1/\delta)}{n}} + \frac{d}{n} + \frac{\log(1/\delta)}{n}) + O(\sqrt{\frac{\log(1/\delta)}{n}}), 
	\end{align*}
	with probability at least $1-5 \delta$.
	Selecting $\delta = n^{-100}$ implies that $\norm{\sumton \epsilon_i^2 \x_i \x_i^\top-\I_d} \leq O \left(\log^2 n \left(\sqrt{\frac{d}{n}} + \frac{d}{n} \right) \right)$,
	with probability at least $1-O(n^{-100})$.
\end{proof}

We now proceed to controlling the fluctuations of the second noise term (which is mean-zero). Our main technical tool is \cref{lem:trunc_mb}.

\begin{lemma}
	\label{lem:second_noise}
	Suppose the $n$ data samples $(\x_i, y_i)$ are generated from the model in \eqref{eq:model_main} and that \cref{assump:design,,assump:task} hold. Then,
	\begin{align*}
		\norm{\sumton 2 \epsilon_i \x_i^\top \B \balpha_{t(i)} \x_i \x_i^\top} \leq O((\log n + \log d) \left(\sqrt{\frac{d \tr(\barGamma)}{n}} +  \frac{d \max_i \norm{\bbeta_i} (\log^2(n) + \log^2(d))}{n} \right).
	\end{align*}
	with probability at least $1-O((nd)^{-100})$.
\end{lemma}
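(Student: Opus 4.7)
The plan is to apply a truncation argument combined with the matrix Bernstein inequality (\cref{lem:trunc_mb}), in the same spirit as \cref{lem:first_noise}. Write $\bbeta_{t(i)} = \B\balpha_{t(i)}$ and decompose the target quantity as $\sum_{i=1}^{n} Z_i$ with
\begin{equation*}
Z_i = \frac{2}{n}\,\epsilon_i\,(\x_i^\top \bbeta_{t(i)})\,\x_i \x_i^\top.
\end{equation*}
The $Z_i$ are independent, symmetric, and mean-zero (since $\epsilon_i$ is independent of $\x_i$ with $\mE[\epsilon_i]=0$). The difficulty is that each $Z_i$ is a degree-four polynomial in sub-Gaussian variables, hence neither uniformly bounded nor sub-exponential, so a direct appeal to matrix Bernstein is not possible.

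First, truncate on the high-probability event
\begin{equation*}
\mathcal{E} = \Bigl\{\max_i |\epsilon_i| \le K_1\Bigr\} \cap \Bigl\{\max_i \|\x_i\|^2 \le K_2\Bigr\} \cap \Bigl\{\max_i |\x_i^\top \bbeta_{t(i)}| \le K_3\Bigr\},
\end{equation*}
where $K_1 = O(\sqrt{\log(nd)})$ follows from sub-Gaussianity of $\epsilon_i$, $K_2 = O(d + \log(nd))$ from a $\chi^2$-tail bound on $\|\x_i\|^2$, and $K_3 = O(\max_j \|\bbeta_j\|\sqrt{\log(nd)})$ from the fact that $\x_i^\top \bbeta_{t(i)}$ is $\sG(\|\bbeta_{t(i)}\|)$. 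By a union bound over the $n$ indices, $\mathcal{E}$ holds with probability at least $1 - O((nd)^{-101})$, which we absorb into the final failure probability.

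On $\mathcal{E}$, each summand obeys the deterministic bound
\begin{equation*}
\|Z_i\| \le \frac{2}{n}\,K_1 K_3 K_2 \;=\; R, \qquad R = O\!\left(\frac{d\,\max_i\|\bbeta_i\|\,\log^2(nd)}{n}\right).
\end{equation*}
For the matrix variance proxy, by independence and $\mE[\epsilon_i^2]=1$,
\begin{equation*}
\Bigl\|\sum_{i} \mE[Z_i^2]\Bigr\| = \frac{4}{n^2}\,\Bigl\|\sum_{i} \mE\bigl[(\x_i^\top \bbeta_{t(i)})^2\,\|\x_i\|^2\,\x_i\x_i^\top\bigr]\Bigr\|.
\end{equation*}
The key computation is to show $\|\mE[(\x^\top \bbeta)^2 \|\x\|^2 \x\x^\top]\| = O(d)\|\bbeta\|^2$ for $\x \sim \cN(0,\I_d)$. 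By rotational invariance one reduces to $\bbeta = \|\bbeta\|\e_1$; a short sixth-moment calculation (using $\mE[x_1^6]=15$, $\mE[x_1^4]=3$, $\mE[x_1^2]=1$) shows that the off-diagonal entries vanish and the diagonal entries are $(3d+12)\|\bbeta\|^2$ at coordinate $1$ and $(d+4)\|\bbeta\|^2$ elsewhere. Summing gives $\sigma^2 := \|\sum_i \mE[Z_i^2]\| = O(d\,\tr(\barGamma)/n)$, using $\sum_i \|\bbeta_{t(i)}\|^2 = n\cdot\tr(\barGamma)$.

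Finally, plugging $R$ and $\sigma^2$ into \cref{lem:trunc_mb} at failure probability $\delta = (nd)^{-100}$ yields, up to the probability of $\mathcal{E}^c$,
\begin{equation*}
\Bigl\|\sum_i Z_i\Bigr\| \lesssim \sqrt{\sigma^2 \log(nd)} + R \log(nd) = O\!\left(\log(nd)\sqrt{\frac{d\,\tr(\barGamma)}{n}} + \frac{d\,\max_i\|\bbeta_i\|\,\log^3(nd)}{n}\right),
\end{equation*}
which matches the stated bound after identifying $\log(nd) \asymp \log n + \log d$ and $\log^3(nd) \asymp (\log n + \log d)(\log^2 n + \log^2 d)$. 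The main obstacle is the sixth-moment computation for the variance proxy; the payoff is that aggregating the per-task bound $O(d)\|\bbeta_{t(i)}\|^2$ produces $d\,\tr(\barGamma)$ rather than $d\cdot\max_i\|\bbeta_i\|^2$, which is precisely what keeps the leading $\sqrt{d\,\tr(\barGamma)/n}$ term sharp.
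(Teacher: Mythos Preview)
Your sketch is correct and follows essentially the same route as the paper: truncate each summand, bound the variance proxy via the explicit sixth-moment computation (the paper obtains the same matrix $(2d+8)\e_1\e_1^\top + (d+4)\I_d$ after rotation), and apply the truncated matrix Bernstein inequality from \cref{lem:trunc_mb}. The only cosmetic differences are that the paper keeps $Z_i$ unnormalized, bounds the truncation bias $\Delta = \norm{\mE[Z_i]-\mE[Z_i']}$ explicitly by integrating the tail $\Pr[\norm{Z_i}\ge c_1\max_i\norm{\bbeta_i}d + t]\le 3\exp(-c_2(t/(\max_i\norm{\bbeta_i}d))^{1/2})$, and aggregates the variance as $n\,\mE[\x^\top\barGamma\x\,\norm{\x}^2\x\x^\top]$ before diagonalizing $\barGamma$ rather than bounding term-by-term; none of this changes the argument or the rate.
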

\begin{proof}
	To apply the truncated version of the matrix Bernstein inequality (in the form of \cref{lem:trunc_mb}) we need to set an appropriate truncation level $R$, for which need control on the norms of $Z_i = \norm{2 \epsilon_i \x_i^\top \B \balpha_{t(i)} \x_i \x_i^\top} = 2 \abs{\epsilon_i} \norm{\x_i}^2 \abs{\x_i^\top \B \balpha_{t(i)}}$. Using sub-gaussian, and sub-exponential tail bounds we have that $\abs{\epsilon_i} \leq O(1+ \sqrt{\log(1/\delta)})$, $\norm{\x_i}^2 \leq O(d+\max(\sqrt{d \log(1/\delta)}, \log(1/\delta))) = O(d+\sqrt{d} \log(1/\delta))$, and $\abs{\x_i^\top \B \balpha_{t(i)}} \leq O(\norm{\bbeta_i}(1+\sqrt{\log(1/\delta)})$ each with probability at least $1-\delta$. Accordingly with probability at least $1-3\delta$ we have that $\norm{Z_i} \leq O(\norm{\bbeta_i} d(1+\log^2(1/\delta)))$. We can rearrange this statement to conclude that $\Pr[\norm{Z_i} \geq c_1 \norm{\bbeta_i} d + t] \leq 3 \exp(-c_2 (\frac{t}{\norm{\bbeta_i} d})^{1/2})$ for some $c_1, c_2$. Define a truncation level $R = c_1 \max_i \norm{\bbeta_i} d + K \max_i \norm{\bbeta_i} d$ for some $K$ to be chosen later. We can also use the aforementioned tail bound to control $\norm{\mE[Z_i]-\mE[Z_i']} \leq \mE[Z_i \Ind[\norm{Z_i} \geq \alpha]] \leq \int_{K \norm{\bbeta_i} d}^{\infty} 3 \exp(-c_2 (\frac{t}{\norm{\bbeta_i} d})^{1/2}) \leq O((1+\sqrt{K}) \exp(-c \sqrt{K}) \max_i \norm{\bbeta_i} d) = \Delta$.

	Next we must compute an upper bound for the matrix variance term $$\norm{\sum_{i=1}^n \mE[\epsilon_i^2 (\balpha_i^\top \B^\top \x_i)^2 \norm{\x_i}^2 \x_i \x_i^\top]} = \norm{\sum_{i=1}^n \mE[(\balpha_i^\top \B^\top \x)^2 \norm{\x}^2 \x \x^\top]} = n \norm{\mE[\x^\top \barGamma \x \norm{\x}^2 \x \x^\top]} = n \norm{\sumtor \sigma_i \mE[(\v_i^\top \x)^2 \norm{\x}^2 \x \x^\top]}$$, taking an expectation over $\epsilon_i$ in the first equality, and diagonalizing $\barGamma$.
	As before due to isotropy of the Gaussian it suffices to compute the expectation with $\v_i = \e_1$ and rotate the result back to $\v_i$. Before computing the term we note that for a standard normal gaussian random variable $g \sim \cN(0,1)$ we have that $\mE[g^6]=15$, $\mE[g^4]=3$, $\mE[g^2]=1$. Then by simple combinatorics we find that,
	\begin{align*}
		& (\mE[x_1^2 (\sum_{a=1}^n x_a^2) \x \x^\top])_{ij} = \begin{cases}
			0 \quad i \neq j \neq 1 \\
			0 \quad i = 1 \neq j \\
			2 \cdot 3 \cdot 1 + (d-2) \cdot 1 \quad i=j \neq 1 \\
		    15 + 3(d-1) \quad i=j=1 \\
		\end{cases} \implies \\
		& \mE[(\x^\top \e_1)^2 \norm{\x}^2 \x \x^\top] = (2d+8) \e_1 \e_1^\top + (d+4) \I_d \\
		& \implies \norm{\sum_{i=1}^n \mE[\epsilon_i^2 (\balpha_i^\top \B^\top \x_i)^2 \norm{\x_i}^2 \x_i \x_i^\top]} \leq n(d+4) \norm{\barGamma + \tr(\barGamma) \I_d} \leq 10 nd (\tr(\barGamma)) = \sigma^2.
	\end{align*}

Finally, we can assemble the previous two computations to conclude the result with appropriate choices of $R$ (parametrized through $K$) and $t$ by combining with \cref{lem:trunc_mb}. Before beginning recall by definition we have that $\tr(\barGamma) = \frac{1}{n}  \max_i \norm{\bbeta_i}^2$. Let us choose $\sqrt{K} = \frac{c_3}{c} (\log(n) + \log(d))$ for some sufficiently large $c_3$. In this case, we can choose $c_3$ such that $\Delta \leq O((\log n + \log d) \frac{\max_i \norm{\bbeta_i} d}{n^{10} d^{10}}) \leq O(\frac{\sqrt{\tr(\barGamma)}}{n^8 d^8})$, since $\sqrt{\tr(\barGamma)} \geq \frac{1}{\sqrt{n}} \max_i \norm{\bbeta_i}$. Similarly, our choice of truncation level becomes $R = O( (\log^2(n) + \log^2(d)) \max_i \norm{\bbeta_i} d)$. At this point we now choose $t = c_4(\log n + \log d) \max(\sigma/\sqrt{n}, R/n)$ for sufficiently large $c_4$. For large enough $c_4$ we can guarantee that $t \geq 2 \Delta \implies t - \Delta \geq \frac{t}{2}$.

Hence combining these results together and applying \cref{lem:trunc_mb} we can provide the following upper bound on the desired quantity,
\begin{align*}
	\Pr[\norm{\sumton 2 \epsilon_i \x_i^\top \B \balpha_{t(i)} \x_i \x_i^\top} \geq t] \leq O(d \exp(-c \cdot c_4 (\log n + \log d)) + O(n \sqrt{K} \exp(-c_3 (\log n + \log d)) \leq O((nd)^{-100}),
\end{align*}
by taking $c_3$ and $c_4$ sufficiently large, with $t = O((\log n + \log d) \left(\sqrt{\frac{d \tr(\barGamma)}{n}} +  \frac{d \max_i \norm{\bbeta_i} (\log^2(n) + \log^2(d))}{n} \right)$.
\end{proof}

Finally we turn to controlling the fluctuations of the primary signal term around its mean using a similar argument to the previous term.

\begin{lemma}
	\label{lem:third_noise}
	Suppose the $n$ data samples $(\x_i, y_i)$ are generated from the model in \eqref{eq:model_main} and that \cref{assump:design,,assump:task} hold. Then
	\begin{align*}
		& \norm{\sumton (\x_i \B \balpha_{t(i)})^2 \x_i \x_i^\top-(2\barGamma + \tr(\barGamma) \I_d)} \leq \\ & O((\log n + \log d) \left(\sqrt{\frac{d \tr(\barGamma) \max_i \norm{\bbeta_i}^2}{n}} +  \frac{d \max_i \norm{\bbeta_i}^2 (\log^2(n) + \log^2(d))}{n} \right),
	\end{align*}
	with probability at least $1-O((nd)^{-100})$.
\end{lemma}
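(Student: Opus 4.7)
The approach mirrors the proof of \cref{lem:second_noise}: apply the truncated matrix Bernstein inequality \cref{lem:trunc_mb} to the centered summands
$$W_i := (\x_i^\top \B \balpha_{t(i)})^2 \x_i \x_i^\top - \mE\bigl[(\x_i^\top \B \balpha_{t(i)})^2 \x_i \x_i^\top\bigr],$$
noting that the per-sample mean is $2 \bbeta_i \bbeta_i^\top + \norm{\bbeta_i}^2 \I_d$ by exactly the same rotate-and-compute argument used in \cref{lem:estimator_mean}, so that averaging over $i$ recovers $2\barGamma + \tr(\barGamma) \I_d$. This reduces the problem to controlling (i) the tail of $\norm{W_i}$, for setting a truncation level $R$ and a bias correction $\Delta$, and (ii) the matrix variance $\sigma^2 = \bigl\|\sum_i \mE[W_i^2]\bigr\|$.

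For (i), I would combine the sub-Gaussian bound $(\x_i^\top \bbeta_i)^2 \leq O(\norm{\bbeta_i}^2 \log(1/\delta))$ with the $\chi^2$ tail $\norm{\x_i}^2 \leq O(d + \sqrt{d}\log(1/\delta))$, both holding with probability at least $1-\delta$. Multiplying gives $\norm{W_i} \leq c_1\,\norm{\bbeta_i}^2 d + O(\norm{\bbeta_i}^2 d \log^2(1/\delta))$, which rearranges to a tail of the form $\Pr[\norm{W_i} \geq c_1 \max_i \norm{\bbeta_i}^2 d + t] \lesssim \exp\bigl(-c (t/(\max_i \norm{\bbeta_i}^2 d))^{1/2}\bigr)$. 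Choosing $R = O((\log^2 n + \log^2 d)\max_i \norm{\bbeta_i}^2 d)$ and integrating this tail as in the proof of \cref{lem:second_noise} renders the truncation bias $\Delta$ polynomially small in $nd$.

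For (ii), I would use $\mE[W_i^2] \preceq \mE\bigl[(\x_i^\top \bbeta_i)^4 \norm{\x_i}^2 \x_i \x_i^\top\bigr]$. Rotating $\bbeta_i$ onto $\norm{\bbeta_i}\e_1$ and using isotropy of the Gaussian, the remaining expectation $\mE[x_1^4 \norm{\x}^2 \x \x^\top]$ is diagonal, and a direct sixth- and eighth-moment count (analogous to the bookkeeping in \cref{lem:estimator_mean}) gives $\mE[x_1^4 \norm{\x}^2 \x \x^\top] = (12d+72)\e_1\e_1^\top + (3d+18)\I_d$. Hence $\norm{\mE[W_i^2]} \leq O(d\,\norm{\bbeta_i}^4)$, and summing via $\sum_i \norm{\bbeta_i}^4 \leq \max_i \norm{\bbeta_i}^2 \cdot \sum_i \norm{\bbeta_i}^2 = n\, \max_i \norm{\bbeta_i}^2 \tr(\barGamma)$ yields $\sigma^2 \leq O\bigl(nd\, \max_i \norm{\bbeta_i}^2 \tr(\barGamma)\bigr)$.

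Plugging these $R$ and $\sigma^2$ into \cref{lem:trunc_mb} with deviation $t = c(\log n + \log d)\max(\sigma/n,\, R/n)$, choosing $c$ large enough that $t-\Delta \geq t/2$, and simplifying then reproduces the stated bound with failure probability $O((nd)^{-100})$. Conceptually the argument is entirely parallel to \cref{lem:second_noise}; the one piece that requires genuine care is the moment calculation in (ii), where the payoff is that the per-summand second moment scales as $O(d\,\norm{\bbeta_i}^4)$ rather than $O(d^2\,\norm{\bbeta_i}^4)$---this is exactly what produces the $\sqrt{d/n}$ (and not $d/\sqrt{n}$) scaling in the leading Bernstein term of the bound.
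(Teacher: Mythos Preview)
Your proposal is correct and follows essentially the same approach as the paper: truncated matrix Bernstein (\cref{lem:trunc_mb}) with the same tail-bound calculation for the truncation level $R$, the same integration to control $\Delta$, and the same rotated Gaussian moment computation for the variance proxy $\sigma^2$. Your explicit diagonal $(12d+72)\e_1\e_1^\top + (3d+18)\I_d$ is in fact the arithmetically correct simplification of the paper's case analysis (the paper's displayed coefficients $(2d+75)$ and $(3d+15)$ appear to be typos, though the subsequent $O(d)$ bound is unaffected), and your choice $t \asymp (\log n + \log d)\max(\sigma/n, R/n)$ is the scaling that actually yields the stated rate.
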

\begin{proof}
	The proof is similar to the proof of \cref{lem:second_noise} and uses \cref{lem:trunc_mb}. We begin by controlling the norms of $Z_i = (\x_i^\top \B \balpha_{t(i)})^2 \x_i \x_i^\top$. $\norm{Z_i} = \norm{\x_i}^2 (\x_i^\top \B \balpha_{t(i)})^2$. Using Gaussian and sub-exponential tail bounds we have that, $\norm{\x_i}^2 \leq O(d+\sqrt{d} \log(1/\delta))$ and $(\x_i^\top \B \balpha_{t(i)})^2 \leq O(\norm{\bbeta_i}^2(1+\log(1/\delta)))$ each with probability at least $1-\delta$. Hence with probability at least $1-2\delta$ we find that $\norm{Z_i} \leq O(d \norm{\bbeta_i}^2 (1+ \log^2(1/\delta))$.

	We can rearrange this statement to conclude that $\Pr[\norm{Z_i} \geq c_1 \norm{\bbeta_i}^2 d + t] \leq 2 \exp(-c_2 (\frac{t}{\norm{\bbeta_i}^2 d})^{1/2})$ for some $c_1, c_2$. Define a truncation level $R = c_1 \max_i \norm{\bbeta_i}^2 d + K \max_i \norm{\bbeta_i}^2 d$ for some $K$ to be chosen later. We can use the aforementioned tail bound to control $\norm{\mE[Z_i]-\mE[Z_i']} \leq \mE[Z_i \Ind[\norm{Z_i} \geq \alpha]] \leq \int_{K \norm{\bbeta_i}^2 d}^{\infty} 2 \exp(-c_2 (\frac{t}{\norm{\bbeta_i}^2 d})^{1/2}) \leq O((1+\sqrt{K}) \exp(-c \sqrt{K}) \max_i \norm{\bbeta_i}^2 d) = \Delta$.

	Next we must compute an upper bound the matrix variance term $$\norm{\sum_{i=1}^n \mE[(\balpha_i^\top \B^\top \x_i)^4 \norm{\x_i}^2 \x_i \x_i^\top]} = \norm{\sum_{i=1}^n \mE[(\bbeta_i^\top \x)^4 \norm{\x}^2 \x \x^\top]}.$$
	As before due to isotropy of the Gaussian it suffices to compute each expectation assuming $\bbeta_i \propto \e_1$ and rotate the result back to $\bbeta_i$. Before computing the term we note that for a standard normal gaussian random variable $g \sim \cN(0,1)$ we have that $\mE[g^8]=105$, $\mE[g^6]=15$, $\mE[g^4]=3$, $\mE[g^2]=1$. Then by simple combinatorics we find that,
	\begin{align*}
		& (\mE[(\x^\top \e_1)^4 (\sum_{a=1}^n x_a^2) \x \x^\top])_{ij} = \begin{cases}
			0 \quad i \neq j \neq 1 \\
			0 \quad i = 1 \neq j \\
			15+3 \cdot 3 + (d-2) \cdot 3 \quad i=j \neq 1 \\
		    105 + 15(d-1) \quad i=j=1 \\
		\end{cases} \implies \\
		& \mE[(\x^\top \bbeta_i)^4 \norm{\x}^2 \x \x^\top] = (2d+75) \e_1 \e_1^\top + (3d+15) \I_d \\
		& \implies \norm{\sum_{i=1}^n \mE[(\balpha_i^\top \B^\top \x_i)^4 \norm{\x_i}^2 \x_i \x_i^\top]} \leq O(d) \norm{\sum_{i}^n \norm{\bbeta_i}^4(\bbeta_i \bbeta_i^\top + \I_d)} \leq O(d \sum_{i=1}^n \norm{\bbeta_i}_2^4) \leq \\
		& O(dn \max_i \norm{\bbeta_i}^2 \tr(\barGamma)) = \sigma^2.
	\end{align*}
Finally, we can assemble the previous two computations to conclude the result with appropriate choices of $R$ (parametrized through $K$) and $t$ by combining with \cref{lem:trunc_mb}. Before beginning recall by definition we have that $\tr{\barGamma} \geq \frac{1}{n}  \max_i \norm{\bbeta_i}^2$. Let us choose $\sqrt{K} = \frac{c_3}{c} (\log(n) + \log(d))$ for some sufficiently large $c_3$. In this case, we can choose $c_3$ such that $\Delta \leq O((\log n + \log d) \frac{\max_i \norm{\bbeta_i}^2 d}{n^{10} d^{10}}) \leq O(\frac{\sqrt{\tr(\barGamma)} \max_i \norm{\bbeta_i}}{n^7 d^7})$, since $\sqrt{\tr(\barGamma)} \geq \frac{1}{\sqrt{n}} \max_i \norm{\bbeta_i}$. Similarly, our choice of truncation level becomes $R = O( (\log^2(n) + \log^2(d)) \max_i \norm{\bbeta_i}^2 d)$. At this point we now choose $t = c_4(\log n + \log d) \max(\sigma/\sqrt{n}, R/n)$ for sufficiently large $c_4$. For large enough $c_4$ we can guarantee that $t \geq 2 \Delta \implies t - \Delta \geq \frac{t}{2}$.

Hence combining these results together and applying \cref{lem:trunc_mb} we can provide the following upper bound on the desired quantity:
\begin{align*}
	\lefteqn{ \Pr[\norm{\sumton (\x_i \B \balpha_{t(i)})^2 \x_i \x_i^\top-(2\barGamma + \tr(\barGamma) \I_d)} \geq t]} \\ 
	& \leq O(d \exp(-c \cdot c_4 (\log n + \log d)) + O(n \sqrt{K} \exp(-c_3 (\log n + \log d)) \\
	& \leq 
	 O((nd)^{-100}),
\end{align*}
by taking $c_3$ and $c_4$ sufficiently large, with \begin{align*}
t = O((\log n + \log d) \left(\sqrt{\frac{d \tr(\barGamma) \max_i \norm{\bbeta_i}^2}{n}} +  \frac{d \max_i \norm{\bbeta_i}^2 (\log^2(n) + \log^2(d))}{n} \right).
\end{align*}.
\end{proof}


\section{Proofs for \cref{sec:lf_gd}}
\label{sec:landscape}
In our landscape analysis we consider a setting with $t$ tasks and we observe a datapoint from each of the $t$ tasks uniformly at random at each iteration. Formally, 
we define the matrix we are trying to recover as 
\begin{align}
\M_{\star} = (\balpha_1, \hdots, \balpha_t)^\top \B^\top \underbrace{=}_{\text{SVD}} \X^{\star} \D^{\star} (\Y{\star})^\top \in \mR^{t \times d},
\end{align}
with $\U^{\star}=\X^{\star} (\D^{\star})^{1/2}$, and $(\D^{\star})^{1/2} (\Y^{\star})^\top = (\V^{\star})^\top$, from which we obtain the observations:
\begin{align}
	y_i = \langle \e_{t(i)} \x_i^\top, \M_{\star} \rangle + \sigma \cdot \epsilon_i,
\end{align}
where we sample tasks uniformly $t(i) \sim \{1, \hdots, t\}$ and $\x_i$ is a sub-gaussian random vector. Note that $\M^{\star}$ is a rank-$r$ matrix, $\U^{\star} \in \mR^{t \times r}$, and $\V^{\star} \in \mR^{d \times r}$.  In this section, we denote $\tld = \max\{t, d\}$ and let $\sigstarl, \sigstarr$ be the $1$-st and $r$-th eigenvalue of matrix $\M^\star$. We denote $\cn^\star = \sigstarl/\sigstarr$ as its condition number.
Note that as $\B$ is an orthonormal matrix we have that $\M^{\star} (\M^{\star})^\top = t \cdot \A^\top \A/t$ from which it follows that $(\sigstarl)^2=t \cdot \sigma_1(\A^\top \A/t) \leq t \bkappa \nu \leq O(t)$ by the normalization on $\norm{\balpha_i}$. Similarly $(\sigstarr)^2 = t \sigma_r(\A^\top \A/t) \geq t \nu$. So it follows that $\sigstarl \leq \sqrt{t \kappa \nu}$, $\sigstarr \geq \sqrt{t \nu}$ and $\cn^\star \leq \sqrt{\kappa}$. We use this to simplify the preconditions and the statement of the incoherence ball in the main although we work in full generality throughout the Appendix.

We now present the proof of our main result.

\begin{proof}[Proof of \cref{thm:main_landscape_convert}]
	Under the conditions of the theorem note that by \cref{thm:main_landscape} we have that, 	
\begin{equation*}
	\norm{\U\V^\top - \M^\star}_F\leq O\left(\sigma\sqrt{t\frac{\max\{t, d\} r\log n}{n}}\right),
\end{equation*}
for $n \ge \polylog(n,d,t) C \mu^2 r^4 \max\{t, d\}  (\cn^\star)^4$. 
 First recall by \cref{lem:diverse_to_incoh} the incoherence parameter can in fact be shown to be $\mu = O(\bkappa)$ under our assumptions which gives the precondition on the sample complexity due to the task diversity assumption and normalization.
	To finally convert this bound to a guarantee on the subspace angle we directly apply \cref{lem:frob_to_angle} once again noting the task diversity assumption. Lastly note that as $\B$ is orthonormal we have that $\sigstarl \leq \sqrt{t \kappa \nu}$, $\sigstarr \geq \sqrt{t \nu}$ and $\cn^\star \leq \sqrt{\kappa}$ as previously argued and $\sigma=1$ under the conditions of the result.
\end{proof}

\subsection{Geometric Arguments for Landscape Analysis}
Our arguments here are generally applicable to various matrix sensing/completion problems so we define some generic notation:
\begin{equation}\label{eq:objective}
f(\U, \V) = \frac{2}{n}\sum_{i=1}^n (\la \A_i, \U \V\trans \ra - \sqrt{t} y_i)^2 + \frac{1}{2}\fnorm{\U\trans\U - \V\trans\V}^2,
\end{equation}
where $\A_i = \sqrt{t} \e_{t(i)} \x_i\trans$. We work under the following constraint set for large constant $C_0$:
\begin{equation}\label{eq:constraint}
\cW = \{~ (\U, \V) ~|~ \max_{i\in[t]} \norm{\e_i\trans \U}^2 \le \frac{C_0 \mu r \sigstarl}{t}, \quad 
\norm{\U}^2 \le C_0 \sigstarl, \quad \norm{\V}^2 \le C_0 \sigstarl ~\}.
\end{equation}

We renormalize the statistical model for convenience simply for the purposes of the proof throughout \cref{sec:landscape} the remainder of  as:
\begin{equation}\label{eq:model}
y_i = \la \A_i, \M^\star\ra + n_i,
\end{equation}
where $n_i \sim \sqrt{t} \sigma \cdot \epsilon_i$ and where $\epsilon_i$ is a sub-gaussian random vector with parameter 1 (note this is because we have scaled $\A_i$ up by a factor of $\sqrt{t}$). $\M^\star$ is rank $r$, and we let $\X$ be the left singular vector of $\M^\star$, and assume $\X$ is $\mu$-incoherent;\footnote{Note that for our particular problem this is not an additional assumption since by \cref{lem:diverse_to_incoh} our task assumptions imply this.} i.e., $\max_{i}\norm{\e_i\trans \X}^2 \le \mu r/t$. 

We now reformulate the objective (denoting $\M = \U\V\trans$) as
\begin{equation}\label{eq:objective_reformed}
f(\U, \V) = 2 (\M - \M^\star):\H_0:(\M - \M^\star) + \frac{1}{2} \fnorm{\U\trans \U - \V\trans \V}^2 + Q(\U),
\end{equation}
where $\M:\H_0:\M = \frac{1}{n}\sum_{i=1}^n \la \A_i, \M \ra^2$ and $\E [\M:\H_0:\M] = \fnorm{\M}^2$ and $Q$ is a regularization term:
\begin{equation}\label{eq:regularization}
Q(\U, \V) = \frac{2}{n} \sum_{i=1}^n[(\la \M-\M^\star,\A_i\ra - n_{i})^2 - (\la \M-\M^\star,\A_i\ra)^2].
\end{equation}

In this section, we denote $\tld = \max\{t, d\}$ and let $\sigstarl, \sigstarr$ be the $1$-st and $r$-th eigenvalue of matrix $\M^\star$. We denote $\cn^\star = \sigstarl/\sigstarr$ as its condition number.

The high-level idea of the analysis uses ideas from \citet{ge2017no}. The overall strategy is to argue that if we are currently not located at local minimum in the landscape we can certify this by inspecting the gradient or Hessian of $f(\U, \V)$ to exhibit a direction of local improvement $\Delta$ to decrease the function value of $f$. Intuitively this direction brings us close to the true underlying $(\U^\star, \V^\star)$.

We now establish some useful definitions and notation for the following analysis

\subsubsection{Definitions and Notation}

\begin{definition}
\label{def:asymmetricquantities}
Suppose $\M^\star$ is the optimal solution with SVD is $\X^\star\D^\star \Y^\star{}\trans$. Let $\U^\star = \X^\star (\D^\star)^{\frac{1}{2}}$, $\V^\star = \Y^\star (\D^\star)^{\frac{1}{2}}$. Let $\M = \U\V^\top$ be the current point in the landscape. We reduce the problem of studying an asymmetric matrix objective to the symmetric case using the following notational transformations:
\begin{equation}
\W = \begin{pmatrix}\U \\ \V\end{pmatrix}, 
\W^\star = \begin{pmatrix}\U^\star \\ \V^\star\end{pmatrix}, 
\N = \W\W\trans, 
\N^\star = \W^\star \W^\star {}\trans\label{eq:defwn}
\end{equation}
\end{definition}

We will also transform the Hessian operators to operate on $(t+d)\times r$ matrices. In particular, define the Hessians $\H_1, \G$ such that for all $\W$ we have:
\begin{align*}
 \N:\H_1:\N &= \M:\H_0:\M \\
 \N:\G:\N  &= \norm{\U\trans \U - \V\trans \V}_F^2.
\end{align*}

Now, letting $Q(\W) = Q(\U,\V)$, we can rewrite the objective function $f(\W)$ as
\begin{equation}
\frac{1}{2}\left[(\N - \N^\star):4\H_1:(\N - \N^\star) + \N:\G:\N \right]+ Q(\W) \label{eq:a}.
\end{equation}

We now introduce the definition of local alignment of two matrices.
\begin{definition}\label{def:delta} Given matrices $\W, \W^\star \in \mR^{d\times r}$, define their difference $\Delta = \W - \W^\star \R^\star$, where $\R^\star\in \mR^{r\times r}$ is chosen as $\mat{R}^\star = \argmin_{\mat{Z}^\top \mat{Z} = \mat{Z}\mat{Z}^\top = \I}\|\W-\W^\star\mat{Z}\|_F^2.$
\end{definition}

Note that this definition tries to ``align'' $\U$ and $\U^\star$ before taking their difference, and therefore is invariant under rotations. In particular, this definition has the nice property that as long as $\N = \W\W^\top$ is close to $\N^\star = \W^\star(\W^\star)^\top$ in Frobenius norm, the corresponding $\Delta$ between them is also small (see Lemma \ref{lem:bound}).

\subsubsection{Proofs for Landscape Analysis}
With these definitions in hand we can now proceed to the heart of the landscape analysis. Since $\W^\star$ has rotation invariance, in the following section we always choose $\W^\star$ so that it aligns with the corresponding $\W$ according to Definition \ref{def:delta}.

We first restate a useful result from \citet{ge2017no},
\begin{lemma}[{\citep[Lemma 16]{ge2017no}}]\label{lem:asymmetricmain}
For the objective \eqref{eq:a}, let $\Delta, \N, \N^\star$ be defined as in Definition~\ref{def:asymmetricquantities}, Definition~\ref{def:delta}. Then, for any $\W\in \R^{(t + d)\times r}$, we have
\begin{align}
\Delta : \hess f(\W) :\Delta 
\le &\Delta\Delta\trans: \H :\Delta\Delta\trans - 3(\N - \N^\star):\H:(\N - \N^\star)  \nn \\
&+ 4\la \grad f(\W), \Delta\ra + [\Delta : \hess Q(\W) : \Delta - 4 \la \grad Q(\W), \Delta \ra], \label{eq:main_asym}
\end{align}
where $\H = 4\H_1+\G$. Further, if $\H_0$ satisfies $\M:\H_0:\M \in (1\pm \delta)\|\M\|_F^2$ for some matrix $\M = \U\V^\top$, let $\W$ and $\N$ be defined as in \eqref{eq:defwn}, then $\N:\H:\N \in (1\pm 2\delta)\|\N\|_F^2$.
\end{lemma}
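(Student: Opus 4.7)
The plan is to prove \eqref{eq:main_asym} by splitting $f(\W) - Q(\W) = h_1(\W) + h_2(\W)$ with $h_1(\W) = \tfrac{1}{2}(\N-\N^\star):(4\H_1):(\N-\N^\star)$ and $h_2(\W) = \tfrac{1}{2}\N:\G:\N$, analyzing each summand separately, and then adding the $Q(\W)$ contribution which appears as-is on the right-hand side. The workhorse computation starts from $(\W+\Delta)(\W+\Delta)^\top = \N + L + Q$ where $L := \W\Delta^\top + \Delta\W^\top$ and $Q := \Delta\Delta^\top$; reading off the linear and quadratic in $\Delta$ parts of $h_i(\W+\Delta)$ yields closed forms such as $\langle\grad h_1(\W),\Delta\rangle = (\N-\N^\star):(4\H_1):L$ and $\Delta:\hess h_1(\W):\Delta = 2(\N-\N^\star):(4\H_1):Q + L:(4\H_1):L$, with analogous expressions for $h_2$.

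The pivotal algebraic identity is the alignment relation $L = (\N - \N^\star) + Q$, which follows directly from $\W = \W^\star\R^\star + \Delta$ by expanding $L$ and comparing with $\N - \N^\star = (\W^\star\R^\star)\Delta^\top + \Delta(\W^\star\R^\star)^\top + Q$ (using $(\W^\star\R^\star)(\W^\star\R^\star)^\top = \N^\star$ by orthogonality of $\R^\star$). Substituting this into $L:(4\H_1):L$, expanding, and subtracting $4\langle\grad h_1(\W),\Delta\rangle$ produces exactly $Q:(4\H_1):Q - 3(\N-\N^\star):(4\H_1):(\N-\N^\star)$, handling the $\H_1$ piece as an equality. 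For $h_2$ the same algebra yields $\Delta:\hess h_2(\W):\Delta - 4\langle\grad h_2(\W),\Delta\rangle = Q:\G:Q - 3(\N-\N^\star):\G:(\N-\N^\star) - 2\N^\star:\G:Q - 4\N^\star:\G:\N$, after using balancedness $\N^\star:\G:\N^\star = 0$, which holds because $\W^\star = [\U^\star; \V^\star]$ has $\U^{\star\top}\U^\star = \D^\star = \V^{\star\top}\V^\star$ by SVD construction. The two residual terms $-2\N^\star:\G:Q$ and $-4\N^\star:\G:\N$ are saved by the PSD factorization $P\N^\star P = (P\W^\star)(P\W^\star)^\top$, where $P = \mathrm{diag}(\I_t, -\I_d)$ is the signature matrix associated with $\G$; this gives $\N^\star:\G:Q \ge 0$ and $\N^\star:\G:\N \ge 0$, so they can be dropped to yield the required $\le$.

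For the RIP-preservation claim I would compute in block form: $\|\N\|_F^2 = \|\U\U^\top\|_F^2 + 2\|\M\|_F^2 + \|\V\V^\top\|_F^2$ for $\M = \U\V^\top$, while $\N:(4\H_1):\N = 4\M:\H_0:\M$ by construction of $\H_1$ and $\N:\G:\N = \|\U\U^\top\|_F^2 + \|\V\V^\top\|_F^2 - 2\|\M\|_F^2$ by the trace identity $\tr(\U^\top\U\V^\top\V) = \|\M\|_F^2$. Summing gives $\N:\H:\N = \|\N\|_F^2 + 4(\M:\H_0:\M - \|\M\|_F^2)$, so the hypothesis $|\M:\H_0:\M - \|\M\|_F^2| \le \delta\|\M\|_F^2$ combined with $\|\N\|_F^2 \ge 2\|\M\|_F^2$ gives $|\N:\H:\N - \|\N\|_F^2| \le 4\delta\|\M\|_F^2 \le 2\delta\|\N\|_F^2$. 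The main obstacle throughout is the careful bookkeeping for the $\G$ contribution, where the absence of a natural $\N^\star$ reference forces reliance on balancedness of $\W^\star$ and the PSD structure of $P\N^\star P$ to convert an equality-with-remainder into a usable inequality; the remainder of the argument is then routine once the alignment identity $L = (\N - \N^\star) + Q$ is in hand.
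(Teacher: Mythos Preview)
The paper does not supply its own proof of this lemma; it is quoted verbatim from \citet[Lemma~16]{ge2017no} and used as a black box in the landscape analysis. Your argument is a correct reconstruction of that result: the alignment identity $L=(\N-\N^\star)+\Delta\Delta^\top$ is exactly the mechanism that collapses the Hessian--gradient combination for the $\H_1$ part into $\Delta\Delta^\top:(4\H_1):\Delta\Delta^\top-3(\N-\N^\star):(4\H_1):(\N-\N^\star)$, and your handling of the regularizer via the signature matrix $P=\mathrm{diag}(\I_t,-\I_d)$, the balancedness $\N^\star:\G:\N^\star=0$, and the PSD inner products $\langle P\N^\star P,\,\Delta\Delta^\top\rangle\ge 0$, $\langle P\N^\star P,\,\N\rangle\ge 0$ is precisely how the inequality (rather than equality) enters. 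The RIP-transfer computation $\N:\H:\N=\|\N\|_F^2+4(\M:\H_0:\M-\|\M\|_F^2)$ together with $\|\N\|_F^2\ge 2\|\M\|_F^2$ is also the standard route. There is nothing further to compare against in this paper.
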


With this result we show a key result which shows that with enough samples all stationary points in the incoherence ball $\cW$ that are not close to $\W^{\star}$ have a direction of negative curvature.

\begin{lemma} \label{lem:step2_mc_asym}
If \cref{assump:design} holds, then when the number of samples $n \ge C \polylog(d,n,t) \mu^2 r^4 \max\{t, d\}  (\cn^\star)^4$ for a sufficiently large constant $C$, with probability at least $1-1/\poly(d)$, all stationary points $\W \in \text{int}(\cW)$ satisfy:
\begin{equation*}
\Delta\Delta\trans:\H:\Delta\Delta\trans - 3(\N - \N^\star):\H:(\N - \N^\star) \le -0.1  \fnorm{\N - \N^\star}^2.
\end{equation*}
\end{lemma}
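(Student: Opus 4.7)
The claimed inequality is a deterministic statement about the Hessian-difference term of \cref{lem:asymmetricmain}: it depends on $\W$ only through $\Delta$ and $\N$, and on the data only through the empirical operator $\H = 4\H_1 + \G$. The stationarity hypothesis enters later---to cancel the $\langle \nabla f,\Delta\rangle$ term in \cref{lem:asymmetricmain}---and is not required to establish the bound itself. The plan is (i) to prove a restricted isometry property (RIP) for $\H_0$ uniformly over low-rank matrices with factors in $\cW$, so that $\N:\H:\N$ behaves like $\|\N\|_F^2$ up to a small slack, and (ii) to invoke the deterministic ``symmetric noiseless'' landscape identity from the factored matrix-recovery literature to conclude strict negativity with a constant fraction.

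\textbf{Step 1: RIP for the structured operator (the main obstacle).} The sensing matrices $\A_i = \sqrt{t}\,\e_{t(i)}\x_i^\top$ mix matrix-completion-style coordinate selection with Gaussian sensing, so no off-the-shelf RIP applies. For a fixed $\M$, $\mE\langle\A_i,\M\rangle^2 = \|\M\|_F^2$; the one-matrix concentration would follow from a truncation plus matrix-Bernstein argument in the spirit of \cref{lem:trunc_mb}. The enabling structural fact is the incoherence constraint $\max_i\|\e_i^\top\U\|^2\le C_0\mu r\sigstarl/t$ built into $\cW$, which bounds $|\langle\A_i,\U\V^\top\rangle|$ uniformly in $i$ up to polylog tails over the randomness in $\x_i$; after truncation the fourth-order summands concentrate at rate $\tlO(\sqrt{\mu r\tld/n})$. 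To lift this to a uniform bound over $\M = \U\V^\top - \U^\star(\V^\star)^\top$, I would cover the corresponding rank-$2r$ manifold by a net of log-cardinality $\tlO(r\tld)$, use a polynomial Lipschitz estimate to control oscillation across the net, and union bound. The sample-complexity hypothesis $n \gtrsim \polylog\cdot \mu^2 r^4 \tld (\cn^\star)^4$ is calibrated so that the resulting RIP constant $\delta = \tlO(\mu r \sqrt{r\tld/n})$ is smaller than $1/(\cn^\star)^2$, which is the slack that Step~2 can tolerate.

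\textbf{Step 2: deterministic conclusion.} Granted the RIP, the second assertion of \cref{lem:asymmetricmain} gives $\N:\H:\N \in (1\pm 2\delta)\|\N\|_F^2$, so
\begin{align*}
\Delta\Delta^\top : \H : \Delta\Delta^\top - 3(\N-\N^\star):\H:(\N-\N^\star) \leq{}& \|\Delta\Delta^\top\|_F^2 - 3\|\N-\N^\star\|_F^2 \\
&+ O(\delta)\bigl(\|\Delta\Delta^\top\|_F^2 + \|\N-\N^\star\|_F^2\bigr).
\end{align*}
The core algebraic fact from the factored matrix-recovery landscape analyses (cf.\ \citet{ge2017no}) asserts that, after the rotation alignment of \cref{def:delta}, $\|\Delta\Delta^\top\|_F^2 - 3\|\N-\N^\star\|_F^2 \le -\tfrac12\|\N-\N^\star\|_F^2$; this is obtained by expanding $\N-\N^\star = \Delta(\W^\star)^\top + \W^\star\Delta^\top + \Delta\Delta^\top$ and exploiting the orthogonality relations implied by the optimal $\mat{R}^\star$. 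Within $\cW$ one has $\|\Delta\Delta^\top\|_F \lesssim \sigstarl\|\Delta\|_F$, while the alignment gives $\|\N-\N^\star\|_F \gtrsim \sigstarr\|\Delta\|_F$ (the content of \cref{lem:bound} referenced earlier), so the RIP slack contributes at most $O(\delta(\cn^\star)^2)\|\N-\N^\star\|_F^2$; the sample-complexity choice renders this $\le 0.4\|\N-\N^\star\|_F^2$, leaving the advertised $-0.1\|\N-\N^\star\|_F^2$. The principal difficulty throughout is Step~1: obtaining spectral-norm concentration from heavy-tailed, unbounded sensing matrices while preserving the $\tld$ (rather than $td$) scaling, which is precisely what forces the incoherence-plus-truncation argument described.
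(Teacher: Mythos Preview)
Your Step~2 would go through if Step~1 delivered the multiplicative RIP you claim, but Step~1 does not hold: a uniform relation $\M:\H_0:\M \in (1\pm\delta)\fnorm{\M}^2$ over rank-$2r$ matrices with factors in $\cW$ is \emph{not} available for the sensing matrices $\A_i = \sqrt{t}\,\e_{t(i)}\x_i^\top$ at the stated sample complexity. The incoherence constraint in $\cW$ is absolute---$\max_i\norm{\e_i^\top\U}^2 \le C_0\mu r\sigstarl/t$---not proportional to $\fnorm{\M}$. For $\M = \Delta_\U\Delta_\V^\top$ the per-summand variance of $\langle\A_i,\M\rangle^2$ is of order $t\max_j\norm{\e_j^\top\M}^2\fnorm{\M}^2$, and with $\sqrt{t}\max_j\norm{\e_j^\top\M} \lesssim \sqrt{\mu r\sigstarl}\,\fnorm{\Delta_\V}$ the concentration error after covering is $\sqrt{\tld r/n}\cdot\sqrt{\mu r\sigstarl}\,\fnorm{\Delta_\V}\fnorm{\M}$: additive at scale $\sigstarr\fnorm{\Delta_\V}^2$, not multiplicative in $\fnorm{\M}^2$. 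Since $\fnorm{\Delta_\V}/\fnorm{\Delta_\U\Delta_\V^\top}$ is unbounded on $\cW$ (take $\Delta_\U\to 0$ with $\Delta_\V$ fixed), no single $\delta$ works uniformly. This is the matrix-completion half of the operator asserting itself: coordinate-selection sensing never satisfies RIP without incoherence of the \emph{target} relative to its own Frobenius norm.

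The paper's proof circumvents this by using three separate concentration statements (\cref{lem:uniform_concentration}). A genuine multiplicative RIP is established only on the \emph{fixed} tangent space $\cT = \{\U^\star\X^\top + \Y(\V^\star)^\top\}$ via \cref{lem:rip_conc}; for $\Delta_\U\Delta_\V^\top$ and $\M-\M^\star$ only the coarse bound of \cref{lem:coarse_conc} is available, with additive errors $0.001\sigstarr\fnorm{\Delta_\V}^2$ and $(\sigstarr)^2/10^6$ respectively. The proof then splits on whether $\fnorm{\Delta}^2 \lessgtr \sigstarr/1000$. For small $\fnorm{\Delta}$, the linear part $\mS = \W^\star\Delta^\top + \Delta(\W^\star)^\top$ lies in $\cT$, enjoys the true RIP, satisfies $\fnorm{\mS}^2 \ge 2\sigstarr\fnorm{\Delta}^2$, and dominates the additive slack on the $\Delta\Delta^\top$ term; a Cauchy--Schwarz argument on $(\N-\N^\star):\H:(\N-\N^\star) = (\mS+\Delta\Delta^\top):\H:(\mS+\Delta\Delta^\top)$ then gives the conclusion. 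For large $\fnorm{\Delta}$, the additive slacks are already $\lesssim\sigstarr\fnorm{\Delta}^2 \lesssim \fnorm{\N-\N^\star}^2$ by \cref{lem:bound}, and your Step~2 style argument (using $\fnorm{\Delta\Delta^\top}^2 \le 2\fnorm{\N-\N^\star}^2$) finishes. The case split is thus not cosmetic---it is precisely what replaces the unavailable global RIP.
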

\begin{proof}
We divide the proof into two cases according to the norm of $\Delta$ and use different concentration inequalities in each case.
In this proof, we denote $\Delta = (\Delta_\U\trans, \Delta_\V\trans)\trans$, clearly, we have $\fnorm{\Delta_\U} \le \fnorm{\Delta}$ and  $\fnorm{\Delta_\V} \le \fnorm{\Delta}$.

\noindent\textbf{Case 1:} $\fnorm{\Delta}^2 \le \sigstarr/1000$. In this case, $\fnorm{\Delta_\U}^2  \le \fnorm{\Delta}^2 \le\sigstarr/1000$ and $\fnorm{\Delta_\V}^2  \le \fnorm{\Delta}^2 \le\sigstarr/1000$. By \eqref{eq:DD_concen}, we have
\begin{align*}
\Delta\Delta\trans: \H:\Delta\Delta\trans \le \fnorm{\Delta \Delta\trans}^2 + 0.004\sigstarr \fnorm{\Delta_\V}^2 \le 0.005 \sigstarr  \fnorm{\Delta}^2
\end{align*}
On the other hand, denote $\mS = \W^\star\Delta\trans + \Delta(\W^\star)\trans$, by \eqref{eq:DWstar_concen} and Lemma \ref{lem:asymmetricmain}, we know:
\begin{equation*}
\mS : \H: \mS \ge 0.999\fnorm{\mS}^2.
\end{equation*}
Since we choose $\W^\star$ to align with the corresponding $\W$ according to Definition \ref{def:delta}, by \cref{lem:bound}.
\begin{align*}
\fnorm{\mS}^2 = 2(\fnorm{\Delta\trans \W^\star}^2+ \fnorm{\Delta (\W^\star)\trans}^2)
\ge 2\fnorm{\Delta (\W^\star)\trans}^2 \ge 2\sigstarr \fnorm{\Delta}^2.
\end{align*}
This gives:
\begin{align*}
\lefteqn{\Delta\Delta\trans:\H:\Delta\Delta\trans - 3(\N - \N^\star):\H:(\N - \N^\star)} \\
& = \Delta\Delta\trans:\H:\Delta\Delta\trans - 3(\mS + \Delta\Delta\trans):\H:(\mS+\Delta\Delta\trans) \\
& \le -6 \mS:\H:\Delta\Delta\trans - 3\mS:\H:\mS\\
& \le - \mS:\H:\mS - 2\sqrt{\mS:\H:\mS}(\sqrt{\mS:\H:\mS} - 3\sqrt{\Delta\Delta\trans :\H:\Delta\Delta\trans})\\
& \le  -0.999\fnorm{\mS}^2 - 2\sqrt{\mS:\H:\mS} \cdot \sqrt{\sigstarr} \cdot(\fnorm{\Delta} - 0.3\fnorm{\Delta})
\le -0.999\fnorm{\mS}^2.
\end{align*}
Finally, we know $\N - \N^\star = \mS + \Delta \Delta\trans$, and $\fnorm{\mS}^2 \ge 2\sigstarr \fnorm{\Delta}^2 \ge 500 \fnorm{\Delta}^4 = 500 \fnorm{\Delta \Delta\trans}^2$. Therefore:
\begin{equation*}
\fnorm{\N - \N^\star} \le \fnorm{\mS} + \fnorm{\Delta \Delta\trans} \le 2\fnorm{\mS}.
\end{equation*}
This gives:
\begin{equation*}
\Delta\Delta\trans:\H:\Delta\Delta\trans - 3(\N - \N^\star):\H:(\N - \N^\star) \le -0.999\fnorm{\mS}^2 \le -0.1 \fnorm{\N - \N^\star}^2.
\end{equation*}
~

\noindent\textbf{Case 2:} $\fnorm{\Delta}^2 \ge \sigstarr/1000$, by \eqref{eq:M_concen}, we have:
\begin{align*}
\frac{1}{n}\sum_{i=1}^n\la \A_i, \M - \M^\star\ra^2 
\ge& \fnorm{\M-\M^\star}^2 - (\sigstarr)^2 /10^{6} \ge \fnorm{\M-\M^\star}^2  - 0.001\sigstarr \fnorm{\Delta}^2.
\end{align*}
This implies:
\begin{equation*}
(\N - \N^\star):\H:(\N - \N^\star) \ge \fnorm{\N-\N^\star}^2 - 0.004\sigstarr \fnorm{\Delta}^2.
\end{equation*}
Then by \eqref{eq:DD_concen}, we have:
\begin{align*}
&\Delta\Delta\trans:\H:\Delta\Delta\trans - 3(\N - \N^\star):\H:(\N - \N^\star) \\
\le& \fnorm{\Delta\Delta\trans}^2 +0.004\sigstarr\fnorm{\Delta}^2 - 3(\fnorm{\N-\N^\star}^2 -0.004\sigstarr\fnorm{\Delta}^2) \\
\le& -\fnorm{\N-\N^\star}^2 + 0.016\sigstarr\fnorm{\Delta}^2
\le -0.1\fnorm{\N-\N^\star}^2,
\end{align*}
where the last step follows by applying \cref{lem:bound}. This finishes the proof.
\end{proof}
With this key structural lemma in hand, we now present the main technical result for the section which characterizes the effect of the additive noise $n_i$ on the landscape.

\begin{theorem}\label{thm:main_landscape}
If \cref{assump:design} holds, when the number of samples $n \ge C \polylog(n,d,t) \mu^2 r^4 \max\{t, d\}  (\cn^\star)^4$ for sufficiently large constant $C$, with probability at least $1-1/\poly(d)$, we have that any local minimum $(\U, \V) \in \text{int}(\cW)$ of the objective \eqref{eq:objective} satisfies:
\begin{equation*}
\norm{\U\V^\top - \M^\star}_F\leq O\left(\sigma\sqrt{\frac{t\max\{t, d\} r\log n}{n}}\right).
\end{equation*}
\end{theorem}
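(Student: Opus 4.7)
The plan is to combine the first/second-order optimality conditions at a local minimum with the deterministic landscape decomposition of \cref{lem:asymmetricmain} and the signal-term bound of \cref{lem:step2_mc_asym}, and then to control the stochastic contribution coming from the noise regularizer $Q(\U,\V)$ defined in \eqref{eq:regularization}.

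First, fix a local minimum $(\U,\V)\in\text{int}(\cW)$ and form $\W = (\U\trans,\V\trans)\trans$, together with $\W^\star$ chosen to align with $\W$ as in \cref{def:delta}, so that $\Delta = \W - \W^\star$. Because the minimum is in the interior of $\cW$, first-order optimality gives $\la \grad f(\W),\Delta\ra = 0$ and second-order optimality gives $\Delta : \hess f(\W) : \Delta \ge 0$. Substituting these into \cref{lem:asymmetricmain} yields
\begin{equation*}
0 \;\le\; \Delta\Delta\trans:\H:\Delta\Delta\trans - 3(\N-\N^\star):\H:(\N-\N^\star) + \big[\Delta : \hess Q(\W):\Delta - 4\la\grad Q(\W),\Delta\ra\big].
\end{equation*}
Under the sample-complexity hypothesis, \cref{lem:step2_mc_asym} bounds the first two (``signal'') terms by $-0.1\fnorm{\N-\N^\star}^2$, so that
\begin{equation*}
0.1\fnorm{\N-\N^\star}^2 \;\le\; \Delta : \hess Q(\W):\Delta - 4\la\grad Q(\W),\Delta\ra.
\end{equation*}

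The crux of the argument is then to show that the right-hand side is, with high probability, at most $O(\sigma^2 \, t\tld r\log n/n)$. Since $Q(\U,\V)$ is affine in $\U\V\trans$ (the $n_i^2$ piece is constant and the only nonconstant contribution is $-\tfrac{4}{n}\sum_i n_i\la \U\V\trans,\A_i\ra$), a direct computation writes the noise contributions in terms of the single random matrix $\mE := \tfrac{1}{n}\sum_i n_i \A_i = \tfrac{\sqrt t}{n}\sum_i n_i \e_{t(i)}\x_i\trans \in \mR^{t\times d}$. Expanding, $\la\grad Q(\W),\Delta\ra = -4\la \U\Delta_\V\trans + \Delta_\U \V\trans,\mE\ra$ and $\Delta:\hess Q(\W):\Delta = -8\la \Delta_\U \Delta_\V\trans,\mE\ra$, so the entire stochastic contribution is a linear functional of $\mE$ whose prefactor is a rank-$\le 2r$ matrix of Frobenius norm bounded by the incoherence/spectral constraints on $\W\in\cW$ multiplied by $\fnorm{\Delta}$. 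A matrix-Bernstein calculation (in the same spirit as the truncation arguments in \cref{app:mom}) applied to the sum $\sum_i n_i\e_{t(i)}\x_i\trans$ yields $\norm{\mE}\le \tlO(\sigma\sqrt{t\tld/n})$ with probability $1-1/\poly(d)$; projecting onto the rank-$2r$ subspace spanned by the relevant directions upgrades this to an $\la\cdot,\mE\ra$ bound with an extra $\sqrt{r}$ factor.

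Putting the pieces together, one obtains an inequality of the form
\begin{equation*}
\fnorm{\N-\N^\star}^2 \;\lesssim\; \sigma\sqrt{\tfrac{t\tld r \log n}{n}}\cdot \sqrt{\sigstarl}\cdot \fnorm{\Delta},
\end{equation*}
where the $\sqrt{\sigstarl}$ comes from $\norm{\U},\norm{\V}\le O(\sqrt{\sigstarl})$ on $\cW$. Using \cref{lem:bound} to compare $\fnorm{\Delta}^2$ against $\fnorm{\N-\N^\star}$ (together with the elementary identity $\fnorm{\N-\N^\star}^2\ge 2\fnorm{\M-\M^\star}^2$ from the off-diagonal blocks of $\N-\N^\star$) lets us cancel one factor of $\fnorm{\Delta}$ and solve the resulting quadratic, producing the advertised bound $\fnorm{\U\V\trans - \M^\star}\le O(\sigma\sqrt{t\tld r\log n/n})$. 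I expect the main obstacle to be the sharp spectral-norm concentration of $\mE$ together with its rank-restricted inner products: the summands $n_i\e_{t(i)}\x_i\trans$ are neither bounded nor sub-exponential, so as in \cref{lem:second_noise,,lem:third_noise} one must combine truncation with matrix Bernstein, and then carefully argue that projecting onto the (data-dependent) rank-$2r$ subspace of admissible $\Delta$-directions only costs a $\sqrt{r}\polylog$ factor rather than the full dimensional penalty.
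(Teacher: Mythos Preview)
Your overall architecture---plug first- and second-order optimality into \cref{lem:asymmetricmain}, kill the signal terms with \cref{lem:step2_mc_asym}, then bound the $Q$-contribution---matches the paper exactly. The difference is in how the noise piece is controlled, and your version has a small but genuine gap.

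The paper first rewrites $\U=\U^\star+\Delta_\U$, $\V=\V^\star+\Delta_\V$ so that
\begin{equation*}
\Delta : \hess Q(\W):\Delta - 4\la\grad Q(\W),\Delta\ra
= \frac{24}{n}\sum_i n_i\la \A_i,\Delta_\U\Delta_\V\trans\ra + \frac{16}{n}\sum_i n_i\la \A_i,\U^\star\Delta_\V\trans + \Delta_\U(\V^\star)\trans\ra,
\end{equation*}
and then invokes \cref{lem:mc_noise} (a conditional sub-Gaussian tail bound over a rank-$2r$ net, using the RIP events of \cref{lem:uniform_concentration}) to bound each sum by $O(\sigma\sqrt{t\tld r\log n/n})$ times the Frobenius norm of the inner matrix. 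By \cref{lem:bound}, both $\fnorm{\Delta_\U\Delta_\V\trans}$ and $\fnorm{\U^\star\Delta_\V\trans+\Delta_\U(\V^\star)\trans}$ are $O(\fnorm{\N-\N^\star})$ \emph{directly}, with no $\sigstarl$ appearing. This yields $0.1\fnorm{\N-\N^\star}^2 \le O(\sigma\sqrt{\ldots})\fnorm{\N-\N^\star}$ and the claimed bound follows.

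Your alternative route---bounding the spectral norm of $\tfrac{1}{n}\sum_i n_i\A_i$ by truncated matrix Bernstein and then using $|\la\M,\tfrac{1}{n}\sum_i n_i\A_i\ra|\le\sqrt{2r}\,\fnorm{\M}\,\norm{\tfrac{1}{n}\sum_i n_i\A_i}$ for rank-$2r$ matrices $\M$---does achieve the same per-term rate, and the ``data-dependent subspace'' you flag as the main obstacle is a non-issue: the nuclear-norm H\"older bound handles it with no extra polylog. The gap is your prefactor. You bound $\fnorm{\U\Delta_\V\trans+\Delta_\U\V\trans}$ using $\norm{\U},\norm{\V}\le O(\sqrt{\sigstarl})$ on $\cW$, which gives $O(\sqrt{\sigstarl}\,\fnorm{\Delta})$; converting $\fnorm{\Delta}$ back to $\fnorm{\N-\N^\star}$ through \cref{lem:bound} then costs $1/\sqrt{\sigstarr}$. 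Your final bound therefore carries an unwanted $\sqrt{\cn^\star}$ factor that the theorem does not claim. The fix is precisely the paper's rewriting step above: shift from $\U,\V$ to $\U^\star,\V^\star$ \emph{before} estimating the inner product, so that the prefactors are $O(\fnorm{\N-\N^\star})$ outright and no condition-number loss occurs.
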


\begin{proof} By \cref{lem:step2_mc_asym}, we know
$$
\Delta\Delta\trans:\H:\Delta\Delta\trans - 3(\N - \N^\star):\H:(\N - \N^\star) \le -0.1 \|\N-\N^\star\|_F^2.
$$
In order to use \cref{lem:asymmetricmain}, we bound the contribution from the noise term $Q$. Recall \eqref{eq:regularization}:
\begin{align*}
Q(\W)=&-\frac{4}{n}\sum_{i=1}^n (\la \M-\M^\star,\A_i\ra n_{i}) +\frac{2}{n}\sum_{i=1}^n(n_i)^2\\
\la \grad Q(\W), \Delta \ra=&-\frac{4}{n}\sum_{i=1}^n(\la \U\Delta_\V\trans + \Delta_\U\V\trans,\A_i\ra n_i)\\
\Delta : \hess Q(\W) :\Delta=&-\frac{8}{n}\sum_{i=1}^n (\la \Delta_\U\Delta_\V\trans,\A_i\ra n_i).
\end{align*}

Let $\B_i$ be the $(d_1+d_2)\times (d_1+d_2)$ matrix whose diagonal blocks are 0, and off diagonal blocks are equal to $\A_i$ and $\A_i^\top$ respectively. Then we have
\begin{align*}
\lefteqn{[\Delta : \hess Q(\W) : \Delta - 4 \la \grad Q(\W), \Delta \ra]}\\
&= -\frac{8}{n}\sum_{i=1}^n (\la \Delta_\U\Delta_\V\trans,\A_i\ra n_i) + \frac{16}{n}\sum_{i=1}^n(\la \U\Delta_\V\trans + \Delta_\U\V\trans,\A_i\ra n_i)\\
&=\frac{24}{n}\sum_{i=1}^n (\la \Delta_\U\Delta_\V\trans,\A_i\ra n_i) + \frac{16}{n}\sum_{i=1}^n(\la \U^\star \Delta_\V\trans + \Delta_\U(\V^\star)\trans,\A_i\ra n_i)
\end{align*}
Now we can use \cref{lem:mc_noise} again to bound the noise terms: 
\begin{align*}
|\frac{24}{n}\sum_{i=1}^n (\la \Delta_\U\Delta_\V\trans,\A_i\ra n_i)|& \le O\left(\sigma\sqrt{\frac{t\max\{t, d\} r\log n}{n}}\right)\sqrt{\fnorm{\Delta_\U\Delta_\V\trans}^2 + 0.001\sigstarr \fnorm{\Delta_\V}^2}\\
|\frac{16}{n}\sum_{i=1}^n(\la \U^\star \Delta_\V\trans + \Delta_\U(\V^\star)\trans,\A_i\ra n_i)| & \leq O\left(\sigma\sqrt{\frac{t\max\{t, d\} r\log n}{n}}\right)\norm{\U^\star\Delta_\V\trans +\Delta_\U(\V^{\star})\trans}_F.
\end{align*}
On the one hand, by \cref{lem:bound}, we have:
\begin{equation*}
\fnorm{\Delta_\U\Delta_\V\trans}^2 + 0.001\sigstarr \fnorm{\Delta_\V}^2
\le \fnorm{\Delta\Delta\trans}^2 + 0.001\sigstarr \fnorm{\Delta}^2 \le 3\fnorm{\N - \N^\star}^2.
\end{equation*}
On the other hand, again by \cref{lem:bound}, we have:
\begin{equation*}
\norm{\U^\star\Delta_\V\trans +\Delta_\U(\V^{\star})\trans}^2_F \le \norm{\W^\star \Delta\trans  +\Delta(\W^{\star})\trans}^2_F
= 2 [\fnorm{\W^\star \Delta\trans }^2 + \fnorm{\Delta\trans  \W^\star }^2] \le 10 \fnorm{\N - \N^\star}^2.
\end{equation*}
In sum, we have:
\begin{equation*}
[\Delta : \hess Q(\W) : \Delta - 4 \la \grad Q(\W), \Delta \ra] \le O\left(\sigma\sqrt{\frac{t\max\{t, d\} r\log n}{n}}\right)\fnorm{\N - \N^\star}.
\end{equation*}

Therefore, by Lemma \ref{lem:asymmetricmain}, the Hessian at $\Delta$ direction is equal to:
\begin{align*}
\Delta : \hess f(\W) :\Delta
\leq &-0.1\norm{\N - \N^\star}_F^2
+O(\sigma\sqrt{\frac{t\max\{t, d\} r\log n}{n}})\fnorm{\N - \N^\star}.
\end{align*}
When the point further satisfies the second-order optimality condition we have
\begin{align*}
\norm{\N-\N^\star}_F \leq O\left(\sigma\sqrt{\frac{t\max\{t, d\} r\log n}{n}}\right).
\end{align*}
In particular, $\M-\M^\star$ is a submatrix of $\N-\N^\star$, so $\norm{\M-\M^\star}_F\le O(\sigma\sqrt{\frac{t\max\{t, d\} r\log n}{n}})$.
\end{proof}

\subsection{Linear Algebra Lemmas} \label{sec:lemmas}

We collect together several useful linear algebra lemmas.

\begin{lemma}\label{lem:bound}
Given matrices $\W,\W^\star \in \mR^{d\times r}$, let $\N = \W\W^\top$ and $\N^\star = \W^\star(\W^\star)^\top$, and let $\Delta, \R^\star$ be defined as in Definition~\ref{def:delta}, and let $\tilde{\W}^\star = \W^\star \R^\star$then we have the followings properties:
\begin{enumerate}
\item $\W (\tilde{\W}^\star)\trans$ is a symmetric p.s.d. matrix;
\item $\|\Delta\Delta^\top\|_F^2 \le 2\|\N - \N^\star\|_F^2$;
\item $\sigstarr\|\Delta\|_F^2 \le \fnorm{\Delta(\tilde{\W}^\star)\trans}^2 \le \frac{1}{2(\sqrt{2}-1)}\|\N-\N^\star\|_F^2$.
\item $\fnorm{\Delta\trans \tilde{\W}^\star}^2 \le \|\N-\N^\star\|_F^2$
\end{enumerate}
\end{lemma}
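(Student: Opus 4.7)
The plan rests on two ingredients: the first-order optimality of the Procrustes rotation $\mat{R}^\star$, and the algebraic decomposition
\[
\N - \N^\star = \mS + \Delta\Delta^\top, \qquad \mS := \tilde{\W}^\star \Delta^\top + \Delta(\tilde{\W}^\star)^\top,
\]
obtained by expanding $\W = \tilde{\W}^\star + \Delta$ and using that $\N^\star$ is invariant under the rotation $\mat{R}^\star$.

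For Part 1, I would take the SVD $(\W^\star)^\top \W = \mat{P}\mat{\Sigma}\mat{Q}^\top$; the standard orthogonal Procrustes calculation gives $\mat{R}^\star = \mat{P}\mat{Q}^\top$, whence $(\tilde{\W}^\star)^\top \W = \mat{Q}\mat{\Sigma}\mat{Q}^\top \succeq 0$. The operative corollary for the rest of the proof is that $B := \Delta^\top \tilde{\W}^\star = (\tilde{\W}^\star)^\top \W - (\tilde{\W}^\star)^\top \tilde{\W}^\star$ is \emph{symmetric} (a difference of two symmetric matrices) and that $B + C \succeq 0$, where $C := (\tilde{\W}^\star)^\top \tilde{\W}^\star \succeq 0$. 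This symmetry of $B$ is the workhorse of the remaining parts.

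With $A := \Delta^\top \Delta \succeq 0$ and $B$ symmetric, a direct expansion yields the master identity
\[
\|\N - \N^\star\|_F^2 = \tr(A^2) + 2\tr(AC) + 2\|B\|_F^2 + 4\tr(AB),
\]
since $\|\mS\|_F^2 = 2\tr(AC) + 2\|B\|_F^2$ and $\la \mS, \Delta\Delta^\top\ra = 2\tr(AB)$. Parts 2, 3 (lower), and 4 then reduce to rewriting $\|\N-\N^\star\|_F^2$ minus the target quantity as a sum of manifestly nonnegative terms: squared Frobenius norms of symmetric matrices in $A,B$ (always $\ge 0$) and traces of the form $\tr(A(B+C))$ (nonneg by Part 1 combined with $A \succeq 0$). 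The lower bound of Part 3 is immediate from $\tr(AC) \ge \sigma_{\min}(C)\,\tr(A) \ge \sigstarr\,\|\Delta\|_F^2$. For Parts 4 and 2 I propose the identities
\begin{align*}
\|\N-\N^\star\|_F^2 - \|B\|_F^2 &= \|A+B\|_F^2 + 2\,\tr(A(B+C)), \\
\|\N-\N^\star\|_F^2 - \tfrac{1}{2}\|\Delta\Delta^\top\|_F^2 &= \tfrac{1}{2}\|A+2B\|_F^2 + 2\,\tr(A(B+C)),
\end{align*}
which are verified by routine expansion of each $\|A+\mu B\|_F^2 = \tr(A^2)+2\mu\tr(AB)+\mu^2\|B\|_F^2$ and matching coefficients against the master identity.

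The main obstacle is the upper bound of Part 3, whose specific constant $\tfrac{1}{2(\sqrt{2}-1)}$ suggests an optimized parameter choice rather than a generic Young's inequality. I would attack it with the same template via the ansatz $\lambda\|A+\mu B\|_F^2 + \eta\,\tr(A(B+C))$, matching the three free parameters $\lambda,\mu,\eta$ against the four coefficients of $\tr(A^2), \tr(AB), \|B\|_F^2, \tr(AC)$ appearing in $\alpha\|\N-\N^\star\|_F^2 - \tr(AC)$. Solving this small linear system pins down $\mu = \sqrt{2}$ and $\alpha = \tfrac{1}{2(\sqrt{2}-1)}$, giving the sharp identity
\[
\tfrac{1}{2(\sqrt{2}-1)}\|\N-\N^\star\|_F^2 - \tr(AC) = \tfrac{1}{2(\sqrt{2}-1)}\|A+\sqrt{2}\,B\|_F^2 + \sqrt{2}\,\tr(A(B+C)) \geq 0,
\]
which closes Part 3.
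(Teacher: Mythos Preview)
Your proposal is correct and essentially recovers the paper's argument: the paper cites Ge et al.\ (2017) for Parts~1--2 and proves Parts~3--4 via an auxiliary lemma whose proof is exactly your completion-of-squares decomposition --- their term $\tr\bigl((\Delta^\top\Delta - \sqrt{2}\,\U^\top\Delta)^2\bigr)$ is your $\|A+\sqrt{2}B\|_F^2$, and their use of $\U^\top\Y \succeq 0$ is precisely your $B+C\succeq 0$. Your master-identity-plus-ansatz packaging is a bit more systematic and self-contained (you derive Parts~1--2 rather than cite them), but the key algebraic identities and the origin of the constant $\tfrac{1}{2(\sqrt{2}-1)}$ are identical.
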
 

\begin{proof}
Statement 1 is in the proof of \citet[Lemma 6]{ge2017no}. Statement 2 is by \citet[Lemma 6]{ge2017no}.
Statement 3 \& 4 follow by Lemma \ref{lem:aux_deltalinear}.
\end{proof}

\begin{lemma} \label{lem:aux_deltalinear}
Let $\U$ and $\Y$ be $d \times r$ matrices such that $\U\trans\Y = \Y\trans \U$ is a p.s.d. matrix. Then,
\begin{align*}
\sigma_{\min}(\U\trans\U)\fnorm{\U - \Y}^2 \le \fnorm{(\U-\Y)\U\trans}^2 \le& \frac{1}{2(\sqrt{2}-1)}\fnorm{\U\U\trans - \Y\Y\trans}^2\\
\fnorm{(\U-\Y)\trans\U}^2 \le& \fnorm{\U\U\trans - \Y\Y\trans}.
\end{align*}
\end{lemma}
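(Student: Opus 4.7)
For the lower bound, write $\fnorm{(\U - \Y)\U\trans}^2 = \tr\bigl((\U - \Y)(\U\trans\U)(\U - \Y)\trans\bigr)$; the standard inequality $\tr(A B A\trans) \ge \sigma_{\min}(B)\fnorm{A}^2$ for p.s.d.\ $B$ applied with $B = \U\trans\U$ immediately yields the claim.

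For the two upper bounds, set $\Delta = \U - \Y$ and start from the algebraic identity $\U\U\trans - \Y\Y\trans = \Delta\U\trans + \U\Delta\trans - \Delta\Delta\trans$. The hypothesis that $\U\trans\Y$ is symmetric p.s.d.\ is equivalent to the symmetry of $\U\trans\Delta = \U\trans\U - \U\trans\Y$, a fact I will use repeatedly. This symmetry yields $\tr((\Delta\U\trans)^2) = \tr((\U\trans\Delta)^2) = \fnorm{\U\trans\Delta}^2$ and $\tr((\Delta\U\trans)(\Delta\Delta\trans)) = \tr((\Delta\trans\Delta)(\U\trans\Delta))$. Expanding the squared Frobenius norm and collecting terms produces
\[
\fnorm{\U\U\trans - \Y\Y\trans}^2 = 2\fnorm{\Delta\U\trans}^2 + 2\fnorm{\U\trans\Delta}^2 + \fnorm{\Delta\Delta\trans}^2 - 4\,\tr\bigl((\Delta\trans\Delta)(\U\trans\Delta)\bigr).
\]
To extract the sharp constant $\tfrac{1}{2(\sqrt 2 - 1)}$ in the main upper bound, I would pass to a basis that simultaneously diagonalizes the symmetric $r\times r$ matrices $\U\trans\U$ and $\U\trans\Delta$, combined with the identity $\Delta\trans\Delta = G_U + G_Y - 2H$ where $G_U = \U\trans\U$, $G_Y = \Y\trans\Y$, $H = \U\trans\Y$. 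After this reduction the claim follows from the elementary pointwise inequality $h^4 - 3h^2 + 2h = h(h-1)^2(h+2) \ge 0$ for $h \ge 0$, which realizes the tight constant $2(\sqrt 2 - 1)$.

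For the auxiliary bound $\fnorm{(\U-\Y)\trans\U}^2 \le \fnorm{\U\U\trans - \Y\Y\trans}^2$ (the paper's display omits the square, which I read as a typo), the same decomposition yields $\fnorm{\U\U\trans - \Y\Y\trans}^2 = \fnorm{G_U}^2 + \fnorm{G_Y}^2 - 2\fnorm{H}^2$ (using $\tr(H^2) = \fnorm{H}^2$) and $\fnorm{(\U-\Y)\trans\U}^2 = \fnorm{G_U - H}^2 = \fnorm{G_U}^2 - 2\tr(G_U H) + \fnorm{H}^2$, reducing the claim to $\fnorm{G_Y}^2 - 3\fnorm{H}^2 + 2\tr(G_U H) \ge 0$. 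Decomposing $\Y = \U G_U^\dagger H + \U_\perp B$ along the range of $\U$ and its orthogonal complement gives $G_Y = H G_U^\dagger H + B\trans B$, and simultaneously diagonalizing $G_U$ and $H$ reduces the claim to the same pointwise quartic $h(h-1)^2(h+2) \ge 0$.

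\textbf{Main obstacle.} The crux is securing the sharp constant $\tfrac{1}{2(\sqrt 2 - 1)}$ in the central bound. A naive Cauchy-Schwarz treatment of the trace cross-term $4\,\tr((\Delta\trans\Delta)(\U\trans\Delta))$ only yields the trivial bound $\fnorm{\U\U\trans - \Y\Y\trans}^2 \ge 0$. Getting the tight constant requires the \emph{joint} use of (i) symmetry of $\U\trans\Delta$, which forces $\fnorm{\U\trans\Delta}^2 \le \fnorm{\Delta\U\trans}^2$, and (ii) the Gram-matrix relation $\Delta\trans\Delta = G_U + G_Y - 2H$, which ties $\fnorm{\Delta\Delta\trans}^2$ to the triple $(G_U, G_Y, H)$. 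The quartic factorization $h(h-1)^2(h+2)$ is the precise algebraic certificate that the combined constraints suffice.
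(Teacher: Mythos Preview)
Your expansion
\[
\fnorm{\U\U\trans-\Y\Y\trans}^2=\tr\bigl(2\U\trans\U\Delta\trans\Delta+(\Delta\trans\Delta)^2+2(\U\trans\Delta)^2-4\U\trans\Delta\Delta\trans\Delta\bigr)
\]
is correct, as is the lower bound. The gap is in what follows: you propose to ``pass to a basis that simultaneously diagonalizes the symmetric $r\times r$ matrices $\U\trans\U$ and $\U\trans\Delta$'' (equivalently $G_U$ and $H$). But nothing in the hypotheses forces $G_U=\U\trans\U$ and $H=\U\trans\Y$ to commute; both are symmetric, $H$ is p.s.d., yet a common eigenbasis need not exist. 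Without it, the reduction to the scalar quartic $h(h-1)^2(h+2)\ge0$ does not go through, and a congruence-type diagonalization (making $G_U=I$) destroys the Frobenius/trace structure you need. The same objection applies to your argument for the second inequality.

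The paper avoids diagonalization entirely by completing the square \emph{at the matrix level}. From the same expansion it regroups
\[
\tr\Bigl((4-2\sqrt2)\,\U\trans(\U-\Delta)\Delta\trans\Delta+(\Delta\trans\Delta-\sqrt2\,\U\trans\Delta)^2+2(\sqrt2-1)\,\U\trans\U\Delta\trans\Delta\Bigr),
\]
which one checks term by term reproduces the original four traces. The first summand has $\U\trans(\U-\Delta)=\U\trans\Y\succeq0$ paired with $\Delta\trans\Delta\succeq0$, hence nonnegative trace; the second is the square of a \emph{symmetric} matrix (this is exactly where the hypothesis $\U\trans\Delta=(\U\trans\Delta)\trans$ is used), hence also nonnegative trace. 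Dropping both gives $\fnorm{\U\U\trans-\Y\Y\trans}^2\ge2(\sqrt2-1)\tr(\U\trans\U\Delta\trans\Delta)=2(\sqrt2-1)\fnorm{\Delta\U\trans}^2$. For the second bound the analogous regrouping is
\[
\tr\Bigl(2\,\U\trans\Y\,\Delta\trans\Delta+(\Delta\trans\Delta-\U\trans\Delta)^2+(\U\trans\Delta)^2\Bigr)\ \ge\ \tr\bigl((\U\trans\Delta)^2\bigr)=\fnorm{\U\trans\Delta}^2.
\]
So the sharp constant comes from a judicious algebraic split, not from a spectral reduction; your ``main obstacle'' diagnosis was right, but the remedy is completion of the square rather than simultaneous diagonalization.
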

\begin{proof}
For the first statement, the left inequality is immediate, so we only need to prove right inequality.
To prove this, we let $\Delta = \U - \Y$, and expand:
\begin{align*}
\norm{\U\U\trans - \Y\Y\trans}_F^2  =& \norm{\U\Delta\trans + \Delta\U\trans - \Delta\Delta\trans}_F^2 \\
& = \tr(2\U\trans\U\Delta\trans\Delta + (\Delta\trans\Delta)^2 + 2(\U\trans\Delta)^2 - 4\U\trans\Delta\Delta\trans \Delta) \\
&= \tr((4 - 2\sqrt{2})\U\trans (\U - \Delta) \Delta\trans\Delta + (\Delta\trans\Delta -\sqrt{2}\U\trans\Delta )^2 + 2(\sqrt{2} - 1)\U\trans\U\Delta\trans\Delta ) \\
&\ge  \tr((4 - 2\sqrt{2})\U\trans\Y\Delta\trans\Delta + 2(\sqrt{2} - 1)\U\trans\U\Delta\trans\Delta) \ge 2(\sqrt{2} - 1)
\fnorm{\U\Delta\trans}^2.
\end{align*}
The last inequality follows since $\U\trans \Y$ is a p.s.d. matrix. For the second statement, again, we have:
\begin{align*}
\norm{\U\U\trans - \Y\Y\trans}_F^2  &= \norm{\U\Delta\trans + \Delta\U\trans - \Delta\Delta\trans}_F^2 \\
&= \tr(2\U\trans\U\Delta\trans\Delta + (\Delta\trans\Delta)^2 + 2(\U\trans\Delta)^2 - 4\U\trans\Delta\Delta\trans \Delta) \\
&=\tr(2\U\trans (\U - \Delta) \Delta\trans\Delta + (\Delta\trans\Delta -\U\trans\Delta )^2 + (\U\trans\Delta)^2 ) \\
&\ge  \tr(2\U\trans\Y\Delta\trans\Delta + (\U\trans\Delta)^2) \ge \fnorm{\U\trans \Delta}^2,
\end{align*}
where the last inequality follows since $\U\trans\Delta = \Delta\trans \U$.
\end{proof}

\subsection{Concentration Lemmas}

We need to show three concentration-style results for the landscape analysis. The first is an RIP condition for over matrices in the linear space $\cT = \{ \U_{\star} \X^\top + \Y \V_{\star}^\top | \X \in \mR^{t \times r}, \Y \in \mR^{d \times r} \}$ using matrix concentration. The second and third are coarse concentration results that exploit the rank $r$ structure of the underlying matrix $\M$ and are used in the two distinct regimes where the distance to optimality can be small or large. Also note that throughout we can assume a left-sided incoherence condition on the underlying matrix of the form $\max_{i \in [t]} \norm{\e_i^\top \U_{\star}}^2 \leq \frac{\mu r}{t}$ due to \cref{assump:task}.

We first present the RIP-style matrix concentration result which rests on an application of the matrix Bernstein inequality over a projected space. The proof has a similar flavor to results in \citet{recht2011simpler}. First we define a projection operator on the space of matrices as $\pt \Z = \pu \Z + \Z \pv - \pu \Z \pv$ where $\pu$ and $\pv$ are orthogonal projections onto the subspaces spanned by $U$ and $V$. While $\pu$ and $\pv$ are matrices, $\pt$ is a linear operator mapping matrices to matrices. Intuitively we wish to show that for all $\W \in \mR^{t \times d}$, that the observations matrices are approximately an isometry over the space of projected matrices w.h.p: $\sumton t(\langle \e_{t(i)} \x_i^\top, P_{\cT} \W \rangle^2 \approx \fnorm{P_{\cT} \W}^2 = \fnorm{\W}^2$. Explicitly, we define the action of the operator $\C_i = \A_i \A_i^\top$ where $\A_i = \sqrt{t} \x_i \e_j^\top$ as $\C_i(\M) = t \x_i \e_j^\top \langle \e_j \x_i^\top, \M \rangle$. 

We record a useful fact we will use in the sequel:
\begin{align*}
	\lefteqn{ \sqrt{t} \pt(\x_i \e_j^\top) = \pu \e_i \x_j^\top + \e_i (\pv \x_j)^\top - (\pu \e_i)(\pv \x_j)^\top \implies} \\
	& \fnorm{\pt (\x_i \e_j^\top}^2 = \langle \pt (\x_i \e_j^\top), \x_i \e_j \rangle = \norm{\pu \e_i}^2 \norm{\x_i}^2 + \norm{\e_i}^2 \norm{\pv \x_i}^2 - \norm{\pu \e_i}^2 \norm{\pv \x_j}^2 \leq \\
	& \norm{\pu \e_i}^2 \norm{\x_i}^2 + \norm{\pv \x_i}^2,
\end{align*}
where the last inequality holds almost surely.

We now present the proof of the RIP-style concentration result.
\begin{lemma}\label{lem:rip_conc}
	Under \cref{assump:design,,assump:task} and the uniform task sampling model above,
	\begin{align*}
	\norm{\frac{1}{n}\sum_{i=1}^n P_{\cT} \A_i \A_i\trans P_{\cT} - P_{\cT}} \leq (\log(ndt)) \cdot O \left(\sqrt{\frac{\mu d r^2 + tr^2}{n}} + \frac{(\mu d r + r t) \log(tdn)}{n} \right),
	\end{align*}
	with probability at least $1-O(n^{-100})$, where $\mu = O(\bkappa)$.
\end{lemma}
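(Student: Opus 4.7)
The plan is to view $\C_i$ as the rank-one, self-adjoint operator on matrices $\C_i(\M) = \langle \A_i, \M \rangle \A_i$ (so $\A_i \otimes \A_i$ in tensor notation), and apply a truncated matrix Bernstein inequality to the zero-mean operators $\mathcal{Z}_i := P_{\cT}\C_i P_{\cT} - P_{\cT}$. A direct computation using $\A_i = \sqrt{t}\, \e_{t(i)} \x_i^\top$, the uniform task sampling model, and $\mE[\x\x^\top] = \I_d$ shows $\mE[\C_i] = \mathrm{Id}$ on $\mR^{t\times d}$, hence $\mE[P_{\cT}\C_i P_{\cT}] = P_{\cT}$, so the $\mathcal{Z}_i$ are indeed centered. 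A key identity is that $P_{\cT}\C_i P_{\cT} = P_{\cT}(\A_i) \otimes P_{\cT}(\A_i)$ as operators, which immediately gives the operator norm bound $\|P_{\cT}\C_i P_{\cT}\| = \fnorm{P_{\cT}(\A_i)}^2$.

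Next I would control $\fnorm{P_{\cT}(\A_i)}^2$. Using the identity quoted just above the lemma,
\begin{equation*}
\fnorm{P_{\cT}(\A_i)}^2 \;=\; t\bigl(\|\pu \e_{t(i)}\|^2 \|\x_i\|^2 + \|\pv \x_i\|^2 - \|\pu \e_{t(i)}\|^2 \|\pv \x_i\|^2\bigr) \;\leq\; \mu r \|\x_i\|^2 + t \|\pv \x_i\|^2,
\end{equation*}
where the incoherence $\|\pu \e_j\|^2 \leq \mu r/t$ holds deterministically under Assumption \ref{assump:task} via \cref{lem:diverse_to_incoh} with $\mu = O(\bkappa)$. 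Since $\x_i$ is sub-gaussian with identity covariance and $\pv$ has rank $r$, standard tail bounds give $\|\x_i\|^2 \lesssim d + \log(1/\delta)$ and $\|\pv \x_i\|^2 \lesssim r + \log(1/\delta)$ with probability $\geq 1 - \delta$. Taking $\delta$ polynomially small and union-bounding over $i \in [n]$ gives a truncation level $L \lesssim (\mu r d + tr)\log(ntd)$ such that $\fnorm{P_{\cT}(\A_i)}^2 \leq L$ simultaneously for all $i$ with probability $\geq 1 - n^{-100}$.

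For the variance, the identity $(P_{\cT}\C_i P_{\cT})^2 = \fnorm{P_{\cT}(\A_i)}^2 \cdot P_{\cT}\C_i P_{\cT}$ and the expectation identity yield
\begin{equation*}
\mE[\mathcal{Z}_i^2] \;\preceq\; \mE\bigl[\fnorm{P_{\cT}(\A_i)}^2 \, P_{\cT}\C_i P_{\cT}\bigr] \;\preceq\; L \cdot \mE[P_{\cT}\C_i P_{\cT}] \;=\; L \cdot P_{\cT},
\end{equation*}
after restricting to the high-probability event, so $\|\sum_i \mE[\mathcal{Z}_i^2]\| \leq nL$. I would then invoke the truncated matrix Bernstein inequality (\cref{lem:trunc_mb}) applied to the operators $\mathcal{Z}_i$ viewed as $(tr + dr - r^2)$-dimensional self-adjoint operators on $\cT$; the ambient dimension is $O(\max(t,d) r)$, contributing only logarithmic factors. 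Choosing the deviation scale as the standard Bernstein trade-off yields
\begin{equation*}
\Bigl\|\frac{1}{n}\sum_{i=1}^n \mathcal{Z}_i\Bigr\| \;\lesssim\; \sqrt{\frac{L \log(ntd)}{n}} + \frac{L \log(ntd)}{n} \;\lesssim\; \log(ntd)\left(\sqrt{\frac{\mu dr^2 + tr^2}{n}} + \frac{(\mu dr + rt)\log(ntd)}{n}\right),
\end{equation*}
which is the stated bound (the extra factor of $r$ in the numerators arises from the factor $\log(ntd)$ absorbed from the truncation union bound combined with the dimension of $\cT$).

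The main obstacle will be the truncation step: the operators $\mathcal{Z}_i$ are not almost surely bounded because $\fnorm{P_{\cT}(\A_i)}^2$ has polynomial-tail contributions from $\|\x_i\|^2$ and $\|\pv \x_i\|^2$. This requires passing to truncated copies $\mathcal{Z}_i'$, controlling the bias $\|\mE[\mathcal{Z}_i] - \mE[\mathcal{Z}_i']\|$ (which is negligible because tails of $\chi^2$ random variables decay sub-exponentially), and assembling the result via \cref{lem:trunc_mb} in exactly the manner carried out in \cref{lem:second_noise} and \cref{lem:third_noise}. A secondary subtlety is ensuring the $P_{\cT}$-restriction lets us avoid the naive $\|\A_i\|^2 \asymp td$ worst-case bound; this is what makes the $(\mu dr + tr)$ scaling appear instead of $td$.
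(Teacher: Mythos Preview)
Your approach is correct and essentially the same as the paper's: apply the truncated matrix Bernstein inequality (\cref{lem:trunc_mb}) to the centered, self-adjoint operators $P_{\cT}\C_i P_{\cT} - P_{\cT}$ on $\cT$, using the rank-one identity $\|P_{\cT}\C_i P_{\cT}\| = \fnorm{P_{\cT}\A_i}^2$ together with incoherence and sub-gaussian tail bounds on $\|\x_i\|^2$ and $\|\pv\x_i\|^2$ to set the truncation level.

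The one substantive difference is the variance step. The paper computes the untruncated second moment $\mE[\fnorm{P_{\cT}\A_i}^2\, P_{\cT}\A_i (P_{\cT}\A_i)^\top]$ directly, expanding in terms of $\|\pu\e_i\|^2$, $\|\x_i\|^2$, and $\|\pv\x_i\|^2$ and invoking the L4--L2 bounds of \cref{lem:l4l2norm}; this is where the extra factor of $r$ in $\mu d r^2 + t r^2$ enters (through $\mE[\|\pu\e_i\|^2] \le r/t$ and $\mE[\|\pv\x\|^4] \lesssim r^2$). Your shortcut exploits that $Z_i' = P_{\cT}\C_i P_{\cT}\,\Ind[\|Z_i\|\le R]$ is rank-one PSD, so $(Z_i')^2 \preceq R\, Z_i'$ and hence $\sigma^2 \le nR$. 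This is valid and in fact yields $\sigma^2 \lesssim n(\mu rd + tr)\log(ntd)$, which trades the paper's extra $r$ for an extra logarithm; your bound is no worse than the lemma's statement and is simpler to obtain. Two small corrections: first, the variance bound must be phrased for the truncated $Z_i'$, not by ``restricting to the high-probability event'' inside an expectation; second, your last sentence misattributes the $r^2$ in the lemma's statement to the dimension of $\cT$ --- your own argument does not produce that factor, and the $\dim\cT = O(\max(t,d)r)$ only enters logarithmically through the Bernstein dimension prefactor.
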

\begin{proof}
Note that under the task assumption, \cref{lem:diverse_to_incoh} diversity implies incoherence of the matrix $\U^{\star}$ with incoherence parameter $\mu = O(\bkappa)$. 
First, note $\mE[\C_i(\M)] = \M$ so $\mE[\sumton P_{\cT} \A_i P_{\cT} - P_{\cT}] = 0$.
To apply the truncated version of the matrix Bernstein inequality from \cref{lem:trunc_mb} we first compute a bound on the norms of each $\C_i$ to set the truncation level $R$. Note that $\norm{\pt \A_i \A_i \pt} = \fnorm{\pt(\x_i \e_j^\top)}^2 \leq t \cdot O((\frac{\mu r}{t} \norm{\x_i}^2 + \norm{\pv \x_i}^2)$) using the fact the operator $\A_i$ is rank-one along with the \cref{lem:diverse_to_incoh} which shows task diversity implies incoherence with incoherence parameter $\bkappa$. Now exploiting \cref{lem:lensg} we have that $\norm{\x_i}^2 \leq O(d+\max(\sqrt{d \log(1/\delta)}, \log(1/\delta))) = O(d+\sqrt{d} \log(1/\delta))$ and $\norm{P_V \x_i}^2 \leq O(r + \sqrt{r} \log(1/\delta))$ with probability at least $1-2\delta$ using sub-exponential tail bounds and a union bound\footnote{Note that by definition the orthogonal projection of a $d$-dimensional subgaussian random vector onto an $r$-dimensional subspace is an $r$-dimensional subgaussian random vector.}. Hence $\norm{P_{\cT} \A_i P_{\cT}}  \leq O(\mu r d +   \mu r \sqrt{d}\log(1/\delta)) + tr + t \sqrt{r} \log(1/\delta)) = O(\mu r d + tr + (\mu r \sqrt{d} +t \sqrt{r}) \log(1/\delta))$.

We can rearrange this statement to conclude that $\Pr[\norm{P_{\cT} \A_i \A_i^\top P_{\cT}}  \geq c_1 (\mu r d + tr) + x] \leq  \exp(-c_2 (\frac{x}{\mu r \sqrt{d} + t \sqrt{r}}))$ for some $c_1, c_2$. Define a truncation level $R = c_1 (\mu r d +rt) + K (\mu r \sqrt{d} + t \sqrt{r})$ for some $K$ to be chosen later. We can use the aforementioned tail bound to control $\norm{\mE[Z_i]-\mE[Z_i']} \leq \mE[Z_i \Ind[\norm{Z_i} \geq R]] \leq \int_{K (\mu r \sqrt{d} + t \sqrt{r})}^{\infty} \exp(-c_2 (\frac{x}{\mu r \sqrt{d} + t \sqrt{r}})) \leq O(\exp(-c K)  (\mu r \sqrt{d} + t \sqrt{r})) = \Delta$.

Now we consider the task of bounding the matrix variance term. The calculation is somewhat tedious but straightforward under our assumptions. We make use of the standard result that for two matrices $\X$ and $\Y$ that $\norm{\X-\Y} \leq \max(\norm{\X}, \norm{\Y})$.

It suffices to bound the operator norm $\norm{\E[\fnorm{\pt \A_i}^2 \pt \A_i (\pt \A_i)^\top}$. Using the calculation from the prequel and carefully cancelling terms we can see that, 
\begin{align*}
	& \norm{\mE[\fnorm{\pt \A_i}^2 \pt \A_i (\pt \A_i)^\top]} \leq  t^2 \Vert \mE[(\norm{\pu \e_i}^2 \norm{\x_i}^2 + \norm{\pv \x_i}^2 - \norm{\pu \e_i}^2 \norm{\pv \x_j}^2) \cdot \\
	& (\norm{\pu \e_i}^2 \x_i \x_i^\top + \norm{\pv \x}^2 \e_i \e_i^\top - \norm{\pv \x_i}^2 \pu \e_i (\pu \e_i)^\top] \Vert \\
	& \leq t^2 O(\norm{\mE[(\norm{\pu \e_i}^2 \norm{\x_i}^2 \norm{\pu \e_i}^2 \x_i \x_i^\top + (\norm{\pu \e_i}^2 \norm{\x_i}^2 \norm{\pv \x_i}^2 \e_i \e_i^\top}] + \norm{\pv \x_i}^4 \e_i \e_i^\top).
\end{align*}
 We show how to calculate these leading terms as the subleading terms can be shown to be lower-order by identical calculations. First note using the fact that $\mE[\norm{\pu \e_i}^2] \leq \frac{r}{t} \leq 1$, since $t \geq r$ by the task diversity assumption. Then $t^2 \cdot \norm{\mE[(\norm{\pu \e_i}^2 \norm{\x_i}^2 \norm{\pu \e_i}^2 \x_i \x_i^\top} \leq \norm{\mu r t \mE[\norm{\x_i}^2 \x_i \x_i^\top] \mE[\norm{\pu \e_i}^2]} \leq O(\mu r^2 d)$ appealing to the fact $\mE[\norm{\x}^2 \x \x^\top] \preceq O(\I_d)$ by \cref{lem:l4l2norm}. 
 
 Similarly, we have that, $t^2 \cdot \norm{\mE[\norm{\pu \e_i}^2 \norm{\x_i}^2 \norm{\pv \x_i}^2 \e_i \e_i^\top]} \leq \mu r \mE[\norm{\x_i}^2 \norm{\pv \x_i}^2] \leq \mu r^2 d$ using incoherence and by \cref{lem:l4l2norm}. Finally, we have that $t^2 \cdot O(\norm{\mE[ \norm{\pv \x_i}^4 \e_i \e_i^\top}) \leq O(t r^2)$. Hence we have that $\sigma^2 = n \cdot O(\mu r^2 d + t r^2)$.

Finally, we can assemble the previous two computations to conclude the result with appropriate choices of $R$ (parametrized through $K$) and $x$ by combining with \cref{lem:trunc_mb}. Let us choose $K = \frac{c_3}{c} (\log(n) + \log(d) + \log(t))$ for some sufficiently large $c_3$. In this case, we can choose $c_3$ such that $\Delta \leq O(\frac{ \mu r \sqrt{d} + t \sqrt{r}}{n^{10} d^{10} t^{10}}) \leq O(\frac{\mu}{n^{10} d^{8}})$. Similarly, our choice of truncation level becomes $R = O( \mu r d + tr + (\log n + \log d + \log t) (\mu r \sqrt{d} + t \sqrt{r})$. At this point we now choose $x = c_4 (\log n + \log d + \log t) \max(\sigma/\sqrt{n}, R/n)$ for sufficiently large $c_4$. For large enough $c_4$ we can guarantee that $x \geq 2 \Delta \implies x - \Delta \geq \frac{x}{2}$.

Hence combining these results together and applying \cref{lem:trunc_mb} we can provide the following upper bound on the desired quantity:
\begin{align*}
	& \Pr[\norm{\sumton P_{\cT} \A_i P_{\cT} - P_{\cT}} \geq x] \leq \\
	& O(d \exp(-c \cdot c_4 (\log n + \log d + \log t)) + O(n \sqrt{K} \exp(-c_3 (\log n + \log d + \log t)) \leq O((ndt)^{-100})
\end{align*}
by taking $c_3$ and $c_4$ sufficiently large, with $x = O((\log (ndt)) \left(\sqrt{\frac{\mu d r^2 + tr^2}{n}} +  \frac{(\mu r d + tr) + (\mu r \sqrt{d} + t \sqrt{r}) (\log(ndt)))}{n} \right)$.
\end{proof}

\begin{lemma}\label{lem:coarse_conc}
	Let the covariates $\x_i$ satisfy the design conditions in \cref{assump:design} in the uniform task sampling model. Then for all matrices $\M$ matrices that are of rank $2r$, we have uniformly that,
	\begin{align*}
	 \abs{ \frac{1}{n}\sum_{i=1}^n\la \A_i, \M\ra^2 - \fnorm{\M}^2} \leq O \left(\sqrt{\frac{\max(t, d)r}{n}} \cdot \sqrt{t} \max_i \norm{\e_i^\top \M} \fnorm{\M}  +\frac{\max(t, d)r}{n} \cdot t \max_i \norm{\e_i^\top \M}^2  \right).
	 \end{align*}
	 with probability at least $1-(3000r )^{-10\max(t,d)r}$.
\end{lemma}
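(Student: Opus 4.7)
My plan is to first establish a pointwise Bernstein deviation bound for each fixed rank-$2r$ matrix $\M$, and then promote this to a uniform bound via an $\epsilon$-net over rank-$\le 2r$ matrices together with a polarization-based Lipschitz extension. For the pointwise step, since $\A_i = \sqrt{t}\,\e_{t(i)}\x_i\trans$, one has $\la\A_i,\M\ra = \sqrt{t}\,\e_{t(i)}\trans\M\x_i$, so $\mE[\la\A_i,\M\ra^2] = t\cdot\mE_{t(i)}[\e_{t(i)}\trans\M\trans\M\e_{t(i)}] = \fnorm{\M}^2$, confirming the summands are centered. Conditional on $t(i)=j$, the variable $\sqrt{t}\,\e_j\trans\M\x_i$ is sub-Gaussian with parameter $\sqrt{t}\norm{\e_j\trans\M}$, so its square has $\psi_1$-norm at most $R := t\max_j \norm{\e_j\trans\M}^2$. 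The second moment satisfies $\mE[t^2(\x\trans\M\e_{t(i)})^4] \le C t\sum_j\norm{\e_j\trans\M}^4 \le C\cdot t\max_j\norm{\e_j\trans\M}^2\cdot\fnorm{\M}^2 =: V^2$, using $\sum_j\norm{\e_j\trans\M}^2 = \fnorm{\M}^2$. Bernstein's inequality then yields, with probability at least $1-2e^{-u}$,
$$\left|\sumton \la\A_i,\M\ra^2 - \fnorm{\M}^2\right| \le C\!\left(\sqrt{\frac{V^2 u}{n}} + \frac{Ru}{n}\right),$$
and choosing $u \asymp \max(t,d) r\log r$ reproduces the two terms and the exponent required by the lemma.

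To upgrade to a uniform statement, I exploit the fact that both sides of the target inequality are degree-2 homogeneous in $\M$, so it suffices to treat $\fnorm{\M}\le 1$. Parametrizing rank-$\le 2r$ matrices as $\M = \mathbf{P}\mathbf{Q}\trans$ with $\mathbf{P}\in\mR^{t\times 2r}$ and $\mathbf{Q}\in\mR^{d\times 2r}$ of controlled Frobenius norm, a standard volumetric bound supplies a Frobenius $\epsilon$-net of cardinality at most $(3/\epsilon)^{4r(t+d)}$. Taking $\epsilon = 1/(c_1 r)$ for a suitable constant gives a net of size $(Cr)^{c_2\max(t,d) r}$, so union-bounding the pointwise inequality with $u = C_0\max(t,d) r \log r$ secures the desired deviation at every net point with failure probability at most $(3000r)^{-10\max(t,d) r}$.

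The final step is to transfer the bound from the net to all rank-$\le 2r$ matrices. Given $\M$ with nearest net point $\M_k$ satisfying $\fnorm{\M-\M_k}\le \epsilon$, I use the polarization identity
$$\sumton\la\A_i,\M\ra^2 - \sumton\la\A_i,\M_k\ra^2 = \sumton \la\A_i,\M-\M_k\ra\la\A_i,\M+\M_k\ra$$
and bound the right-hand side by Cauchy--Schwarz, using that $\M\pm\M_k$ has rank at most $4r$ so the same pointwise Bernstein bound applies (with constants absorbed). The main obstacle is that the parameters $R$ and $V$ depend on the row-wise quantity $\max_j\norm{\e_j\trans\M}$ rather than on $\fnorm{\M}$, so the net and its extension must respect this non-Frobenius scale. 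This is possible because $\max_j\norm{\e_j\trans(\M-\M_k)}\le \fnorm{\M-\M_k}\le \epsilon$, which ties the row-wise parameter at $\M_k$ to that at $\M$ up to an additive $\epsilon$; then choosing $\epsilon$ polynomially small in $r$ makes the discretization error strictly dominated by the Bernstein term, yielding the claimed uniform inequality.
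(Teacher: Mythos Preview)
Your approach matches the paper's: a pointwise sub-exponential Bernstein bound (the paper packages this as a separate lemma showing $Y_i = t(\e_{t(i)}\trans\M\x_i)^2 - \fnorm{\M}^2$ is sub-exponential with exactly the variance and $\psi_1$ parameters you compute), followed by a union bound over a Candes--Tao $\epsilon$-net of rank-$\le 2r$, unit-Frobenius matrices, followed by an extension from the net to the full set and a rescaling. The row-wise quantity $\max_j\norm{\e_j\trans\M}$ enters in both arguments through the same moment calculation.

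There is one soft spot in your extension step. After polarization and Cauchy--Schwarz you need to control $\sumton\la\A_i,\M-\M_k\ra^2$ and $\sumton\la\A_i,\M+\M_k\ra^2$, and you write that ``the same pointwise Bernstein bound applies''. But $\M\pm\M_k$ vary with the arbitrary $\M$, so a \emph{pointwise} high-probability bound is not available for them without a further union bound over an uncountable family; as written the step is circular. The standard repair---and what the paper's ``straightforward Lipschitz continuity argument'' with its factor of $2$ is implicitly invoking---is the self-bounding trick: let $h(\M)=\sumton\la\A_i,\M\ra^2$ and $S=\sup_{\M\in\Gamma}h(\M)$ over the unit-Frobenius rank-$\le 2r$ set $\Gamma$; use that, via the Candes--Tao SVD-factor construction of the net, $\M-\M_k$ decomposes into a bounded number of rank-$\le 2r$ pieces each of Frobenius norm $O(\epsilon)$, hence $h(\M-\M_k)\le O(\epsilon^2)S$; conclude $\sqrt{S}\le \sqrt{\max_{K}h}+O(\epsilon)\sqrt{S}$ and solve for $S$. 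Once you replace the appeal to a pointwise bound by this inequality the proof is complete; the paper itself is no more explicit on this point.
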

\begin{proof}
	Note that by rescaling it suffices to restrict attention to matrices $\M$ that are of rank $2r$ and have Frobenius norm $1$ (a set which we denote $\Gamma$). Applying \cref{lem:y_subexp}, we have that,
	 \begin{align*}
	 	\abs{\sumton t(\e_{t(i)}^\top \M \x_i)^2 - \norm{\M}_F^2} \leq O \left(\frac{1}{\sqrt{n}} \sqrt{\log(\frac{1}{\delta}}) + \frac{1}{n} \log(\frac{1}{\delta}) \right),
	 \end{align*}
	 for any fixed $\M \in \Gamma$ with probability at least $1-\delta$.
	 Now using \ref{prop:rankrcover} with $\epsilon = \frac{1}{1000}$ have that the set $\Gamma$ admits a cover $K$ of size at most $|K| = (3000r)^{(t+d+1)r}$. Now by choosing $\delta = (3000)^{-c (t+d+1)r}$ for a sufficiently large constant $c$ we can ensure that, 
	 \begin{align*}
	 	\abs{\sumton t(\e_{t(i)}^\top \M_j \x_i)^2 - \norm{\M_j}_F^2} \leq O \left(\frac{1}{\sqrt{n}} \sqrt{(\max(t, d) r} + \frac{1}{n} \max(t, d)r \right) \quad \forall \M_j \in K,
	 \end{align*}
	 with probability at least $1-(3000r)^{-10\max(t,d)r}$ using a union bound. Now a straightforward Lipschitz continuity argument shows that since any $\M \in \Gamma$ can be written as $\M = \M_i + \epsilon a_i$ for $\M_i \in K$ and another $a_i \in \Gamma$, then \begin{align*}
	     \sup_{\M \in \Gamma} \abs{\sumton t(\e_{t(i)}^\top \M \x_i)^2 - \norm{\M}_F^2}  \leq 2 (\sup_{\M_j \in K} \abs{\sumton t(\e_{t(i)}^\top \M_j \x_i)^2 - \norm{\M_j}_F^2}),
	 \end{align*}
	 and hence the conclusion follows. Rescaling the result by $\fnorm
	 {\M}^2$ finishes the result.
\end{proof}

\begin{lemma}
	Let the covariates $\x_i$ satisfy the design condition in \cref{assump:design} in the uniform task sampling model. Then if $Y_i = t (\e_{t(i)}^\top \A \x_i)^2 - \fnorm{\A}^2$, $Y_i$ is a sub-exponential random variable, and
	 \begin{align*}
	 	\abs{\sumton t(\e_{t(i)}^\top \M \x_i)^2 - \norm{\M}_F^2} \leq O \left( \frac{\sqrt{t} \max_i \norm{\e_i^\top \M}_2 \fnorm{\M}}{\sqrt{n}} \sqrt{\log(\frac{1}{\delta})} + \frac{t \max_i \norm{\e_i^\top \M}^2}{n} \log(\frac{1}{\delta}) \right),
	 \end{align*}
	 for any fixed $\M$ with probability at least $1-\delta$.
	\label{lem:y_subexp}
\end{lemma}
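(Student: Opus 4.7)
The plan is to establish that $Y_i$ is mean-zero and sub-exponential with carefully tuned scale parameters, and then invoke the scalar Bernstein inequality on the i.i.d.\ sum $\sumton Y_i$. The mean-zero property is immediate: conditioning on $t(i) = j$ and using $\mE[\x_i \x_i^\top] = \I_d$ gives $\mE[t(\e_j^\top \M \x_i)^2] = t \norm{\e_j^\top \M}^2$, and averaging over uniform $t(i)$ yields $\mE[t(\e_{t(i)}^\top \M \x_i)^2] = \fnorm{\M}^2$.

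To show sub-exponentiality, I first decompose $Y_i = \tilde Y_i + V_i$, where $\tilde Y_i := t[(\e_{t(i)}^\top \M \x_i)^2 - \norm{\e_{t(i)}^\top \M}^2]$ is mean-zero conditional on $t(i)$, and $V_i := t\norm{\e_{t(i)}^\top \M}^2 - \fnorm{\M}^2$ depends only on $t(i)$ and is deterministically bounded in absolute value by $t \max_j \norm{\e_j^\top \M}^2$. Conditional on $t(i) = j$, the sub-gaussian design assumption makes the scalar $\e_j^\top \M \x_i$ sub-gaussian with parameter $\norm{\e_j^\top \M}$, so the standard Laplace-transform bound for the centered square of a sub-gaussian variable yields $\mE_{\x_i}[\exp(\lambda t[(\e_j^\top \M \x_i)^2 - \norm{\e_j^\top \M}^2])] \le \exp(C\lambda^2 t^2 \norm{\e_j^\top \M}^4 / 2)$ on $|\lambda| \le 1/(c t \max_j \norm{\e_j^\top \M}^2)$.

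The main technical step is averaging this conditional MGF over $j$ uniform on $[t]$. Using $e^x \le 1+2x$ on the interval above together with the bookkeeping inequality $\frac{1}{t}\sum_{j=1}^t \norm{\e_j^\top \M}^4 \le \frac{1}{t}\max_j \norm{\e_j^\top \M}^2 \cdot \fnorm{\M}^2$ produces $\mE[\exp(\lambda \tilde Y_i)] \le \exp(C'\lambda^2\, t \max_j \norm{\e_j^\top \M}^2 \fnorm{\M}^2)$. I expect this to be the main obstacle, since the naive per-$j$ MGF bound (i.e.\ replacing $\frac{1}{t}\sum_j \norm{\e_j^\top \M}^4$ by $\max_j \norm{\e_j^\top \M}^4$) would yield an $O(t^2 \max_j \norm{\e_j^\top \M}^4)$ variance parameter, up to a factor of $t$ larger than the target. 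Adding the bounded contribution of $V_i$ then gives $Y_i \in \sE(\nu, \alpha)$ with $\nu^2 = O(t \max_j \norm{\e_j^\top \M}^2 \fnorm{\M}^2)$ and $\alpha = O(t \max_j \norm{\e_j^\top \M}^2)$.

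Finally, applying the scalar Bernstein inequality to the i.i.d.\ sum $\sumton Y_i$ yields $\Pr[|\sumton Y_i| \ge s] \le 2\exp(-cn \min(s^2/\nu^2, s/\alpha))$, and inverting the tail at level $\delta$ produces the two-regime bound stated in the lemma.
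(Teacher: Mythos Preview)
Your proposal is correct and arrives at the same sub-exponential parameters $\nu^2 = O(t \max_j \norm{\e_j^\top \M}^2 \fnorm{\M}^2)$ and $\alpha = O(t \max_j \norm{\e_j^\top \M}^2)$ as the paper, after which both proofs finish identically via scalar Bernstein. The route, however, is genuinely different. The paper does not decompose $Y_i$ or touch the MGF directly; instead it verifies the Bernstein \emph{moment} condition by bounding $|\mE[Y_i^k]|$ for all $k\ge 1$: it symmetrizes with an independent copy, uses $(\tfrac{a+b}{2})^k \le 2^{k-1}(a^k+b^k)$, then applies the sub-gaussian moment bound $\mE[Z^{2k}] \le k!\,C^{2k}$ to $(\e_j^\top\M\x_i)$, and finally pulls out $(t\max_j\norm{\e_j^\top\M}^2)^{k-2}$ from $\mE_j[t^k\norm{\e_j^\top\M}^{2k}]$ leaving a factor of $t\cdot\tfrac{1}{t}\sum_j\norm{\e_j^\top\M}^2=\fnorm{\M}^2$. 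Your MGF-plus-averaging argument and the paper's moment argument exploit the same structural fact---that the task average replaces one factor of $t\max_j\norm{\e_j^\top\M}^2$ by $\fnorm{\M}^2$---but package it differently; your version is arguably more transparent about where this gain comes from, while the paper's is a one-shot computation that avoids the decomposition.

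One small point to tighten: in your final step ``adding the bounded contribution of $V_i$,'' note that $\tilde Y_i$ and $V_i$ both depend on $t(i)$ and are not independent, so you cannot simply multiply their MGFs. The clean fix is already implicit in your setup: work with the full conditional MGF $\mE[e^{\lambda Y_i}\mid t(i)=j]=e^{\lambda V_j}\,\mE_{\x_i}[e^{\lambda \tilde Y_i}\mid t(i)=j]$, then average over $j$ and linearize the (bounded) exponent via $e^x\le 1+x+x^2$ on $|x|\le 1$. The linear-in-$\lambda$ part of $V_j$ vanishes since $\sum_j V_j=0$, and the remaining quadratic-in-$\lambda$ terms reproduce exactly the variance proxy you stated.
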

\begin{proof}
	First note that under our assumptions $Y_i$, $\mE[t(\e_{j}^\top \A \x_i)^2] = \norm{\A}_F^2$. To establish the result, we show the Bernstein condition holds with appropriate parameters. To do so, we bound for $k \geq 1$,
	\begin{align*}
		& \abs{\mE[Y_i^k]} \leq t^{k} 2^{2k} \mE[(\e_j^\top \M_j \x_i)^{2k}] = t^{k} 2^{2k} \cdot \mE[\norm{\e_j^\top \M_j}^{2k}] C^{2k} k! \leq (C')^{4k} k! \cdot \mE[(t\norm{\e_j^\top \M}^2)^{k-1} \cdot (t \norm{\e_j^\top \M}^2)] \\
		& \leq (C'')^k k! (t \nu^2)^{k-2} (t\nu^2 \fnorm{\A}^2) = \frac{1}{2} k! (\underbrace{C'' t \nu^2}_{b})^{k-2} \cdot(\underbrace{C''^2 t\nu^2 \fnorm{\M}^2}_{\sigma^2}),
	\end{align*}
	by introducing an independent copy of $Y$, using Jensen's inequality, and the inequality $(\frac{a+b}{2})^k \leq 2^{k-1} (a^k  + b^k)$ in the first inequality, and the sub-gaussian moment bound $\mE[Z^{2k}] \leq 2k \Gamma(k) C^{2k} \leq k! C^{2k}$ for universal constant $C$ which holds under our design assumptions. Hence directly applying the Bernstein inequality (see \citet[Proposition 2.9]{wainwright2019high} shows that,
	\begin{align*}
		\mE[e^{\lambda \cdot Y_i}] \leq e^{\lambda^2 (\sqrt{2} \sigma)^2/2} \quad \forall \abs{\lambda} \leq \frac{1}{2b}.
	\end{align*}
	Hence, using a standard sub-exponential tail bound we conclude that, 
	 \begin{align*}
	 	& \abs{\sumton t(\e_{t(i)}^\top \M \x_i)^2 - \norm{\M}_F^2} \leq O \left(\frac{\sigma}{\sqrt{n}} \sqrt{\log(\frac{1}{\delta}}) + \frac{b}{n} \log(\frac{1}{\delta}) \right) = \\
	 	& O \left( \frac{\sqrt{t} \max_i \norm{\e_i^\top \M}_2 \fnorm{\M}}{\sqrt{n}} \sqrt{\log(\frac{1}{\delta})} + \frac{t \max_i \norm{\e_i^\top \M}^2}{n} \log(\frac{1}{\delta}) \right),
	 \end{align*}
	 for any fixed $\A \in \Gamma$ with probability at least $1-\delta$.
\end{proof}

We now restate a simple covering lemma for rank-$O(r)$ matrices from \citet{candes2010tight}.
\begin{lemma}[Lemma 3.1 from \citet{candes2010tight}]
	Let $\Gamma$ be the set of matrices $\M \in \mR^{t \times d}$ that are of rank at most $r$ and have Frobenius norm equal to $1$. Then for any $\epsilon < 1$, there exists an $\epsilon$-net covering of $\Gamma$ in the Frobenius norm, $S$, which has cardinality at most $(\frac{9}{\epsilon})^{(t+d+1)r}$.
	\label{prop:rankrcover}
\end{lemma}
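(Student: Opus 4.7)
The plan is to use the singular value decomposition to factor any element of $\Gamma$ into three pieces, cover each piece in an appropriate norm, and glue the covers together with a triangle-inequality argument. Specifically, any $\M \in \Gamma$ can be written as $\M = \U \mSigma \V^\top$, where $\U \in \mR^{t \times r}$ and $\V \in \mR^{d \times r}$ have orthonormal columns and $\mSigma = \diag(\sigma_1, \ldots, \sigma_r)$ is a diagonal matrix with nonnegative entries satisfying $\sum_i \sigma_i^2 = \fnorm{\mSigma}^2 = \fnorm{\M}^2 = 1$.

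Next, I would construct three separate $(\epsilon/3)$-nets. Let $N_\mSigma$ be an $(\epsilon/3)$-net in Frobenius norm of the unit sphere in $\mR^r$ (a superset of the admissible set of diagonals); by the standard volumetric bound, we may take $|N_\mSigma| \le (9/\epsilon)^r$. Let $N_\U$ and $N_\V$ be $(\epsilon/3)$-nets in the spectral norm of the spectral-norm unit balls $\{\U \in \mR^{t \times r} : \norm{\U} \le 1\}$ and $\{\V \in \mR^{d \times r} : \norm{\V} \le 1\}$ respectively; these again admit nets of size at most $(9/\epsilon)^{tr}$ and $(9/\epsilon)^{dr}$ by the standard argument for $(tr)$- and $(dr)$-dimensional balls (since the spectral norm of a matrix is at most its Frobenius norm, a Frobenius-norm net of the Frobenius-norm unit ball suffices). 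Since matrices with orthonormal columns have unit spectral norm, the Stiefel manifolds are contained in these balls, and so $N_\U$ and $N_\V$ net the Stiefel manifolds as well.

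Given $\M = \U\mSigma\V^\top \in \Gamma$, pick $\hat{\U} \in N_\U$, $\hat{\mSigma} \in N_\mSigma$, $\hat{\V} \in N_\V$ within distance $\epsilon/3$ of the corresponding factors in the appropriate norms, and define $\hat{\M} = \hat{\U}\hat{\mSigma}\hat{\V}^\top$. A three-term triangle inequality together with the submultiplicativity of the Frobenius norm with respect to the spectral norm gives
\begin{align*}
\fnorm{\M - \hat{\M}}
&\le \fnorm{(\U-\hat{\U})\mSigma\V^\top} + \fnorm{\hat{\U}(\mSigma-\hat{\mSigma})\V^\top} + \fnorm{\hat{\U}\hat{\mSigma}(\V-\hat{\V})^\top} \\
&\le \norm{\U-\hat{\U}}\,\fnorm{\mSigma}\,\norm{\V} + \norm{\hat{\U}}\,\fnorm{\mSigma-\hat{\mSigma}}\,\norm{\V} + \norm{\hat{\U}}\,\fnorm{\hat{\mSigma}}\,\norm{\V-\hat{\V}} \le \epsilon,
\end{align*}
using $\norm{\mSigma} \le \fnorm{\mSigma} = 1$ and $\norm{\hat{\U}}, \norm{\V}, \fnorm{\hat{\mSigma}} \le 1$. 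Letting $S$ be the collection of all such products $\hat{\U}\hat{\mSigma}\hat{\V}^\top$, the cardinality satisfies $|S| \le |N_\U|\,|N_\mSigma|\,|N_\V| \le (9/\epsilon)^{tr+r+dr} = (9/\epsilon)^{(t+d+1)r}$, which yields the claimed bound.

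The only genuine subtlety is keeping track of which norm each factor is controlled in and verifying that the assumed ambient-ball nets actually cover the Stiefel manifolds; everything else is a routine combination of standard volumetric covering bounds and SVD. No step presents a substantive obstacle.
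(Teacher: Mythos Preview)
Your argument is correct and is precisely the standard proof from \citet{candes2010tight}; the present paper does not supply its own proof but simply cites the result, so there is nothing further to compare. One small remark: your parenthetical justification ``since the spectral norm of a matrix is at most its Frobenius norm, a Frobenius-norm net of the Frobenius-norm unit ball suffices'' is backwards---the Stiefel manifold has Frobenius norm $\sqrt{r}$ and is \emph{not} contained in the Frobenius-norm unit ball---but this is harmless because your main claim stands on its own: the volumetric bound $(1+2/\delta)^k$ applies directly to the spectral-norm unit ball in the $tr$-dimensional space, yielding the $(9/\epsilon)^{tr}$ bound you need.
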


We now state a central lemma which combines the previous concentration arguments into a single condition we use in the landscape analysis.

\begin{lemma}\label{lem:uniform_concentration}
Let \cref{assump:design,,assump:task} hold in the uniform task sampling model. 
When number of samples is greater than $n \ge C \polylog(d,n,t) \mu^2 r^4 \max\{t, d\}  (\cn^\star)^4$ with large-enough constant $C$, with at least $1-1/\poly(d)$ probability, we have following holds for all $(\U, \V) \in \cW$ simultanously:
\begin{align}
\frac{1}{n}\sum_{i=1}^n\la \U^\star \Delta_\V\trans + \Delta_\U(\V^\star)\trans,\A_i\ra^2  \in & (1\pm 0.001) \fnorm{\U^\star \Delta_\V\trans + \Delta_\U(\V^\star)\trans}^2\label{eq:DWstar_concen}\\
\frac{1}{n}\sum_{i=1}^n\la \A_i, \Delta_\U\Delta_\V\trans\ra^2 \le & \fnorm{\Delta_\U\Delta_\V \trans}^2 +  0.001 \sigstarr \fnorm{\Delta_\V}^2  \label{eq:DD_concen}\\
\frac{1}{n}\sum_{i=1}^n\la \A_i, \M - \M^\star\ra^2 \ge& \fnorm{\M-\M^\star}^2 - (\sigstarr)^2 /10^{6},\label{eq:M_concen}
\end{align}
where $\M = \U\V\trans$ and $\Delta_\U, \Delta_\V$ are defined as in \cref{def:delta}. Here $\mu = O(\bkappa)$.
\end{lemma}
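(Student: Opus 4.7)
The three inequalities split naturally between the two preceding concentration results: \eqref{eq:DWstar_concen} follows from the tangent-space RIP bound \cref{lem:rip_conc}, while \eqref{eq:DD_concen} and \eqref{eq:M_concen} follow from the coarse rank-$2r$ concentration \cref{lem:coarse_conc}, after controlling the relevant row-norm factors via the incoherence built into $\cW$. Uniformity over $(\U,\V)\in\cW$ is already present in each base result, so no separate covering argument is needed here.

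For \eqref{eq:DWstar_concen}, the key observation is that $\W := \U^\star\Delta_\V\trans + \Delta_\U(\V^\star)\trans$ always lies in the \emph{fixed} tangent space $\cT$ at $\M^\star$: direct computation gives $(I-\pu)\W(I-\pv)=0$ since $(I-\pu)\U^\star=0$ and $(\V^\star)\trans(I-\pv)=0$, hence $\pt\W=\W$. Therefore
\[\tfrac{1}{n}\sum_{i=1}^n \la \A_i,\W\ra^2 = \bigl\la \W,\, \bigl(\tfrac{1}{n}\sum_{i=1}^n \pt \A_i\A_i\trans \pt\bigr)\W\bigr\ra.\]
By \cref{lem:rip_conc}, under the hypothesized sample complexity the bracketed operator differs from $\pt$ in spectral norm by at most $0.001$, which yields the $(1\pm 0.001)$ multiplicative bound uniformly over all $(\U,\V)\in\cW$.

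For \eqref{eq:DD_concen} and \eqref{eq:M_concen}, the matrices $\Delta_\U\Delta_\V\trans$ and $\M-\M^\star$ have rank at most $r$ and $2r$ respectively, so \cref{lem:coarse_conc} applies. The auxiliary factor $\sqrt{t}\max_i\|\e_i\trans(\cdot)\|$ on its right-hand side is controlled as follows: since $\Delta_\U=\U-\U^\star\R^\star$ with $\max_i\|\e_i\trans\U\|^2\le C_0\mu r\sigstarl/t$ (from $\cW$) and $\max_i\|\e_i\trans\U^\star\|^2\le\mu r\sigstarl/t$ (from task diversity via \cref{lem:diverse_to_incoh}), the triangle inequality gives $\max_i\|\e_i\trans\Delta_\U\|^2\lesssim\mu r\sigstarl/t$. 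Combined with the scale constraints $\|\V\|,\|\V^\star\|\le O(\sqrt{\sigstarl})$, one obtains $\sqrt{t}\max_i\|\e_i\trans\Delta_\U\Delta_\V\trans\|\lesssim\sqrt{\mu r\sigstarl}\|\Delta_\V\|$ and $\sqrt{t}\max_i\|\e_i\trans(\M-\M^\star)\|\lesssim\sqrt{\mu r}\,\sigstarl$. Plugging into \cref{lem:coarse_conc} leaves a cross term that is dispatched by AM-GM. For \eqref{eq:DD_concen}, the split
\[\sqrt{\tfrac{\tld r}{n}}\sqrt{\mu r\sigstarl}\,\|\Delta_\V\|\fnorm{\Delta_\U\Delta_\V\trans}\le \epsilon\fnorm{\Delta_\U\Delta_\V\trans}^2 + \tfrac{\mu r^2\sigstarl\tld}{4\epsilon n}\fnorm{\Delta_\V}^2,\]
together with $\fnorm{\Delta_\U\Delta_\V\trans}\le\|\Delta_\U\|\fnorm{\Delta_\V}\lesssim\sqrt{\sigstarl}\fnorm{\Delta_\V}$, lets me absorb the $\epsilon\fnorm{\Delta_\U\Delta_\V\trans}^2$ residual into $0.001\sigstarr\fnorm{\Delta_\V}^2$ by choosing $\epsilon\asymp 1/\cn^\star$; the remaining residual drops below $0.001\sigstarr\fnorm{\Delta_\V}^2$ once $n\gtrsim\mu r^2\tld(\cn^\star)^2\polylog$. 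For \eqref{eq:M_concen}, the scale constraints force $\fnorm{\M-\M^\star}\lesssim r\sigstarl$, reducing the error to $O(r^2(\sigstarl)^2\sqrt{\mu\tld/n})+O(\mu r^2\tld(\sigstarl)^2/n)$, both of which fall below $(\sigstarr)^2/10^{6}$ under $n\gtrsim\mu r^4(\cn^\star)^4\tld\polylog$, comfortably implied by the hypothesis.

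The only real obstacle is bookkeeping: threading the incoherence $\mu=O(\bkappa)$, the scale factor $C_0\sigstarl$, and the condition number $\cn^\star$ through the two lemmas and the AM-GM splits so that every residual fits into the allowed slack of $0.001$, $0.001\sigstarr$, and $(\sigstarr)^2/10^6$ under the stated sample-complexity hypothesis. No new probabilistic input is required beyond \cref{lem:rip_conc,lem:coarse_conc}.
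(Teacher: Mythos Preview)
Your proposal is correct and follows exactly the same approach as the paper, which proves the lemma in a single sentence: apply \cref{lem:rip_conc} to \eqref{eq:DWstar_concen} and \cref{lem:coarse_conc} to \eqref{eq:DD_concen} and \eqref{eq:M_concen}, using the definition of the incoherence ball $\cW$. Your write-up simply fills in the bookkeeping (the tangent-space membership of $\U^\star\Delta_\V\trans+\Delta_\U(\V^\star)\trans$, the row-norm control via $\cW$ and \cref{lem:diverse_to_incoh}, and the AM-GM splits) that the paper leaves implicit.
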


\begin{proof}
This result follows immediately by applying Lemma \ref{lem:rip_conc} to the first statement and Lemma \ref{lem:coarse_conc} to the following two statements using the definition of the incoherence ball $\cW$.
\end{proof}

\begin{lemma}\label{lem:mc_noise}
Suppose the set of matrices $\A_1,\A_2,...,\A_n$ satisfy the event in \cref{lem:uniform_concentration}, let $n_1,n_2,...,n_m$ be i.i.d. sub-gaussian random variables with variance parameter $t\sigma^2$, then with high probability for any $(\U, \V) \in \cW$, we have
\begin{align*}
|\frac{1}{n}\sum_{i=1}^n (\la \Delta_\U\Delta_\V\trans,\A_i\ra n_i)|& \le O(\sigma\sqrt{\frac{t\max\{t, d\} r\log n}{n}})\sqrt{\fnorm{\Delta_\U\Delta_\V\trans}^2 + 0.001\sigstarr \fnorm{\Delta_\V}^2}\\
|\frac{1}{n}\sum_{i=1}^n(\la \U^\star \Delta_\V\trans + \Delta_\U(\V^\star)\trans,\A_i\ra n_i)| & \leq O(\sigma\sqrt{\frac{t\max\{t, d\} r\log n}{n}})\norm{\U^\star\Delta_\V\trans +\Delta_\U(\V^{\star})\trans}_F
\end{align*}
for $n \gtrsim \polylog(d)$.
\end{lemma}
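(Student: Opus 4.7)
The plan is to prove both bounds by conditioning on the sensing matrices $\{\A_i\}$, invoking a sub-gaussian tail bound, and then covering the relevant set of rank-$\le 2r$ matrices with an $\epsilon$-net. For any fixed matrix $\M$ of rank at most $2r$, let $Z(\M) := \frac{1}{n}\sum_{i=1}^n \langle \A_i, \M\rangle\, n_i$. Conditioning on $\{\A_i\}$, $Z(\M)$ is a linear combination of the i.i.d. $(\sqrt{t}\sigma)$-sub-gaussian variables $n_i$ with weights $\frac{1}{n}\langle \A_i,\M\rangle$, hence conditionally sub-gaussian with variance proxy $\frac{t\sigma^2}{n^2}\sum_{i=1}^n \langle \A_i,\M\rangle^2$. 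On the event of \cref{lem:uniform_concentration}, a standard sub-gaussian tail bound yields, with probability $1-\delta$,
\begin{equation*}
|Z(\M)| \;\le\; C\sigma\sqrt{\tfrac{t\log(1/\delta)}{n^2}\sum_{i=1}^n\langle \A_i,\M\rangle^2}.
\end{equation*}

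Next I would run the covering argument in $(\Delta_\U,\Delta_\V)$-space. Since $(\U,\V)\in\cW$ and $\W^\star$ is fixed, both $\fnorm{\Delta_\U}$ and $\fnorm{\Delta_\V}$ are bounded by $O(\sqrt{r\sigma^\star_1})$, so the pairs $(\Delta_\U,\Delta_\V)$ live in a product of Frobenius balls in $\mR^{t\times r}\times\mR^{d\times r}$. This product admits an $\epsilon$-net of cardinality at most $\exp(C(t+d)r\log(1/\epsilon))$. Taking $\delta = \exp(-C'(t+d)r\log n)$ and $\epsilon = 1/\poly(n)$ and union-bounding gives, simultaneously over the net, a tail bound of order $\sigma\sqrt{t(t+d)r\log n/n}$ times the square root of the variance proxy. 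The map $(\Delta_\U,\Delta_\V)\mapsto Z(\cdot)$ is bilinear (resp.\ linear) with operator norm crudely bounded via $\max_i\norm{\A_i}$ and $\max_i|n_i|$; a routine Lipschitz extension transfers the net-level bound to all of $\cW$ with only a lower-order error because $\epsilon$ is polynomially small in $n$.

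For the second inequality, the target matrix $\M = \U^\star\Delta_\V^\top + \Delta_\U(\V^\star)^\top$ has rank $\le 2r$, and \eqref{eq:DWstar_concen} of \cref{lem:uniform_concentration} gives $\sum_i\langle \A_i,\M\rangle^2 \le 1.001\, n\fnorm{\M}^2$. Substituting into the conditional sub-gaussian bound produces the required factor $\sigma\sqrt{t\max\{t,d\}r\log n/n}\cdot\fnorm{\M}$. For the first inequality, $\M = \Delta_\U\Delta_\V^\top$ is rank $\le r$, and \eqref{eq:DD_concen} provides the data-dependent bound $\sum_i\langle \A_i,\M\rangle^2 \le n(\fnorm{\Delta_\U\Delta_\V^\top}^2 + 0.001\sigma^\star_r\fnorm{\Delta_\V}^2)$. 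Plugging this into the sub-gaussian variance proxy recovers the precise RHS claimed, again uniformly via the same net.

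The main obstacle is bookkeeping in the discretization: one must verify that the Lipschitz residual from the $\epsilon$-approximation (which involves the quartic quantity $\Delta_\U\Delta_\V^\top$ for the first bound) is dominated by the target rate, and that the data-dependent term $0.001\sigma^\star_r\fnorm{\Delta_\V}^2$ remains a faithful upper bound after perturbation by $\epsilon$. Both are handled by choosing $\epsilon$ polynomial in $1/n$ so that the $\log(1/\epsilon)$ factor is absorbed into the $\log n$ in the final rate, while the Lipschitz constant (polynomial in $\sqrt{t\sigma^\star_1}$ and $\max_i\norm{\A_i}\cdot\max_i|n_i|$, itself $\polylog(nd)$ with high probability) contributes only lower-order slack relative to $\sigma\sqrt{t\max\{t,d\}r\log n/n}$.
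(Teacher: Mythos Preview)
Your proposal is correct and follows essentially the same strategy as the paper: condition on $\{\A_i\}$, bound the conditional sub-gaussian variance proxy using the events \eqref{eq:DWstar_concen} and \eqref{eq:DD_concen} of \cref{lem:uniform_concentration}, take an $\epsilon$-net of log-cardinality $O(\max\{t,d\}r\log(1/\epsilon))$, union-bound, and extend by Lipschitz continuity with $\epsilon = 1/\poly(n)$. The only cosmetic difference is that the paper covers the set of Frobenius-norm-one rank-$\le 2r$ matrices directly (exploiting linearity in $\M$ to normalize) rather than the parameter pair $(\Delta_\U,\Delta_\V)$; both nets have the same size. One small slip: $\max_i\norm{\A_i}\cdot\max_i|n_i|$ is polynomial in $t,d,\sigma$ (since $\A_i=\sqrt{t}\,\e_{t(i)}\x_i^\top$), not $\polylog(nd)$, but this is harmless because $\epsilon=1/\poly(n)$ still absorbs any polynomial Lipschitz factor while contributing only $O(\log n)$ to the exponent---exactly as the paper does by taking $\epsilon=1/n^3$.
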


\begin{proof}
Note since the left hand side of the expressions are linear in the matrices we can normalize to those of Frobenius norm 1. The proof of both statements is identical so we simply prove the second. 

Define $\delta = \fnorm{\U^\star \Delta_\V\trans + \Delta_\U(\V^\star)\trans}$ and $\M=\U^\star \Delta_\V\trans + \Delta_\U(\V^\star)\trans$ for convenience, which can be thought of as arbitrary rank-$r$ matrices. Then let $S$ be an $\epsilon$-net for all rank-$r$ matrices with Frobenius norm 1; by \cref{prop:rankrcover} we have that $\log |S| \leq O(\max(t,d) r \log(\frac{1}{\epsilon}))$. We set $\epsilon=\frac{1}{n^3}$ so $\log(\frac{1}{\epsilon}) = O(\log n)$. Now for any matrix $\M \in S$ we have that $\frac{1}{n} \langle \A_i, \M \rangle$ is a sub-gaussian random variable with variance parameter at most $t \sigma^2 \frac{\delta^2}{n}$. Thus, using a sub-gaussian tail bound along with a union bound over the net shows that uniformly over the $\M \in S$,
\begin{align*}
	|\frac{1}{n}\sum_{i=1}^n \langle \M,\A_i\ra n_i)| & \leq O\left(\sigma \delta \sqrt{\frac{t\max\{t, d\} r\log n}{n}}\right),
\end{align*}
with probability at least $1-\frac{1}{\poly(d)}$. We now show how to lift to the set of all $\M$. Note that with probability at least $1-e^{-\Omega(n)}$ that $\norm{\n} = O(\sqrt{t} \sigma \sqrt{n})$ by a sub-gaussian tail bound (see for example \cref{lem:lensg}). Let $\M$ be an arbitrary element, and $\M'$ its closest element in the cover; then we have that $\z_i = \langle \A_i, \M-\M' \rangle \leq \frac{\delta}{n^2}$ using the precondition on $\A_i$. 
Combining and using a union bound then shows that,
\begin{align*}
	& \abs{\sumton n_i \langle \A_i, \M \rangle } \leq  \abs{\sumton n_i \langle \A_i, \M' \rangle } + \abs{\sumton n_i \langle \A_i, \M-\M' \rangle } \leq \\
	& O\left(\sigma \delta \sqrt{\frac{t\max\{t, d\} r\log n}{n}}\right) + \frac{\sqrt{t} \sigma \delta}{\sqrt{n}}\leq O\left(\sigma \delta \sqrt{\frac{t \max(t, d) r \log n}{n}}\right).
\end{align*}
Rescaling and recalling the definition of $\delta$ gives the result. 
\end{proof}

\subsection{Task Diversity for the Landscape Analysis}
Here we collect several useful results for interpreting the results of the landscape analysis. Throughout this section we use the notation $\U \in \mR^{t \times r}$ and $\V \in \mR^{d \times r}$.

The first result allow us to convert a guarantee on error in Frobenius norm to a guarantee in angular distance, assuming an appropriate diversity condition on $\U$.

\begin{lemma}
Suppose $\V$ and $\hat{\V}$ are orthonormal projection matrices, that is $\V\trans \V = \I_r$, and $\hat{\V}\trans \hat{\V} = \I_r$. Then, for any $\epsilon>0$, if $\fnorm{\hat{\U}\hat{\V}\trans - \U\V\trans}^2 \le \epsilon$ for some $\hat{\U}$ and $\U$, then:
\begin{equation*}
\mathrm{dist}^2(\V, \hat{\V}) \le \frac{\epsilon}{\nu t},
\end{equation*}
where $\nu = \sigma_r(\U\trans \U)/t$.
\label{lem:frob_to_angle}
\end{lemma}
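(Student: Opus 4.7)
The plan is to exploit the fact that $\hat{\V}_\perp$ annihilates $\hat{\V}$, so right-multiplication of the assumed Frobenius-norm bound by $\hat{\V}_\perp$ isolates a term that mixes $\U$ with the overlap $\V^\top \hat{\V}_\perp$, which directly controls the subspace angle.

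Concretely, I would first observe that $\hat{\V}^\top \hat{\V}_\perp = 0$ since $\hat{\V}_\perp$ is an orthonormal basis for the complement of $\text{range}(\hat{\V})$. Right-multiplying the matrix $\hat{\U}\hat{\V}^\top - \U\V^\top$ by $\hat{\V}_\perp$ therefore kills the first term, leaving $-\U\V^\top \hat{\V}_\perp$. Because $\hat{\V}_\perp$ has orthonormal columns, right-multiplication is non-expansive in Frobenius norm, so
\begin{equation*}
\fnorm{\U\V^\top \hat{\V}_\perp}^2 = \fnorm{(\hat{\U}\hat{\V}^\top - \U\V^\top)\hat{\V}_\perp}^2 \le \fnorm{\hat{\U}\hat{\V}^\top - \U\V^\top}^2 \le \epsilon.
\end{equation*}

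Next, I would lower-bound the left-hand side in terms of $\V^\top \hat{\V}_\perp$ alone, using the task-diversity assumption on $\U$. Letting $\M = \V^\top \hat{\V}_\perp$ and applying the trace identity
\begin{equation*}
\fnorm{\U\M}^2 = \tr\!\bigl(\U^\top \U \, \M \M^\top\bigr) \ge \sigma_r(\U^\top \U)\,\tr(\M\M^\top) = \nu t \cdot \fnorm{\V^\top \hat{\V}_\perp}^2,
\end{equation*}
where the inequality follows from $\U^\top \U \succeq \sigma_r(\U^\top \U)\,\I_r$ together with the fact that $\M\M^\top$ is positive semidefinite. Combining the two displays yields $\fnorm{\V^\top \hat{\V}_\perp}^2 \le \epsilon/(\nu t)$.

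Finally, I would translate this Frobenius bound into the subspace angle distance by noting the standard identity $\sin\theta(\V,\hat{\V}) = \norm{\V^\top \hat{\V}_\perp}$ for orthonormal $\V, \hat{\V}$, and using $\norm{\cdot} \le \fnorm{\cdot}$ to conclude $\mathrm{dist}^2(\V, \hat{\V}) \le \epsilon/(\nu t)$. The argument is essentially a one-line projection trick; the only non-trivial ingredient is recognizing that the task-diversity parameter $\nu$ arises precisely as the minimum eigenvalue of $\U^\top \U / t$ after the projection step, which is why diversity is necessary in order to convert a product-space error $\fnorm{\U\V^\top - \hat{\U}\hat{\V}^\top}$ into an error on the feature subspace alone. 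There is no real obstacle here beyond bookkeeping about which side of the identity to project onto.
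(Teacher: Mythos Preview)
Your proof is correct and essentially the same as the paper's. The paper arrives at the key inequality $\fnorm{\U\V^\top\hat{\V}_\perp}^2 \le \epsilon$ via a minimization over $\tilde{\U}$ (computing the minimizer $\tilde{\U}^\star = \U\V^\top\hat{\V}$ and plugging back in), whereas you reach it more directly by right-multiplying by $\hat{\V}_\perp$; from that point on both arguments use the identical trace inequality $\tr(\U^\top\U\,\M\M^\top)\ge\sigma_r(\U^\top\U)\fnorm{\M}^2$ and the bound $\norm{\cdot}\le\fnorm{\cdot}$.
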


Here the distance function is the sine function of the principal angle; i.e.
\begin{equation*}
\mathrm{dist}(\V, \hat{\V}) := \norm{\V\trans \hat{\V}_\perp},
\end{equation*}
and $\nu = \sigma_r(\U\trans \U)/t$ represents an analog of the task diversity matrix.

\begin{proof}
Define the function $f(\tilde{\U}) = \fnorm{\tilde{\U}\hat{\V}\trans - \U\V\trans}^2$.
The precondition of the theorem states that there exists $\hat{\U}$ so that $\fnorm{\hat{\U}\hat{\V}\trans - \U\V\trans}^2 \le \epsilon$. This clearly implies the following:
\begin{equation} \label{eq:minimal_solution}
\min_{\tilde{\U}} f(\tilde{\U}) \le \epsilon.
\end{equation}
Setting the gradient $\dd f/ \dd \tilde{\U} = 0$, we have the minimizer $\tilde{\U}^\star$ satisfies:
\begin{equation*}
(\tilde{\U}^\star \hat{\V}\trans - \U \V\trans)  \hat{\V} = 0,
\end{equation*}
which gives:
\begin{equation*}
\tilde{\U}^\star = \U \V\trans \hat{\V}.
\end{equation*}
Plugging this back to Eq.~\eqref{eq:minimal_solution} gives:
\begin{equation*}
\fnorm{\U\V\trans(\hat{\V}\hat{\V}\trans - \I)}^2 \le \epsilon.
\end{equation*}
Finally, we have:
\begin{align*}
\fnorm{\U\V\trans(\hat{\V}\hat{\V}\trans - \I)}^2 = &\fnorm{\U\V\trans\hat{\V}_\perp\hat{\V}_\perp\trans}^2
= \fnorm{\U\V\trans\hat{\V}_\perp}^2 = \tr(\U\trans\U\V\trans\hat{\V}_\perp\hat{\V}_\perp\trans \V)\\
\ge & \sigma_{r}(\U\trans\U) \fnorm{\V\trans\hat{\V}_\perp}^2
\ge \sigma_{r}(\U\trans\U) \norm{\V\trans\hat{\V}_\perp}^2.
\end{align*}
The second last inequality follow since  for any p.s.d.\ matrices $\A$ and $\B$, we have $\tr(\A\B) \ge \sigma_{\min}(\A) \tr(\B)$.
This concludes the proof.
\end{proof}

For the following let $\A = (\balpha_1, \cdots, \balpha_t)\trans \in \mR^{t\times r}$ and denote the SVD of $\A = \U \mSigma \V\trans$. Next we remark that our assumptions on task diversity and normalization implicit in the matrix $\A$ are sufficient to actually imply an incoherence condition on $\U$ (which is used in the matrix sensing/completion style analysis). 
\begin{lemma}
If $\mu = \frac{1}{r \sigma_r(\A^\top \A/t)}$ and $\max_{i\in[t]} \norm{\balpha_i}^2 \le C$, then we have:
\begin{equation*}
\max_{i\in[t]} \norm{\e_i\trans \U}^2 \le \frac{C\mu r}{ t}.
\end{equation*}
\label{lem:diverse_to_incoh}
\end{lemma}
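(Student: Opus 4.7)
The plan is to directly unpack the SVD relationship between $\A$ and $\U$ to get control of the row norms of $\U$. Since $\A = \U \mSigma \V^\top$, taking the $i$-th row gives $\balpha_i^\top = \e_i^\top \A = (\e_i^\top \U) \mSigma \V^\top$. The right singular factor $\V$ has orthonormal columns, so it preserves Euclidean norms on the right, yielding
\begin{equation*}
\norm{\balpha_i}^2 \;=\; \norm{(\e_i^\top \U)\mSigma}^2 \;=\; (\e_i^\top \U)\mSigma^2 (\e_i^\top \U)^\top.
\end{equation*}

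The next step is to lower-bound this quadratic form by $\sigma_r(\mSigma)^2 \norm{\e_i^\top \U}^2$. This is valid because $\U$ has $r$ orthonormal columns (so $\e_i^\top \U \in \mR^r$ lives in the full $r$-dimensional space on which $\mSigma$ acts), and therefore $\mSigma^2 \succeq \sigma_r(\mSigma)^2 \I_r$. Rearranging gives $\norm{\e_i^\top \U}^2 \le \norm{\balpha_i}^2 / \sigma_r(\mSigma)^2$.

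Finally, I would translate $\sigma_r(\mSigma)^2$ into the task-diversity parameter. Since the singular values of $\A$ are the square roots of the eigenvalues of $\A^\top\A$, we have $\sigma_r(\mSigma)^2 = \sigma_r(\A^\top \A) = t\,\sigma_r(\A^\top\A/t)$. Substituting this and the hypothesis $\norm{\balpha_i}^2 \le C$ yields
\begin{equation*}
\norm{\e_i^\top \U}^2 \;\le\; \frac{C}{t\,\sigma_r(\A^\top\A/t)} \;=\; \frac{C\mu r}{t},
\end{equation*}
using the definition $\mu = 1/(r\,\sigma_r(\A^\top\A/t))$. Taking a maximum over $i$ gives the claim. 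There is no real obstacle here; the only thing to watch is ensuring that $\U \in \mR^{t \times r}$ has exactly $r$ orthonormal columns (which is the case when $\A$ has rank $r$, guaranteed by $\sigma_r(\A^\top\A/t) > 0$, i.e.\ $\mu < \infty$).
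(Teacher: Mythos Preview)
Your proof is correct and follows essentially the same approach as the paper's: both use $\norm{\balpha_i}^2 = \norm{\e_i^\top \U \mSigma}^2 \ge \sigma_r(\mSigma)^2 \norm{\e_i^\top \U}^2$ and then identify $\sigma_r(\mSigma)^2 = \sigma_r(\A^\top \A) = t/(\mu r)$. The only difference is that you spell out the role of $\V$ in dropping from $\A$ to $\U\mSigma$, which the paper leaves implicit.
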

\begin{proof}
Since $\max_{i\in[t]} \norm{\balpha_i}^2 \le C $, we have, for any $i\in [t]$
\begin{equation*}
C \ge \norm{\balpha_i}^2 =\norm{\e_i\trans \A}^2 = \norm{\e_i\trans \U\mSigma}^2
\ge \norm{\e_i\trans \U}^2 \sigma_{\min}^2(\mSigma) = \norm{\e_i\trans \U}^2 \sigma_{r}(\A\trans \A)
= (t/\mu r) \norm{\e_i\trans \U}^2,
\end{equation*}
which finishes the proof. 
\end{proof}
Note in the context of \cref{assump:task} the incoherence parameter corresponds to the parameter $\bkappa \leq \kappa$ since under our normalization $\tr(\A^\top \A/t) = \Theta(1)$. Further to quickly verify the incoherence ball contains the true parameters it is important to recall the scale difference $\M^{\star}$ and $\A^\top \A/t$ by a factor of $\sqrt{t}$.
\section{Proofs for \cref{sec:new_task}}
Assuming we have obtained an estimate of the column space or feature set $\Bone$ for the initial set of tasks, such that $\norm{\Bone^\top \B} \leq \delta$, we now analyze the performance of the plug-in estimator (which explicitly uses the estimate $\Bone$ in lieu of the unknown $\B$) on a new task. Recall we define the estimator for the new tasks by a projected linear regression estimator: $\halpha =\argmin_{\balpha} \norm{\y-\X \Bone \balpha}^2 \implies \halpha = (\Bone^\top \X^\top \X \Bone)^{-1} \Bone^\top \X^\top \y$.

Analyzing the performance of this estimator requires first showing that the low-dimensional empirical covariance and empirical correlation concentrate in $\tlO(r)$ samples and performing an error decomposition to compute the bias resulting from using $\Bone$ in lieu of $\B$ as the feature representation. We measure the performance the estimator with respect to its estimation error with respect to the underlying parameter $\B \balpha_0$; in particular, we use $\norm{\Bone \halpha-\B \balpha_0}^2$. Note that our analysis can accommodate covariates $\x_i$ generated from non-isotropic \textit{non-Gaussian} distributions. In fact the only condition we require on the design is that the covariates are sub-gaussian random vectors in the following sense.
\begin{assumption}
	Each covariate vector $\x_i$ is mean-zero, satisfies $\mE[\x \x^\top]=\mSigma$ such that $\sigma_{\max}(\mSigma) \leq \Cmax$ and $\sigma_{\min}(\mSigma) \geq \Cmin > 0$ and is $\mSigma$-sub-gaussian, in the sense that $
     \mE[\exp(\v^\top \x_i)] \leq \exp \left( \frac{\Vert \mSigma^{1/2} \v \Vert^2}{2} \right)$. Moreover, the additive noise $\epsilon_i$ is i.i.d. sub-gaussian with variance parameter $1$ and is independent of $\x_i$. 
	\label{assump:sg_design}
\end{assumption}
	In the context of the previous assumption we also define the conditioning number as $\cond = \Cmax/\Cmin$. Note that \cref{assump:design} immediately implies \cref{assump:sg_design}.

	Throughout this section we will let $\Bone$ and $\Btwo$ be orthonormal projection matrices spanning orthogonal subspaces which are rank $r$ and rank $d-r$ respectively---so that $\ran(\Bone) \oplus \ran(\Btwo) = \mR^d$.

\begin{proof}[Proof of \cref{thm:lr_transfer}]
	To begin we use the definition of 
	\begin{align*}
	\halpha = (\Bone^\top \X^\top \X \Bone)^{-1} \Bone \X^\top \y = (\Bone^\top \X^\top \X \Bone)^{-1} \Bone \X^\top \X \B \balpha_0 + (\Bone^\top \X^\top \X \Bone)^{-1} \Bone \X^\top \bepsilon
	\end{align*}
	to decompose the error as,
	\begin{align*}
		& (\Bone\halpha-\B \balpha_0) = \Bone (\Bone^\top \X^\top \X \Bone)^{-1} \Bone \X^\top \X \B \balpha_0 - \B \balpha_0 + \Bone(\Bone^\top \X^\top \X \Bone)^{-1} \Bone^\top \X^\top \bepsilon.
	\end{align*}
	Now squaring both sides of the equation gives,
	so \begin{align*}
	& \norm{\Bone\halpha-\B \balpha_0}^2 \leq 2(\norm{\Bone (\Bone^\top \X^\top \X \Bone)^{-1} \Bone \X^\top \X \B \balpha_0 - \B \balpha_0}^2+\norm{\Bone(\Bone^\top \X^\top \X \Bone)^{-1} \Bone^\top \X^\top \bepsilon}^2).
	\end{align*}
	The first bias term can be bounded by \cref{lem:test_bias}, while the the variance term can be bounded by \cref{lem:test_var} . Combining the results and using a union bound gives the result.
\end{proof}

We now present the lemmas which allow us to bound the variance terms in the aforementioned error decomposition. For the following two results we also track the conditioning dependence with respect $\Cmin$ and $\Cmax$. We first control the term arising from the projection of the additive noise onto the empirical covariance matrix.
\begin{lemma}
	Let the sequence of $n$ i.i.d. covariates $\x_i$ and $n$ i.i.d. additive noise variables $\epsilon_i$ satisfy \cref{assump:sg_design}. Then if $n \gtrsim \cond^2 r \log n$,
	\begin{align*}
		\norm{\Bone(\Bone^\top \X^\top \X \Bone)^{-1} \Bone^\top \X^\top \bepsilon}^2 \leq O \left( \frac{r \log n}{\Cmin n} \right),
	\end{align*}
	with probability at least $1-O(n^{-100})$.
	\label{lem:test_bias}
\end{lemma}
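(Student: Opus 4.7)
\textbf{Proof proposal for \cref{lem:test_bias}.} The plan is to rewrite the squared norm as a quadratic form in the noise $\bepsilon$ and bound it via the Hanson--Wright inequality, conditional on $\X$. Since $\Bone$ has orthonormal columns, $\Bone^\top \Bone = \I_r$, so
\[
\norm{\Bone (\Bone^\top \X^\top \X \Bone)^{-1} \Bone^\top \X^\top \bepsilon}^2 = \bepsilon^\top \mathbf{P} \bepsilon, \qquad \mathbf{P} := \X \Bone (\Bone^\top \X^\top \X \Bone)^{-2} \Bone^\top \X^\top \in \mR^{n\times n}.
\]
Using the cyclic property, $\tr(\mathbf{P}) = \tr((\Bone^\top \X^\top \X \Bone)^{-1})$ and $\tr(\mathbf{P}^2) = \tr((\Bone^\top \X^\top \X \Bone)^{-2})$, while the nonzero spectrum of $\mathbf{P}$ matches that of $(\Bone^\top \X^\top \X \Bone)^{-1}$. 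All three norms of interest ($\tr(\mathbf{P})$, $\fnorm{\mathbf{P}}$, $\norm{\mathbf{P}}$) are thus controlled by the smallest eigenvalue of $\Bone^\top \X^\top \X \Bone$.

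Next, I would control the $r \times r$ random matrix $\frac{1}{n}\Bone^\top \X^\top \X \Bone$. Because $\Bone^\top \x_i$ is an $r$-dimensional mean-zero sub-Gaussian vector with covariance $\Bone^\top \mSigma \Bone$ whose eigenvalues lie in $[\Cmin, \Cmax]$ under \cref{assump:sg_design}, a standard covariance concentration bound for sub-Gaussian vectors (e.g., Vershynin's theorem) yields, whenever $n \gtrsim \cond^2 (r + \log(1/\delta))$,
\[
\Bigl\| \tfrac{1}{n}\Bone^\top \X^\top \X \Bone - \Bone^\top \mSigma \Bone \Bigr\| \leq \tfrac{\Cmin}{2},
\]
with probability at least $1 - \delta$. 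Taking $\delta = n^{-100}$ and $n \gtrsim \cond^2 r \log n$, this event implies $\sigma_{\min}(\Bone^\top \X^\top \X \Bone) \geq n\Cmin/2$, which in turn gives
\[
\tr(\mathbf{P}) \leq \tfrac{2r}{n\Cmin}, \qquad \fnorm{\mathbf{P}} \leq \tfrac{2\sqrt{r}}{n\Cmin}, \qquad \norm{\mathbf{P}} \leq \tfrac{2}{n\Cmin}.
\]

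Finally, I would condition on $\X$ (legitimate since $\bepsilon \perp \X$ by \cref{assump:sg_design}) and invoke the Hanson--Wright inequality for sub-Gaussian $\bepsilon$: with probability at least $1 - \delta'$,
\[
\bepsilon^\top \mathbf{P} \bepsilon \leq \tr(\mathbf{P}) + C\sqrt{\log(1/\delta')}\,\fnorm{\mathbf{P}} + C\log(1/\delta')\,\norm{\mathbf{P}}.
\]
Setting $\delta' = n^{-100}$ and substituting the three deterministic bounds from the previous step yields
\[
\bepsilon^\top \mathbf{P} \bepsilon \leq O\!\left(\tfrac{r}{n\Cmin} + \tfrac{\sqrt{r\log n}}{n\Cmin} + \tfrac{\log n}{n\Cmin}\right) = O\!\left(\tfrac{r\log n}{n\Cmin}\right).
\]
A union bound over the covariance-concentration event and the Hanson--Wright event gives the claim with probability $1 - O(n^{-100})$. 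The main technical care is just in the covariance concentration step, ensuring the sub-Gaussian tail parameters behave correctly after projecting by $\Bone$; this is straightforward because orthogonal projection preserves the $\mSigma$-sub-Gaussian property with the projected covariance $\Bone^\top \mSigma \Bone$. No additional machinery beyond Hanson--Wright and standard covariance estimation is required.
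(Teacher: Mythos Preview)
Your proposal is correct and follows essentially the same approach as the paper: write the squared norm as a quadratic form in $\bepsilon$, apply Hanson--Wright conditionally on $\X$, and control $\tr(\mathbf{P})$, $\fnorm{\mathbf{P}}$, $\norm{\mathbf{P}}$ via covariance concentration for the $r$-dimensional projections $\Bone^\top \x_i$. The paper invokes its own \cref{lem:two_subspace_conc} and the perturbation \cref{lem:mat_pert} in place of your direct appeal to Vershynin-style covariance concentration, but the argument is otherwise identical.
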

\begin{proof}
Since $\norm{\Bone(\Bone^\top \X^\top \X \Bone)^{-1} \Bone^\top \X^\top \bepsilon}^2 \leq \norm{(\Bone^\top \X^\top \X \Bone)^{-1} \Bone^\top \X^\top \bepsilon}^2$, it suffices to bound the latter term. Consider $\bepsilon^\top \underbrace{\frac{1}{n} \frac{\X \Bone}{\sqrt{n}} (\Bone^\top \frac{\X^\top \X}{n} \Bone)^{-2} \frac{\Bone^\top \X^\top}{\sqrt{n}}}_{\A} \bepsilon$. So applying the Hanson-Wright inequality \citep[Theorem 6.2.1]{vershynin2018high} (conditionally on $\X$) to conclude that $\Pr[\abs{\bepsilon^\top \A \bepsilon - \E[\bepsilon^\top \A \bepsilon]} \geq t] \leq 2 \exp(-c \min(\frac{t^2}{\norm{\A}_F^2}, \frac{t}{\norm{\A}}))$. Hence $\bepsilon^\top \A \bepsilon \leq \E[\bepsilon^\top \A \bepsilon] + O(\norm{\A}_F \sqrt{\log(2/\delta_1)})+ O(\norm{\A} \log(2/\delta_1))$ with probability at least $1-\delta_1$.

	 Now using cyclicity of the trace we have that $\mE[\bepsilon^\top \A \bepsilon] = \frac{1}{n} \tr[(\Bone^\top \frac{\X^\top \X}{n} \Bone)^{-1}]$. Similarly $\norm{\A} = \frac{1}{n}\norm{(\Bone^\top \frac{\X^\top \X}{n} \Bone)^{-1}} = \frac{1}{n}\norm{(\E + \Bone^\top \mSigma \Bone)^{-1}}$. Applying \cref{lem:two_subspace_conc} to the matrix $\E$ with $\delta=n^{-200}$ and assuming $n \gtrsim \cond^2 r \log(1/\delta) \gtrsim \cond^2 r \log n$ shows that $\norm{(\Bone^\top \mSigma \Bone)^{-1} \E} \leq \frac{1}{4}$. Also note that on this event and this regime of sufficiently large $n$, this concentration result shows that $\sigma_{\min}(\Bone^\top \frac{\X^\top \X}{n} \Bone) > \Cmin/2$,  so the matrix is invertible. Hence an application of \cref{lem:mat_pert} shows that $\norm{\A} \leq \frac{1}{n} (\frac{1}{\Cmin} \cdot(1+\cond  \sqrt{\frac{r \log n}{n}} )) \leq O(\frac{1}{\Cmin n})$. Similarly since $\frac{\X \Bone}{\sqrt{n}}$ is rank $r$ and invertible on this event, it follows $\fnorm{\A} \leq \sqrt{r} \norm{\A}\leq O(\frac{\sqrt{r}}{\Cmin n})$ and that $\frac{1}{n} \tr[(\Bone^\top \frac{\X^\top \X}{n} \Bone)^{-1}] \leq \frac{r}{\Cmin n}$.

	Hence taking $\delta_1 = n^{-200}$, and using the union bound, we conclude that $\bepsilon^\top \A \bepsilon \leq \frac{1}{\Cmin} \cdot O(\frac{r}{n}) + O(\frac{\sqrt{r \log n}}{n}) + O(\frac{\log n}{n}) \leq O(\frac{r \log n}{\Cmin n})$ with probability at least $1-O(n^{-100})$.
\end{proof}

We now control the error term which arises both from the variance in the random design matrix $\X$ and the bias due to mismatch between $\Bone$ and $\B$.

\begin{lemma}
	Let the sequence of $n$ i.i.d. covariates $\x_i$ satisfy the design assumptions in \cref{assump:sg_design}, and assume $\sin(\Bone, \B) \leq \delta \leq 1$. Then if $n \gtrsim \cond^2 r \log n$,
	\begin{align*}
		\norm{\Bone (\Bone^\top \X^\top \X \Bone)^{-1} \Bone \X^\top \X \B \balpha_0 - \B \balpha_0}^2 \leq O( \norm{\balpha_0}^2 \cond^2 \delta^2 ),
	\end{align*}
	with probability at least $1-O(n^{-100})$.
	\label{lem:test_var}
\end{lemma}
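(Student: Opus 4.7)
The main idea is to exploit the partial cancellation between $(\Bone\trans \X\trans \X \Bone)^{-1}$ and $\X\trans \X \B$ whenever we can project $\B$ onto the column space of $\Bone$. Specifically, I split $\B\balpha_0$ into its components parallel and orthogonal to the estimated subspace,
\begin{equation*}
\B \balpha_0 \;=\; \Bone \Bone\trans \B \balpha_0 \;+\; \Btwo \Btwo\trans \B \balpha_0,
\end{equation*}
where $\Btwo$ is the orthonormal complement to $\Bone$. Substituting this decomposition into the expression of interest, the parallel piece cancels exactly via $(\Bone\trans \X\trans \X \Bone)^{-1}(\Bone\trans \X\trans \X \Bone) = \I_r$, leaving
\begin{equation*}
\Bone (\Bone\trans \X\trans \X \Bone)^{-1} \Bone\trans \X\trans \X \B \balpha_0 - \B \balpha_0 \;=\; -\Btwo \Btwo\trans \B \balpha_0 \;+\; \Bone (\Bone\trans \X\trans \X \Bone)^{-1} \Bone\trans \X\trans \X \Btwo \Btwo\trans \B \balpha_0.
\end{equation*}

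The first term is immediately controlled: since $\sin\theta(\Bone, \B) \le \delta$, we have $\norm{\Btwo\trans \B} \le \delta$, so $\norm{\Btwo \Btwo\trans \B \balpha_0} \le \delta \norm{\balpha_0}$. For the second term, I use orthonormality of $\Bone$ and submultiplicativity,
\begin{equation*}
\norm{\Bone (\Bone\trans \X\trans \X \Bone)^{-1} \Bone\trans \X\trans \X \Btwo \Btwo\trans \B \balpha_0}
\;\le\; \norm{(\Bone\trans \X\trans \X \Bone)^{-1}} \cdot \norm{\Bone\trans \X\trans \X \Btwo} \cdot \norm{\Btwo\trans \B \balpha_0}.
\end{equation*}
The last factor is at most $\delta \norm{\balpha_0}$ as above. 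For the first two factors I appeal to matrix concentration for sub-gaussian designs (the same tool \cref{lem:two_subspace_conc} invoked in the proof of \cref{lem:test_bias}) applied twice: once to the symmetric $r \times r$ matrix $\Bone\trans (\X\trans \X/n) \Bone$, giving $\sigma_{\min}(\Bone\trans \X\trans \X \Bone / n) \ge \Cmin/2$ and hence $\norm{(\Bone\trans \X\trans \X \Bone)^{-1}} \le O(1/(n\Cmin))$, and once to the cross term $\Bone\trans (\X\trans \X/n) \Btwo$, whose expectation has operator norm bounded by $\norm{\Bone\trans \mSigma \Btwo} \le \Cmax$, yielding $\norm{\Bone\trans \X\trans \X \Btwo/n} \le \Cmax + o(\Cmax) = O(\Cmax)$ with high probability provided $n \gtrsim \cond^2 r \log n$.

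Combining, the second term has norm $O(\cond \delta \norm{\balpha_0})$, so the squared error is at most $O(\delta^2 + \cond^2 \delta^2)\norm{\balpha_0}^2 = O(\cond^2 \delta^2 \norm{\balpha_0}^2)$. A union bound over the two concentration events (each of probability $\ge 1 - n^{-100}$) gives the stated probability. The key technical point---and what would be the main obstacle if I tried to bound things more naively---is recognizing that splitting $\B = \Bone\Bone\trans \B + \Btwo\Btwo\trans \B$ before invoking concentration keeps a factor of $\delta$ attached to every cross term, so that the bias of an inexact feature estimate enters the final rate only through $\delta^2$ rather than through a full-dimensional $d/n$ factor.
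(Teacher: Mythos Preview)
Your decomposition and overall strategy match the paper's exactly. There is, however, a genuine gap in the step where you invoke \cref{lem:two_subspace_conc} on the cross term $\Bone^\top (\X^\top\X/n)\Btwo$. The complement $\Btwo$ has rank $d-r$, not $r$, so the lemma does not apply as stated; running its covering argument with $\Btwo$ forces a net over $\mathbb{S}^{d-r-1}$ and produces a fluctuation of order $\Cmax\sqrt{(d-r)/n}$ rather than $\Cmax\sqrt{r/n}$. Under the sole hypothesis $n\gtrsim \cond^2 r\log n$ this is \emph{not} $o(\Cmax)$, and your bound on the second term would acquire an extra $\sqrt{d/n}$ factor---precisely the full-dimensional dependence the lemma is meant to avoid.

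The repair is not to separate $\Btwo$ from its right factor. Keep $\Btwo\Btwo^\top\B$ together: it is a $d\times r$ matrix of rank at most $r$ and operator norm $\le\delta$, so write $\Btwo\Btwo^\top\B=\tilde\B\,\mS$ where $\tilde\B\in\mR^{d\times r}$ has orthonormal columns and $\norm{\mS}\le\delta$. Then the second term is bounded by $\norm{(\Bone^\top\X^\top\X\Bone)^{-1}\Bone^\top\X^\top\X\,\tilde\B}\cdot\delta\norm{\balpha_0}$, and now \cref{lem:two_subspace_conc} applies to $\Bone^\top(\X^\top\X/n)\tilde\B$ with both projections rank $r$, yielding the desired $O(\cond)$ bound under $n\gtrsim\cond^2 r\log n$. (The paper's write-up has its own notational slip here---it states the bound for $\B$ rather than $\Btwo$---but the point is the same: concentration must be applied to a rank-$r$ object on the right, never to $\Btwo$ itself.)
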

\begin{proof}
To control this term we first insert a copy of the identity $\I_d = \Bone \Bone^\top+\Btwo \Btwo^\top$ to allow the variance term in the design cancel appropriately in the span of $\Bone$; formally, 
	\begin{align*}
		& \Bone (\Bone^\top \X^\top \X \Bone)^{-1} \Bone \X^\top \X \B \balpha_0 -\B \balpha_0 = \Bone (\Bone^\top \X^\top \X \Bone)^{-1} \Bone \X^\top \X (\Bone \Bone^\top + \Btwo \Btwo^\top) \B \balpha_0 - \B \balpha_0 = \\
		& (\Bone \Bone^\top - \I) \B \balpha_0 + \Bone (\Bone^\top \X^\top \X \Bone)^{-1} \Bone \X^\top \X \Btwo \Btwo^\top \B \balpha_0 = \\
		& \Btwo \Btwo^\top \B \balpha_0 + \Bone (\Bone^\top \X^\top \X \Bone)^{-1} \Bone \X^\top \X \Btwo \Btwo^\top \B \balpha_0 \implies \\
		& \norm{\Bone (\Bone^\top \X^\top \X \Bone)^{-1} \Bone \X^\top \X \B \balpha_0 -\B \balpha_0}^2 \leq 2(\norm{\balpha_0}^2 \delta^2 + \norm{(\Bone^\top \X^\top \X \Bone)^{-1} \Bone \X^\top \X \Btwo}^2 \delta^2 \norm{\balpha_0}^2).
	\end{align*}
	We now turn to bounding the second error term, $\norm{(\Bone^\top \X^\top \X \Bone)^{-1} \Bone^\top \X^\top \X \B}^2$. Let $\E_1 = \Bone^\top \frac{\X^\top \X}{n} \Bone-\Bone^\top \mSigma \Bone$ and $\E_2 = \Bone^\top \frac{\X^\top \X}{n} \B - \Bone \mSigma \B$.  Applying \cref{lem:two_subspace_conc} to the matrix $\E_1$ with $\delta = n^{-200}$ and assuming $n \gtrsim \cond^2 r \log(1/\delta) \gtrsim \cond^2 r \log n$ shows that $\norm{(\Bone^\top \mSigma \Bone)^{-1} \E_1} \leq \frac{1}{4}$ and $\norm{\E_1} \leq O(\Cmax \sqrt{\frac{r \log n}{n}})$ with probability at least $1-O(n^{-100})$. A further application of \cref{lem:mat_pert} shows that $(\Bone^\top \frac{\X^\top \X}{n} \Bone)^{-1} = (\E_1 + \Bone^\top \mSigma \Bone)^{-1} = (\Bone^\top \mSigma \Bone)^{-1} + \F $, where $\norm{\F} \leq \frac{4}{3} \norm{(\Bone^\top \mSigma \Bone)^{-1}} \norm{\E_1 (\Bone^\top \mSigma \Bone)^{-1}}$ on this event. Similarly, defining $\Bone^\top \frac{\X^\top \X}{n} \B = \E_2 + \Bone^\top \mSigma \B$	and applying \cref{lem:two_subspace_conc} again but to the matrix $\E_2$ with $\delta = n^{-200}$ and assuming $n \gtrsim \cond^2 r \log(1/\delta) \gtrsim \cond^2 r \log n$,	guarantees that $\norm{\E_2} \leq O(\Cmax (\sqrt{\frac{r \log n}{n}})$ with probability at least $1-O(n^{-100})$. 
		
		Hence on the intersection of these two events, 
		\begin{align*}& \norm{(\Bone^\top \frac{\X^\top \X}{n} \Bone)^{-1} \Bone^\top \frac{\X^\top \X}{n} \B} = \norm{((\Bone^\top \mSigma \Bone)^{-1}+\F)(\Bone^\top \mSigma \B + \E_2)} \leq \\
		& \norm{(\Bone^\top \mSigma \Bone)^{-1} \Bone^\top \mSigma \B} + \norm{(\Bone^\top \mSigma \Bone)^{-1} \E_2} + \norm{\F \Bone^\top \mSigma \B} + \norm{\E_2 \F} \leq \\
		& \cond + O(\cond \sqrt{\frac{r \log n}{n}}) + O(\cond^2 \sqrt{\frac{r \log n}{n}}) + O(\cond^2 \frac{r \log n}{n}) \leq \\
		& \cond + O(\cond^2 \sqrt{\frac{r \log n}{n}}) = O(\cond),
		\end{align*}
		under the condition $n \gtrsim \cond^2 \frac{r \log n}{n}$.
		Taking a union bound over the aforementioned events and combining terms gives the result.
\end{proof}

Finally we present a concentration result for random matrices showing concentration when the matrices are projected along two (potentially different) subspaces.

\begin{lemma}
	Suppose a sequence of i.i.d. covariates $\x_i$ satisfy the design assumptions in \cref{assump:sg_design}. Then, if $\A$ and $\B$ are both rank $r$ orthonormal projection matrices,	
	\begin{align*}
		\norm{(\A^\top \frac{\X^\top \X}{n} \B) - \A^\top \mSigma \B)}\leq O(\Cmax (\sqrt{\frac{r}{n}}+\frac{r}{n} + \sqrt{\frac{\log(1/\delta)}{n}} + \frac{\log(1/\delta)}{n})),
	\end{align*}
	with probability at least $1-\delta$. 
	\label{lem:two_subspace_conc}
\end{lemma}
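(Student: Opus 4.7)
The plan is to reduce the spectral norm of the $r \times r$ matrix $\mat{M} := \A^\top \frac{\X^\top \X}{n} \B - \A^\top \mSigma \B$ to a uniform scalar sub-exponential concentration bound via an $\epsilon$-net argument on two copies of the unit sphere $S^{r-1}$. This is the standard recipe for covariance-type quantities under sub-gaussian designs, adapted to the two-subspace projection here.

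First, I would write the spectral norm variationally as $\norm{\mat{M}} = \sup_{\u,\v \in S^{r-1}} \u^\top \mat{M} \v$. For fixed $\u,\v$, set $\tilde{\u} = \A \u$ and $\tilde{\v} = \B \v$. Since $\A$ and $\B$ have orthonormal columns, $\norm{\tilde{\u}} = \norm{\tilde{\v}} = 1$, so
\begin{align*}
\u^\top \mat{M} \v = \sumton \left( (\tilde{\u}^\top \x_i)(\tilde{\v}^\top \x_i) - \mE[(\tilde{\u}^\top \x)(\tilde{\v}^\top \x)] \right).
\end{align*}
Under \cref{assump:sg_design}, each $\tilde{\u}^\top \x_i$ is a mean-zero sub-gaussian scalar with proxy $\sqrt{\Cmax}$, and likewise $\tilde{\v}^\top \x_i$. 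Hence, by a product-of-sub-gaussians bound (e.g.\ \cref{lem:prod_sg}), the summand is centered and sub-exponential with parameters $(\nu, \alpha)$ both of order $\Cmax$. A standard Bernstein inequality then gives, for any fixed pair,
\begin{align*}
\left| \u^\top \mat{M} \v \right| \leq C \Cmax \left( \sqrt{\frac{\log(2/\delta')}{n}} + \frac{\log(2/\delta')}{n} \right)
\end{align*}
with probability at least $1-\delta'$.

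Next, I would take $(1/4)$-nets $\cN_\A, \cN_\B$ of $S^{r-1}$ in Euclidean norm, each of cardinality at most $9^r$, and union-bound the display above over the $\leq 81^r$ pairs in $\cN_\A \times \cN_\B$ with $\delta' = \delta/81^r$. This ensures the scalar bound holds simultaneously with $\log(2/\delta') = O(r + \log(1/\delta))$. A routine rounding step -- writing $\u = \u_0 + \Delta_\u$ with $\u_0 \in \cN_\A$, $\norm{\Delta_\u} \leq 1/4$, and similarly for $\v$ -- expresses the supremum over $S^{r-1} \times S^{r-1}$ as a convergent geometric series in the supremum over the net, losing only a constant factor. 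Splitting the resulting bound via $\sqrt{a+b} \leq \sqrt{a}+\sqrt{b}$ yields precisely the four-term expression in the lemma statement.

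The main technical point to verify is that the product $(\tilde{\u}^\top \x)(\tilde{\v}^\top \x)$ has sub-exponential parameters controlled uniformly by $\Cmax$. This is where orthonormality of $\A$ and $\B$ is essential: it ensures $\norm{\mSigma^{1/2}\tilde{\u}}^2 \leq \Cmax$ (and likewise for $\tilde{\v}$) for every $\u,\v$ on the sphere, rather than a larger quantity depending on $\norm{\A}$ or $\norm{\B}$. Beyond this, the argument is a textbook combination of scalar Bernstein with a Euclidean net covering, and the asymmetry $\A \neq \B$ costs nothing more than carrying two nets instead of one.
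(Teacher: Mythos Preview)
Your proposal is correct and follows essentially the same approach as the paper's proof: both reduce to scalar sub-exponential concentration for fixed directions via \cref{lem:prod_sg}, then lift to the spectral norm by an $\epsilon$-net over two copies of $S^{r-1}$ with a union bound and a standard Lipschitz/rounding argument. The only cosmetic differences are the net resolution (you use $1/4$, the paper uses $1/5$) and the phrasing of the discretization step.
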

\begin{proof}
The result follows by a standard sub-exponential tail bound and covering argument. First note that for any fixed $\u, \v \in \mathbb{S}^{r-1}$, we have that $\u^\top \A^\top \x_i$ and $\v^\top \B^\top \x_i$ are both $\sG(\sqrt{\Cmax})$. Hence for any fixed $\u, \v \in \mathbb{S}^{r-1}$, $\u^\top \A^\top \x_i \x_i^\top \B \v - \u^\top \A^\top \mSigma \B \v$ is $\sE(8 \Cmax, 8 \Cmax)$.

Now, let $S$ denote a $\epsilon$-cover of $\mathbb{S}^{r-1}$ which has cardinality at most $(\frac{3}{\epsilon})^r$ by a volume-covering argument. Hence for $\epsilon=\frac{1}{5}$,
\begin{align*}
	\Pr[\sup_{\u \in S^{r-1}, \v \in S^{r-1}}  \u^\top \A^\top \x_i \x_i^\top \B \v - \u^\top \A^\top \mSigma \B \v \geq t ] \leq 225^r \exp(- c n \min(t^2/\Cmax^2, t/\Cmax))),
\end{align*}
using a union bound over the covers and a sub-exponential tail bound. Taking $t = C \cdot \Cmax (\sqrt{\frac{r}{n}}+\frac{r}{n} + \sqrt{\frac{\log(1/\delta)}{n}} + \frac{\log(1/\delta)}{n})$ for sufficiently large $C$, shows that $225^r \exp(- c n \min(t^2/\Cmax^2, t/\Cmax)))
 \leq \delta$. Finally a standard Lipschitz continuity argument yields
\begin{align*}
\lefteqn{ \sup_{\u \in \mathbb{S}^{r-1}, \v \in \mathbb{S}^{r-1}} \u^\top \frac{1}{n} \sum_{i=1}^{n} \u^\top \A^\top \x_i \x_i^\top \B \v - \u^\top \A^\top \mSigma \B \v } \\
& \leq \frac{1}{1-3 \epsilon} \sup_{\u \in S^{r-1}, \v \in S^{r-1}} \u^\top \A^\top \x_i \x_i^\top \B \v - \u^\top \A^\top \mSigma \B \v,
\end{align*}
which gives the result.
\end{proof}

\section{Proofs for \cref{sec:lf_lb}}

We begin by presenting the proof of the main statistical lower bound for recovering the feature matrix $\B$ and relevant auxiliary results. Following this we provide relevant background on Grassmann manifolds.

As mentioned in the main text our main tool is to use is a non-standard variant of the Fano method, along with suitable bounds on the cardinality of the packing number  and the distributional covering number, to obtain minimax lower bound on the difficulty of estimating $\B$. We instantiate the $f$-divergence based lower bound below (which we instantiate with $\chi^2$-divergence). We restate this result for convenience. 
\begin{lemma}\citep[Theorem 4.1]{guntuboyina2011lower}
For any increasing function $\ell : [0, \infty) \to [0, \infty)$,
\begin{align}
\inf_{\hat{\theta}} \sup_{\theta \in \Theta} \Pr_{\theta}[\ell(\rho(\hat{\theta}, \theta)) \geq  \ell(\eta/2)] \geq \sup_{\eta > 0, \epsilon > 0} \left \{ 1- \left(\frac{1}{N(\eta)} + \sqrt{\frac{(1+\epsilon^2) M_C(\epsilon, \Theta)}{N(\eta)}} \right) \right \}. \nonumber
\end{align}
\label{thm:yang_barron}
\end{lemma}
In the context of the previous result $N(\eta)$ denotes a lower bound on the $\eta$-packing number of the metric space $(\Theta, \rho)$. Moreover, $M_C(\epsilon, \Theta)$ is a positive real number for which there exists a set $G$ with cardinality $\leq M_C(\epsilon, S)$ and probability measures $Q_{\alpha}$, $\alpha \in G$ such that $\sup_{\theta \in S} \min_{\alpha \in G} \chi^2(\Pr_{\theta}, Q_{\alpha}) \leq \epsilon^2$, where $\chi^2$ denotes the chi-squared divergence. In words, $M_{C}(\epsilon, S)$ is an upper bound on the $\epsilon$-covering on the space $\{ \Pr_{\theta} : \theta \in S \}$ when distances are measured by the square root of the $\chi^2$-divergence.

We obtain the other term capturing the difficulty of estimating $\hB$ with respect to the task diversity minimum eigenvalue by constructing a pair of feature matrices which are hard to distinguish for a particular ill-conditioned task matrix $\A$.
Using these two results we provide the proof of our main lower bound.

\begin{proof}[Proof of \cref{thm:feature_lb_yb}]
	 For our present purposes we simply take $\ell(\cdot)$ to be the identity function in our application of \cref{thm:yang_barron} as we obtain the second term in the lower bound. Then by the duality between packing and covering numbers we have that $\log N \geq \log M$ at the same scale (see for example \citet[Lemma 5.5]{wainwright2019high}), so once again by \cref{prop:pajor} we have that $\log N(\eta) \geq  r(d-r) \log(\frac{c_1}{\eta})$. Then applying \cref{lem:mut_info_yb} we have that $M_C(\epsilon, \Theta') \leq (\frac{c_2 n }{\log(1+\epsilon^2)})^{r(d-r)/2}$. For convenience we set $k=r(d-r)$ in the following. We now choose the pair $\eta$ and $\epsilon$ appropriately in \cref{thm:yang_barron}. The lower bound writes as,
	 \begin{align*}
	     1-\left( \frac{1}{N(\eta)} + \sqrt{\frac{(1+\epsilon^2) M_{C}(\epsilon, \Theta)}{N(\eta)}} \right) \geq 1- \left( (\frac{\eta}{c_2})^k + (\frac{\eta}{c_2})^{k/2} \cdot (c_1 n)^{k/4} \frac{(1+\epsilon^2)^{1/2}}{\log(1+\epsilon^2)^{k/4}}\right),
	 \end{align*}
	 with the implicit constraint that $\epsilon' = \sqrt{\frac{2}{n} \log(1+\epsilon^2)} < 1$. A simple calculus argument shows that $\epsilon \to \sqrt{1+\epsilon^2}/(\log(1+\epsilon^2))^{k/4}$ is minimized when $1+\epsilon^2 = e^{k/2}$ (subject to $\sqrt{\frac{1}{2n} \log(1+\epsilon^2)} < 1)$. This constraint can always be ensured by taking $n > \frac{k}{4}$. We then have that the lower bound becomes
	 \begin{align}
	     1-\left( (\frac{\eta}{c_2})^k + (2 e \frac{c_1}{c_2^2} \eta^2 n/k)^{k/4} \right).
	 \end{align}
	 We now take $\eta = C \sqrt{k/n}$ so the bound simplifies to $1-\left( (\frac{\eta}{c_2})^k + (2 e \frac{c_1}{c_2^2} \eta^2 n /k)^{k/4} \right) = 1-\left( (\frac{C}{c_2} \sqrt{k/n})^{k} + (2e \frac{c_1}{c_2^2} C^2)^{k/4} \right)$. By choosing $C$ to be sufficiently small and taking $n > k$ we ensure that $ 1-\left( (\frac{C}{c_2} \sqrt{k/n})^{k} + (2e \frac{c_1}{c_2^2} C^2)^{k/4} \right) \geq \frac{99}{100}$. Finally, under the condition $r \leq \frac{d}{2}$ we have that $k \geq \frac{dr}{2}$. Combining with \cref{thm:yang_barron} gives the result for the second term.
	 
	 The first term is lower bounded using the LeCam two-point method in an indepedent fashion as a consequence of \cref{lem:lecam_two_point}. A union bound over the events on which the lower bounds hold give the result. Note that a single choice of $\A$ matrix can in fact be used for both lower bounds by simply opting for the choice of $\A$ used in \cref{lem:lecam_two_point}.
	 
	\end{proof}

To implement the lower bound we require an upper bound on the covering number in the space of distributions of $\Pr_{\B}$. Throughout we use standard properties of the $\chi^2$-divergence which can be found in \citet[Section 2.4]{tsybakov2008introduction}. 

\begin{lemma}
		\label{lem:mut_info_yb}
	Suppose $n$ data points, $(\x_i, y_i)$, are generated from the model in \eqref{eq:model_main} with i.i.d. covariates $\x_i \sim \cN(0, \I_d)$ and independent i.i.d. noise $\epsilon_i \sim \cN(0,1)$. Further, assume the task parameters satisfy \cref{assump:task} with each task normalized to $\norm{\balpha_i}=\frac{1}{2}$. Then if $r \leq \frac{d}{2}$,
	\begin{align*}
	    M_{C}(\epsilon, \Theta') \leq  \left (\frac{c n }{\log(1+\epsilon^2)} \right)^{r(d-r)/2},
	\end{align*}
whenever $\sqrt{\frac{1}{2n} \log(1+\epsilon^2)} < 1$.
\end{lemma}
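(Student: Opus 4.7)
The plan is to upper bound $\chi^2(\Pr_{\B^1}, \Pr_{\B^2})$ in terms of the subspace angle $\sin \theta(\B^1, \B^2)$, and then use Pajor's metric-entropy bound on $\Gr$ (\cref{prop:pajor}) at the right angular scale to convert a covering of $\Gr$ into a covering of the distributional family $\{\Pr_\B\}$ under the square root of the $\chi^2$-divergence. Since under the Gaussian design the marginal over $\x$ is shared across the two parameter choices, the divergence computation will reduce to an expectation over $\x$ of a Gaussian-conditional $\chi^2$, which is explicit in closed form.

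First, I would tensorize: since the $n$ samples are i.i.d.\ with shared $\x$-marginal, $1 + \chi^2(\Pr^n_{\B^1}, \Pr^n_{\B^2}) = \prod_{i=1}^n \mE_{\x}[1 + \chi^2(\cN(\x^\top \B^1 \balpha_{t(i)}, 1), \cN(\x^\top \B^2 \balpha_{t(i)}, 1))]$. Using $\chi^2(\cN(\mu_1,1),\cN(\mu_2,1)) = e^{(\mu_1-\mu_2)^2} - 1$ together with the Gaussian MGF identity $\mE[e^{Z^2}] = (1-2\sigma^2)^{-1/2}$ for $Z \sim \cN(0,\sigma^2)$ with $\sigma^2 < 1/2$, each factor collapses to $(1 - 2\|\Delta_\B \balpha_{t(i)}\|^2)^{-1/2}$, where $\Delta_\B := \B^1 - \B^2 \R$ with $\R$ the orthogonal alignment between the two representatives. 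With an equal number of samples per task this gives
\begin{equation*}
\log(1+\chi^2(\Pr^n_{\B^1},\Pr^n_{\B^2})) = -\frac{n}{2t}\sum_{j=1}^t \log(1 - 2\|\Delta_\B \balpha_j\|^2).
\end{equation*}

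Next, I would apply $-\log(1-x) \leq 2x$ for $x \in [0, 1/2]$ together with $\|\Delta_\B \balpha_j\|^2 \leq \|\Delta_\B\|^2 \|\balpha_j\|^2$ and the normalization $\|\balpha_j\| = 1/2$ to obtain $\log(1+\chi^2) \leq (n/2) \|\Delta_\B\|^2$. The aligned discrepancy satisfies $\|\Delta_\B\|^2 \leq c \sin^2\theta(\B^1,\B^2)$ (this is a standard consequence of the CS decomposition and the optimality of $\R$). Inverting, I would choose $\delta^2 = \log(1+\epsilon^2)/(cn)$ so that any pair $\B^1,\B^2$ with $\sin\theta(\B^1,\B^2) \leq \delta$ automatically satisfies $\chi^2(\Pr^n_{\B^1},\Pr^n_{\B^2}) \leq \epsilon^2$; the hypothesis $\sqrt{\log(1+\epsilon^2)/(2n)} < 1$ guarantees $\delta$ is in the regime where Pajor's covering estimate of $(c'/\delta)^{r(d-r)}$ applies, and also where the condition $2\|\Delta_\B \balpha_j\|^2 < 1/2$ used above is valid. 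Substituting $\delta$ yields the bound $M_C(\epsilon, \Theta') \leq (cn/\log(1+\epsilon^2))^{r(d-r)/2}$.

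The main obstacle is the alignment step: $\B$ is only determined up to a right rotation, so the raw difference $\B^1 - \B^2$ need not be small even when the underlying subspaces coincide, and the $\chi^2$ identity must be phrased in terms of the aligned $\Delta_\B$. Handling this carefully (and verifying that the aligned representatives are exactly those for which the Pajor covering of $\Gr$ is defined, so that the two coverings compose) is the delicate part; once that is settled, the MGF identity and the tensorization reduce the whole argument to a short chain of elementary inequalities.
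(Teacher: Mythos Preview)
Your overall plan---tensorize the $\chi^2$, evaluate the Gaussian-conditional $\chi^2$ via the MGF $\mE[e^{Z^2}]=(1-2\sigma^2)^{-1/2}$, bound the exponent by $\sin\theta$, and then invoke \cref{prop:pajor}---is exactly the paper's route.

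The gap is precisely at the step you flag as delicate, and your proposed phrasing does not resolve it. The $\chi^2$-divergence between $\Pr_{\B^1}$ and $\Pr_{\B^2}$ is determined by the actual regression vectors $\B^1\balpha_j$ and $\B^2\balpha_j$; the per-sample factor that falls out of the MGF identity is $(1-2\|(\B^1-\B^2)\balpha_{t(i)}\|^2)^{-1/2}$, with the \emph{raw} difference $\B^1-\B^2$, not with $\Delta_\B=\B^1-\B^2\R$. Replacing $\B^2$ by $\B^2\R$ changes the distribution (the $\balpha_j$ are fixed in the lower-bound construction), and the covering measures $Q_\alpha$ in the definition of $M_C$ must be chosen once and for all---the alignment $\R$ cannot be allowed to depend on the point $\B^1$ being covered. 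The paper instead writes
\[
\|\B^i\balpha_k-\B^j\balpha_k\|^2 \;=\; 2\,\balpha_k^\top\bigl(\I_r-(\B^i)^\top\B^j\bigr)\balpha_k
\]
and bounds this by $2\|\balpha_k\|^2\bigl(1-\cos\theta_1\bigr) \le 2\|\balpha_k\|^2\sin^2\theta_1$ directly from the principal-angle decomposition, never introducing an $\R$. The freedom one actually has is in choosing the orthonormal representatives that \emph{define} the family $\{\Pr_\B:\B\in\Gr\}$ and the covering set in the first place (so that $(\B^i)^\top\B^j$ is symmetric positive semidefinite for nearby pairs); once that is done the raw difference $\B^1-\B^2$ is already controlled by $\sin\theta(\B^1,\B^2)$ and the rest of your chain of inequalities goes through unchanged.
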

\begin{proof}
	We first upper bound the $\chi^2$ divergence between two data distributions for two distinct $\B^{i}$ and $\B^{j}$. Now the joint distribution over the observations for each measure $\Pr_{\B^{i}}$ can be written as $\Pr_{\B^{i}} \equiv \Pi_{k=1}^{t} p(\X_k) p(\y|\X_k, \B^{i}, \balpha_j)$, where $p(\X_k)$ corresponds to the density of the Gaussian design matrix, and $p(\y|\X_k, \B^{i}, \balpha_k)$ the Gaussian conditionals of the observations $\y$.
	So using standard properties of the $\chi^2$-divergence we find that,
	\begin{align*}
		& \chi^2(\Pr_{\B^{i}}, \Pr_{\B^{j}}) = \Pi_{k=1}^{t} (1+ \mE_{\X_k} [\chi^2( p(\y | \X_k, \B^{i}), p(\y | \X_k, \B^{j}))])-1 = \\
		& \Pi_{k=1}^{t} \mE_{\X_k} \left[\exp \left(\norm{\X_k (\B^{i} \balpha_k - \B^{j} \balpha_k)}^2 \right) \right]-1.
	\end{align*}
	Now note that $\norm{(\B^{i} \balpha_k - \B^{j} \balpha_k)}^2 \leq 2\norm{\balpha_k}^2 \sigma_{1}(\I_r-(\B^{i})^\top \B^{j})$. Recognizing $\sigma_{r}( (\B^{i})^\top \B^{j}) = \cos \theta_1(\B^{i}, \B^{j})$ and $\norm{(\B_{\perp}^{i})^\top \B^{j}} = \sin \theta_1$, where $\theta_1$ is largest principal angle between the subspaces, we have that $1-\sigma_{r}( (\B^{i})^\top \B^{j}) = 1-\sqrt{1-\norm{(\B_{\perp}^{i})^\top \B^{j}}^2} \leq \norm{(\B_{\perp}^{i})^\top \B^{j}}^2 \leq 1$, using the elementary inequality $1-\sqrt{1-x^2} \leq x^2$ for $0 \leq x \leq 1$. Thus, $\norm{(\B^{i} \balpha_t - \B^{j} \balpha_t)}^2 \leq \frac{1}{2}$.
	
	Now we use the identity that for $\x \sim \mathcal{N}(0,\I_d)$, and $\norm{\v} \leq \frac{1}{2}$ that $\mE_{\x} \exp((\v^\top \x)^2) = \mE_{\x} [\exp (\norm{\v}^2 ((\frac{\v^\top}{\norm{\v}} \x)^2 - 1)] \exp (\norm{\v}^2) = \frac{\exp(-\norm{\v}^2))}{\sqrt{1-2 \norm{\v}^2}} \cdot \exp (\norm{\v}^2) \leq \exp(2 \norm{\v}^4) \cdot \exp (\norm{\v}^2) \leq \exp(2 \norm{\v}^2)$. Hence combining the above two facts we obtain, 
	\begin{align*}
		& \chi^2(\Pr_{\B^{i}}, \Pr_{\B^{j}}) \leq \exp \left(2 \sum_{k=1}^{t} n_t \norm{\B^{i} \balpha_k - \B^{j} \balpha_k}^2 \right)-1 \leq \exp(2n \cdot \sin^2 \theta(\B^i, \B^j))-1
	\end{align*}
	applying \cref{lem:lb_trace} in the inequality with a rescaling.
	Hence to ensure that $\chi^2(\Pr_{\B^{i}, \B^{j}}) \leq \epsilon^2$ we take $\B^{i}$ to be the closest element in a $\epsilon'$ cover, $S$, of $\Gr$ to $\B^{j}$. Since further $\chi^2(\Pr_{\B^{i}}, \Pr_{\B^{j}}) \leq \exp(2 n (\epsilon')^2)-1$ this is satisfied by taking $\epsilon' = \sqrt{\frac{1}{2n} \log(1+\epsilon^2)}$ (with the constraint we have $\epsilon' < 1$). Using \cref{prop:pajor}, we then obtain that,
	\begin{align*}
	    M_{C}(\epsilon, \Theta') \leq  \left (\frac{c n }{\log(1+\epsilon^2)} \right)^{r(d-r)/2},
	\end{align*}
	for a universal constant $c$.
\end{proof}

\begin{lemma}
    \label{lem:lb_trace}
    Let the task parameters $\balpha_i$ each satisfy $\norm{\balpha_i}=1$ with parameter $\nu = \sigma_r(\frac{\A^\top \A}{t}) > 0$, and let $\B^i$ and $\B^j$ be distinct, rank-$r$ orthonormal feature matrices. Then, 
    \begin{align*}
    \sum_{k=1}^{t} n_t \norm{\B^{i} \balpha_k - \B^{j} \balpha_k}^2  \leq n \sin^2 \theta(\B^i, \B^j).
    \end{align*}
    where $n=t \cdot n_t$.
\end{lemma}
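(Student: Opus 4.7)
The plan is to rewrite the sum as a single trace, apply a PSD trace inequality to peel off the task matrix $\A^\top\A$, and then translate the resulting Frobenius (or operator) norm gap between $\B^i$ and $\B^j$ into principal-angle language. Concretely, using $\sum_{k=1}^t \balpha_k \balpha_k^\top = \A^\top\A$ I would first write
\[
\sum_{k=1}^t n_t \norm{\B^i \balpha_k - \B^j \balpha_k}^2
= n_t \tr\!\bigl((\B^i-\B^j)^\top(\B^i-\B^j)\,\A^\top\A\bigr).
\]

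For the next step I would invoke the PSD trace inequality $\tr(MN) \le \sigma_1(N)\tr(M)$ with $M := (\B^i-\B^j)^\top(\B^i-\B^j)$ and $N := \A^\top\A$. The normalization $\norm{\balpha_k}=1$ gives $\sigma_1(\A^\top\A) \le \tr(\A^\top\A) = t$, so this bounds the expression by $n\cdot\norm{\B^i-\B^j}_F^2$. An alternative route that avoids picking up an $r$ factor is to bound pointwise: $\norm{(\B^i-\B^j)\balpha_k}^2 \le \norm{\B^i-\B^j}_{\text{op}}^2\norm{\balpha_k}^2$ and then sum, giving $n\cdot\norm{\B^i-\B^j}_{\text{op}}^2$.

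The last step is to relate the Frobenius/operator norm gap to $\sin^2\theta(\B^i,\B^j)$. Using the CS decomposition I would right-rotate the bases of $\B^i$ and $\B^j$ so that $(\B^i)^\top\B^j = \diag(\cos\theta_1,\dots,\cos\theta_r)$; in these aligned bases $\norm{\B^i-\B^j}_F^2 = 2\sum_{k=1}^r(1-\cos\theta_k)$ and $\norm{\B^i-\B^j}_{\text{op}}^2 = 2(1-\cos\theta_1)$. Combining with the elementary inequality $1-\cos\theta \le \sin^2\theta$, which holds for $\theta \in [0,\pi/2]$ (the range of principal angles), yields the pointwise bound $\norm{\B^i-\B^j}_{\text{op}}^2 \le 2\sin^2\theta_1 = 2\sin^2\theta(\B^i,\B^j)$. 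Chaining this with the preceding steps produces the claimed bound up to a universal constant, which the surrounding $\chi^2$-calculation absorbs into its ``rescaling'' remark.

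The main obstacle is the last step: the subspace angle $\sin\theta(\B^i,\B^j)$ is by convention the sine of the largest principal angle alone, while $\norm{\B^i-\B^j}_F^2$ and the sum $\sum_k\balpha_k^\top(\B^i-\B^j)^\top(\B^i-\B^j)\balpha_k$ carry information about all $r$ principal angles (weighted by the spectrum of $\A^\top\A$). A naive Frobenius-based chain introduces an unwanted factor of $r$, so the cleanest route is through the operator norm as sketched above. A further subtlety is that the hypothesis $\nu > 0$ does not appear in the stated bound; my reading is that it is only needed to ensure $\A^\top\A$ has full rank so that the problem is well-posed for the surrounding minimax argument, rather than entering the inequality itself.
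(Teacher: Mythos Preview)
Your route (b) is essentially the paper's proof. The paper expands $\norm{\B^i\balpha_k-\B^j\balpha_k}^2 = 2(\balpha_k^\top\balpha_k - \balpha_k^\top(\B^i)^\top\B^j\balpha_k)$, sums to get $2n\cdot\tr\bigl((\I_r-(\B^i)^\top\B^j)\tfrac{\A^\top\A}{t}\bigr)$, then applies the trace inequality $\tr(MC)\le\sigma_{\max}(M)\tr(C)$ with $C=\A^\top\A/t$ and finishes via $\sigma_{\max}(\I-(\B^i)^\top\B^j)=1-\cos\theta_1\le\sin^2\theta_1$. Your pointwise operator-norm bound is exactly this: $\sum_k\norm{\balpha_k}^2=\tr(\A^\top\A)$ plays the role of $\tr(C)$, and $\norm{\B^i-\B^j}_{\text{op}}^2=\sigma_{\max}(2\I-M-M^\top)$ is the symmetrized version of the paper's $\sigma_{\max}(\I-M)$, hence the factor of 2 you noticed (the paper silently drops it, so your ``absorbed by rescaling'' reading is on target). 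Your route (a) via $\sigma_1(\A^\top\A)\cdot\norm{\B^i-\B^j}_F^2$ is genuinely looser by $r$ and is not what the paper does; you were right to prefer (b).

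One caution about the final step: your ``right-rotate the bases so that $(\B^i)^\top\B^j$ is diagonal'' is not innocuous, because rotating $\B^i\mapsto\B^iR_1$, $\B^j\mapsto\B^jR_2$ changes both $\norm{\B^i-\B^j}_{\text{op}}$ and the left-hand side $\norm{\B^i\balpha_k-\B^j\balpha_k}$ (the $\balpha_k$ are fixed). The paper sidesteps this by never forming $\B^i-\B^j$ and instead reading $1-\cos\theta_1$ directly off the singular values of $(\B^i)^\top\B^j$; that said, the paper's own identification $\sigma_{\max}(\I-(\B^i)^\top\B^j)=1-\cos\theta_1$ is made with the same level of informality, so you are not missing anything the paper supplies. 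Your observation that $\nu>0$ is inert in the inequality is also correct---it only serves the surrounding minimax construction.
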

\begin{proof}
We can simplify the expression as follows,
\begin{align*}
        \sum_{k=1}^{t} n_t \norm{\B^{i} \balpha_k - \B^{j} \balpha_k}^2 = n \cdot \frac{1}{t} \sum_{j=1}^t 2 \cdot (\balpha_k^\top \balpha_k-\balpha_k^\top (\B^i )^\top \B^j \balpha_k) = n \cdot \tr \left( \left(\I_r-(\B^i)^\top \B^j \right) \frac{\A^\top \A}{t} \right)
\end{align*}
using the fact that $\B^\top \B = \I_r$ for an orthonormal feature matrix, the normalization of $\norm{\balpha_k}=1$, and the cyclic property of the trace. Now use the fact that $\tr \left( \left(\I_r-(\B^i)^\top \B^j \right) \frac{\A^\top \A}{t} \right)) \leq \sigma_{\max}(\I_r-(\B^i)^\top \B^j)) \cdot  \tr(\frac{\A^\top \A}{t}) \leq \frac{1}{4}(1-\cos \theta_1) =  \frac{1}{4} (1-\sqrt{1-\sin^2 \theta_1}) \leq \frac{1}{4} \sin^2 \theta_1$,  using the elementary inequality $1-\sqrt{1-x^2} \leq x^2$ for $0 \leq x \leq 1$. Note that $\tr(\A^\top \A/t) \leq \frac{1}{4}$ follows from the normalization of the $\balpha_j$.
\end{proof}

We now provide brief background on $\Gr$ and establish several pieces of notation relevant to the discussion. We denote the Grassmann manifold, which consists of the the set of $r$-dimensional subspaces within the underlying $d$-dimensional space, as $\Gr$. Another way to define it is as the homogeneous space of the orthogonal group $O(d)$ in the sense that,
	\begin{align*}
		\Gr \cong O(d)/(O(r) \times O(d-r)),
	\end{align*}
	which defines its geometric structure. The underlying measure on the manifold $G_{r, n}(\mR)$ is the associated, normalized invariant (or Haar) measure.

 Note that each orthonormal feature matrix $\B$, is contained in an equivalence class (under orthogonal rotation) of an element in $\Gr$. To define distance in $\Gr$ we define the notion of a principal angle between two subspaces $p$ and $q$. If $\C$ is an orthonormal matrix whose columns form an orthonormal basis of $p$ and $\D$ is an orthonormal matrix whose columns form an orthonormal basis of $q$, then the singular values of the decomposition of $\C^\top \D = \U \D \V^\top$ defines the principal angles as follows:
\begin{align*}
	\D = \diag(\cos \theta_1, \cos \theta_2, \hdots, \cos \theta_k),
\end{align*}
where $0 \leq \theta_k \leq \hdots \leq \theta_1 \leq \frac{\pi}{2}$. As shorthand we let $\btheta = (\theta_1, \theta_2, \hdots, \theta_k)$, and let $\sin$ and $\cos$ act element-wise on its components. The subspace angle distance which is induced by $\ell_{\infty}$ norms on the vector $\sin \theta$.  We refer the reader to \citet{pajor1998metric} for geometric background on coding and packing/covering bounds in the context of Grassmann manifolds relevant to our discussion here.

In the following we let $M(\Gr, \sin \theta_1, \eta)$ denote the $\eta$-covering number of $\Gr$ in the subspace angle distance.
\begin{proposition}\citep[Adapted from Proposition 8]{pajor1998metric}
\label{prop:pajor}
    For any integers $1 \leq r \leq \frac{d}{2}$ and every $\epsilon > 0$, we have that,
    \begin{align*}
        r(d-r) \log(\frac{c_1}{\eta}) \leq \log M(\Gr, \sin \theta_1, \eta) \leq r(d-r) \log (\frac{c_2}{\eta}),
    \end{align*}
    for universal constants $c_1, c_2 > 0$.
\end{proposition}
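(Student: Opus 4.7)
The plan is to reduce both the upper and lower bounds to a two-sided volume estimate for metric balls in $\Gr$ against the normalized invariant (Haar) measure, and then invoke standard volume-comparison. Write $\mu$ for the Haar-normalized measure on $\Gr$ and $B(P,\eta) = \{Q \in \Gr : \sin\theta_1(P,Q) \le \eta\}$. The only substantive analytic input is the volume estimate
\begin{equation*}
a^{\,r(d-r)}\,\eta^{\,r(d-r)} \;\le\; \mu(B(P,\eta)) \;\le\; b^{\,r(d-r)}\,\eta^{\,r(d-r)},
\end{equation*}
valid for all $P \in \Gr$ and all $\eta \in (0, \eta_0]$, for absolute constants $a,b,\eta_0 > 0$ depending only on the normalization of the metric.

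The first step is to obtain this volume bound. Pajor's Proposition~8 in \citet{pajor1998metric} gives such two-sided estimates for balls in $\Gr$ with respect to a principal-angle-based metric (e.g.\ the Frobenius/chordal distance $\|\sin\btheta\|_2$ between the associated projectors). The $\sin\theta_1$ distance used in our statement is the $\ell_\infty$-norm of the vector $\sin\btheta$ of sines of principal angles. These two distances are equivalent on $\Gr$ with
\begin{equation*}
\sin\theta_1 \;\le\; \|\sin\btheta\|_2 \;\le\; \sqrt{r}\,\sin\theta_1.
\end{equation*}
Consequently, a $\sin\theta_1$-ball of radius $\eta$ contains a $\|\sin\btheta\|_2$-ball of radius $\eta$ and is contained in a $\|\sin\btheta\|_2$-ball of radius $\sqrt{r}\,\eta$. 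Applying Pajor's estimate to both inclusions, the $\sqrt{r}$ factor is absorbed into the base constants $a,b$ while the exponent $r(d-r)$ (the dimension of $\Gr$) is preserved, yielding the display above.

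The second step is a standard volume-packing argument. For the lower bound on $M$: any $\eta$-cover $\{P_1,\dots,P_M\}$ satisfies $\bigcup_i B(P_i,\eta) = \Gr$, so $M \cdot b^{r(d-r)}\eta^{r(d-r)} \ge 1$, giving $\log M \ge r(d-r)\log(c_1/\eta)$ with $c_1 = 1/b$. For the upper bound on $M$: let $\{Q_1,\dots,Q_N\}$ be a maximal $\eta/2$-packing, which is automatically an $\eta/2$-cover (hence an $\eta$-cover) by maximality. The balls $B(Q_j,\eta/4)$ are pairwise disjoint by the triangle inequality (since $\sin\theta_1$ is a metric), so $N \cdot a^{r(d-r)}(\eta/4)^{r(d-r)} \le 1$, and hence $M(\eta) \le N \le (c_2/\eta)^{r(d-r)}$ with $c_2 = 4/a$. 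Taking logarithms gives the claimed bounds.

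The main obstacle is really the volume bound in the first step, which is where all the geometry of $\Gr$ enters; given the equivalence $\sin\theta_1 \asymp \|\sin\btheta\|_2$ up to a factor of $\sqrt{r}$, however, the work reduces to quoting Pajor and adjusting constants. The volume-comparison step is routine and identical to covering arguments on Euclidean balls, with the intrinsic dimension of $\Gr$ (namely $r(d-r)$) replacing the ambient dimension.
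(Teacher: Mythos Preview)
There is a genuine gap in the conversion step. Your claim that ``the $\sqrt{r}$ factor is absorbed into the base constants $a,b$'' is incorrect. Using the inclusion $B_{\sin\theta_1}(\eta) \subseteq B_{\|\sin\btheta\|_2}(\sqrt{r}\,\eta)$ together with Pajor's upper volume bound gives
\[
\mu\bigl(B_{\sin\theta_1}(\eta)\bigr) \;\le\; b^{\,r(d-r)}(\sqrt{r}\,\eta)^{\,r(d-r)} \;=\; (b\sqrt{r})^{\,r(d-r)}\eta^{\,r(d-r)},
\]
so the effective constant becomes $b\sqrt{r}$, which is \emph{not} universal. Plugging this into your volume-covering argument yields only
\[
\log M \;\ge\; r(d-r)\log\!\Bigl(\tfrac{1}{b\sqrt{r}\,\eta}\Bigr) \;=\; r(d-r)\log\!\Bigl(\tfrac{c_1}{\eta}\Bigr) - \tfrac{1}{2}\,r(d-r)\log r,
\]
and the subtracted $\tfrac{1}{2}r(d-r)\log r$ term spoils the claimed lower bound with universal $c_1$. (Your upper bound on $M$ is fine, since in that direction the inclusion goes the other way and no $\sqrt{r}$ appears.)

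The paper sidesteps this entirely. Pajor's Proposition~8 is stated not just for the Frobenius/chordal metric but for the whole family $d_q(E,F) = \sigma_q(P_E - P_F)$, $1 \le q \le \infty$, with universal constants that do not depend on $q$. Pajor's Proposition~6 computes $\sigma_q(P_E - P_F) = \bigl(2\sum_{i=1}^r \sin^q\theta_i\bigr)^{1/q}$, so letting $q \to \infty$ gives $\sigma_\infty(P_E - P_F) = \sin\theta_1$ on the nose. The proposition then follows by directly invoking Pajor's entropy bound at $q = \infty$; there is no norm-equivalence conversion, no $\sqrt{r}$ loss, and no separate volume-packing argument to run.
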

\begin{proof}
Define for a linear operator $T$, $\sigma_q(T) = (\sum_{i \geq 1} \abs{s_i(T)}^p)^{1/p}$ for all $1 \leq q \leq \infty$ where $s_i(T)$ denotes its $i$th singular value.
Note that Proposition 8 in \citet{pajor1998metric} states the result in the distance metric $d(E, F) = \sigma_q(P_{E}-P_{F})$ where $P_{E}$ and $P_{F}$ denotes the projection operator onto the subspace $E$ and $F$ respectively, and $\sigma_q(T) = (\sum_{i \geq 1} \abs{s_i(T)}^p)^{1/p}$. However, as the computation in Proposition 6 of \citet{pajor1998metric} establishes, we have that $\sigma_q(P_{E}-P_{F}) = (2 \sum_{i=1}^r (1-\cos^2 \theta_i)^{q/2})^{1/q}$; taking $q \to \infty$ implies $\sigma_q(P_{E}-P_{F}) = \sin \theta_1$, and hence directly translating the result gives the statement of the proposition.
\end{proof}

Finally, we include the proof of the lower bound which captures the dependence on the task diversity parameter. The proof uses the LeCam two-point method between two problem instances which are difficult to distinguish for a particular, ill-conditioned task matrix.

\begin{lemma}
    Under the conditions of \cref{thm:feature_lb_yb}, for $n \geq \frac{1}{8 \nu}$,
    \begin{align*}
    \inf_{\Bone} \sup_{\B \in \Gr} \sin \theta(\Bone, \B) \geq \Omega \left(\sqrt{\frac{1}{\nu}} \sqrt{\frac{1}{n}} \right)
    \end{align*}
    with probability at least $\frac{3}{10}$.
    \label{lem:lecam_two_point}
\end{lemma}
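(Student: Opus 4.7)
\medskip
\noindent\textit{Proof proposal.} The plan is to apply Le Cam's two-point method to a pair of feature matrices $\B^{(1)}, \B^{(2)} \in \Gr$ that are separated in subspace angle but induce nearly-indistinguishable data distributions, with the indistinguishability engineered by aligning the direction along which $\B^{(1)}$ and $\B^{(2)}$ differ with the minimum-singular direction of the task matrix $\A$. Concretely, under the normalization $\|\balpha_j\| = 1/2$, I would choose $\A$ so that $\A^\top \A/t$ has its smallest eigenvalue $\nu$ along $\e_1 \in \mR^r$, e.g.\ (for $r=2$) by splitting the $t$ tasks evenly between $\balpha_j = \sqrt{\nu}\,\e_1 \pm \sqrt{1/4-\nu}\,\e_2$; balanced analogues work for general $r$ as long as $\nu \leq 1/(4r)$, which is implied by \cref{assump:task}. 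Then set $\B^{(1)} = [\e_1, \e_2, \ldots, \e_r]$ and $\B^{(2)} = [\cos\alpha\,\e_1 + \sin\alpha\,\e_{r+1},\, \e_2, \ldots, \e_r]$ for a scalar $\alpha \in (0,\pi/2]$ to be tuned; a direct computation gives $\sin\theta(\B^{(1)}, \B^{(2)}) = \sin\alpha$.

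The key observation is that the difference $(\B^{(1)} - \B^{(2)})\balpha_j$ only picks up the first coordinate of $\balpha_j$, which by construction has squared magnitude exactly $\nu$. Hence
\begin{align*}
\|(\B^{(1)}-\B^{(2)})\balpha_j\|^2 = 2\nu(1-\cos\alpha) \leq \nu\alpha^2.
\end{align*}
Since $\x_i \sim \cN(0, \I_d)$ is shared between the two models and the conditional $y_i\mid \x_i$ under $\B^{(k)}$ is $\cN(\x_i^\top \B^{(k)}\balpha_{t(i)}, 1)$, the Gaussian KL formula together with $\mE[(\x_i^\top \v)^2] = \|\v\|^2$ yields
\begin{align*}
\KL\big(\Pr_{\B^{(1)}} \,\Vert\, \Pr_{\B^{(2)}}\big) \;=\; \tfrac{1}{2}\sum_{i=1}^n \|(\B^{(1)}-\B^{(2)})\balpha_{t(i)}\|^2 \;\leq\; \tfrac{n\nu\alpha^2}{2}.
\end{align*}

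Choosing $\alpha^2 = c/(n\nu)$ for a small absolute constant $c$ (say $c \leq 1/2$), Pinsker's inequality gives $\mathrm{TV}(\Pr_{\B^{(1)}}, \Pr_{\B^{(2)}}) \leq \sqrt{c}/2$, so Le Cam's two-point lemma yields
\begin{align*}
\inf_{\Bone}\, \max_{k \in \{1,2\}} \Pr_{\B^{(k)}}\!\big[\sin\theta(\Bone, \B^{(k)}) \geq \tfrac{1}{2}\sin\alpha\big] \;\geq\; \tfrac{1}{2}\big(1-\tfrac{\sqrt{c}}{2}\big) \;\geq\; \tfrac{3}{10}
\end{align*}
for $c$ tuned appropriately. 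Since $\sin\alpha \asymp \alpha \asymp 1/\sqrt{n\nu}$, this gives the claimed lower bound, and the hypothesis $n \geq 1/(8\nu)$ is precisely what keeps $\alpha \leq \pi/2$ so the two-point construction is valid. The main obstacle, such as it is, is executing the first step cleanly: simultaneously enforcing $\|\balpha_j\|=1/2$, the alignment of the $r$-th eigendirection of $\A^\top\A/t$ with $\e_1$, and eigenvalue exactly $\nu$ requires a careful combinatorial balancing across the remaining coordinates $\e_2,\ldots,\e_r$, though any such valid construction suffices since the KL bound depends only on the squared first coordinate of each $\balpha_j$.
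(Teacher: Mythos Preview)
Your proposal is correct and follows essentially the same approach as the paper: Le Cam's two-point method applied to two feature matrices differing in a single column aligned with the minimum-eigenvalue direction of $\A^\top\A/t$, with indistinguishability controlled via the Gaussian KL formula and Pinsker's inequality. The only cosmetic difference is that the paper parametrizes the hard task matrix by an auxiliary tilt parameter $b$ and then back-solves for $\nu$ in terms of $b$, whereas you parametrize directly by $\nu$ (setting the relevant coordinate of each $\balpha_j$ to $\sqrt{\nu}$), which makes the dependence on $\nu$ in the KL bound more transparent.
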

\begin{proof}
First consider an ill-conditioned task matrix where the sequence of $\balpha_i = \frac{1}{2} \e_i$ for $i \in [r-1]$ but then $\balpha_r = \frac{1}{2}(\sqrt{1-b^2} \e_{r-1} + b \e_{r})$ for $0 < b < 1$ where $\e_i$ are the standard basis in $\mR^r$. Now, consider two task models for two different subspaces $\B_1$ and $\B_2$ which are distinct in a single direction. Namely we take  $\B_1 = [\e_1, \hdots, \e_r]$ and $\B_2 = [\e_1, \hdots, \e_{r-1}, \sqrt{1-a^2} \e_r  + a \e_{r+1}]$, for $0<a<1$, where $\e_i$ refer to the standard basis in $\mR^d$. Here we have $\cos \theta_1 = \norm{\B_2^\top \B_1} = \sqrt{1-a^2} \implies \sin \theta_1 = a$, where $\theta_1$ refers to the largest principle angle between the two subspaces. 

Data is generated from the two linear models as, 
\begin{align*}
	y_i = \x_i^\top \B_1 \balpha_j + \epsilon_i \ i= 1, \hdots, n \\
	y_i = \x_i^\top \B_2 \balpha_j + \epsilon_i \ i= 1, \hdots, n
\end{align*}
with $n$ total samples generated evenly from each of $j$ in $[t]$ tasks ($n_t$ from each task) inducing two measures $\Pr_1$ and $\Pr_2$ over their respective data. The LeCam two-point method (see \citet[Ch. 15]{wainwright2019high} for example)
shows that,
\begin{align}
    \inf_{\hB} \sup_{\B \in \Gr} \Pr_{\B}[\sin \theta(\hB, \B) \geq a] \geq \frac{1}{2}(1-\norm{\Pr_1-\Pr_2}_{\TV})
\end{align}
for the $\B_1$ and $\B_2$ of our choosing as above.

We can now upper bound the total variation distance (via the Pinsker inequality) similar to as in \cref{lem:mut_info_yb},
\begin{align}
	\norm{\Pr_1 - \Pr_2}_{\TV}^2 \leq \frac{1}{2} \KL(\Pr_1 | \Pr_2) = \frac{1}{4} \sum_{j=1}^t \sum_{i=1}^{n_t}\norm{\B_1 \balpha_j - \B_2 \balpha_j}_2^2 = \frac{1}{2} n_t \sum_{j=1}^t (\norm{\balpha_j}^2 - \balpha_j^\top \B_1^\top \B_2 \balpha_j ) = \frac{n}{2} \cdot (\frac{1}{4} - \tr(\B_1^\top \B_2 \C))
\end{align}
using cyclicity of the trace. Straightforward calculations show that given the $\A$ matrix, $\C = \frac{1}{4r} \begin{bmatrix}
    \I_{r-2} & \textbf{0} \\
    \textbf{0} & \M_1
\end{bmatrix}$, where 
$\M_1=
\begin{bmatrix}
		2-b^2 & b \sqrt{1-b^2} \\
		b \sqrt{1-b^2} & b^2 \\ 
\end{bmatrix}$.
Similarly $\B_1^\top \B_2 = \begin{bmatrix} \I_{r-2} & \textbf{0} \\
\textbf{0} & \M_2
\end{bmatrix}$ where $\M_2 = \begin{bmatrix}
		1 & 0 \\
		0 & \sqrt{1-a^2} \\ 
\end{bmatrix}$. Computing the trace term,
\begin{align}
	& \frac{1}{4}-\tr((\B_1^\top \B_2)\C) = \frac{1}{4}- \frac{1}{4}\left (\frac{r-2}{r} + \frac{1}{r} \cdot \tr \Big( \begin{bmatrix}
		1 & 0 \\
		0 & \sqrt{1-a^2} \\ 
	\end{bmatrix} \begin{bmatrix}
		2-b^2 & b \sqrt{1-b^2} \\
		b \sqrt{1-b^2} & b^2 \\ 
	\end{bmatrix} \Big) \right) = \\
	& \frac{1}{4r} b^2 (1-\sqrt{1-a^2}).
\end{align}
Hence,
$
\norm{\Pr_1 - \Pr_2}_{\TV}^2 \leq  \frac{n}{2r} b^2(1-\sqrt{1-a^2}) \leq \frac{n}{2r} b^2 a^2$. Combining with the LeCam two-point lemma shows that,
\begin{align}
	    \inf_{\hB} \sup_{\B \in \Gr} \Pr_{\B}[\sin \theta(\hB, \B) \geq a] \geq \frac{1}{2}(1-\norm{\Pr_1-\Pr_2}_{\TV}) \geq \frac{1}{2}( 1- \sqrt{\frac{n}{2r}} b a).
\end{align}
Taking $a=\frac{1}{2} \sqrt{\frac{r}{n}} \frac{1}{b}  < 1$ suffices to ensure the lower bound with probability at least $\frac{3}{10}$. This induces the constraint $\frac{1}{2} \sqrt{\frac{r}{n}} \frac{1}{b} < 1 \implies n > \frac{r}{4b^2}$. As a last remark note that the $\C$ matrix has maximum and minimum eigenvalues $\frac{1}{4r}(1+\sqrt{1-b^2})$ and $\frac{1}{4r}(1-\sqrt{1-b^2})$. So $\nu = \frac{1}{4r}(1-\sqrt{1-b^2}) \implies \sqrt{2} \sqrt{r \nu} \leq b \leq 2\sqrt{2} \sqrt{r \nu}$ for $0<b<1$ using the inequality $x^2/2 \leq 1-\sqrt{1-x^2} \leq x^2$. Hence it follows $a \geq \frac{1}{8} \frac{1}{\sqrt{\nu}} \sqrt{\frac{1}{n}}$ as well. Similarly the constraint can reduce too $n > \frac{1}{8 \nu}$.
\end{proof}

\section{Auxiliary Results}
Here we collect several auxiliary results. We begin by stating a simple matrix perturbation result.
\begin{lemma}
\label{lem:mat_pert}
	Let $\A$ be a positive-definite matrix and $\E$ another matrix which satisfies $\norm{\E \A^{-1}} \leq \frac{1}{4}$, then,
	\begin{align*}
		(\A + \E)^{-1} = \A^{-1} + \F,
	\end{align*}
	where $\norm{\F} \leq \frac{4}{3} \norm{\A^{-1}} \norm{\E \A^{-1}}$.
\end{lemma}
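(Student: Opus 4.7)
The plan is to establish the perturbation identity via a standard Neumann series argument, then control the norm of the resulting correction using submultiplicativity together with the hypothesis $\norm{\E\A^{-1}} \le 1/4$.

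First I would factor $\A+\E = (\I + \E\A^{-1})\A$, which is valid since $\A$ is positive-definite and therefore invertible. Setting $\M := \E\A^{-1}$, the hypothesis gives $\norm{\M} \le 1/4 < 1$, so $\I+\M$ is invertible and admits the Neumann series $(\I+\M)^{-1} = \sum_{k=0}^{\infty}(-\M)^k$. Taking inverses of the factorization yields
\begin{equation*}
(\A+\E)^{-1} = \A^{-1}(\I+\M)^{-1} = \A^{-1} + \A^{-1}\bigl((\I+\M)^{-1} - \I\bigr),
\end{equation*}
so defining $\F := \A^{-1}\bigl((\I+\M)^{-1} - \I\bigr)$ gives the claimed decomposition $(\A+\E)^{-1} = \A^{-1}+\F$.

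Next I would rewrite $\F$ in a form amenable to bounding. Using the identity $(\I+\M)^{-1} - \I = -(\I+\M)^{-1}\M$, we obtain $\F = -\A^{-1}(\I+\M)^{-1}\M = -\A^{-1}(\I+\M)^{-1}\E\A^{-1}$. Then submultiplicativity of the spectral norm gives
\begin{equation*}
\norm{\F} \le \norm{\A^{-1}}\,\norm{(\I+\M)^{-1}}\,\norm{\E\A^{-1}}.
\end{equation*}
The standard Neumann bound $\norm{(\I+\M)^{-1}} \le \sum_{k=0}^\infty \norm{\M}^k = (1-\norm{\M})^{-1}$ applied with $\norm{\M}\le 1/4$ yields $\norm{(\I+\M)^{-1}} \le 4/3$, which plugged in gives exactly $\norm{\F} \le \tfrac{4}{3}\norm{\A^{-1}}\norm{\E\A^{-1}}$.

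There is no real obstacle here; the only point requiring a bit of care is choosing the ``correct'' factorization (right-factoring by $\A$ so that $\E\A^{-1}$ appears, matching the hypothesis) rather than left-factoring, which would produce $\A^{-1}\E$ whose norm is not controlled by the assumption. Otherwise the argument is a direct Neumann series computation.
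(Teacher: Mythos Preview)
Your proof is correct and is essentially identical to the paper's: both factor $(\A+\E)^{-1}=\A^{-1}(\I+\E\A^{-1})^{-1}$, expand via the Neumann series, and bound the tail by the geometric series $1/(1-\norm{\E\A^{-1}})\le 4/3$. The only cosmetic difference is that you use the closed-form identity $(\I+\M)^{-1}-\I=-(\I+\M)^{-1}\M$ whereas the paper writes out the series terms explicitly before summing.
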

\begin{proof}
\begin{align*}
	(\A+\E)^{-1}=\A^{-1}(\I+\E \A^{-1})^{-1}.
\end{align*}
Under the condition, $\norm{\E \A^{-1}} \leq \frac{1}{4}$, $\I+\E \A^{-1}$ is invertible and has a convergent power series expansion so
\begin{align*}
	(\A+\E)^{-1} = \A^{-1}(\I-\E \A^{-1}+(\E \A^{-1})^2+\hdots) = \A^{-1} + \F,
\end{align*}
where $\F = \A^{-1} \cdot (-\E \A^{-1}+(\E \A^{-1})^2+\hdots)$. Moreover, 
\begin{align*}
	\norm{\F} \leq \norm{\A^{-1}} (\norm{\E \A^{-1}}+\norm{\E \A^{-1}}^2+\hdots) \leq \frac{\norm{\A^{-1}} \norm{\E \A^{-1}}}{1-\norm{\E\A^{-1}}} \leq \frac{4}{3} \norm{\A^{-1}} \norm{\E \A^{-1}}.
\end{align*}
\end{proof}

We now present several results related to the concentration of measure.

\begin{lemma}
    Let $x, y$ be mean-zero random variables that are both sub-gaussian with parameters $\kappa_1$ and $\kappa_2$ respectively. Then $z = xy-\mE[xy] \sim \sE(8 \kappa_1 \kappa_2, 8 \kappa_1 \kappa_2)$.
    \label{lem:prod_sg}
\end{lemma}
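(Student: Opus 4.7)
The plan is to prove the sub-exponential MGF bound by first bounding the polynomial moments of $xy$ via Cauchy--Schwarz, centering, and then converting the resulting Bernstein-type moment condition back into an MGF estimate. Crucially, no independence between $x$ and $y$ is needed because Cauchy--Schwarz can always separate them.

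First I would record the standard sub-gaussian moment bound: integrating the tail inequality $\Pr(|x| \ge t) \le 2 e^{-t^2/(2\kappa_1^2)}$ against $t^{2k-1}\,dt$ gives $\mE[x^{2k}] \le 2 \cdot k!\, (2\kappa_1^2)^k$ for every integer $k \ge 1$, and similarly $\mE[y^{2k}] \le 2 \cdot k!\,(2\kappa_2^2)^k$. Then Cauchy--Schwarz yields
\[
\mE[|xy|^k] \;\le\; \sqrt{\mE[x^{2k}]\,\mE[y^{2k}]} \;\le\; 2\cdot k!\,(2\kappa_1\kappa_2)^k.
\]
Centering is routine: since $z = xy - \mE[xy]$, Jensen's inequality gives $|\mE[xy]|^k \le \mE[|xy|^k]$, and $|z|^k \le 2^{k-1}(|xy|^k + |\mE[xy]|^k)$ so $\mE[|z|^k] \le 2^k\,\mE[|xy|^k]$. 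Combining,
\[
\mE[|z|^k] \;\le\; 4 \cdot k!\, (4\kappa_1\kappa_2)^k \;=\; \tfrac{1}{2}\,k!\,\sigma^2 \, b^{k-2},
\]
with $b = 4\kappa_1\kappa_2$ and $\sigma^2$ a multiple of $(\kappa_1\kappa_2)^2$.

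Next I would feed this into the classical Bernstein moment-to-MGF lemma (e.g., Proposition 2.9 of \citet{wainwright2019high}): when $z$ is centered and satisfies such a factorial moment growth, one has $\mE[\exp(\lambda z)] \le \exp\!\bigl(\lambda^2 \sigma^2/(2(1-b|\lambda|))\bigr)$ for all $|\lambda| < 1/b$, which on the restricted range $|\lambda| \le 1/(2b) = 1/(8\kappa_1\kappa_2)$ collapses to $\mE[\exp(\lambda z)] \le \exp(\lambda^2 \sigma^2)$. Matching this against the paper's definition $\sE(\nu,\alpha)$ with MGF bound $\exp(\nu^2 \lambda^2/2)$ for $|\lambda| \le 1/\alpha$ yields the parameters $\alpha = 8\kappa_1\kappa_2$ and $\nu$ of the same order in $\kappa_1\kappa_2$.

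The only mildly delicate step is constant tracking: the moment-to-MGF lemma must be applied cleanly, and one has to verify that both the variance-like and tail-like parameters can be brought down to $8\kappa_1\kappa_2$ simultaneously (sharpening, if needed, by instead writing the moment bound as $\mE[|z|^k] \le \tfrac12 k! \sigma^2 b^{k-2}$ with optimized $(\sigma,b)$ after centering). This is pure arithmetic, not a conceptual difficulty, and the main qualitative obstacle --- handling possible dependence between $x$ and $y$ --- is already resolved by the use of Cauchy--Schwarz at the moment level.
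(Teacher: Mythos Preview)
Your approach is correct and is precisely the ``standard argument'' the paper alludes to: the paper omits the proof entirely, writing only ``The proof is a standard argument and omitted.'' Your route via Cauchy--Schwarz on the $2k$-th moments, centering, and the Bernstein moment-to-MGF lemma is the canonical one, and indeed the paper uses the same Bernstein machinery (citing the same Proposition~2.9 of Wainwright) in the proof of \cref{lem:y_subexp}. One minor note on the constants: with the chain you wrote, the $\alpha$ parameter lands exactly at $8\kappa_1\kappa_2$, but the $\nu$ parameter comes out as $8\sqrt{2}\,\kappa_1\kappa_2$ rather than $8\kappa_1\kappa_2$; you already flagged this as arithmetic, and since the paper never relies on the precise constant (only on $z \sim \sE(O(\kappa_1\kappa_2), O(\kappa_1\kappa_2))$), this is immaterial.
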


The proof is a standard argument and omitted. Next we prove a matrix concentration result for the individually rescaled covariance matrices of i.i.d. random variables. The proof uses a standard covering argument.
\begin{lemma}
	\label{lem:scaled_sgvec}
	Let $\X \in \mR^{n \times d}$ be a random matrix with rows $a_i \x_i$, where $\x_i$ are i.i.d. random vectors satisfying the design conditions in \cref{assump:design}. Then, 
	\begin{align*} 
		\norm{\frac{1}{n} \X^\top \X - \sumton a_i^2 \mSigma} \leq \norm{\mSigma} K^2 \max(\delta, \delta^2) \quad \text{ for } \delta = C(\sqrt{d/n}+t/\sqrt{n}),
	\end{align*}
	with probability at least $1-2\exp(-t^2)$. Here $C$ denotes a universal constant and $K = \max_i \abs{a_i}$.

\end{lemma}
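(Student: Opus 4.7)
The plan is to establish this via a standard $\epsilon$-net discretization of the sphere together with a Bernstein-type bound for sub-exponential sums, with care taken to absorb the individual rescalings $a_i$ uniformly via $K$. Since $\frac{1}{n}\X^\top \X - \sumton a_i^2 \mSigma$ is symmetric, it suffices to control its quadratic form on a $1/4$-net $\mathcal{N}$ of the unit sphere $S^{d-1}$: by the standard inequality $\|\M\| \le 2 \sup_{\v \in \mathcal{N}} |\v^\top \M \v|$, and a volumetric bound $|\mathcal{N}| \le 9^d$.

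For each fixed $\v \in S^{d-1}$, the scalar random variable $a_i \v^\top \x_i$ is sub-gaussian with parameter $|a_i|\,\|\mSigma^{1/2}\v\| \le K \|\mSigma\|^{1/2}$ by \cref{assump:design}. Hence $Z_i(\v) := (a_i \v^\top \x_i)^2 - a_i^2 \v^\top \mSigma \v$ is a centered sub-exponential random variable with parameters of order $K^2 \|\mSigma\|$ (using the standard fact that squares of sub-gaussians are sub-exponential, see \cref{lem:prod_sg} applied with $x=y=a_i\v^\top\x_i$, noting the bound only uses $K^2\|\mSigma\|$ as the uniform scale). Applying Bernstein's inequality to $\frac{1}{n}\sum_{i=1}^n Z_i(\v)$ gives, for any $s \ge 0$,
\begin{equation*}
\Pr\!\left[\left|\tfrac{1}{n}\textstyle\sum_{i=1}^n Z_i(\v)\right| \ge \|\mSigma\| K^2 s \right] \le 2\exp\!\left(-c n \min(s^2, s)\right),
\end{equation*}
for a universal constant $c$.

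Next I would take a union bound over $\mathcal{N}$. Setting $s = C'(\sqrt{d/n} + t/\sqrt{n})$ for a large enough universal $C'$, the factor $|\mathcal{N}| \le 9^d$ is absorbed by the $\exp(-cn s^2)$ tail (since $n s^2 \gtrsim d + t^2$), yielding
\begin{equation*}
\sup_{\v \in \mathcal{N}} \left|\v^\top\!\left(\tfrac{1}{n}\X^\top \X - \textstyle\sumton a_i^2 \mSigma\right)\!\v\right| \le \|\mSigma\| K^2 \max(s, s^2),
\end{equation*}
with probability at least $1 - 2\exp(-t^2)$. The $\max(s,s^2)$ accounts for the Bernstein minimum $\min(s^2,s)$ in the two regimes. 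Finally, lifting from the net to the full sphere via $\|\M\| \le 2\sup_{\v \in \mathcal{N}}|\v^\top \M \v|$ yields the claim with $\delta = C(\sqrt{d/n} + t/\sqrt{n})$ after redefining $C$.

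The only mild subtlety, which is not really an obstacle, is ensuring that the weights $a_i$ enter only through the uniform bound $K$: this works because the sub-gaussian parameter of $a_i\v^\top\x_i$ depends on $a_i$ only through $|a_i|$, so replacing $|a_i|$ by $K$ in the sub-exponential parameters of each $Z_i(\v)$ is valid and gives the stated dependence $\|\mSigma\| K^2$. Everything else is routine discretization and Bernstein.
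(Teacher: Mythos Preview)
Your proposal is correct and follows essentially the same route as the paper: reduce to quadratic forms via an $\epsilon$-net on the sphere, use that each $a_i\v^\top\x_i$ is sub-gaussian so its square is sub-exponential (the paper invokes \cref{lem:prod_sg} just as you do), apply a Bernstein tail bound, and union bound over the net. The only cosmetic difference is that the paper first whitens by $\mSigma^{-1/2}$ to pull out the factor $\norm{\mSigma}$ and then cites \citet[Theorem~4.6.1]{vershynin2018high} for the isotropic case, whereas you carry $\norm{\mSigma}$ through the sub-gaussian parameter directly; these are equivalent.
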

\begin{proof}
First note that we bring all the vectors to isotropic position by rotating so that $\norm{\frac{1}{n} \X^\top \X - \sumton a_i^2 \mSigma} \leq \norm{\mSigma} \norm{\sumton a_i^2 (\mSigma^{-1/2} \x_i)(\mSigma^{-1/2} \x_i)^\top - \sumton a_i^2 \I_d}$. Now by definition for any fixed $\v \in \mathbb{R}^d : \norm{\v}=1$, each $\v^\top \mSigma^{-1/2} \x_i$ is $\sG(1)$ and hence $a_i^2 (\v^\top \mSigma^{-1/2} \x_i)(\mSigma^{-1/2} \x_i)^\top \v$ is $\sE(8a_i^2, 8a_i^2)$ by \cref{lem:prod_sg}. For the latter quantity \citet[Theorem 4.6.1, Eq. 4.22]{vershynin2018high} proves the result when $a_i=1$ using a standard covering argument along with a sub-exponential tail bound. A close inspection of the proof of \citet[Theorem 4.6.1, Eq. 4.22]{vershynin2018high} shows that the aforementioned analogous statement holds when the sequence of random vectors is scaled by $a_i$.
\end{proof}

We now include two useful results on operator norm bounds of higher-order matrices. The results only require the condition of $O(1)$-L4-L2 hypercontractivity (which is directly implied by \cref{assump:design}---see for example the sub-gaussian moment bounds in \citet[Theorem 2.6]{wainwright2019high}). Formally, we say a random vector $\x$ is $L$-L4-L2 hypercontractive if $\mE[\langle \v, \x \rangle^4] \leq L^2 (\mE[\langle \v, \x \rangle^2])^2$ for all unit vectors $\v$. Also note that if $\x$ is hypercontractive this immediately implies that $\pv \x$ is also hypercontractive with the same constant where $\pv$ is an orthogonal projection operator. 
\begin{lemma} \label{lem:l4l2norm}
    Let $\x$ be a mean-zero random vector from a distribution that is $L$-L$4$-L$2$ hypercontractive with covariance $\mSigma$ and let $\pv$ be an orthogonal projection operator onto a rank-$r$ subspace. Then 
    \begin{equation}
    \norm{\mE[\norm{\x}^2 \x\x^\top]} \leq L \tr(\mSigma) \norm{\mSigma}; \quad \norm{\mE[\norm{\pv \x}^2 \x\x^\top]} \leq L r \norm{\mSigma}^2 ; \quad \norm{\mE[\norm{\x}^2 \pv \x (\pv \x)^\top} \leq L \tr(\mSigma) \norm{\mSigma}.
    \end{equation}
\end{lemma}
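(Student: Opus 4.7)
The plan is to reduce each of the three operator-norm bounds to a scalar inequality by expanding along a unit vector $\v$, and then to control the resulting fourth-moment expression by combining the $L4$-$L2$ hypercontractivity hypothesis with an orthonormal basis decomposition of the $\norm{\cdot}^2$ factor. In all three cases the operator norm equals $\sup_{\norm{\v}=1}\v^\top \mE[\cdot]\v$, which yields an expectation of the form $\mE[A \langle \v, \x \rangle^2]$ or $\mE[A \langle \pv \v, \x \rangle^2]$ for some nonnegative random variable $A$.

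For the first bound, I would pick an orthonormal basis $\{\e_i\}_{i=1}^d$ and write $\norm{\x}^2 = \sum_i \langle \e_i, \x \rangle^2$, so that
\begin{equation*}
\v^\top \mE[\norm{\x}^2 \x\x^\top]\v = \sum_{i=1}^d \mE[\langle \e_i, \x \rangle^2 \langle \v, \x \rangle^2].
\end{equation*}
Cauchy-Schwarz in expectation bounds each summand by $\sqrt{\mE[\langle \e_i,\x\rangle^4]\,\mE[\langle \v,\x\rangle^4]}$, and the hypercontractivity hypothesis controls each factor by the squared second moments $\mE[\langle \e_i,\x\rangle^2]=\e_i^\top\mSigma\e_i$ and $\mE[\langle \v,\x\rangle^2]\le \norm{\mSigma}$. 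Summing in $i$ turns $\sum_i \e_i^\top\mSigma\e_i$ into $\tr(\mSigma)$, which together with $\norm{\mSigma}$ gives the first claim (up to the hypercontractivity constant; the stated $L$ simply absorbs the constant from the definition).

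The second bound proceeds analogously, but with the basis expansion $\norm{\pv\x}^2 = \sum_{i=1}^r \langle \u_i, \x \rangle^2$ along an orthonormal basis $\{\u_i\}_{i=1}^r$ of $\mathrm{range}(\pv)$. The same Cauchy-Schwarz/hypercontractivity pairing gives $\sum_{i=1}^r \mE[\langle \u_i,\x\rangle^2]\cdot\mE[\langle\v,\x\rangle^2]$, and I would upper bound this by $r\,\norm{\mSigma}\cdot\norm{\mSigma}$ since each second moment is at most $\norm{\mSigma}$ and there are only $r$ indices. For the third bound, observe that $\langle \v,\pv\x\rangle = \langle \pv \v,\x\rangle$, so substituting $\pv\v$ for $\v$ in the argument used for the first bound gives an upper bound $L\,\tr(\mSigma)\cdot\mE[\langle \pv\v,\x\rangle^2] \le L\,\tr(\mSigma)\,\norm{\mSigma}$ (using $\norm{\pv\mSigma\pv}\le\norm{\mSigma}$).

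The only genuine subtlety is the application of hypercontractivity inside the sum: hypercontractivity bounds the fourth moment along a single direction, not the cross moment $\mE[\langle \e_i,\x\rangle^2\langle\v,\x\rangle^2]$, so I must insert the Cauchy-Schwarz step before invoking the hypothesis. Everything else is bookkeeping: keeping track of which unit vector the hypercontractivity is applied to, and noting that the sum of second moments along any orthonormal set is either $\tr(\mSigma)$ (full basis) or bounded by $r\,\norm{\mSigma}$ (rank-$r$ basis). No additional assumptions on $\x$ beyond hypercontractivity are needed.
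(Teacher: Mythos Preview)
Your proposal is correct and follows essentially the same approach as the paper: reduce to a scalar via a unit vector $\v$, then combine Cauchy--Schwarz with the $L4$--$L2$ hypothesis and an orthonormal-basis expansion. The only cosmetic difference is ordering: the paper first applies Cauchy--Schwarz globally to split $\mE[\norm{\x}^2\langle\v,\x\rangle^2]\le(\mE[\norm{\x}^4])^{1/2}(\mE[\langle\v,\x\rangle^4])^{1/2}$ and then bounds $\mE[\norm{\x}^4]\le L(\tr\mSigma)^2$ in a separate auxiliary lemma (via the same basis expansion), whereas you expand $\norm{\x}^2=\sum_i\langle\e_i,\x\rangle^2$ first and apply Cauchy--Schwarz termwise, reaching the identical constant $L$ without the intermediate step.
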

\begin{proof}[Proof of \cref{lem:l4l2norm}]
     We introduce a vector $\v$ with $\norm{\v} \leq 1$. Then,
\begin{align}
    & \mE[\langle \v, \norm{\x}^2 \x \x^\top \v \rangle] = \mE[\norm{\x}^2 \langle \v, \x \rangle^2] \leq (\mE[\norm{\x}^4])^{1/2} (\mE[\langle \v, \x \rangle^4])^{1/2},
\end{align}
by the Cauchy-Schwarz inequality.
For the first term we have $(\mE[\norm{\x}^4])^{1/2} \leq \sqrt{L} \tr (\mSigma)$ by \cref{lem:l4l2vec}. For the second term once again using $L$-L$4$-L$2$ hypercontractivity we have
$(\mE[\langle \v, \x \rangle^4])^{1/2} \leq \sqrt{L} \mE[\langle \v, \x \rangle]^2 \leq \sqrt{L} \norm{\mSigma}$. Maximizing over $\v$ gives the result.
The remaining statements follow using an identical calculation and appealing to \cref{lem:l4l2vec}.
\end{proof}

\begin{lemma} \label[lemma]{lem:l4l2vec}
	Let $\x$ be a mean-zero random vector from a distribution that is $L$-L$4$-L$2$ hypercontractive with covariance $\mSigma$ and let $\pv$ be an orthogonal projection operator onto a rank-$r$ subspace. Then 
    \begin{align*}
      \mE[\norm{\x}^4] \leq L (\tr \mSigma)^2 ; \quad \mE[\norm{\pv \x}^2] \leq r \norm{\mSigma} ; \quad \mE[\norm{\pv \x}^2 \norm{\x}^2] \leq L r \norm{\mSigma} (\tr \mSigma) ; \norm{\mE[\norm{\pv \x}^4]} \leq L \norm{\mSigma}^2 r^2.
    \end{align*}
\end{lemma}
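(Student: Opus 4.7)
The four claimed moment bounds are all quadratic/quartic norm moments of $\x$ or $\pv \x$, so the plan is to reduce each to sums of scalar fourth moments of linear functionals $\langle v, \x \rangle$ and then invoke $L$-L4-L2 hypercontractivity directly. The only two tools needed are Cauchy--Schwarz and the definition $\mE[\langle v, \x\rangle^4]^{1/2} \leq L \cdot \mE[\langle v, \x\rangle^2]$ for any unit vector $v$.

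First I would prove $\mE[\|\x\|^4] \leq L (\tr \mSigma)^2$. Diagonalize $\mSigma = \sum_i \lambda_i u_i u_i^\top$, so that $\|\x\|^2 = \sum_i \langle u_i, \x\rangle^2$ and $\|\x\|^4 = \sum_{i,j} \langle u_i, \x\rangle^2 \langle u_j, \x\rangle^2$. Cauchy--Schwarz applied to each term gives $\mE[\langle u_i, \x\rangle^2 \langle u_j, \x\rangle^2] \leq \mE[\langle u_i, \x\rangle^4]^{1/2}\mE[\langle u_j, \x\rangle^4]^{1/2}$, which hypercontractivity bounds by $L \lambda_i \lambda_j$ since $\mE[\langle u_i, \x\rangle^2] = \lambda_i$. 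Summing over $i,j$ yields $L(\sum_i \lambda_i)^2 = L(\tr \mSigma)^2$. The second bound $\mE[\|\pv \x\|^2] \leq r\norm{\mSigma}$ is immediate: expand $\pv = \sum_{j=1}^r v_j v_j^\top$ in an orthonormal basis of its range, and note $\mE[\langle v_j, \x\rangle^2] = v_j^\top \mSigma v_j \leq \norm{\mSigma}$.

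For the fourth bound $\mE[\|\pv \x\|^4] \leq L r^2 \norm{\mSigma}^2$, I would repeat the argument from the first bound but using the orthonormal basis $\{v_j\}_{j=1}^r$ of the range of $\pv$: $\|\pv \x\|^4 = \sum_{i,j \in [r]} \langle v_i, \x\rangle^2 \langle v_j, \x\rangle^2$, and each cross term is bounded via Cauchy--Schwarz and hypercontractivity by $L (v_i^\top \mSigma v_i)(v_j^\top \mSigma v_j) \leq L \norm{\mSigma}^2$. Summing gives $L r^2 \norm{\mSigma}^2$. Finally, the third bound $\mE[\|\pv \x\|^2 \|\x\|^2] \leq L r \norm{\mSigma} (\tr \mSigma)$ follows from a single Cauchy--Schwarz at the product level:
\begin{equation*}
\mE[\|\pv \x\|^2 \|\x\|^2] \leq \sqrt{\mE[\|\pv \x\|^4]} \cdot \sqrt{\mE[\|\x\|^4]} \leq \sqrt{L r^2 \norm{\mSigma}^2} \cdot \sqrt{L (\tr \mSigma)^2} = L r \norm{\mSigma} (\tr \mSigma),
\end{equation*}
using the first and fourth bounds established above.

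There is no real obstacle in this proof; the only subtlety to flag is keeping careful track of the hypercontractivity constant (since the definition carries $L^2$ on the right-hand side, square-rooting converts moments with an $L$ factor, which matches the $L$ appearing in every bound of the lemma statement). All four estimates are tight up to the hypercontractivity constant, matching the scalings used in the proof of \cref{lem:l4l2norm}.
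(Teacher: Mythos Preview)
Your proposal is correct and follows essentially the same approach as the paper: expand the squared norms in an orthonormal basis, apply Cauchy--Schwarz to each cross term, and invoke L4--L2 hypercontractivity. The only cosmetic differences are that the paper uses the standard basis $\{\e_i\}$ (so that $\sum_a \mSigma_{aa} = \tr\mSigma$) rather than the eigenbasis of $\mSigma$ for the first bound, and handles the second bound via $\mE[\norm{\pv\x}^2] = \tr(\pv\mSigma) \leq r\norm{\mSigma}$ using the von Neumann trace inequality rather than your explicit basis expansion; for the third and fourth bounds the paper simply says ``by combining the previous calculations,'' which is exactly your Cauchy--Schwarz-at-the-product-level step.
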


\begin{proof}[Proof of \cref{lem:l4l2vec}]
A short computation using the Cauchy-Schwarz inequality and L4-L2 equivalence shows that, 
\begin{align*}
     & \mE[\norm{\x}^4] = \mE[(\sum_{i=1}^d \langle \x, \e_i \rangle^2)^2] = \mE[\sum_{a,b} \langle \x, \e_a \rangle^2 \langle \x, \e_b \rangle^2] 
    \leq \sum_{a,b} (\mE[\langle \x, \e_a \rangle^4] \mE[\langle \x, \e_b \rangle^4])^{1/2} 
     \leq \\
     & L \sum_{a,b} \mE[\langle \x, \e_a \rangle^2] \mE[\langle \x, \e_b \rangle^2] \leq L (\tr (\mSigma))^2.
\end{align*}

The second statement follows since $\mE[\norm{\pv \x}^2] = \mE[\tr(\pv \x\x^\top] = \tr(\pv \mSigma) \leq r \norm{\mSigma}$
where the last line follows by the von Neumann trace inequality and the fact $\pv$ is a projection operator. The final statements follow by combining the previous calculations.
\end{proof}

\begin{lemma}
    \label{lem:lensg}
    Let $\x$ be a random vector in $d$ dimensions from a distribution satisfying \cref{assump:sg_design}. Then, we have:
    \begin{equation*}
         \norm{\x} \leq O \left( \sqrt{\Cmax} (  \sqrt{d} + \sqrt{\log 1 / \delta}) \right),
    \end{equation*}
    with probability at least $1-\delta$.
\end{lemma}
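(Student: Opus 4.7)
The plan is to reduce the bound on $\norm{\x}$ to a uniform bound on the scalar sub-gaussian variables $\{\langle \v, \x\rangle : \v \in S^{d-1}\}$ via a standard $\epsilon$-net argument on the unit sphere.

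First, I would fix an $\epsilon$-net $N$ of the unit sphere $S^{d-1}$ with $\epsilon = 1/2$; by a standard volume-covering bound, $|N| \le 5^d$. A routine approximation argument then gives
\begin{equation*}
\norm{\x} = \sup_{\v \in S^{d-1}} \langle \v, \x \rangle \le 2 \max_{\v \in N} \langle \v, \x \rangle.
\end{equation*}

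Second, I would use the sub-gaussianity assumption in \cref{assump:sg_design}: for any fixed unit vector $\v$, $\mE[\exp(\lambda \langle \v, \x\rangle)] \le \exp(\lambda^2 \Vert \mSigma^{1/2} \v\Vert^2/2) \le \exp(\lambda^2 \Cmax/2)$, so $\langle \v, \x\rangle$ is $\sG(\sqrt{\Cmax})$. A standard Chernoff tail bound yields, for each fixed $\v$,
\begin{equation*}
\Pr[\langle \v, \x \rangle \ge t] \le \exp\!\left(-\frac{t^2}{2\Cmax}\right).
\end{equation*}

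Third, I would union bound over $\v \in N$: setting $t = \sqrt{2\Cmax (d \log 5 + \log(1/\delta))}$, the probability that $\max_{\v \in N} \langle \v, \x\rangle \ge t$ is at most $|N| \exp(-t^2/(2\Cmax)) \le \delta$. Combining this with the net-approximation step gives
\begin{equation*}
\norm{\x} \le 2t \le O\!\left(\sqrt{\Cmax}\,(\sqrt{d} + \sqrt{\log(1/\delta)})\right)
\end{equation*}
with probability at least $1 - \delta$, as desired.

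There is no real obstacle here; the result is a textbook consequence of sub-gaussian concentration combined with a covering argument, and the only care needed is in tracking the $\Cmax$ factor coming from $\sigma_{\max}(\mSigma) \le \Cmax$ in the sub-gaussian parameter of the one-dimensional marginals $\langle \v, \x\rangle$.
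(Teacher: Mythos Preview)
Your proof is correct and follows essentially the same $\epsilon$-net plus union-bound argument as the paper. The only cosmetic difference is that the paper first whitens to $\y = \mSigma^{-1/2}\x$ and bounds $\norm{\y}$ (pulling out $\sqrt{\Cmax}$ via $\norm{\x}\le\sqrt{\Cmax}\,\norm{\y}$), whereas you work directly with $\x$ and absorb $\Cmax$ into the sub-gaussian parameter of each marginal $\langle \v,\x\rangle$; both routes are equivalent.
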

\begin{proof}[Proof of \cref{lem:lensg}]
Note that by rotating the vectors into isotropic position $\y =\mSigma^{-1/2} \x$ is $\I_d$-subgaussian in the sense of \cref{assump:sg_design}. Since $\norm{\x} \leq \sqrt{\Cmax} \norm{\mSigma^{-1/2}\x}$ it suffices to bound the norm of $\norm{\y}$.
First note that $\mE[\norm{\y}] \leq \sqrt{\mE[\norm{\y}^2]} = \sqrt{d}$.
Now, take an $1/2$-net over the unit sphere, $G$; by a standard covering argument the number of elements in $G$ can be upper bounded by $6^d$ \citep[Chapter 5]{wainwright2019high}. By definition of $\I_d$-subgaussianity, for any $\v \in G$, we have that,
    \begin{equation*}
        \Pr [\abs{\langle \v, \y \rangle} \geq t] \leq 2 \exp \left(- \frac{t^2}{2} \right)
    \end{equation*}
    By a standard continuity argument it follows that $\max_{\v \in S^{d-1}} \abs{\langle \v, \y \rangle} \leq 3 \max_{\v \in G} \abs{\langle \v, \y \rangle}$. So by a union bound, $\Pr[\norm{\y} \geq t] \leq  (12)^d \exp(-t^2/20)$.
    Therefore, by taking $t = C(\sqrt{d} + \sqrt{\log 1 / \delta})$ for large-enough constant $C$ we can ensure that $(12)^d \exp(-t^2/20) \leq \delta$. Re-arranging gives the conclusion.
\end{proof}

Finally, we prove a truncated version of the matrix Bernstein inequality we can apply to matrices that are unbounded in spectral norm. This is our primary technical tool used to show concentration of the higher-order moments used in the algorithm to recover the feature matrix $\B$.
\begin{lemma}
	\label{lem:trunc_mb}
	Consider a truncation level $R > 0$. If $Z_i$ is a sequence of symmetric independent random matrices and if $Z_i' = Z_i \Ind[\norm{Z_i} \leq R]$, then
	\begin{align*}
		\Pr[\norm{ \frac{1}{n} \sum_{i=1}^{n} Z_i - \mE[Z_i] } \geq t] \leq \Pr[ \norm{\frac{1}{n} \sum_{i=1}^{n} Z'_i - \mE[Z_i']} \geq t-\Delta] + n \Pr[\norm{Z_i} \geq R],
	\end{align*}
	where $\Delta \geq \norm{\mE[Z_i]-\mE[Z_i']}$. Further, for $t \geq \Delta$, we have that, 
	\begin{align*}
		\Pr[ \norm{\frac{1}{n} \sum_{i=1}^{n} Z'_i - \mE[Z_i']} \geq t-\Delta] \leq 2d \exp \left(\frac{n^2(t-\Delta)^2}{\sigma^2 + 2Rn(t-\Delta)/3} \right),
	\end{align*}
	where $\sigma^2 = \norm{\sum_{i=1}^n \mE[(Z_i'-\mE[Z_i'])^2]} \leq \norm{\sum_{i=1}^n \mE[Z_i^2]}$.
\end{lemma}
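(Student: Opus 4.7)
The plan is to prove the truncated matrix Bernstein inequality in two self-contained steps: (i) a deterministic decoupling that reduces the unbounded sum to a bounded (truncated) sum plus a union-bound error term, and (ii) an application of the standard matrix Bernstein inequality to the truncated, re-centered summands.

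For the first step, introduce the event $E = \bigcap_{i=1}^n \{\norm{Z_i} \le R\}$. On $E$ we have $Z_i = Z_i'$ for every $i$, so $\frac{1}{n}\sum_i Z_i = \frac{1}{n}\sum_i Z_i'$. The triangle inequality then gives, pointwise on $E$,
\begin{equation*}
\Bigl\lVert \tfrac{1}{n}\sum_{i=1}^n Z_i - \mE[Z_i] \Bigr\rVert
\le \Bigl\lVert \tfrac{1}{n}\sum_{i=1}^n Z_i' - \mE[Z_i'] \Bigr\rVert + \bigl\lVert \mE[Z_i'] - \mE[Z_i] \bigr\rVert
\le \Bigl\lVert \tfrac{1}{n}\sum_{i=1}^n Z_i' - \mE[Z_i'] \Bigr\rVert + \Delta,
\end{equation*}
by the hypothesis on $\Delta$. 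Taking complements, the failure event $\{\lVert \frac{1}{n}\sum_i Z_i - \mE[Z_i]\rVert \ge t\}$ is contained in $E^c \cup \{\lVert \frac{1}{n}\sum_i Z_i' - \mE[Z_i']\rVert \ge t - \Delta\}$. A union bound on $E^c$ gives $\Pr[E^c] \le \sum_{i=1}^n \Pr[\norm{Z_i} > R] \le n \Pr[\norm{Z_i} \ge R]$, which yields the first displayed inequality.

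For the second step, fix $t \ge \Delta$ and apply the standard (bounded) matrix Bernstein inequality (e.g.\ Tropp) to the centered, symmetric, independent matrices $X_i := Z_i' - \mE[Z_i']$. Since $\lVert Z_i' \rVert \le R$ almost surely and $\lVert \mE[Z_i']\rVert \le \mE \lVert Z_i' \rVert \le R$ by Jensen's inequality, we obtain the uniform bound $\norm{X_i} \le 2R$. The matrix variance parameter is controlled as
\begin{equation*}
\Bigl\lVert \sum_{i=1}^n \mE\bigl[(Z_i' - \mE[Z_i'])^2\bigr] \Bigr\rVert \le \Bigl\lVert \sum_{i=1}^n \mE\bigl[(Z_i')^2\bigr] \Bigr\rVert \le \Bigl\lVert \sum_{i=1}^n \mE\bigl[Z_i^2\bigr] \Bigr\rVert = \sigma^2,
\end{equation*}
where the first inequality follows since $\mE[(Z_i')^2] \succeq \mE[Z_i']^2$ and the second since truncation can only shrink $\mE[Z_i^2]$ in the semidefinite order. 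Plugging the a.s.\ bound $2R$ and variance $\sigma^2$ into matrix Bernstein, with deviation threshold $n(t - \Delta)$ for the un-normalized sum, produces the stated tail bound (absorbing the factor of $2$ from the a.s.\ bound into the constant in the denominator).

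No step is particularly delicate; the only subtlety is the ordering of truncation and centering. Concretely, one must truncate first and then center, so that the centered matrices $X_i = Z_i' - \mE[Z_i']$ remain independent and mean-zero with the correct bounded-norm constant, while the bias introduced by this reordering, $\lVert \mE[Z_i] - \mE[Z_i']\rVert$, is precisely what the hypothesis parameter $\Delta$ is designed to absorb via the deterministic triangle inequality above.
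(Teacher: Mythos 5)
Your proposal is correct and follows essentially the same route as the paper: split on the event $\{\norm{Z_i}\le R \ \forall i\}$ with a union bound and triangle inequality absorbing the bias into $\Delta$, then apply the standard matrix Bernstein inequality to $Z_i'-\mE[Z_i']$ with the almost-sure bound $2R$ and the variance comparison $\mE[(Z_i'-\mE[Z_i'])^2]\preceq \mE[(Z_i')^2]\preceq \mE[Z_i^2]$. The only cosmetic difference is that you label the untruncated second-moment bound as $\sigma^2$ whereas the lemma defines $\sigma^2$ as the truncated centered variance, but this only weakens the tail bound and does not affect correctness.
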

\begin{proof}
	The first statement follows by splitting on the event $\{ \norm{ Z_i} \leq R :  \forall i \in [n] \}$, along with a union bound, and an application of the triangle inequality to the first term. The second is simply a restatement of the matrix Bernstein inequality in \citet{tropp2012user} along with the almost sure upper bound $\norm{Z_i'-\mE[Z_i']} \leq \norm{Z_i'} + \norm{\mE[Z_i']} \leq R+\mE[\norm{Z_i'}] \leq 2R$. The final bound on the matrix variance follows from the facts that for the p.s.d. matrix $\sum_{i=1}^n \mE[(Z_i'-\mE[Z_i'])^2] \preceq \sum_{i=1}^{n} \mE[(Z_i')^2]$ and that $(Z_i')^2 \preceq Z_i^2$.
\end{proof}

\section{Experimental Details}
\label{sec:app_expt}

In our experiments we did find that gradient descent was able to decrease the loss in \cref{eq:objective_main}, but the algorithm was slow to converge. In practice, we found using the L-BFGS algorithm required no tuning and optimized the loss in \cref{eq:objective_main} to high-precision in far fewer iterations \citep{liu1989limited}. Hence we used this first-order method throughout our experiments as our optimization routine. Our implementation is in Python, and we leveraged the \texttt{autograd} package to compute derivatives of the objective in \cref{eq:objective_main}, and the package \texttt{Ray} to parallelize our experiments \citep{maclaurin2015autograd,moritz2018ray}. Each experiment is averaged over 30 repetitions with error bars representing $\pm 1$ standard deviation over the repetitions. All the experiments herein were run on computer with 48 cores and 256 GB of RAM.

Note that after optimizing \cref{eq:objective_main} directly using a first-order method in the variable ($\U, \V$), we can extract an estimate $\Bone$ of $\B$ by computing the column space of $\V$ (for example using the SVD of $\V$ or applying the Gram-Schmidt algorithm).

\end{document}